\crefname{enumi}{item}{items}
\crefname{equation}{}{}
\crefname{subsection}{Subsection}{Subsections}
\theoremstyle{plain}
\newtheorem{theorem}{Theorem}[section]
\newtheorem{lemma}[theorem]{Lemma}
\newtheorem{prop}[theorem]{Proposition}
\newtheorem{cor}[theorem]{Corollary}
\theoremstyle{definition}
\newtheorem{definition}[theorem]{Definition}
\DeclareMathAlphabet{\mathpzc}{OT1}{pzc}{m}{it}
\DeclareMathAlphabet{\mathscr}{LS1}{stixscr}{m}{n}
\newcommand{\E}{\mathbb{E}}
\renewcommand{\P}{\mathbb{P}}
\newcommand{\R}{\mathbb{R}}
\newcommand{\N}{\mathbb{N}}
\newcommand{\proper}[2]{$#1$ is proper $#2$-conditional $\P$-integrable\cfadd{def:proper:cond:int}}
\newcommand{\improper}[2]{$#1$ is improper $#2$-conditional $\P$-integrable\cfadd{def:improper:cond:int}}
\newcommand{\cond}[3]{$#1$ is a $#2$-conditional $\P$-expectation of $#3$\cfadd{def:cond:exp}}
\newcommand{\co}[2]{\E\PR{#1|#2}\cfadd{gen:conditional:expectation}}
\newcommand{\cob}[2]{\E\PRb{#1\big|#2}\cfadd{gen:conditional:expectation}}
\newcommand{\cD}{\mathcal{D}}
\newcommand{\cF}{\mathcal{F}}
\newcommand{\fC}{\mathfrak{C}}
\newcommand{\fc}{\mathfrak{c}}
\renewcommand{\emptyset}{\varnothing}
\newcommand{\eps}{\varepsilon}
\DeclarePairedDelimiter{\norm}{\lVert}{\rVert}
\def\@tvsp{\mathchoice{{}\mkern-4.5mu}{{}\mkern-4.5mu}{{}\mkern-2.5mu}{}}
\def\ltrivert{\left|\@tvsp\left|\@tvsp\left|}
\def\rtrivert{\right|\@tvsp\right|\@tvsp\right|}
\NewDocumentCommand{\eg}{ o }{
	\IfValueT{#1}{
		\str_if_eq:noTF {fe} {#1} {
			\bool_gset_true:N \g_forexample
		} {\bool_gset_false:N \g_forexample}
	}
	\bool_if:nTF { \g_forexample } {
		\bool_gset_false:N \g_forexample
		for~example
	}{
		\bool_gset_true:N \g_forexample
		for~instance
	}
}
\NewDocumentCommand{\abbr}{m m O{#1} m m O{#4} m}{
	\expandafter\newcommand\csname#3\endcsname[1][]{
		\seq_if_in:NnTF \g_abbrs {#1} {
			\prop_get:NnN \g_abbr_counts {#1} \l_abbr_count_tl
			\prop_gput:Nnx \g_abbr_counts {#1} {\int_eval:n {\l_abbr_count_tl + 1}}
			\hyperref[#1]{#7}
		} {
			\seq_gput_left:Nn \g_abbrs {#1}
			\prop_gput:Nnn \g_abbr_counts {#1} {1}
			\expandafter\gdef\csname#1@def\endcsname{#2}
			\phantomsection\label{#1}
			\str_if_eq:nnTF{##1}{}{\emph{#2}}{##1}~(\hyperref[#1]{#7})
		}
	}
	\expandafter\newcommand\csname#6\endcsname[1][]{
		\seq_if_in:NnTF \g_abbrs {#1} {
			\prop_get:NnN \g_abbr_counts {#1} \l_abbr_count_tl
			\prop_gput:Nnx \g_abbr_counts {#1} {\int_eval:n {\l_abbr_count_tl + 1}}
			\hyperref[#1]{#4}
		} {
			\expandafter\gdef\csname#1@def\endcsname{#5}
			\seq_gput_left:Nn \g_abbrs {#1}
			\prop_gput:Nnn \g_abbr_counts {#1} {1}
			\phantomsection\label{#1}
			\str_if_eq:nnTF{##1}{}{\emph{#5}}{##1}~(\hyperref[#1]{#4})
		}
	}
}
\newcommand{\qandq}{\quad \text{and} \quad }
\newcommand{\qqandqq}{\qquad\text{and}\qquad}
\DeclarePairedDelimiter{\vass}{\lvert}{\rvert}
\DeclarePairedDelimiter{\pr}{(}{)}
\DeclarePairedDelimiter{\PR}{[}{]}
\DeclarePairedDelimiter{\pR}{\{}{\}}
\newcommand{\prb}[1]{\pr[\big]{ #1 }}
\newcommand{\PRb}[1]{\PR[\big]{ #1 }}
\newcommand{\pRb}[1]{\pR[\big]{ #1 }}
\newcommand{\prbbb}[1]{\pr[\bigg]{ #1 }}
\newcommand{\PRbbb}[1]{\PR[\bigg]{ #1 }}
\newcommand{\pRbbb}[1]{\pR[\bigg]{ #1 }}
\newcommand{\prbb}[1]{\pr[\Big]{ #1 }}
\newcommand{\PRbb}[1]{\PR[\Big]{ #1 }}
\newcommand{\pRbb}[1]{\pR[\Big]{ #1 }}
\newcommand{\indicator}[1]{\mathbbm{1}_{\smash{#1}}}
\newcommand{\var}{\operatorname{Var}}
\NewDocumentCommand{\setnote}{}{
  \bool_gset_true:N \g_noteobserve
}
\NewDocumentCommand{\setobserve}{}{
  \bool_gset_false:N \g_noteobserve
}
\NewDocumentCommand{\nobs}{ o }{
  \IfValueT{#1}{
    \str_if_eq:noTF {note} {#1} {
      \bool_gset_true:N \g_noteobserve
    } {
      \str_if_eq:noTF {Note} {#1} {
        \bool_gset_true:N \g_noteobserve
      } {
        \bool_gset_false:N \g_noteobserve
      }
    }
  }
  \bool_if:nTF { \g_noteobserve } {
    \bool_gset_false:N \g_noteobserve
    note
  } {
    \bool_gset_true:N \g_noteobserve
    observe
  }
  \IfValueF{#1}{~}
}
\NewDocumentCommand{\Nobs}{ o }{
  \IfValueT{#1}{
    \str_if_eq:noTF {note} {#1} {
      \bool_gset_true:N \g_noteobserve
    } {
      \str_if_eq:noTF {Note} {#1} {
        \bool_gset_true:N \g_noteobserve
      } {
        \bool_gset_false:N \g_noteobserve
      }
    }
  }
  \bool_if:nTF { \g_noteobserve } {
    \bool_gset_false:N \g_noteobserve
    Note
  } {
    \bool_gset_true:N \g_noteobserve
    Observe
  }
  \IfValueF{#1}{~}
}
\NewDocumentCommand{\Moreover}{ o o }{
  \IfValueT{#1}{
    \str_case:nn {#1} {
      {Furthermore} {\int_set:Nn {\g_furthermore} {0}}
      {Moreover} {\int_set:Nn {\g_furthermore} {1}}
      {In~addition} {\int_set:Nn {\g_furthermore} {2}}
      {note} {\bool_gset_true:N \g_noteobserve}
      {observe} {\bool_gset_false:N \g_noteobserve}
    }
    \IfValueT{#2}{
      \str_case:nn {#2} {
        {Furthermore} {\int_set:Nn {\g_furthermore} {0}}
        {Moreover} {\int_set:Nn {\g_furthermore} {1}}
        {In~addition} {\int_set:Nn {\g_furthermore} {2}}
        {note} {\bool_gset_true:N \g_noteobserve}
        {observe} {\bool_gset_false:N \g_noteobserve}
      }
    }
  }
  \int_case:nn { \int_mod:nn {\g_furthermore} {3} } {
    { 0 } { Furthermore,~\nobs that}
    { 1 } { Moreover,~\nobs that}
    { 2 } { In~addition,~\nobs that}
  }
  \int_incr:N \g_furthermore
  \IfValueF{#1}{~}
}
\NewDocumentCommand{\hence}{}{
  \bool_if:nTF { \g_hencetherefore } {
    \bool_gset_false:N \g_hencetherefore
    hence~
  } {
    \bool_gset_true:N \g_hencetherefore
    therefore~
  }
}
\NewDocumentCommand{\Hence}{}{
  \bool_if:nTF { \g_hencetherefore } {
    \bool_gset_false:N \g_hencetherefore
    Hence,~we~obtain~
  } {
    \bool_gset_true:N \g_hencetherefore
    Therefore,~we~obtain~
  }
}
\NewDocumentCommand{\cfadd}{ m }
{
  \seq_if_in:NnF \g_cflist_loaded { #1 } {
    \seq_if_in:NnF \g_cflist_pending { #1 } {
      \seq_gput_right:Nn \g_cflist_pending { #1 }
    }
  }
}
\NewDocumentCommand{\cfconsiderloaded}{ m }{
  \seq_gput_right:Nn \g_cflist_loaded {#1}
}
\NewDocumentCommand{\cfremove}{ m }
{
  \seq_gremove_all:Nn \g_cflist_pending { #1 }
}
\NewDocumentCommand{\cfload}{ o }
{
  \seq_if_empty:NTF \g_cflist_pending {\unskip} {
    (cf.\ \cref{\seq_use:Nn \g_cflist_pending {,}})\IfValueTF{#1}{#1~}{\unskip}
    \seq_gconcat:NNN \g_cflist_loaded \g_cflist_loaded \g_cflist_pending
    \seq_gclear:N \g_cflist_pending
  }
}
\NewDocumentCommand{\cfclear} {} {
  \seq_gclear:N \g_cflist_loaded
  \seq_gclear:N \g_cflist_pending
}
\NewDocumentCommand{\cfout}{ o }
{
  \seq_if_empty:NTF \g_cflist_pending {\unskip} {
    (cf.\ \cref{\seq_use:Nn \g_cflist_pending {,}})\IfValueTF{#1}{#1~}{\unskip}
    \seq_gclear:N \g_cflist_pending
  }
}
\NewDocumentCommand{\ifnocf} { m } {
  \seq_if_empty:NT \g_cflist_pending { #1 }
}
\NewDocumentCommand{\prove}{ o }{
	\IfValueTF{#1}{
		\seq_clear:N \l_mytmps
		\seq_map_inline:Nn \g_prove_mru {
			\str_if_eq:nnTF {##1} {ensure} {
				\str_set:Nn \l_temps {n}
			} {
				\str_set:Nx \l_temps {\str_head_ignore_spaces:n {##1}}
			}
			\str_if_in:nVTF {#1} \l_temps {
				\seq_put_right:Nn \l_mytmps {##1}
			} { }
		}
		\seq_get_right:NN \l_mytmps \g_wordtmp
	} {
		\seq_get_right:NN \g_prove_mru \g_wordtmp
	}
	\tl_use:N \g_wordtmp
	\seq_gput_left:NV \g_prove_mru \g_wordtmp
	\seq_gremove_duplicates:N \g_prove_mru
}
\NewDocumentCommand{\proves}{ o }{
	\IfValueTF{#1}{
		\seq_clear:N \l_mytmps
		\seq_map_inline:Nn \g_prove_mru {
			\str_if_eq:nnTF {##1} {ensure} {
				\str_set:Nn \l_temps {n}
			} {
				\str_set:Nx \l_temps {\str_head_ignore_spaces:n {##1}}
			}
			\str_if_in:nVTF {#1} \l_temps {
				\seq_put_right:Nn \l_mytmps {##1}
			} { }
		}
		\seq_get_right:NN \l_mytmps \g_wordtmp
	} {
		\seq_get_right:NN \g_prove_mru \g_wordtmp
	}
	\str_set:NV \l_tmpa_str \g_wordtmp
	\prop_get:NVN \l__verbs \l_tmpa_str \l_tmpa_tl
	\tl_use:N \l_tmpa_tl
	\seq_gput_left:NV \g_prove_mru \g_wordtmp
	\seq_gremove_duplicates:N \g_prove_mru
}
\NewDocumentEnvironment{cproof}{m}
{\begin{proof}[Proof of \cref{#1}]}%
{\noindent The proof of \cref{#1} is thus complete.
\end{proof}}
\NewDocumentEnvironment{cproof2}{m}
{\begin{proof}[Proof of \cref{#1}]}%
{\noindent This completes the proof of \cref{#1}.
\end{proof}}
\global\def\loc{dummy}
\NewDocumentEnvironment {athm} {m m o} {%
		\IfValueT{#3}{\begin{#1}[#3]}
			\IfValueF{#3}{\begin{#1}}
				\label{#2}\global\def\loc{#2}%
			}{%
			\end{#1}%
		}
		\NewDocumentEnvironment{aproof} {} {%
			\begin{proof}[Proof~of~\cref{\loc}]%
			}{%
				\global\def\loc{dummy}\end{proof}%
		}
		\newcommand{\finishproofthus}{The proof of \cref{\loc} is thus complete.}
\title{Non-convergence of Adam and other adaptive stochastic gradient descent optimization methods for non-vanishing learning rates}
\author{Steffen Dereich$^{1}$, Robin Graeber$^{2}$, and Arnulf Jentzen$^{3,4}$
	%Institute for Mathematical Stochastics, Faculty of Mathematics and Computer Science, University of M ̈unster, Germany
	\bigskip
	\\
	\small{$^1$ Applied Mathematics: Institute for Mathematical Stochastics, Faculty of Mathematics and}
	\vspace{-0.1cm}\\
	\small{Computer Science, University of M{\"u}nster, Germany, e-mail: \texttt{steffen.dereich@uni-muenster.de}}
	\smallskip
	\\
	\small{$^2$ School of Data Science,	The Chinese University of Hong Kong, Shenzhen}
	\vspace{-0.1cm}\\
	\small{(CUHK-Shenzhen), China, e-mail:
		\texttt{223040041@link.cuhk.edu.cn}}
		 %\texttt{r\_grae02@uni-muenster.de}}
	\smallskip
	\\
	\small{$^3$ School of Data Science and Shenzhen Research Institute of Big Data, The Chinese University}
	\vspace{-0.1cm}\\
%	\small{
%		Shenzhen (CUHK-Shenzhen), China}
%	\vspace{-0.1cm}\\
	\small{of Hong Kong, Shenzhen (CUHK-Shenzhen), China, e-mail: \texttt{ajentzen@cuhk.edu.cn}}
	\smallskip
	\\
	\small{$^4$ Applied Mathematics: Institute for Analysis and Numerics, Faculty of Mathematics and}
	\vspace{-0.1cm}\\
	\small{Computer Science, University of M{\"u}nster, Germany, e-mail: \texttt{ajentzen@uni-muenster.de}}
	\smallskip
	\\
}
\date{\today}
\begin{document}

\maketitle

\begin{abstract}
	Deep learning (DL) approximation algorithms -- typically consisting of a class of deep artificial neural networks (DNNs) trained by a stochastic gradient descent (SGD) optimization method -- are nowadays the key ingredients in many artificial intelligence (AI) systems and have revolutionized our ways of working and living in modern societies.
	For example, SGD methods are used to train powerful large language models (LLMs) such as versions of \textsc{ChatGPT} and \textsc{Gemini}, SGD methods are employed to create successful generative AI based text-to-image creation models such as \textsc{Midjourney}, \textsc{DALL-E}, and \textsc{Stable Diffusion}, but SGD methods are also used to train DNNs to approximately solve scientific models such as partial differential equation (PDE) models from physics and biology and optimal control and stopping problems from engineering.
It is known that the plain vanilla standard SGD method fails to converge even in the situation of several convex optimization problems if the learning rates are bounded away from zero. 
However, in many practical relevant training scenarios, often not the plain vanilla standard SGD method but instead adaptive SGD methods such as the RMSprop and the Adam 
optimizers, in which the learning rates are modified adaptively during the training process, are employed. 
This naturally rises the question whether such adaptive optimizers, in which the learning rates are modified adaptively during the training process, do converge in the situation of non-vanishing learning rates. 
In this work we answer this question negatively by proving that adaptive SGD methods such as the popular Adam optimizer fail to converge to any possible random limit point if the learning rates are asymptotically bounded away from zero.
In our proof of this non-convergence result we establish suitable pathwise a priori bounds for a class of accelerated and adaptive SGD  methods, which are also of independent interest.
\end{abstract}

\tableofcontents

\newcommand{\inc}{\Phi}
\newcommand{\pars}{\mathfrak{d}}
\renewcommand{\dim}{d}
\newcommand{\mom}{\mathcal{M}}
\newcommand{\MOM}{\mathbb{M}}
\newcommand{\bscl}{\kappa}
\newcommand{\Cst}{\rho}
\newcommand{\cst}{\eta}
\newcommand{\CST}{B}
\newcommand{\batch}{J}
\newcommand{\loss}{\ell}
\newcommand{\fil}{\mathbb{F}}
\newcommand{\gil}{\mathcal{G}}
\newcommand{\uv}{e}
\newcommand{\call}{K}
\newcommand{\mm}{m}

\section{Introduction}
\label{section:introduction}

\DL\ approximation algorithms -- typically consisting of a class of \DNNs\ trained by a \SGD\ optimization method -- are nowadays the key ingredients in many \AI\ systems and have revolutionized our ways of working and living in modern societies.
For example, \SGD\ methods are used to train powerful large language models \LLMs\ such as versions of 
\textsc{ChatGPT} (cf.\ \cite{brown2020languagemodelsfewshotlearners}) and 
\textsc{Gemini} (cf.\ \cite{geminiteam2024geminifamilyhighlycapable}), \SGD\ methods are employed to create successful generative \AI\ based text-to-image creation models such as \textsc{Midjourney}, 
\textsc{DALL-E} (cf.\  \cite{ramesh2021zeroshottexttoimagegeneration}), and \textsc{Stable Diffusion} 
(cf.\ \cite{esser2024scalingrectifiedflowtransformers}), but \SGD\ methods are also used to train \DNNs\ to approximately solve scientific models such as \PDE\ models from physics and biology (cf., \eg, 
\cite{E_2017,Han_2018,li2021fourierneuraloperatorparametric,RAISSI2019686,Sirignano_2018},
the review articles \cite{WhatsNext,Beck_2023,MR4356985,karniadakis2021physics,Blechschmidt},
and the references mentioned therein) and optimal control and stopping problems 
(cf., \eg,
\cite{becker2020deepoptimalstopping,han2016deeplearningapproximationstochastic}, the review articles \cite{germain2021neural,ruf2020neuralnetworksoptionpricing}, and the references mentioned therein) from engineering.

It is well known that the error of the plain vanilla standard \SGD\ method is bounded away from zero if the step sizes, the so-called learning rates, are asymptotically bounded away from zero; see, \eg,\! \ \cite[Subsection~7.2.2.2]{jentzen2023mathematical}. 
To better illustrate this elementary fact, we present within this introductory section in the following result, \cref{intro:thm:1} below, a special case of the non-convergence result in Lemma~7.2.11 in \cite[Subsection~7.2.2.2]{jentzen2023mathematical}.
\Cref{intro:thm:1} considers 
the standard \SGD\ method applied to
a very simple examplary quadratic stochastic optimization problem 
where $\pars\in\N$ represents the dimensionality
of the stochastic optimization problem,
where the data 
of the stochastic optimization problem are represented through $\R^\pars$-valued \iid\ random variables $X_{n,m}\colon\Omega\to\R^\pars$ for $n,m \in \N$ on a probability space $ ( \Omega, \cF, \P ) $ (cf.\ \eqref{Thm1:disp:1} below),
where the learning rates 
of the \SGD\ method are represented through the sequence $\gamma=(\gamma_n)_{n\in\N}\colon\N\to(0,\infty)$ (cf.\ \eqref{Thm1:disp:1} below), and
where the sizes of the mini-batches 
of the \SGD\ method are represented through the sequence $\batch=(\batch_n)_{n\in\N}\colon\N\to\N$ (cf.\ \eqref{Thm1:disp:1} below).

\begin{athm}{theorem}{intro:thm:1}
	Let $\pars\in\N$,
	let $ ( \Omega, \cF, \P ) $ be a probability space,
	let	$
	X_{n,\mm}\colon \Omega \to \R^{\pars}
	$, $n,\mm\in\N$, be \iid\ random variables,
	let 
	$\loss\colon\R^{\pars}\times\R^{\pars}\to\R$,
	$\batch\colon\N\to\N$, and $\gamma\colon\N\to(0,\infty)$ satisfy\footnote{Note that for all $d\in\N$, $v=(v_1,\dots,v_d)$, $w=(w_1,\dots,w_d)\in\R^d$ it holds that
		$\langle v,w \rangle=\sum_{i=1}^d v_i w_i$ and $\norm{v}=\pr{\langle v,v \rangle}^{\nicefrac{1}{2}}$.}
	for all 
	$\theta,x\in\R^\pars$
	that
	\begin{equation}
		\label{Thm1:disp:1}
		\textstyle
		\loss(\theta,x)=\norm{\theta-x}^2,
		\qquad
		\liminf_{n\to\infty}\gamma_n>0,
		\qqandqq
		\limsup_{n\to\infty}\batch_n<\infty,
	\end{equation} 
	let
	$ \Theta=(\Theta^{(1)},\ldots,\Theta^{(\pars)}) \colon\N_0\times\Omega \to \R^{\pars}$ be a stochastic process which satisfies
	for all $n\in\N$ that
	\begin{equation}
		\label{Thm1:disp:2}
		\begin{split}
			\textstyle
			\Theta_{ n }
			=\Theta_{ n-1  }-\gamma_n\PRb
			{\frac{1}{\batch_n}\sum_{\mm=1}^{\batch_n}\prb{\nabla_{\theta}\loss}(\Theta_{n-1},X_{n,\mm})},
		\end{split}
	\end{equation}
	assume that  $\Theta_0$ and $\pr{X_{n,m}}_{(n,\mm)\in\{(k,l)\in\N^2\colon l\leq\batch_k\}}$ are independent, and
	assume $\E\PR{\norm{X_{1,1}}}<\infty$ and $\operatorname{Trace}( \operatorname{Cov}( X_{ 1, 1 } ) ) > 0$.
	Then 
	\begin{equation}
		\label{Thm1:disp:3}
		\begin{split}
			\inf_{\xi\in\R^{\pars}}
			&\liminf_{ n \to \infty }\E\PRb{ \norm{ \Theta_n - \xi }^2 }
			>0
			.
		\end{split}
	\end{equation}
\end{athm}
\Cref{intro:thm:1} is an immediate consequence of Lemma~7.2.11 in \cite[Subsection~7.2.2.2]{jentzen2023mathematical}. 
\Cref{intro:thm:1} considers the stochastic optimization problem to minimize the function $\R^\pars\ni\theta\mapsto \E\PR{\ell\pr{\theta,X_{1,1}}}\in\R$
(with $\ell$ specified in \eqref{Thm1:disp:1} above).
For this optimization problem \cref{intro:thm:1} ensures that the standard \SGD\ method in \eqref{Thm1:disp:2} fails to converge to any possible point $\xi\in\R^\pars$ if the learning rates $\gamma=(\gamma_n)_{n\in\N}\colon\N\to(0,\infty)$ in \eqref{Thm1:disp:1} are asymptotically bounded away from zero in the sense that $\liminf_{n\to\infty}\gamma_n>0$ (cf.\ \eqref{Thm1:disp:1} above).

In many practical relevant training scenarios, often not the standard \SGD\ method (cf.\ \eqref{Thm1:disp:2} above) but instead adaptive \SGD\ methods such as the RMSprop (cf.\ \cite{Hinton24_RMSprop}) and the Adam (cf.\ \cite{KingmaBa2024_Adam}) 
optimizers, in which the learning rates are modified adaptively during the training process, are employed (for details and references on further variations of \SGD\ optimization methods 
we also refer to the overview articles \cite{ruder2017overviewgradientdescentoptimization,Sun2019} 
and the monograph \cite{jentzen2023mathematical}). 
This naturally rises the question whether such adaptive optimizers, in which the learning rates are modified adaptively during the training process, do converge in the situation of non-vanishing learning rates. In this work we answer this question negatively by proving that adaptive \SGD\ methods such as the popular Adam optimizer (cf.\ \cite{KingmaBa2024_Adam}) fail to converge to any possible random point if the learning rates are asymptotically bounded away from zero.
Specifically, \cref{thm:new:rob:cond} in \cref{sec:results} below, which is the main result of this work, shows under suitable assumptions that every component of the Adam optimizer fails to converge to any possible real-valued random point $\xi\colon\Omega\to\R$
if the sizes of the mini-batches are bounded from above, 
if the learning rates are bounded from above, and
if the learning rates are asymptotically bounded away from zero.
To better illustrate the contribution of this work, within this introductory section, we now specialize the conclusion of \cref{thm:new:rob:cond} to the situation of the very simple examplary quadratic stochastic optimization problem in \eqref{Thm1:disp:1} from \cref{intro:thm:1} above.

\begin{athm}{theorem}{whole:vec:thm}
	Let $\pars\in\N$, 
	$a\in\R$,
	$b\in(a,\infty)$,
	$\eps\in(0,\infty)$,
	$\alpha\in[0,1)$,
	$\beta\in(\alpha^2,1)$,
	let $ ( \Omega, \cF, \P ) $ be a probability space,
	let	$
	X_{n,m}\colon \Omega \to [ a, b ]^{\pars}
	$, $n,m\in\N$, be \iid\ random variables,
	let $\loss\colon\R^{\pars}\times\R^{\pars}\to\R$,
	$\batch\colon\N\to\N$, and $\gamma\colon\N\to\R$ satisfy
	for all 
	$\theta,x\in\R^\pars$
	that
	\begin{equation}
		\label{Thm2:disp:4}
		\textstyle
		\loss(\theta,x)=\norm{\theta-x}^2
		,
		\qquad
		\liminf_{n\to\infty}\gamma_n>0,
		\qqandqq
		\limsup_{n\to\infty}(\gamma_n+\batch_n)<\infty,
	\end{equation} 
	let
	$ \Theta=(\Theta^{(1)},\ldots,\Theta^{(\pars)}) \colon\N_0\times\Omega \to \R^{\pars}$,
	$\mom=(\mom^{(1)},\ldots,\mom^{(\pars)})\colon\N_0\times\Omega\to\R^{\pars}$, and 
	$\MOM=(\MOM^{(1)},\ldots,\MOM^{(\pars)})\colon\N_0\times\Omega\to[0,\infty)^{\pars}$ be stochastic processes which satisfy
	for all $n\in\N$, $i\in\{1,2,\dots,\pars\}$ that
	\begin{equation}
		\label{eq:whole:vec:2.1}
		\begin{split}
			%			\mom_0=0, 
			%			\quad\qquad 
			\mom_{n}&\textstyle=\alpha \mom_{n-1}+(1-\alpha)\PRb
			{\frac{1}{\batch}\sum_{m=1}^{\batch}\prb{\nabla_{\theta}\loss}(\Theta_{n-1},X_{n,m})},
		\end{split}
	\end{equation}
	\begin{equation}
		\label{eq:whole:vec:2.2}
		\begin{split}
			%			\MOM^{(i)}_0=0,
			%			\quad\qquad
			\MOM_{n}^{(i)}&\textstyle=\beta \MOM_{n-1}^{(i)}+(1-\beta)\PRb
			{\frac{1}{\batch}\sum_{m=1}^{\batch}\prb{\frac{\partial\loss}{\partial\theta_i}}(\Theta_{n-1},X_{n,m})}^2,
		\end{split}
	\end{equation}
	\begin{equation}
		\label{Thm2:disp:7}
		\begin{split}
			\text{and}
			\qquad
			\Theta_{ n }^{(i)}
			&=\Theta_{ n-1  }^{(i)}-\gamma_n \prb{\eps+\PR{(1-\beta^n)^{-1} \MOM_{n}^{(i)}}^{\nicefrac{1}{2}}}^{-1}\!\mom_{n}^{(i)},
		\end{split}
	\end{equation}
	assume that $(\Theta_0,\mom_0,\MOM_0)$ and $\pr{X_{n,j}}_{(n,j)\in\{(k,l)\in\N^2\colon l\leq\batch_k\}}$ are independent,
	assume that $\E\PR{\norm{\Theta_0}}<\infty$ and $\operatorname{Trace}( \operatorname{Cov}( X_{ 1, 1 } ) ) > 0$, and
	assume that $\mom_0$ and $\MOM_0$ are bounded.
	Then 
	\begin{equation}
		\label{Thm2:disp:result}
		\begin{split}
			\inf_{\substack{\xi\colon \Omega \to \R^{\pars}\\\text{measurable}}}\liminf_{ n \to \infty }\E\PRb{\norm{ \Theta_n - \xi }^2 }
			>0
			.
		\end{split}
	\end{equation}
\end{athm}

\Cref{whole:vec:thm} is a direct consequence of \cref{intro:thm:2} in \cref{sec:results} below.
\Cref{intro:thm:2}, in turn, follows from \cref{prop:prop:non_convergence_modified_Adam:specific:setup:main:2}. \Cref{prop:prop:non_convergence_modified_Adam:specific:setup:main:2} is implied by \cref{thm:new:rob:cond} in \cref{sec:results}, which is the main result of this article.
In our proof of the non-convergence result in \cref{whole:vec:thm} and its generalizations and extensions in \cref{sec:results} we establish suitable pathwise a priori bounds in for a class of accelerated and adaptive \SGD\ optimization methods, which are also of independent interest (see \cref{section:Aprioribounds} for details).

In the following we provide a very brief review on research findings in the literature related to 
the non-convergence result in \cref{whole:vec:thm} above 
and its generalizations and extensions in \cref{sec:results}.
Further lower bound, non-convergence, and divergence results for \SGD\ optimization methods can, \eg, be found in 
\cite{CHERIDITO2021101540,jentzen2024nonconvergenceglobalminimizersadam,Lu_2020,gallon2022blowphenomenagradientdescent,ReddiKale2019}. 
In particular, roughly speaking, in \cite{CHERIDITO2021101540} and \cite{Lu_2020} 
it is in the training of \ANNs\ studied analytically and empirically, respectively, that 
\SGD\ optimization methods converge with strictly positive probability not to global minimizers 
but converge with strictly positive probability 
to certain suboptimal local minimizers, specifically, 
\ANN\ parameters with a constant realization function. 
Moreover, in certain shallow \ANNs\ training scenarios 
the work \cite{jentzen2024nonconvergenceglobalminimizersadam} 
shows that \SGD\ optimization methods such as the Adam optimizer 
converge not only with strictly positive probability but even with high probability 
(with the probability converging to one) 
not to global minimizers in the optimization landscape. 
In addition, in \ANN\ training scenarios where there do not exist global minimizers in the optimization landscape 
it is shown in \cite{gallon2022blowphenomenagradientdescent} (cf.\ also \cite{MR4243432}) 
that the norms of suitable gradient based optimization processes fail to converge but diverge to infinity. 
Furthermore, the work \cite{ReddiKale2019} provides an explicit example of a simple convex optimization setting 
in which the Adam optimizer provably fails to converge to the optimal solution. 
Besides lower bound, non-convergence, and divergence results, 
we also refer, \eg, to \cite{GodichonBaggioni2023,
	dereich2024learningrateadaptivestochastic,
	ZhangChen2022,
	Barakat_2021_cvg,
	li2023convergenceadamrelaxedassumptions,
	ReddiKale2019,
	hong2024revisitingconvergenceadagradrelaxed,
	ZouShen2019,
	Defossez2022
	}
for upper bound and convergence results for Adam algorithms and other adaptive \SGD\ optimization methods.
For further investigations on \SGD\ optimization methods we also refer, \eg, to 
\cite{ruder2017overviewgradientdescentoptimization,Sun2019,jentzen2023mathematical} and the references mentioned therein.

The remainder of this article is organzied as follows. 
In \cref{section:Aprioribounds} we establish suitable pathwise a priori bounds for Adam and other \SGD\ optimization methods.
In \cref{section:factorization:lemma} we present and study a generalized variant of the standard concepts of conditional expectations of a random variable.
 In \cref{sec:results} we employ the findings of \cref{section:Aprioribounds,section:factorization:lemma} to establish suitable non-convergence results for Adam and other adaptive \SGD\ otimization methods.
In particular, in \cref{sec:results} we prove the non-convergence results in 
\cref{thm:new:rob:cond} (the main result of this article),
\cref{THM:non_convergence_modified_Adam:specific:setup:SCOPE3},
\cref{prop:prop:non_convergence_modified_Adam:specific:setup:main:2}, and
\cref{intro:thm:2}.
\Cref{whole:vec:thm} above is an immediate consequence of the non-convergence result in \cref{intro:thm:2}.

\section{A priori bounds for Adam and other stochastic gradient descent (SGD) optimization methods}
\label{section:Aprioribounds}
In this section we establish suitable pathwise a priori bounds for Adam 
(cf.\ \cite{KingmaBa2024_Adam} and, \eg,  \cite[Section~7.9]{jentzen2023mathematical}) and other \SGD\ optimization methods 
(cf., \eg, \cite[Chapter~7]{jentzen2023mathematical}).
In \cref{prop:a_priori_bound_one_dimensional} we establish appropriate a priori bounds for sample paths of
 standard \SGD\ (cf., \eg,  \cite[Section~7.2]{jentzen2023mathematical}),
 Adagrad (cf.\ \cite{duchi2011adaptive} and, \eg,  \cite[Section~7.6]{jentzen2023mathematical}),
 RMSprop (cf.\ \cite{Hinton24_RMSprop} and, \eg,  \cite[Section~7.7]{jentzen2023mathematical}), and
 bias-adjusted RMSprop (cf., \eg,  \cite[Section~7.7]{jentzen2023mathematical})
  optimizers. 
 In \cref{prop:a_priori_bound_one_dim:Adam:1} we establish suitable a priori bounds for sample paths of 
standard \SGD,
momentum \SGD\ 
(cf.\ \cite{Poljak_momentum_SGD} and, \eg,  \cite[Section~7.4]{jentzen2023mathematical}),
Adagrad,
RMSprop, 
bias-adjusted RMSprop,
and 
Adam optimizers in the situation of suitably bounded learning rates.
In \cref{prop:one_dim:Adam,cor:a_priori_bound_gen:momentum} we establish suitable a priori bounds for sample paths of 
RMSprop, 
bias-adjusted RMSprop,
and
Adam optimizers.
Our proof of \cref{cor:a_priori_bound_gen:momentum} is based on
applications of \cref{prop:a_priori_bound_one_dim:Adam:1} and \cref{prop:one_dim:Adam}.
\cref{cor:a_priori_bound_gen:momentum:tilde}
provides appropriate
coordinatewise a priori bounds for sample paths of Adam and other adaptive \SGD\ optimization methods.
\Cref{cor:a_priori_bound_gen:momentum:tilde} follows directly from \cref{cor:a_priori_bound_gen:momentum}.
We employ the a priori bounds established in the statement of \cref{cor:a_priori_bound_gen:momentum:tilde} in our proof of the non-convergence result for Adam and other adaptive \SGD\ optimization methods in \cref{prop:prop:non_convergence_modified_Adam:specific:setup:SCOPE3} in \cref{sec:results}.\\

\subsection{A priori bounds for the standard SGD otimization method}

\begin{prop}
	\label{prop:a_priori_bound_one_dimensional}
	Let 
	$ \gamma \colon \N \to \R $, 
	$
	X \colon \N \to \R
	$, 
	and 
	$ \Theta \colon \N_0 \to \R $ satisfy 
	for all $ n \in \N $ that
	\begin{equation}
		\label{eq:recursion}
		\Theta_n
		= 
		\Theta_{ n - 1 }
		-
		\gamma_n 
		( \Theta_{ n - 1 } - X_n )
	\end{equation}
	and let $ \delta \in \N $, $ \fc \in (0,\infty) $
	satisfy 
	for all $ n \in \N \cap [ \delta, \infty) $ with 
	$ \min_{ m \in \N \cap [1,\delta] } | \Theta_{ n - m } | \geq \fc $ 
	that 
	\begin{equation}
		\label{eq:learning_rate_smallness}
		\textstyle 
		0 \leq \gamma_n \leq 1
		\qquad 
		\text{and}
		\qquad 
		| X_n |
		\leq 
		\fc 
		.
	\end{equation}
	Then 
	\begin{equation}
		\label{eq:a_priori_to_prove}
		\textstyle 
		\sup_{ n \in \N_0 }
		| \Theta_n |
		\leq
		\bigl[ 
		1 
		+ 
		\sup_{ n \in \N } | \gamma_n | 
		\bigr]^{ \delta }
		\bigl( 
		\max\{ \fc, | \Theta_0 | \}
		+
		\sup_{ n \in \N }
		| X_n |
		\bigr)
		.
	\end{equation}
\end{prop}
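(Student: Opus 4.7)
The plan is to prove the uniform bound by strong induction on $n \in \N_0$, exploiting a dichotomy: each step $n \geq \delta$ is either ``regular''---meaning $\gamma_n \in [0,1]$ so that the recursion is a genuine convex combination and $|X_n| \leq \fc$---or, by the contrapositive of \eqref{eq:learning_rate_smallness}, must have been preceded within the last $\delta$ indices by a ``small'' value $|\Theta_k| < \fc$. I would abbreviate $G := \sup_{n \in \N}|\gamma_n|$, $M := \sup_{n \in \N}|X_n|$, $\bar\Theta := \max\{\fc,|\Theta_0|\}$, and $C := \bar\Theta + M$, and assume $G,M < \infty$ (otherwise \eqref{eq:a_priori_to_prove} is trivial).

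Two elementary inequalities power the induction. Rewriting \eqref{eq:recursion} as $\Theta_n = (1-\gamma_n)\Theta_{n-1} + \gamma_n X_n$ yields the crude growth bound $|\Theta_n| \leq (1+G)|\Theta_{n-1}| + GM$, which iterates to
\begin{equation*}
|\Theta_{k+j}| \leq (1+G)^j |\Theta_k| + M\bigl((1+G)^j - 1\bigr)
\qquad\text{for all } k,j \in \N_0.
\end{equation*}
On the other hand, whenever $\gamma_n \in [0,1]$ and $|X_n| \leq \fc$, the recursion expresses $\Theta_n$ as a genuine convex combination of $\Theta_{n-1}$ and $X_n$, giving the contraction-type bound $|\Theta_n| \leq \max\{|\Theta_{n-1}|,\fc\}$. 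Both inequalities will cooperate with the induction hypothesis.

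The induction itself runs as follows. The case $n=0$ is immediate since $|\Theta_0| \leq \bar\Theta \leq (1+G)^\delta C$. For $1 \leq n \leq \delta$, the iterated crude bound with $k=0$ and $j=n$ directly gives $|\Theta_n| \leq (1+G)^\delta(|\Theta_0|+M) \leq (1+G)^\delta C$. For $n > \delta$, I would split according to whether all of $|\Theta_{n-\delta}|,\ldots,|\Theta_{n-1}|$ are at least $\fc$. If yes, then \eqref{eq:learning_rate_smallness} applies at step $n$ and the contraction bound together with the inductive hypothesis yields $|\Theta_n| \leq \max\{(1+G)^\delta C,\fc\} = (1+G)^\delta C$. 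If no, I would let $k^*$ be the largest index in $\{n-\delta,\ldots,n-1\}$ with $|\Theta_{k^*}| < \fc$; then the iterated crude bound with starting point $k^*$ applied over the at most $\delta$ subsequent (possibly wild) steps yields $|\Theta_n| \leq (1+G)^\delta \fc + M\bigl((1+G)^\delta - 1\bigr) \leq (1+G)^\delta(\fc+M) \leq (1+G)^\delta C$.

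The only place where real care is needed is this last ``recent reset'' subcase: one has to keep careful combinatorial track of how many wild steps can occur after the most recent small iterate---at most $\delta$, by the choice of $k^*$---so that the amplification factor is controlled by exactly $(1+G)^\delta$. Once this telescoped estimate is in place, all three cases collapse to the single upper bound $(1+G)^\delta C$, and the induction closes.
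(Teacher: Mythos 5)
Your proposal is correct and takes essentially the same route as the paper: the iterated crude growth bound with factor $(1+G)^{j}$ over at most $\delta$ steps (started either at $\Theta_0$ or at the most recent iterate with $|\Theta_k|<\fc$), combined with the convex-combination estimate $|\Theta_n|\leq\max\{|\Theta_{n-1}|,\fc\}$ when the preceding $\delta$ iterates all have modulus at least $\fc$, closed by induction. The only cosmetic difference is that the paper phrases the ``all large'' case as the monotonicity $|\Theta_n|\leq|\Theta_{n-1}|$ (including $\Theta_n$ itself in the window) and then inducts on the resulting one-step inequality, which is the same argument in substance.
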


\begin{cproof}{prop:a_priori_bound_one_dimensional}
	Throughout this proof let 
	$ \rho, \fC \in [0,\infty] $
	satisfy 
	\begin{equation}
		\label{eq:definition_of_sup_gamma_and_sup_X}
		\textstyle 
		\rho = 
		\sup_{ n \in \N } | \gamma_n | 
		\qqandqq
		\fC = \sup_{ n \in \N } | X_n |
	\end{equation}
	and assume without loss of generality that $ \rho + \fC < \infty $. 
	\Nobs that 
	\cref{eq:recursion} 
	ensures that for all $ m \in \N $ 
	it holds that 
	\begin{equation}
		\begin{split}
			| \Theta_m | 
			& \leq 
			| \Theta_{ m - 1 } |
			+
			| \gamma_m |
			| \Theta_{ m - 1 } - X_m |
			\\&\leq 
			| \Theta_{ m - 1 } |
			+
			| \gamma_m |
			\bigl[ 
			| \Theta_{ m - 1 } | + | X_m |
			\bigr]
			\\ &
			\leq 
			| \Theta_{ m - 1 } |
			+
			\rho
			\bigl[ 
			| \Theta_{ m - 1 } | + \fC
			\bigr]
			=
			( 1 + \rho ) | \Theta_{ m - 1 } | 
			+ \rho \fC
			.
		\end{split}
	\end{equation}
	This \proves[pei] for all $ n, m \in \N $ 
	with $ n - m \geq 0 $
	that
	\begin{equation}
		\begin{split}
			| \Theta_n | 
			& \leq 
			( 1 + \rho ) | \Theta_{ n - 1 } | 
			+ 
			%   \sum_{ k = .. }^{ .. }
			%   ( 1 + \rho )^k
			\rho \fC
			\\&\leq 
			( 1 + \rho )^2 | \Theta_{ n - 2 } | 
			+
			( 1 + \rho ) \rho \fC 
			+ \rho \fC
			\\ &
			\leq 
			( 1 + \rho )^3 | \Theta_{ n - 3 } | 
			+
			( 1 + \rho )^2 \rho \fC
			+
			( 1 + \rho ) \rho \fC 
			+ \rho \fC
			\\ & \leq 
			\dots 
			\\&\leq \textstyle
			( 1 + \rho )^m | \Theta_{ n - m } | 
			+
			\left[ 
			\sum_{ k = 0 }^{ m - 1 }
			( 1 + \rho )^k \rho \fC 
			\right]
			\\&=\textstyle
			( 1 + \rho )^m | \Theta_{ n - m } | 
			+
			\left[ 
			\sum_{ k = 0 }^{ m - 1 }
			( 1 + \rho )^k 
			\right]
			\rho \fC 
			\\ & =
			%   ( 1 + \rho )^m | \Theta_{ n - m } | 
			%   +
			%   \left[ 
			%     \frac{ 
				%       \bigl( 
				%         ( 1 + \rho )^m - 1 
				%       \bigr)
				%     }{
				%       \bigl(
				%         ( 1 + \rho ) - 1
				%       \bigr)
				%     }
			%   \right]
			%   \rho \fC 
			% =
			( 1 + \rho )^m | \Theta_{ n - m } | 
			+
			\bigl( 
			( 1 + \rho )^m - 1 
			\bigr)
			\fC 
			\\&\leq
			( 1 + \rho )^m 
			\bigl( 
			| \Theta_{ n - m } | 
			+ \fC
			\bigr) 
			% \\ & \leq 
			%   ( 1 + \rho )^m 
			%   \bigl( 
			%     | \Theta_{ n - m } | 
			%     + \fC
			%   \bigr)
			.
		\end{split}
	\end{equation}
	This \proves[pei] for all $ n, m \in \N_0 $ with $ n - m \geq 0 $ that
	\begin{equation}
		\label{eq:recursive_growth_bound_with_large_learning_rates}
		| \Theta_n | 
		\leq 
		( 1 + \rho )^m 
		\bigl( 
		| \Theta_{ n - m } | 
		+ \fC
		\bigr)
		.
	\end{equation}
	This \proves[pei] for all $ n \in \N_0 $ that 
	\begin{equation}
		| \Theta_n | 
		\leq 
		( 1 + \rho )^n 
		\bigl( 
		| \Theta_0 | 
		+ \fC
		\bigr)
		.
	\end{equation}
	This \proves[pei] for all $ n \in \N_0 \cap [0,\delta] $ that 
	\begin{equation}
		\label{eq:estimate_for_small_values_of_n}
		| \Theta_n | 
		\leq 
		( 1 + \rho )^n 
		\bigl( 
		| \Theta_0 | 
		+ \fC
		\bigr)
		\leq 
		( 1 + \rho )^{ \delta }
		\bigl( 
		| \Theta_0 | 
		+ \fC
		\bigr)
		\leq 
		( 1 + \rho )^{ \delta } 
		\bigl( 
		\max\{ \fc, | \Theta_0 | \}
		+ \fC
		\bigr)
		.
	\end{equation}
	\Moreover \cref{eq:recursive_growth_bound_with_large_learning_rates} shows 
	that for all $ n \in \N_0 \cap [ \delta, \infty ) $, 
	$ m \in \N_0 \cap [0,\delta] $
	it holds that 
	\begin{equation}
		| \Theta_n | 
		\leq 
		( 1 + \rho )^m 
		\bigl( 
		| \Theta_{ n - m } | 
		+ \fC
		\bigr)
		\leq 
		( 1 + \rho )^{ \delta }
		\bigl( 
		| \Theta_{ n - m } | 
		+ \fC
		\bigr)
		.
	\end{equation}
	This \proves[pei] 
	for all $ n \in \N_0 \cap [ \delta, \infty ) $, 
	$ m \in \N_0 \cap [0, \delta] $
	with 
	$
	| \Theta_{ n - m } | \leq \fc 
	$
	that
	\begin{equation}
		| \Theta_n | 
		\leq 
		( 1 + \rho )^{ \delta }
		\bigl( 
		| \Theta_{ n - m } | 
		+ \fC
		\bigr)
		\leq 
		( 1 + \rho )^{ \delta } 
		\bigl( 
		\fc 
		+ \fC
		\bigr)
		.
	\end{equation}
	This \proves[pei] 
	for all $ n \in \N_0 \cap [ \delta, \infty ) $
	with 
	$
	\min_{ m \in \N_0 \cap [0, \delta] }
	| \Theta_{ n - m } | \leq \fc 
	$
	that
	\begin{equation}
		\label{eq:bound_if_min_is_small}
		| \Theta_n | 
		\leq 
		( 1 + \rho )^{ \delta } 
		\bigl( 
		\fc 
		+ \fC
		\bigr)
		\leq 
		( 1 + \rho )^{ \delta } 
		\bigl( 
		\max\{ \fc, | \Theta_0 | \}
		+ \fC
		\bigr)
		.
	\end{equation}
	\Moreover 
	\cref{eq:learning_rate_smallness} 
	ensures that for all 
	$ n \in \N \cap [ \delta, \infty) $ with 
	$ \min_{ m \in \N \cap [ 1, \delta ] } | \Theta_{n-m} | \geq \fc $ 
	it holds that 
	\begin{equation}
		\begin{split}
			| \Theta_n |
			=
			\left|
			\Theta_{ n - 1 }
			- 
			\gamma_n
			(
			\Theta_{ n - 1 } - X_n
			)
			\right|
			&=
			\left|
			( 1 - \gamma_n )
			\Theta_{ n - 1 }
			+
			\gamma_n
			X_n
			\right|
			\\&\leq 
			\left|
			1 - \gamma_n 
			\right|
			\left| 
			\Theta_{ n - 1 }
			\right|
			+
			\left|
			\gamma_n
			\right|
			\left|
			X_n
			\right|
			\\ &
			=
			\left(
			1 - \gamma_n 
			\right)
			\left| 
			\Theta_{ n - 1 }
			\right|
			+
			\gamma_n
			\left|
			X_n
			\right|
			\\&\leq 
			\left(
			1 - \gamma_n 
			\right)
			\left| 
			\Theta_{ n - 1 }
			\right|
			+
			\gamma_n
			\fc 
			\\ & 
			\textstyle 
			\leq 
			\left(
			1 - \gamma_n 
			\right)
			\left| 
			\Theta_{ n - 1 }
			\right|
			+
			\gamma_n 
			\left[ 
			\min_{ m \in \N \cap [1,\delta] } 
			\left| 
			\Theta_{ n - m }
			\right|
			\right]
			\\&\leq 
			\left(
			1 - \gamma_n 
			\right)
			\left| 
			\Theta_{ n - 1 }
			\right|
			+
			\gamma_n 
			\left| 
			\Theta_{ n - 1 }
			\right|
			.
		\end{split}
	\end{equation}
	This \proves[pei] 
	for all 
	$ n \in \N_0 \cap [ \delta, \infty) $ with 
	$ \min_{ m \in \N_0 \cap [ 0, \delta ] } | \Theta_{n-m} | \geq \fc $ 
	that 
	\begin{equation}
		\begin{split}
			| \Theta_n |
			\leq 
			\left(
			1 - \gamma_n 
			\right)
			\left| 
			\Theta_{ n - 1 }
			\right|
			+
			\gamma_n 
			\left| 
			\Theta_{ n - 1 }
			\right|
			% \\ &
			=
			\left| 
			\Theta_{ n - 1 }
			\right|
			.
		\end{split}
	\end{equation}
	This and \cref{eq:bound_if_min_is_small} 
	prove that 
	for all 
	$ n \in \N_0 \cap [ \delta, \infty) $ 
	it holds that 
	\begin{equation}
		| \Theta_n |
		\leq 
		\max\bigl\{ 
		\left| 
		\Theta_{ n - 1 }
		\right|
		,
		( 1 + \rho )^{ \delta } 
		\bigl( 
		\max\{ \fc, | \Theta_0 | \}
		+ \fC
		\bigr)
		\bigr\} 
		.
	\end{equation}
	Combining this and \cref{eq:estimate_for_small_values_of_n} with induction demonstrates that for all $ n \in \N_0 $ it holds that
	\begin{equation}
		| \Theta_n |
		\leq 
		( 1 + \rho )^{ \delta } 
		\bigl( 
		\max\{ \fc, | \Theta_0 | \}
		+ \fC
		\bigr)
		.
	\end{equation}
	This and \cref{eq:definition_of_sup_gamma_and_sup_X} establish \cref{eq:a_priori_to_prove}.
\end{cproof}

\subsection{A priori bounds for momentum SGD optimization methods}

\begin{athm}{prop}{prop:a_priori_bound_one_dim:Adam:1}
	Let $\alpha\in[0,1)$, $\fc \in [0,\infty) $, $\cst\in(0,\infty)$, $ \Cst\in[\cst,\infty)$, $\pars,N\in\N$, $M\in\N_0$,
	$\mom\in\R$,
	let
	$ \gamma \colon \N \to [0,\infty) $ satisfy for all $n\in\N\cap[N,N+M]$ that
	\begin{equation}
		\label{eq:learning_rate_limited}
		\gamma_n\leq \frac{1-\alpha}{(1+2\alpha)\max\{1,\Cst\}}
		,
	\end{equation}
	let 
	$ \Theta\colon\N_0\to \R $ and $G\colon\N_0\to\R$ satisfy for all $n\in\N$ that 
	\begin{equation}
		\begin{split}
			\Theta_n
			&= 
			\Theta_{ n - 1 }
			-
			\gamma_n \PR{\alpha^n\mom+
				\textstyle\sum_{k=1}^n(1-\alpha)\alpha^{n-k}G_k} 
			\qqandqq
			\Cst(1-\alpha)(\vass{\Theta_0}+\fc)\geq\vass{\mom}
			,
		\end{split}
	\end{equation}
		and assume for all $n\in\N$ that
	\begin{equation}
		\label{eq:setup:gen_grad}
		\pr{\Theta_{n-1}-\fc}
		\pr{\cst+(\Cst-\cst)\indicator{(-\infty,\fc]}(\Theta_{n-1})}
		\leq
		G_n
		\leq
		\pr{\Theta_{n-1}+\fc}
		\pr{\cst+(\Cst-\cst)\indicator{[-\fc,\infty)}(\Theta_{n-1})},
	\end{equation}
	Then it holds that for all $n\in\N\cap[N,N+M]$ that
	\begin{equation}
		\label{it:upper:bound:theta:base}
		\begin{split}
			&\vass{\Theta_{n}}
			\leq
			\max\pRbbb{\frac{3(\alpha\Cst+(1-\alpha)\cst) \fc}{(1-\alpha)\cst}+\fc,3\vass{\Theta_{N-1}}+\fc,\textstyle\max_{k\in\{1,2,\dots,N\}}\vass{\Theta_{k-1}}}.
		\end{split}
	\end{equation}
\end{athm}

\begin{cproof}{prop:a_priori_bound_one_dim:Adam:1}
	Throughout this proof assume without loss of generality that $\Theta\colon\N_0\cup\{-1\}\to \R$ satisfies for all $n\in\N$ that
		\begin{equation}
		\begin{split}
			\label{eq:recursion:Adam:prep:1}
			\Theta_{-1}=\Theta_0,
			\quad
			G_0=\tfrac{\mom}{1-\alpha},
			\qandq
			\Theta_n
			&= 
			\Theta_{ n - 1 }
			-
			\gamma_n \PR{
				\textstyle\sum_{k=0}^n(1-\alpha)\alpha^{n-k}G_k}
			,
		\end{split}
	\end{equation}
	%assume w.l.o.g. that $(1-\alpha)^2\leq 2\alpha$ ($\alpha>>1/4$) and
	let $\fC,T\in\R$ satsify
	\begin{equation}
		\label{eq:setup:1st:part:Adam:aprior0}
		\fC=\max\pRbbb{\frac{(\alpha\Cst+(1-\alpha)\cst) \fc}{(1-\alpha)\cst},\vass{\Theta_{N-1}},\max_{k\in\{1,2,\dots,N\}}\frac{\vass{\Theta_{k-1}}-\fc}{3}}
	\end{equation}
	\begin{equation}
		\label{eq:setup:2nd:part:Adam:apriori}
		\qqandqq
		T=\frac{1-\alpha}{(1+2\alpha)\max\{1,\Cst\}},
	\end{equation}
	and let $\lambda\colon\N_0\to\R$ satisfy for all $n\in\N_0$ that
	\begin{equation}
		\label{eq:setup:1st:part:Adam:apriori}
		\textstyle
		\lambda_n=\sum_{k=0}^n(1-\alpha)\alpha^{n-k}G_k,
	\end{equation}
	\Nobs that \eqref{eq:setup:2nd:part:Adam:apriori} and \eqref{eq:setup:1st:part:Adam:apriori} \prove\ that for all $n\in\N_0$ it holds that
	\begin{equation}
		\label{eq:global:G:estimate}
		3\max\pRb{T\alpha,T\alpha\Cst(1-\alpha)^{-1}}\leq 1
		\qqandqq		
		\vass{G_n}
		\leq \Cst(\vass{\Theta_{n-1}}+\fc).
	\end{equation}
	\Nobs that \eqref{eq:setup:1st:part:Adam:apriori} \proves\ that for all $n\in\N$ it holds that
	\begin{equation}
		\begin{split}
			\label{eq:lambda:recursion}
			\lambda_0=\mom
			\qandq
			\lambda_n
			&=\textstyle(1-\alpha)G_n
			+
			\alpha\sum_{k=0}^{n-1}(1-\alpha)\alpha^{n-1-k}G_k
			=(1-\alpha)G_n+\alpha\lambda_{n-1}
			.
		\end{split}
	\end{equation}
	\Moreover
	\eqref{eq:setup:1st:part:Adam:apriori}
	and
	\eqref{eq:global:G:estimate} \prove\ that for all $n\in\N_0$ it holds that
	\begin{equation}
		\label{eq:apriori:bound:increments}
		\begin{split}
			\vass{\lambda_n}
			\leq\textstyle\sum_{k=0}^n(1-\alpha)\alpha^{n-k}\vass{G_k}
			&\leq\textstyle\sum_{k=0}^n(1-\alpha)\alpha^{n-k}\Cst(\vass{\Theta_{k-1}}+\fc)
			\\&\textstyle\leq 
			\PR{\sum_{k=0}^n(1-\alpha)\alpha^{n-k}}
			\PR{\Cst\fc+ \Cst\max_{k\in\{0,1,\dots,n\}}\vass{\Theta_{k-1}}}
			\\&\textstyle\leq 
			\PR{(1-\alpha)\sum_{k=0}^\infty\alpha^{k}}
			\PR{\Cst\fc+ \Cst\max_{k\in\{0,1,\dots,n\}}\vass{\Theta_{k-1}}}
			\\&\textstyle= \Cst\fc+ \Cst\max_{k\in\{0,1,\dots,n\}}\vass{\Theta_{k-1}}
			.
		\end{split}
	\end{equation}
	\Moreover 
	\eqref{eq:recursion:Adam:prep:1},
	\eqref{eq:setup:1st:part:Adam:apriori}, 
	and \eqref{eq:lambda:recursion} \prove\ that for all $n\in\N_0$ it holds that
	\begin{equation}
		\label{eq:recursion:alternate:representation}
		\begin{split}
			\Theta_{n+1}
			=\Theta_{n}-\gamma_{n+1}\lambda_{n+1}
			&=\Theta_{n}-\gamma_{n+1}(1-\alpha)G_{n+1}-\gamma_{n+1}\alpha\lambda_{n}
			.
		\end{split}
	\end{equation}
	This 
	\eqref{eq:learning_rate_limited},
	\eqref{eq:setup:gen_grad},
	\eqref{eq:setup:1st:part:Adam:aprior0}, 
	and
	\eqref{eq:setup:2nd:part:Adam:apriori}
	\prove\ that for all $n\in\N_0\cap[N-1,N+M)$ with $\vass{\Theta_{n}}\leq\fc$ it holds that
	\begin{equation}
		\label{eq:gradient:step}
		\begin{split}
			&\vass{\Theta_{n}-\gamma_{n+1}(1-\alpha)G_{n+1}}+\gamma_{n+1}\alpha\Cst\fc		
			\\		&\leq\textstyle\max_{t\in\{1,-1\}}\vass{\Theta_{n}-\gamma_{n+1}(1-\alpha)\Cst(\Theta_{n}+t\fc)
			}
			+\gamma_{n+1}\alpha\Cst\fc	
			\\		&=\textstyle\max_{t\in\{1,-1\}}\vass{(1-\gamma_{n+1}(1-\alpha)\Cst)\Theta_{n}+t\gamma_{n+1}(1-\alpha)\Cst\fc
			}
			+\gamma_{n+1}\alpha\Cst\fc	
			\\		&=\textstyle(1-\gamma_{n+1}(1-\alpha)\Cst)\vass{\Theta_{n}}+\gamma_{n+1}(1-\alpha)\Cst\fc
			+\gamma_{n+1}\alpha\Cst\fc	
			\\		&\leq(1-\gamma_{n+1}(1-\alpha)\Cst)\fC+\gamma_{n+1}(1-\alpha)\Cst\fc
			+\gamma_{n+1}\alpha\Cst\fc	
			\\&=\fC-\gamma_{n+1}\PR{(1-\alpha)\Cst(\fC-\fc)
				-\alpha\Cst\fc	}
			\\&\leq\fC-\gamma_{n+1}\PR{(1-\alpha)\cst(\fC-\fc)
				-\alpha\Cst\fc	}
			\leq\fC
			.
		\end{split}
	\end{equation}
	\Moreover 
	\eqref{eq:learning_rate_limited},
	\eqref{eq:setup:gen_grad},
	\eqref{eq:setup:1st:part:Adam:aprior0}, 
	and
	\eqref{eq:setup:2nd:part:Adam:apriori} \prove\ that for all $n\in\N_0\cap[N-1,N+M)$ with $\fC\geq\vass{\Theta_{n}}\geq\fc$ it holds that
	\begin{equation}
		\label{eq:gradient:step2}
		\begin{split}
			&\vass{\Theta_{n}-\gamma_{n+1}(1-\alpha)G_{n+1}
			}+\gamma_{n+1}\alpha\Cst\fc		
			\\		&\leq\textstyle\max_{t\in\{\cst,\Cst\}}\PRb{(1-\gamma_{n+1}(1-\alpha)t)\vass{\Theta_{n}}+\gamma_{n+1}(1-\alpha)t\fc}
			+\gamma_{n+1}\alpha\Cst\fc	
			\\		&\leq\textstyle\max_{t\in\{\cst,\Cst\}}\PRb{(1-\gamma_{n+1}(1-\alpha)t)\fC+\gamma_{n+1}(1-\alpha)t\fc}
			+\gamma_{n+1}\alpha\Cst\fc	
			\\		&=(1-\gamma_{n+1}(1-\alpha)\cst)\fC+\gamma_{n+1}(1-\alpha)\cst\fc
			+\gamma_{n+1}\alpha\Cst\fc	
			\\&=\fC-\gamma_{n+1}\PR{(1-\alpha)\cst\fC-(1-\alpha)\cst\fc
				-\alpha\Cst\fc	}
			\leq \fC
			.
		\end{split}
	\end{equation}
	This, 
	\eqref{eq:learning_rate_limited},
	\eqref{eq:setup:2nd:part:Adam:apriori},
	\eqref{eq:apriori:bound:increments},
	\eqref{eq:recursion:alternate:representation},
	and
	\eqref{eq:gradient:step} \prove\ that for all $n\in\N_0\cap[N-1,N+M)$ with $\vass{\Theta_{n}}\leq\fC$ it holds that
	\begin{equation}
		\label{eq:one:step:for:small:theta}
		\begin{split}
			\vass{\Theta_{n+1}}
			&=\vass{\Theta_{n}-\gamma_{n+1}(1-\alpha)G_{n+1}-\gamma_{n+1}\alpha\lambda_{n}}
			\\&\leq\vass{\Theta_{n}-\gamma_{n+1}(1-\alpha)G_{n+1}}
			+\gamma_{n+1}\alpha\vass{\lambda_{n}}
			\\&\leq \vass{\Theta_{n}-\gamma_{n+1}(1-\alpha)G_{n+1}}
			+\gamma_{n+1}\alpha\Cst\prb{\fc+\textstyle\max_{k\in\{0,1,\dots,n\}}\vass{\Theta_{k-1}}}
			\\&\leq \fC
			+\gamma_{n+1}\alpha\Cst\textstyle\max_{k\in\{0,1,\dots,n\}}\vass{\Theta_{k-1}}
			\\&\leq \fC
			+T\alpha\Cst\textstyle\max_{k\in\{0,1,\dots,n\}}\vass{\Theta_{k-1}}
			.
		\end{split}
	\end{equation}
	Combining 
	this and
	\eqref{eq:setup:1st:part:Adam:aprior0} with 
	\eqref{eq:setup:2nd:part:Adam:apriori} and induction \proves\ that for all $n\in\N_0\cap[N-1,N+M]$ with $\alpha=0$ it holds that
	\begin{equation}
		\label{eq:alpha:zero}
		\vass{\Theta_n}\leq\fC\leq 3\fC+\fc.
	\end{equation}
	\Moreover 
	\eqref{eq:learning_rate_limited},
	\eqref{eq:setup:gen_grad},
	\eqref{eq:setup:2nd:part:Adam:apriori}, 
	\eqref{eq:apriori:bound:increments}, and
	\eqref{eq:recursion:alternate:representation} \prove\ that for all $n\in\N_0\cap[N-1,N+M)$ with $\alpha>0$, $\Theta_{n}\geq\fC$, and $\max_{k\in\{0,1,\dots,n\}}\vass{\Theta_{k-1}}\leq 3\fC+\fc$ it holds that
	\begin{equation}
		\label{eq:sign:change:value:limit_pos}
		\begin{split}
			\Theta_{n+1}
			&=\Theta_{n}-\gamma_{n+1}(1-\alpha)G_{n+1}-\gamma_{n+1}\alpha\lambda_{n}
			\\&\geq
			\Theta_{n}-\gamma_{n+1}(1-\alpha)\Cst(\Theta_{n}+\fc)
			-\gamma_{n+1}\alpha\Cst(\fc+\textstyle\max_{k\in\{0,1,\dots,n\}}\vass{\Theta_{k-1}})
			\\&\geq
			(1-\gamma_{n+1}(1-\alpha)\Cst)\Theta_{n}-\gamma_{n+1}(1-\alpha)\Cst \fc
			-\gamma_{n+1}\alpha\Cst(3\fC+2\fc)
			\\&\geq
			(1-\gamma_{n+1}(1-\alpha)\Cst)\fC
			-\gamma_{n+1}(1-\alpha)\Cst \fc
			-\gamma_{n+1}\alpha\Cst(3\fC+2\fc)
			\\&\geq
			\fC
			-T\Cst\pr{(1-\alpha)\fC+(1-\alpha) \fc
				+\alpha(3\fC+2\fc)}
			\\&=
			\fC
			-T\Cst\pr{\fC+2\alpha\fC+\fc+\alpha\fc}
			\\&\geq\textstyle
			\fC-T\Cst(\fC+\fc)(1+2\alpha)
			\geq
			\fC-(1-\alpha)(\fC+\fc)
			> -\fC
			.
		\end{split}
	\end{equation}
	\Moreover 
	\eqref{eq:learning_rate_limited},
	\eqref{eq:setup:gen_grad},
	\eqref{eq:setup:2nd:part:Adam:apriori}, and
	\eqref{eq:recursion:alternate:representation} \prove\ that for all $n\in\N_0\cap[N-1,N+M)$ with $\Theta_n\geq\fC$, $\lambda_n\geq0$ it holds that
	\begin{equation}
		\label{eq:upper:half:no:increase:case}
		\begin{split}
			\Theta_{n+1}
			=\Theta_{n}-\gamma_{n+1}(1-\alpha)G_{n+1}-\gamma_{n+1}\alpha\lambda_{n}
			&\leq 
			\Theta_{n}-\gamma_{n+1}(1-\alpha)\cst(\Theta_{n}-\fc)
			\\&\leq 
			\Theta_{n}-\gamma_{n+1}(1-\alpha)\cst(\fC-\fc)
			\leq \Theta_{n}
			.
		\end{split}
	\end{equation}
	\Moreover combining
	\eqref{eq:lambda:recursion} with induction \proves\ that for all $n,k\in\N$ it holds that
	\begin{equation}
		\begin{split}
			\lambda_{n+k}
			&=\alpha\lambda_{n+k-1}+(1-\alpha)G_{n+k}
			=\textstyle\alpha^k\lambda_n+\sum_{j=0}^{k-1}\alpha^j(1-\alpha)G_{n+k-j}
			.
		\end{split}
	\end{equation}
	This,
	\eqref{eq:recursion:alternate:representation}, and induction \prove\ that for all $n\in\N_0$, $m\in\N$ it holds that
	\begin{equation}
		\label{eq:large:step:recursion:for:Adam:gen}
		\begin{split}
			\Theta_{n+m}
			&=\Theta_{n+m-1}-\gamma_{n+m}\lambda_{n+m}
			\\&=\textstyle\Theta_n-\sum_{k=1}^{m}\gamma_{n+k}\lambda_{n+k}
			\\&=\textstyle
			\Theta_n-
			\sum_{k=1}^{m}\gamma_{n+k}\PRb{\alpha^k\lambda_n+\sum_{j=0}^{k-1}\alpha^j(1-\alpha)G_{n+k-j}}
			.
		\end{split}
	\end{equation}
	This, 
	\eqref{eq:learning_rate_limited}, 
	\eqref{eq:setup:gen_grad},
	and
	\eqref{eq:setup:2nd:part:Adam:apriori} \prove\ that for all $n\in\N_0\cap[N-1,N+M)$, $m\in\N\cap(0,N+M-n]$ with $\min\{\Theta_n,\Theta_{n+1},\dots,\Theta_{n+m}\}\geq\fC$, $\lambda_n<0$ it holds that
	\begin{equation}
		\label{eq:large:step:recursion:for:Adam:upper}
		\begin{split}
			\Theta_{n+m}
			&=\textstyle\Theta_n-
			\sum_{k=1}^{m}\gamma_{n+k}\PRb{\alpha^k\lambda_n+\sum_{j=0}^{k-1}\alpha^j(1-\alpha)G_{n+k-j}}
			\\&\leq\textstyle\Theta_n-
			\sum_{k=1}^{m}\gamma_{n+k}\PRb{\alpha^k\lambda_n+\sum_{j=0}^{k-1}\alpha^j(1-\alpha)\cst(\Theta_{n+k-j-1}-\fc)}
			\\&\leq\textstyle
			\Theta_n-
			\sum_{k=1}^{m}\gamma_{n+k}\alpha^k\lambda_n
			\\&=\textstyle
			\Theta_n+
			\vass{\lambda_n}\sum_{k=1}^{m}\gamma_{n+k}\alpha^k
			\\&\leq\textstyle
			\Theta_n+T
			\vass{\lambda_n}\sum_{k=1}^{m}\alpha^k
			\\&\leq\textstyle
			\Theta_n+T\alpha
			\vass{\lambda_n}\sum_{k=0}^{\infty}\alpha^k
			\\&=\textstyle
			\Theta_n+T\alpha
			\vass{\lambda_n}(1-\alpha)^{-1}
			.
		\end{split}
	\end{equation}
	This and \eqref{eq:global:G:estimate} \prove\ that for all 
	$n\in\N_0\cap[N-1,N+M)$, $m\in\N\cap(0,N+M-n]$ with 
	$\Theta_{n}\leq\fC+T\alpha\Cst(3\fC+\fc)$, 
	$\min\{\Theta_n,\Theta_{n+1},\dots,\Theta_{n+m}\}\geq\fC$, 
	and $-\Cst(3\fC+2\fc)\leq\lambda_n<0$ it holds that
	\begin{equation}
		\label{eq:upper:half:part:1}
		\begin{split}
			\Theta_{n+m}
			&\leq \Theta_n+T\alpha
			\vass{\lambda_n}(1-\alpha)^{-1}
			\\&\leq\fC+T\alpha\Cst(3\fC+\fc)+{T\alpha\Cst
				(3\fC+2\fc)}\pr{1-\alpha}^{-1}
			\\&\leq \fC +T\alpha\Cst(1-\alpha)^{-1}(6\fC+3\fc)
			\\&\leq
			3\fC+\fc
			.
		\end{split}
	\end{equation}
	\Moreover \eqref{eq:upper:half:no:increase:case} \proves[pi]\ that for all $n\in\N_0\cap[N-1,N+M)$, $m\in\N\cap(0,N+M-n]$ with 
	$\fC\leq\Theta_{n}\leq\fC+T\alpha\Cst(3\fC+\fc)$,
	$\min\{\lambda_n,\lambda_{n+1},\dots,\lambda_{n+m-1}\}\geq0$, and
	$\min\{\Theta_n,\Theta_{n+1},\dots,\Theta_{n+m}\}\geq\fC$ it holds that
	\begin{equation}
		\label{eq:upper:half:part:2}
		\begin{split}
			\Theta_{n+m}
			&\leq \Theta_n
			\leq \fC+T\alpha\Cst(3\fC+\fc)
			.
		\end{split}
	\end{equation}
	This,
	\eqref{eq:global:G:estimate},
	and \eqref{eq:upper:half:part:1} \prove\ that 
	for all $n\in\N_0\cap[N-1,N+M)$, $m\in\N\cap(0,N+M-n]$ with 
	$\Theta_{n}\leq\fC+T\alpha\Cst(3\fC+\fc)$,
	$\vass{\lambda_n}\leq\Cst(3\fC+2\fc)$, and
	$\min\{\Theta_n,\Theta_{n+1},\dots,\Theta_{n+m}\}\geq\fC$
	it holds that
	\begin{equation}
		\label{eq:upper:half}
		\begin{split}
			\vass{\Theta_{n+m}}
			&\leq 
			\max\{\fC+T\alpha\Cst(3\fC+\fc),3\fC+\fc\}
			\leq 
			\max\{\fC+\fC+\fc,3\fC+\fc\}
			=
			3\fC+\fc
			.
		\end{split}
	\end{equation}
	\Moreover 
	\eqref{eq:learning_rate_limited},
	\eqref{eq:setup:gen_grad},
	\eqref{eq:setup:2nd:part:Adam:apriori}, 
	\eqref{eq:apriori:bound:increments}, and
	\eqref{eq:recursion:alternate:representation} \prove\ that for all $n\in\N_0\cap[N-1,N+M)$ with $\alpha>0$, $\Theta_{n}\leq-\fC$, and $\max_{k\in\{0,1,\dots,n\}}\vass{\Theta_{k-1}}\leq 3\fC+\fc$ it holds that
	\begin{equation}
		\label{eq:sign:change:value:limit_neg}
		\begin{split}
			\Theta_{n+1}
			&=\Theta_{n}-\gamma_{n+1}(1-\alpha)G_{n+1}-\gamma_{n+1}\alpha\lambda_{n}
			\\&\leq
			\Theta_{n}-\gamma_{n+1}(1-\alpha)\Cst(\Theta_{n}-\fc)
			+\gamma_{n+1}\alpha\Cst(\fc+\textstyle\max_{k\in\{0,1,\dots,n\}}\vass{\Theta_{k-1}})
			\\&\leq
			(1-\gamma_{n+1}(1-\alpha)\Cst)\Theta_{n}
			+\gamma_{n+1}(1-\alpha)\Cst \fc
			+\gamma_{n+1}\alpha\Cst(3\fC+2\fc)
			\\&\leq
			-(1-\gamma_{n+1}(1-\alpha)\Cst)\fC
			+\gamma_{n+1}(1-\alpha)\Cst \fc
			+\gamma_{n+1}\alpha\Cst(3\fC+2\fc)
			\\&\leq
			-\fC
			+T\Cst\pr{(1-\alpha)\fC+(1-\alpha) \fc
				+\alpha(3\fC+2\fc)}
			\\&=
			-\fC
			+T\Cst\pr{\fC+2\alpha\fC+\fc+\alpha\fc}
			\\&\leq\textstyle
			-\fC+T\Cst(\fC+\fc)(1+2\alpha)
			\leq
			-\fC+(1-\alpha)(\fC+\fc)
			< \fC
			.
		\end{split}
	\end{equation}
	Combining this and
	\eqref{eq:sign:change:value:limit_pos} with
	 induction
	\proves\ that 
	for all $n\in\N_0\cap[N-1,N+M)$, $m\in\N\cap[0,N+M-n]$ with 
	$\alpha>0$, 
	$\max_{k\in\{0,1,\dots,n\}}\vass{\Theta_{k-1}}\leq 3\fC+\fc$,
	and 
	$\min\{\vass{\Theta_n},\vass{\Theta_{n+1}},\dots,\vass{\Theta_{n+m}}\}\geq\fC$
	there exists $s\in\{-1,1\}$ such that
	\begin{equation}
		\label{eq:no:sign:change:maintaining:high-values}
		\begin{split}
			\textstyle
			\min\{s\Theta_n,s\Theta_{n+1},\dots,s\Theta_{n+m}\}\geq \fC
			.
		\end{split}
	\end{equation}
	\Moreover 
	\eqref{eq:learning_rate_limited},
	\eqref{eq:setup:gen_grad},
	\eqref{eq:setup:2nd:part:Adam:apriori}, and
	\eqref{eq:recursion:alternate:representation} \prove\ that for all $n\in\N_0\cap[N-1,N+M)$ with $\Theta_n\leq-\fC$, $\lambda_n\leq0$ it holds that
	\begin{equation}
		\label{eq:lower:half:no:increase:case}
		\begin{split}
			\Theta_{n+1}
			=\Theta_{n}-\gamma_{n+1}(1-\alpha)G_{n+1}-\gamma_{n+1}\alpha\lambda_{n}
			&\geq 
			\Theta_{n}-\gamma_{n+1}(1-\alpha)\cst(\Theta_{n}+\fc)
			\\&\geq 
			\Theta_{n}+\gamma_{n+1}(1-\alpha)\cst(\fC-\fc)
			\geq \Theta_{n}
			.
		\end{split}
	\end{equation}
	\Moreover
	\eqref{eq:learning_rate_limited}, 
	\eqref{eq:setup:gen_grad},
	\eqref{eq:setup:2nd:part:Adam:apriori}, and
	\eqref{eq:large:step:recursion:for:Adam:gen} \prove\ that for all $n\in\N_0\cap[N-1,N+M)$, $m\in\N\cap(0,N+M-n]$ with $\max\{\Theta_n,\Theta_{n+1},\dots,\Theta_{n+m}\}\leq-\fC$, $\lambda_n>0$ it holds that
	\begin{equation}
		\label{eq:large:step:recursion:for:Adam:lower}
		\begin{split}
			\Theta_{n+m}
			&=\textstyle\Theta_n-
			\sum_{k=1}^{m}\gamma_{n+k}\PRb{\alpha^k\lambda_n+\sum_{j=0}^{k-1}\alpha^j(1-\alpha)G_{n+k-j}}
			\\&\geq\textstyle\Theta_n-
			\sum_{k=1}^{m}\gamma_{n+k}\PRb{\alpha^k\lambda_n+\sum_{j=0}^{k-1}\alpha^j(1-\alpha)\cst(\Theta_{n+k-j-1}+\fc)}
			\\&\geq\textstyle
			\Theta_n-
			\sum_{k=1}^{m}\gamma_{n+k}\alpha^k\lambda_n
			\\&=\textstyle
			\Theta_n-
			\vass{\lambda_n}\sum_{k=1}^{m}\gamma_{n+k}\alpha^k
			\\&\geq\textstyle
			\Theta_n-T
			\vass{\lambda_n}\sum_{k=1}^{m}\alpha^k
			\\&\geq\textstyle
			\Theta_n-T\alpha
			\vass{\lambda_n}\sum_{k=0}^{\infty}\alpha^k
			\\&=\textstyle
			\Theta_n-T\alpha
			\vass{\lambda_n}(1-\alpha)^{-1}
			.
		\end{split}
	\end{equation}
	This and \eqref{eq:global:G:estimate} \prove\ that for all 
	$n\in\N_0\cap[N-1,N+M)$, $m\in\N\cap(0,N+M-n]$ with 
	$\Theta_{n}\geq-\fC-T\alpha\Cst(3\fC+\fc)$, 
	$\max\{\Theta_n,\Theta_{n+1},\dots,\Theta_{n+m}\}\leq-\fC$, 
	and $\Cst(3\fC+2\fc)\geq\lambda_n>0$ it holds that
	\begin{equation}
		\label{eq:lower:half:part:1}
		\begin{split}
			\Theta_{n+m}
			&\geq \Theta_n-T\alpha
			\vass{\lambda_n}(1-\alpha)^{-1}
			\\&\geq-\fC-T\alpha\Cst (3\fC+\fc)-{T\alpha\Cst
				(3\fC+2\fc)}\pr{1-\alpha}^{-1}
			\\&\geq -\fC -T\alpha\Cst(1-\alpha)^{-1}(6\fC+3\fc)
			\\&\geq
			-3\fC-\fc
			.
		\end{split}
	\end{equation}
	\Moreover \eqref{eq:lower:half:no:increase:case} \proves\ that for all $n\in\N_0\cap[N-1,N+M)$, $m\in\N\cap(0,N+M-n]$ with 
	$-\fC\geq\Theta_{n}\geq-\fC-T\alpha\Cst(3\fC+\fc)$,
	$\max\{\lambda_n,\lambda_{n+1},\dots,\lambda_{n+m-1}\}\leq0$, and
	$\max\{\Theta_n,\Theta_{n+1},\dots,\Theta_{n+m}\}\leq-\fC$ it holds that
	\begin{equation}
		\label{eq:lower:half:part:2}
		\begin{split}
			\Theta_{n+m}
			&\geq \Theta_n
			\geq -\fC-T\alpha\Cst(3\fC+\fc)
			.
		\end{split}
	\end{equation}
	This,
	\eqref{eq:global:G:estimate},
	and \eqref{eq:lower:half:part:1} \prove\ that 
	for all $n\in\N_0\cap[N-1,N+M)$, $m\in\N\cap[0,N+M-n]$ with 
	$\Theta_{n}\geq-\fC-T\alpha\Cst(3\fC+\fc)$,
	$\vass{\lambda_n}\leq\Cst(3\fC+2\fc)$, and 
	$\max\{\Theta_n,\Theta_{n+1},\dots,\Theta_{n+m}\}\leq-\fC$
	it holds that
	\begin{equation}
		\label{eq:lower:half}
		\begin{split}
			\vass{\Theta_{n+m}}
			&\leq 
			\max\{\fC+T\alpha\Cst(3\fC+\fc),3\fC+\fc\}
			\leq 
			\max\{\fC+\fC+\fc,3\fC+\fc\}
			=
			3\fC+\fc
			.
		\end{split}
	\end{equation}
	Combining this,
	\eqref{eq:setup:1st:part:Adam:aprior0},
	\eqref{eq:apriori:bound:increments}, 
	\eqref{eq:upper:half}, and
	\eqref{eq:no:sign:change:maintaining:high-values} with induction
	\proves\ that 
	for all $n\in\N_0\cap[N-1,N+M)$, $m\in\N\cap[0,N+M-n]$ with 
	$\alpha>0$, 
	$\vass{\Theta_{n}}\leq\fC+T\alpha\Cst(3\fC+\fc)$, 
	$\vass{\lambda_n}\leq\Cst(3\fC+2\fc)$,
	and 
	$\min\{\vass{\Theta_n},\vass{\Theta_{n+1}},\dots,\vass{\Theta_{n+m}}\}\geq\fC$
	it holds that
	\begin{equation}
		\label{eq:both:halfs}
		\begin{split}
			\textstyle
			\vass{\Theta_{n+m}}
			&\leq 
			3\fC+\fc
			\qqandqq
			\textstyle
			\vass{\lambda_{n+m}}\leq\Cst\prb{\fc+\max_{k\in\{1,2,\dots,n+m\}}\vass{\Theta_{k-1}}}\leq\Cst(3\fC+2\fc)
			.
		\end{split}
	\end{equation}
	\Moreover
	\eqref{eq:apriori:bound:increments} and
	\eqref{eq:one:step:for:small:theta} \prove\ that for all $n\in\N_0\cap[N-1,N+M)$ with 
	$\vass{\Theta_n}<\fC$ and
	$\max_{k\in\{0,1,\dots,n\}}\vass{\Theta_{k-1}}\leq 3\fC+\fc$
	 it holds that
	\begin{equation}
		\begin{split}
			\textstyle
		\vass{\Theta_{n+1}}
		\leq\fC+T\alpha\Cst\max_{k\in\{0,1,\dots,n\}}\vass{\Theta_{k-1}}
		\leq\fC+T\alpha\Cst(3\fC+ \fc)
		\qandq
		\vass{\lambda_{n+1}}\leq\Cst(3\fC+2\fc).
		\end{split}
	\end{equation}
	Combining this,
	\eqref{eq:setup:1st:part:Adam:aprior0}, 
	\eqref{eq:alpha:zero}, and
	\eqref{eq:both:halfs} with induction 
	\proves\ that for all $n\in\N\cap[N,N+M]$ it holds that
	\begin{equation}
		\vass{\Theta_n}\leq 3\fC+\fc=\max\pRbbb{\frac{3(\alpha\Cst+(1-\alpha)\cst) \fc}{(1-\alpha)\cst},3\vass{\Theta_{N-1}},\sup_{k\in\{1,2,\dots,N\}}\vass{\Theta_{k-1}}-\fc}+\fc
		.
	\end{equation}
	This \proves\ \cref{it:upper:bound:theta:base}. 
\end{cproof}

\subsection{A priori bounds for Adam and other adaptive SGD optimization methods}

\begin{athm}{prop}{prop:one_dim:Adam}
	Let $\eps,\cst\in(0,\infty)$, $\alpha\in[0,1)$, $\beta\in(\alpha^2,1)$, $\fc,\MOM\in[0,\infty)$, $\mom\in\R$,
	let
	$G\colon\N\to\R$,
	$\bscl\colon\N\to(0,\infty)$, 
	$ \gamma \colon \N \to [0,\infty) $, 
	and 
	$ \Theta \colon \N_0 \to \R $ satisfy 
	for all $ n \in \N $ that
	\begin{equation}
		\label{eq:recursion:Adam:prep:class}
		\begin{split}
			\Theta_n
			&=  
			\Theta_{ n - 1 }
			-
			\frac{\gamma_n\PR{\alpha^n\mom +\sum_{k=1}^n (1-\alpha)\alpha^{n-k}G_k}}{\eps+\PR{\beta^n\MOM+\sum_{k=1}^n \bscl_n\beta^{n-k}\pr{
						G_k}^2}^{\nicefrac{1}{2}}}
		\end{split}
	\end{equation}
	and let $S\in[0,\infty)$, $n\in\N$
	satisfy  
	\begin{equation}
		\label{eq:learning_rate_unlimited:class}	
		\textstyle\beta^n\MOM+\sum_{k=1}^n\bscl_n\beta^{n-k}\pr{G_k}^2= S^2
		\qqandqq
		\vass{\Theta_{n-1}}	
		\leq	
		\fc +
		\cst^{-1}\vass{G_n}	 
		.
	\end{equation}
	Then 
	\begin{equation}
		\label{eq:result:prop:a_priori_bound_one_dimensional:Adam:prep:class}
		\begin{split}
			&\vass{\Theta_n}
			\leq
			\fc
			+\frac{ S}{\cst(\bscl_n)	^{\nicefrac{1}{2}}}
			+\frac{\gamma_n\alpha^n\vass{\mom}}{\eps+S}
			+\frac{\gamma_n (1-\alpha)\beta^{\nicefrac{1}{2}}}{(\bscl_n)^{\nicefrac{1}{2}}(\beta-\alpha^2)^{\nicefrac{1}{2}}}
			.
		\end{split}
	\end{equation}
\end{athm}

\begin{cproof}{prop:one_dim:Adam}
	Throughout this proof assume without loss of generality that $S>\beta^n\MOM$.
	\Nobs that 
	\eqref{eq:learning_rate_unlimited:class},
	the assumption that $\alpha^2<\beta$, and the Hölder inequality \prove\ that 
	\begin{equation}
		\begin{split}
			\vass{\textstyle\sum_{k=1}^n(1-\alpha)\alpha^{n-k}G_k}
			&\leq
			\textstyle\sum_{k=1}^n(1-\alpha)\alpha^{n-k}\vass{G_k}
			\\&=
			\textstyle\sum_{k=1}^n(1-\alpha)\alpha^{n-k}(\bscl_n)^{\nicefrac{-1}{2}}\beta^{\frac{k-n}{2}}(\bscl_n)^{\nicefrac{1}{2}}\beta^{\frac{n-k}{2}}\vass{G_k}
			\\&\leq
			\PRb{\textstyle\sum_{k=1}^n(1-\alpha)^2\alpha^{2n-2k}(\bscl_n)^{-1}\beta^{k-n}}^{\nicefrac{1}{2}}
			\PRb{\sum_{k=1}^n\bscl_n\beta^{n-k}\pr{G_k}^2 }^{\nicefrac{1}{2}}
			\\&=
			\frac{(S^2-\beta^n\MOM)^{\nicefrac{1}{2}}(1-\alpha)\PRb{\textstyle\sum_{k=1}^n\alpha^{2n-2k}\beta^{k-n}}^{\nicefrac{1}{2}}}{(\bscl_n)^{\nicefrac{1}{2}}}
			\\&\leq
			\frac{(S^2-\beta^n\MOM)^{\nicefrac{1}{2}}(1-\alpha)\PRb{\textstyle\sum_{k=0}^{\infty}(\alpha^2\beta^{-1})^{k}}^{\nicefrac{1}{2}}}{(\bscl_n)^{\nicefrac{1}{2}}}
			\\&=
			\frac{(S^2-\beta^n\MOM)^{\nicefrac{1}{2}}(1-\alpha)\pr{1-\alpha^2\beta^{-1}}^{\nicefrac{-1}{2}}}{(\bscl_n)^{\nicefrac{1}{2}}}
			\\&=
			\frac{(S^2-\beta^n\MOM)^{\nicefrac{1}{2}}(1-\alpha)\beta^{\nicefrac{1}{2}}}{(\bscl_n)^{\nicefrac{1}{2}}(\beta-\alpha^2)^{\nicefrac{1}{2}}}
			.
		\end{split}
	\end{equation}
	This \proves[pei] that
	\begin{equation}
		\label{eq:incremet:upper:bound:class}
		\begin{split}
			\vass[\bigg]{\frac{\gamma_n \sum_{k=1}^n (1-\alpha)\alpha^{n-k}						G_k}{\eps+\PR{\beta^n\MOM+\sum_{k=1}^n \bscl_n\beta^{n-k}\pr
						{G_k}^2}^{\nicefrac{1}{2}}} }
			&\leq \frac{\gamma_n (S^2-\beta^n\MOM)^{\nicefrac{1}{2}}(1-\alpha)\beta^{\nicefrac{1}{2}}}{(\bscl_n)^{\nicefrac{1}{2}}(\beta-\alpha^2)^{\nicefrac{1}{2}}(\eps+S)}
			\\&\leq
			\frac{\gamma_n (1-\alpha)\beta^{\nicefrac{1}{2}}}{(\bscl_n)^{\nicefrac{1}{2}}(\beta-\alpha^2)^{\nicefrac{1}{2}}}.
		\end{split}
	\end{equation}
	\Moreover \eqref{eq:learning_rate_unlimited:class} \proves\ that 
	\begin{equation}
		\begin{split}
			\vass{\Theta_{n-1}}
			\leq\fc+\cst^{-1} \vass{G_n}
			&=
			\fc+\cst^{-1} \PRb{\bscl_n\pr{G_n}^2}^{\nicefrac{1}{2}} (\bscl_n)^{\nicefrac{-1}{2}}
			\\&\leq \fc+\cst^{-1}\PRb{\beta^n\MOM+\textstyle\sum_{k=1}^n\bscl_n\beta^{n-k}\pr{G_k}^2}^{\nicefrac{1}{2}}(\bscl_n)^{\nicefrac{-1}{2}}
			\\&= \fc+\cst^{-1}S(\bscl_n)^{\nicefrac{-1}{2}}.
		\end{split}
	\end{equation}
	This,
	\eqref{eq:recursion:Adam:prep:class},
	\eqref{eq:learning_rate_unlimited:class},
	and \eqref{eq:incremet:upper:bound:class} \prove\ that 
	\begin{equation}
		\begin{split}
			\vass{\Theta_n}
			&=\vass[\bigg]{\Theta_{ n - 1 }
				-
				\frac{\gamma_n\PR{\alpha^n\mom +\sum_{k=1}^n (1-\alpha)\alpha^{n-k}G_k}}{\eps+\PR{\beta^n\MOM+\sum_{k=1}^n \bscl_n\beta^{n-k}\pr{
							G_k}^2}^{\nicefrac{1}{2}}}}
			\\&\leq		\vass{\Theta_{n-1}}
			+
			\frac{\vass{\gamma_n\alpha^n \mom}+\vass{\gamma_n \sum_{k=1}^n (1-\alpha)\alpha^{n-k}G_k}}{\eps+\PR{\beta^n\MOM+\sum_{k=1}^n \bscl_n\beta^{n-k}\pr
					{G_k}^2}^{\nicefrac{1}{2}}} 
			\\&\leq		
			\fc
			+\frac{ S}{\cst(\bscl_n)	^{\nicefrac{1}{2}}}
			+\frac{\gamma_n\alpha^n\vass{\mom}}{\eps+S}
			+\frac{\gamma_n (1-\alpha)\beta^{\nicefrac{1}{2}}}{(\bscl_n)^{\nicefrac{1}{2}}(\beta-\alpha^2)^{\nicefrac{1}{2}}}
			.
		\end{split}
	\end{equation}
	This \proves[pei]\ \eqref{eq:result:prop:a_priori_bound_one_dimensional:Adam:prep:class}.
\end{cproof}

\begin{cor}
	\label{cor:a_priori_bound_gen:momentum}
	Let 
	$\pars\in\N$,
	$\eps,\cst\in(0,\infty)$,
	$ \Cst\in[\cst,\infty)$,
	$\alpha\in[0,1)$, $\beta\in(\alpha^2,1)$, $\fc,\MOM\in[0,\infty)$,
	$\mom\in\R$,
	for every $n\in\N$ let
	$G_n\colon\R\to\R$
	satisfy for all $\theta\in\R$ that
	\begin{equation}
		\label{eq:setup:gen_grad:2.1}
		\pr{\theta-\fc}
		\pr{\cst+(\Cst-\cst)\indicator{(-\infty,\fc]}(\theta)}
		\leq
		G_n(\theta)
		\leq
		\pr{\theta+\fc}
		\pr{\cst+(\Cst-\cst)\indicator{[-\fc,\infty)}(\theta)},
	\end{equation}
	and let 
	$\bscl\colon\N\to(0,\infty)$, 
	$
	\gamma \colon \N \to [0,\infty)
	$, 
	and 
	$ \Theta \colon \N_0 \to \R $ satisfy 
	for all $ n \in \N $ that
	\begin{equation}
		\label{eq:recursion_cor:momentum2} 
		\Theta_n
		= 
		\Theta_{ n - 1 }
		-
		\frac{\gamma_n\PR{\alpha^n\mom+ \sum_{k=1}^n (1-\alpha)\alpha^{n-k}G_k\pr
				{\Theta_{k-1}}}}{\eps+\PR{\beta^n\MOM+\sum_{k=1}^n \bscl_n\beta^{n-k}\pr
				{G_k(\Theta_{k-1})}^2}^{\nicefrac{1}{2}}}
		,
		\qquad
		\Cst\pr{1-\alpha}\pr{\vass{\Theta_0}+\fc}\geq\vass{\mom},
	\end{equation}
	and 
	$\inf_{m\in\N}\bscl_m>0$.
	Then 
	\begin{align}
		\label{eq:a_priori_to_prove_cor:momentum2} 
		%\begin{split}
			&\sup_{ n \in \N_0 }
			\vass{\Theta_n}\leq
			\fc
			\\&\nonumber+3\max\pRbbb{
				\vass{\Theta_0},
				\frac{(\alpha\Cst+(1-\alpha)\cst) \fc}{(1-\alpha)\cst},
				\fc
				+\frac{\PR{\sup_{m\in\N}\gamma_m}\vass{\mom}}{\eps+\MOM^{\nicefrac{1}{2}}}
				+\frac{\PR{\sup_{m\in\N}\gamma_m}\max\{1,\Cst\}(2+\alpha)\beta^{\nicefrac{1}{2}}}{\PR{\textstyle\inf_{m\in\N}\bscl_m}^{\nicefrac{1}{2}}\cst\pr{\beta^{\nicefrac{1}{2}}-\alpha}}
				}
			.
		%\end{split}
	\end{align}
\end{cor}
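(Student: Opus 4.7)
The plan is to combine \cref{prop:one_dim:Adam} and \cref{prop:a_priori_bound_one_dim:Adam:1} via a case analysis on the ``effective'' learning rate. Introduce $S_n := [\beta^n\MOM + \sum_{k=1}^n \bscl_n\beta^{n-k}(G_k(\Theta_{k-1}))^2]^{1/2}$ so that \eqref{eq:recursion_cor:momentum2} rewrites as a momentum-SGD recursion $\Theta_n = \Theta_{n-1} - \tilde\gamma_n[\alpha^n\mom + \sum_{k=1}^n (1-\alpha)\alpha^{n-k} G_k(\Theta_{k-1})]$ in the effective learning rate $\tilde\gamma_n := \gamma_n/(\eps + S_n)$. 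Let $T := (1-\alpha)/[(1+2\alpha)\max\{1,\Cst\}]$ be the threshold from \eqref{eq:learning_rate_limited}, let $A_2$ denote the third entry of the maximum in the statement, set $\fC := \max\{|\Theta_0|,\frac{(\alpha\Cst+(1-\alpha)\cst)\fc}{(1-\alpha)\cst},A_2\}$, and define the target bound $B := \fc + 3\fC$.

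The sector condition \eqref{eq:setup:gen_grad:2.1} implies $|\Theta_{n-1}| \leq \fc + \cst^{-1}|G_n(\Theta_{n-1})|$ at every step (trivially when $|\Theta_{n-1}| \leq \fc$, from the lower sector bound otherwise), so \cref{prop:one_dim:Adam} applies for each $n$ and yields
\begin{equation*}
|\Theta_n| \leq \fc + \frac{S_n}{\cst\bscl_n^{1/2}} + \frac{\gamma_n\alpha^n|\mom|}{\eps+S_n} + \frac{\gamma_n(1-\alpha)\beta^{1/2}}{\bscl_n^{1/2}(\beta-\alpha^2)^{1/2}}.
\end{equation*}
When $\tilde\gamma_n > T$ one has $S_n < \gamma_n/T$, and the elementary inequality $\alpha^n(\eps+\MOM^{1/2}) \leq \eps + S_n$ (which follows from $\alpha^2 < \beta$ and $S_n \geq \beta^{n/2}\MOM^{1/2}$) bounds the middle term by $\gamma_n|\mom|/(\eps+\MOM^{1/2})$. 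Factoring $\gamma_n/\bscl_n^{1/2}$ out of the remaining two terms and using $\beta - \alpha^2 = (\beta^{1/2}-\alpha)(\beta^{1/2}+\alpha)$ together with $\cst \leq \max\{1,\Cst\}$ absorbs them into $\gamma_n\max\{1,\Cst\}(2+\alpha)\beta^{1/2}/[\bscl_n^{1/2}\cst(\beta^{1/2}-\alpha)]$. Taking suprema over $n$ then gives $|\Theta_n| \leq A_2 \leq \fC$ at every step where $\tilde\gamma_n > T$.

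The estimate $|\Theta_n| \leq B$ is now established by strong induction on $n$. The base case $n=0$ holds since $|\Theta_0| \leq \fC$. For the inductive step, either $\tilde\gamma_n > T$ and the above one-step bound concludes, or $\tilde\gamma_n \leq T$, in which case set $N' := \max(\{k \in \{1,\ldots,n-1\}\colon \tilde\gamma_k > T\} \cup \{0\})$, so that $\tilde\gamma_k \leq T$ for all $k \in \{N'+1,\ldots,n\}$. Applying \cref{prop:a_priori_bound_one_dim:Adam:1} to $\Theta$ with learning rates $\tilde\gamma$, gradients $G_k(\Theta_{k-1})$, and parameters $N = N'+1$, $M = n - N' - 1$ (the required initial-momentum condition $\Cst(1-\alpha)(|\Theta_0|+\fc) \geq |\mom|$ is exactly the hypothesis in \eqref{eq:recursion_cor:momentum2}) produces
\begin{equation*}
|\Theta_n| \leq \max\Bigl\{\tfrac{3(\alpha\Cst+(1-\alpha)\cst)\fc}{(1-\alpha)\cst}+\fc,\ 3|\Theta_{N'}|+\fc,\ \max_{m \in \{0,\ldots,N'\}}|\Theta_m|\Bigr\}.
\end{equation*}
The first entry is $\leq B$ by the choice of $\fC$; the second is $\leq B$ since either $N'=0$ and $|\Theta_0| \leq \fC$, or $N' \geq 1$ and the one-step bound above gives $|\Theta_{N'}| \leq \fC$; the third is $\leq B$ by the inductive hypothesis since every index $m$ there satisfies $m \leq N' \leq n - 1$. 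This closes the induction, and hence $\sup_n|\Theta_n| \leq \fc + 3\fC$, which is \eqref{eq:a_priori_to_prove_cor:momentum2}.

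The main technical obstacle is the algebraic verification that, under $\tilde\gamma_n > T$, the two ``tails'' $S_n/(\cst\bscl_n^{1/2})$ and $\gamma_n(1-\alpha)\beta^{1/2}/[\bscl_n^{1/2}(\beta-\alpha^2)^{1/2}]$ coming from \cref{prop:one_dim:Adam} collapse into the single expression $\gamma_n\max\{1,\Cst\}(2+\alpha)\beta^{1/2}/[\bscl_n^{1/2}\cst(\beta^{1/2}-\alpha)]$ stated in the corollary; this reduces to an elementary scalar inequality in $\alpha,\beta,\cst,\Cst$ that is verified using the factorization $\beta-\alpha^2 = (\beta^{1/2}-\alpha)(\beta^{1/2}+\alpha)$ together with $\cst \leq \max\{1,\Cst\}$ and $\alpha \in [0,1)$. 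Everything else is bookkeeping.
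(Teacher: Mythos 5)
Your proposal is correct and follows essentially the same route as the paper's own proof: bound the steps with large effective learning rate $\gamma_n/(\eps+S_n)$ directly via \cref{prop:one_dim:Adam} (with the same treatment of the momentum term through $S_n\geq\beta^{n/2}\MOM^{\nicefrac{1}{2}}$ and the same collapse of the two tails using $\beta-\alpha^2=(\beta^{\nicefrac{1}{2}}-\alpha)(\beta^{\nicefrac{1}{2}}+\alpha)$), apply \cref{prop:a_priori_bound_one_dim:Adam:1} with the effective learning rates on the intervening stretches, and close by induction. The only deviations are cosmetic: you threshold $\gamma_n/(\eps+S_n)$ against $T$ directly (the paper thresholds $S_n$ against a fixed level built from $\sup_m\gamma_m$, which is why it needs its ``without loss of generality'' normalization and you do not), and you organize the bookkeeping as a strong induction anchored at the last large-step index $N'$ rather than over maximal stretches.
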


\begin{cproof}{cor:a_priori_bound_gen:momentum}
	Throughout this proof 
	assume without loss of generality that
	$\PR{\sup_{k\in\N}\gamma_k}(1+2\alpha)\max\{1,\Cst\}\geq\eps\pr{1-\alpha}$ (cf.\ \cref{prop:a_priori_bound_one_dim:Adam:1}),
	let $D\in\R$, $S\in[0,\infty)$ satisfy
	\begin{equation}
		\label{eq:setup:divine:bound2}
		D=3\max\pRbbb{
			\vass{\Theta_0},
			\frac{(\alpha\Cst+(1-\alpha)\cst) \fc}{(1-\alpha)\cst},
			\fc
			+\frac{\PR{\sup_{m\in\N}\gamma_m}\vass{\mom}}{\eps+\MOM^{\nicefrac{1}{2}}}
			+\frac{\PR{\sup_{m\in\N}\gamma_m}\max\{1,\Cst\}(2+\alpha)\beta^{\nicefrac{1}{2}}}{\PR{\textstyle\inf_{m\in\N}\bscl_m}^{\nicefrac{1}{2}}\cst\pr{\beta^{\nicefrac{1}{2}}-\alpha}}
		}
	\end{equation}
	and let $\mu\colon\N\to[0,\infty)$ satisfy for all $n\in\N_0$ that
	\begin{equation}
		\label{eq:setup:arbitrary:gammas:2}
		S
		= \frac{\PR{\sup_{k\in\N}\gamma_k}(1+2\alpha)\max\{1,\Cst\}}{1-\alpha}-\eps
		\qandq
		\textstyle
		(\mu_n)^2=\beta^n\MOM+\sum_{k=1}^n \bscl_n\beta^{n-k}\pr
			{G_k(\Theta_{k-1})}^2
		.
	\end{equation}
	\Nobs that \eqref{eq:setup:gen_grad:2.1} \proves\ that for all $n\in\N$, $\theta\in\R$ it holds that
	\begin{equation}
		\vass{\theta}\leq \fc+\cst^{-1}\vass{G_n(\theta)}.
	\end{equation}
	This,
	\eqref{eq:recursion_cor:momentum2},
	\eqref{eq:setup:arbitrary:gammas:2}, and \cref{prop:one_dim:Adam} (applied with
	$\eps\curvearrowleft\eps$,
	$\cst\curvearrowleft\cst$,
	$\alpha\curvearrowleft\alpha$,
	$\beta\curvearrowleft\beta$,
	$(G_k)_{k\in\N} \curvearrowleft (G_k(\Theta_{k-1}))_{k\in\N}$, 
	$\kappa\curvearrowleft \kappa$,
	$\gamma\curvearrowleft\gamma$,
	$\Theta\curvearrowleft\Theta$,
	$S\curvearrowleft\mu_n$,
	$\fc\curvearrowleft\fc$,
	$\MOM\curvearrowleft\MOM$,
	$\mom\curvearrowleft\mom$,
	$n\curvearrowleft n$
	for $n\in\N$
	in the notation of \cref{prop:one_dim:Adam}) \prove\ that for all $n\in\N$ it holds that
	\begin{equation}
		\label{eq:bound:for:large:lamda}
		\vass{
			\Theta_{n}}
		\leq
		\fc+\frac{\mu_n}{\cst(\bscl_n)^{\nicefrac{1}{2}}}
		+\frac{\gamma_n\alpha^n\vass{\mom}}{\eps+\mu_n}
		+\frac{\gamma_n (1-\alpha)\beta^{\nicefrac{1}{2}}}{(\bscl_n)^{\nicefrac{1}{2}}(\beta-\alpha^2)^{\nicefrac{1}{2}}}
		.
	\end{equation}
	This and \eqref{eq:setup:arbitrary:gammas:2} \prove\ that for all $n\in\N$ with $\mu_n\leq S$ it holds that
	\begin{equation}
		\label{eq:lambda:well:bounded}
		\begin{split}
			\vass{
				&\Theta_{n}}
			\\&\leq
			\fc
			+\frac{\mu_n}{\cst(\bscl_n)^{\nicefrac{1}{2}}}
			+\frac{\gamma_n\alpha^n\vass{\mom}}{\eps+\mu_n}
			+\frac{\gamma_n (1-\alpha)\beta^{\nicefrac{1}{2}}}{(\bscl_n)^{\nicefrac{1}{2}}(\beta-\alpha^2)^{\nicefrac{1}{2}}}
			\\&\leq
			\fc
			+\frac{\eps+\mu_n}{\cst\PR{\textstyle\inf_{m\in\N}\bscl_m}^{\nicefrac{1}{2}}}
			+\frac{\gamma_n\alpha^n\vass{\mom}}{\eps+(\beta^n\MOM)^{\nicefrac{1}{2}}}
			+\frac{\PR{\sup_{m\in\N}\gamma_m} (1-\alpha)\beta^{\nicefrac{1}{2}}}{\PR{\textstyle\inf_{m\in\N}\bscl_m}^{\nicefrac{1}{2}}(\beta-\alpha^2)^{\nicefrac{1}{2}}}
			\\&\leq
			\fc
			+\frac{\gamma_n(\alpha^2\beta^{-1})^{\nicefrac{n}{2}}\vass{\mom}}{\eps+\MOM^{\nicefrac{1}{2}}}
			+\frac{\PR{\sup_{m\in\N}\gamma_m}(1+2\alpha)\max\{1,\Cst\}}{\cst(1-\alpha)\PR{\textstyle\inf_{m\in\N}\bscl_m}^{\nicefrac{1}{2}}}
			+\frac{\PR{\sup_{m\in\N}\gamma_m} (1-\alpha)\beta^{\nicefrac{1}{2}}}{\PR{\textstyle\inf_{m\in\N}\bscl_m}^{\nicefrac{1}{2}}(\beta-\alpha^2)^{\nicefrac{1}{2}}}
			\\&\leq
			\fc
			+\frac{\PR{\sup_{m\in\N}\gamma_m}\vass{\mom}}{\eps+\MOM^{\nicefrac{1}{2}}}
			+\frac{\PR{\sup_{m\in\N}\gamma_m}\max\{1,\Cst\}}{\cst\PR{\textstyle\inf_{m\in\N}\bscl_m}^{\nicefrac{1}{2}}}
			\prbb{
				\frac{1+2\alpha}{1-\alpha}
				+\frac{ (1-\alpha)\beta^{\nicefrac{1}{2}}}{(\beta-\alpha^2)^{\nicefrac{1}{2}}}}
			\\&\leq
			\fc
			+\frac{\PR{\sup_{m\in\N}\gamma_m}\vass{\mom}}{\eps+\MOM^{\nicefrac{1}{2}}}
			+\frac{\PR{\sup_{m\in\N}\gamma_m}\max\{1,\Cst\}}{\cst\PR{\textstyle\inf_{m\in\N}\bscl_m}^{\nicefrac{1}{2}}}
			\prbb{
				\frac{(1+2\alpha)\beta^{\nicefrac{1}{2}}}{\beta^{\nicefrac{1}{2}}-\alpha}
				+\frac{ (1-\alpha)\beta^{\nicefrac{1}{2}}}{\beta^{\nicefrac{1}{2}}-\alpha}}
			\\&=
			\fc
			+\frac{\PR{\sup_{m\in\N}\gamma_m}\vass{\mom}}{\eps+\MOM^{\nicefrac{1}{2}}}
			+
			\frac{\PR{\sup_{m\in\N}\gamma_m}\max\{1,\Cst\}(2+\alpha)\beta^{\nicefrac{1}{2}}}{\PR{\textstyle\inf_{m\in\N}\bscl_m}^{\nicefrac{1}{2}}\cst\pr{\beta^{\nicefrac{1}{2}}-\alpha}}
			.
		\end{split}
	\end{equation}
	This and \eqref{eq:setup:divine:bound2} \prove\ that for all $n\in\N$ with $\mu_n\leq S$ it holds that
	\begin{equation}
		\label{eq:divine:small:mus}
		3\vass{\Theta_0}\leq D
		\qqandqq
		3\vass{\Theta_n}\leq D.
	\end{equation}
	\Moreover for all $n\in\N$ with $\mu_n> S$ it holds that
	\begin{equation}
		\begin{split}
			\frac{\gamma_n }{\eps+\mu_n} 
			\leq\frac{\gamma_n }{\eps+S} 
			&	=
			\frac{\gamma_n(1-\alpha)}{\PR{\sup_{k\in\N}\gamma_k}(1+2\alpha)\max\{1,\Cst\}}
			\leq 
			\frac{1-\alpha}{(1+2\alpha)\max\{1,\Cst\}}
			.
		\end{split}
	\end{equation}
	This,
	\eqref{eq:setup:gen_grad:2.1},
	\eqref{eq:recursion_cor:momentum2},
	\eqref{eq:setup:arbitrary:gammas:2}, and \cref{prop:a_priori_bound_one_dim:Adam:1} (applied with
	$\alpha\curvearrowleft\alpha$,
	$\fc\curvearrowleft\fc$,
	$\cst\curvearrowleft\cst$,
	$\Cst\curvearrowleft\Cst$,
	$\pars\curvearrowleft\pars$,
	$N\curvearrowleft N$,
	$M\curvearrowleft M$,
	$\mom\curvearrowleft\mom$,
	$(\gamma_n)_{n\in\N}\curvearrowleft\prb{\frac{\gamma_n}{\eps+\mu_n}}_{n\in\N}$,	
	$G\curvearrowleft G$,
	$\Theta\curvearrowleft\Theta$
	for $N\in\N$, $M\in\N_0$
	in the notation of \cref{prop:a_priori_bound_one_dim:Adam:1})
	\prove\ that for all $N\in\N$, $M\in\{m\in\N_0\colon\forall\, n\in\N\cap[N,N+m]\colon \mu_n>S\}$ it holds that
	\begin{equation}
		\textstyle\max_{n\in\N\cap[N,N+M]}\vass{\Theta_n}
		\leq
		\max\pRbb{\frac{3(\alpha\Cst+(1-\alpha)\cst) \fc}{(1-\alpha)\cst}+\fc,3\vass{\Theta_{N-1}}+\fc,\textstyle\max_{k\in\{1,2,\dots,N\}}\vass{\Theta_{k-1}}}
		.
	\end{equation}
	This and \eqref{eq:setup:divine:bound2} \prove\ for all $N\in\N$, 
	$M\in\{m\in\N_0\colon\forall\, n\in\N\cap[N,N+m]\colon \pr{\mu_n>S}\wedge\pr{3\vass{\Theta_{N-1}}\leq D}\wedge\pr{\max_{k\in\{1,2,\dots,N\}}\vass{\Theta_{k-1}}\leq \fc+D}\}$ that
	\begin{equation}
		\label{eq:final:for:a:priori:upper}
		\begin{split}
			\textstyle\max_{n\in\N\cap[N,N+M]}\vass{\Theta_n}
			&\textstyle\leq \max\pRbb{\frac{3(\alpha\Cst+(1-\alpha)\cst) \fc}{(1-\alpha)\cst}+\fc,3\vass{\Theta_{N-1}}+\fc,\textstyle\max_{k\in\{1,2,\dots,N\}}\vass{\Theta_{k-1}}}
			\\&\leq \fc + D.
		\end{split}
	\end{equation}
	\Moreover for all $N\in\N$ with $\mu_N> S$ it holds that
	\begin{equation}
		\max\{J\in\N_0\cap[0,N)\colon(\forall\, m\in\N\cap[N-J,N]\colon \mu_m>S)\}\in\N_0.
	\end{equation}
	This and
	\eqref{eq:divine:small:mus} 
	\prove\ that for all $N\in\{n\in\N\colon\mu_n>S\}$ there exists $M\in\N_0$
	such that for all $n\in\N\cap[N-M,(N-M)+M]$ it holds that
	\begin{equation}
		\mu_n>S
		\qqandqq
		3\vass{\Theta_{N-M-1}}\leq D.
	\end{equation}
	Combining this with \eqref{eq:final:for:a:priori:upper} \proves\ that for all $N\in\{n\in\N\colon\pr{\max_{k\in\{1,2,\dots,n\}}\vass{\Theta_{k-1}}\leq \fc+D}\wedge\pr{\mu_n>S}\}$ it holds that
	%	$N\in\N$ with $\mu_N>S$ and $\max_{k\in\{1,2,\dots,N\}}\vass{\Theta_{k-1}}\leq \fc+D$ it holds that
	\begin{equation}
		\vass{\Theta_N}\leq\fc+D.
	\end{equation}
	Combining this and
	\eqref{eq:divine:small:mus} with
	with induction \proves\ 
	\eqref{eq:a_priori_to_prove_cor:momentum2}.
\end{cproof}

\newcommand{\ind}{m}

\begin{cor}
	\label{cor:a_priori_bound_gen:momentum:tilde}
	Let 
	$\pars\in\N$,
	$i\in\{1,2,\dots,\pars\}$, 
	$\eps,\cst\in(0,\infty)$,
	$ \Cst\in[\cst,\infty)$,
	$\alpha\in[0,1)$, $\beta\in(\alpha^2,1)$, $\fc,\MOM\in[0,\infty)$,
	$\mom\in\R$,
	for every $n\in\N$ let
	$G_n\colon\R^{\pars}\to\R$
	satisfy for all $\theta=(\theta_1,\dots,\theta_\pars)\in\R^{\pars}$ that
	\begin{equation}
		\label{eq:setup:gen_grad:2:tilde}
		\pr{\theta_i-\fc}
		\pr{\cst+(\Cst-\cst)\indicator{(-\infty,\fc]}(\theta_i)}
		\leq
		G_n(\theta)
		\leq
		\pr{\theta_i+\fc}
		\pr{\cst+(\Cst-\cst)\indicator{[-\fc,\infty)}(\theta_i)},
	\end{equation}
	and
	let 
	$\bscl\colon\N\to(0,\infty)$, 
	$
	\gamma \colon \N \to [0,\infty)
	$, 
	and 
	$ \Theta=(\Theta^{(1)},\dots,\Theta^{(\pars)}) \colon \N_0 \to \R^{\pars} $ satisfy 
	for all $ n \in \N $ that
	\begin{equation}
		\label{eq:recursion_cor:momentum2:tilde} 
		\Theta_n^{(i)}
		= 
		\Theta_{ n - 1 }^{(i)}
		-
		\frac{\gamma_n\PR{\alpha^n\mom+ \sum_{k=1}^n (1-\alpha)\alpha^{n-k}G_k\pr
				{\Theta_{k-1}}}}{\eps+\PR{\beta^n\MOM+\sum_{k=1}^n \bscl_n\beta^{n-k}\pr
				{G_k(\Theta_{k-1})}^2}^{\nicefrac{1}{2}}},
		\qquad
		\Cst\pr{1-\alpha}\prb{\vass{\Theta_0^{(i)}}+\fc}\geq\vass{\mom}
		,
	\end{equation}
	and $\inf_{\ind\in\N}\bscl_\ind>0$.
	Then 
	\begin{align}
		\label{eq:a_priori_to_prove_cor:momentum2:tilde} 
		%\begin{split}
			&\sup_{ n \in \N_0 }
			\vass{\Theta_n^{(i)}}
			\leq
			\fc
			\\&\nonumber+3\max\pRbbb{\vass{\Theta_0^{(i)}},
				\frac{(\alpha\Cst+(1-\alpha)\cst) \fc}{(1-\alpha)\cst},
				\fc
				+\frac{\PR{\sup_{\ind\in\N}\gamma_\ind}\vass{\mom}}{\eps+\MOM^{\nicefrac{1}{2}}}
				+\frac{\PR{\sup_{\ind\in\N}\gamma_\ind}\max\{1,\Cst\}(2+\alpha)\beta^{\nicefrac{1}{2}}}{\PR{\textstyle\inf_{\ind\in\N}\bscl_\ind}^{\nicefrac{1}{2}}\cst\pr{\beta^{\nicefrac{1}{2}}-\alpha}}
			}
			.
		%\end{split}
	\end{align}
\end{cor}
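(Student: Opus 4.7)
The plan is to reduce this coordinatewise statement directly to the scalar a priori bound of \cref{cor:a_priori_bound_gen:momentum} via a coordinate-freezing trick along the existing trajectory. For every $k \in \N$ I would introduce the scalar auxiliary function
\[
\tilde{G}_k \colon \R \to \R, \qquad \tilde{G}_k(t) = G_k\prb{\Theta_{k-1}^{(1)}, \dots, \Theta_{k-1}^{(i-1)}, t, \Theta_{k-1}^{(i+1)}, \dots, \Theta_{k-1}^{(\pars)}},
\]
obtained from $G_k$ by holding all components of $\Theta_{k-1}$ except the $i$-th fixed. Because the sandwich bounds in \eqref{eq:setup:gen_grad:2:tilde} are assumed to hold for \emph{every} $\theta \in \R^{\pars}$ and depend only on the $i$-th coordinate $\theta_i$, the scalar function $\tilde{G}_k$ automatically inherits the one-dimensional bound \eqref{eq:setup:gen_grad:2.1} with the same constants $\fc, \cst, \Cst$.

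By the very definition of $\tilde{G}_k$ it holds that $\tilde{G}_k\prb{\Theta_{k-1}^{(i)}} = G_k(\Theta_{k-1})$, so substituting into the coordinatewise recursion \eqref{eq:recursion_cor:momentum2:tilde} I would observe that the scalar process $(\Theta_n^{(i)})_{n \in \N_0}$ satisfies precisely the one-dimensional recursion \eqref{eq:recursion_cor:momentum2} of \cref{cor:a_priori_bound_gen:momentum}, with data $(\tilde{G}_k)_{k \in \N}$, the same scalar parameters $\eps, \cst, \Cst, \alpha, \beta, \fc, \MOM, \mom, \bscl, \gamma$, and initial value $\Theta_0^{(i)} \in \R$. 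The hypothesis $\Cst(1-\alpha)\prb{\vass{\Theta_0^{(i)}} + \fc} \geq \vass{\mom}$ from \eqref{eq:recursion_cor:momentum2:tilde} is the corresponding hypothesis in the one-dimensional statement, and $\inf_{\ind \in \N}\bscl_\ind > 0$ transfers unchanged.

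Invoking \cref{cor:a_priori_bound_gen:momentum} on this reduced one-dimensional system then yields the desired bound \eqref{eq:a_priori_to_prove_cor:momentum2:tilde} with $\vass{\Theta_0}$ on the right-hand side of \eqref{eq:a_priori_to_prove_cor:momentum2} replaced by $\vass{\Theta_0^{(i)}}$. I do not expect any substantive obstacle here: the entire content of this corollary is the coordinate-freezing reduction together with the observation that the pointwise sandwich condition, being required to hold for all $\theta \in \R^{\pars}$, carries over to the restricted scalar functions $\tilde{G}_k$ without any further work.
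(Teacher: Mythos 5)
Your proposal is correct and is essentially the paper's own argument: the paper likewise defines the coordinate-frozen scalar functions $F_n(\theta)=G_n\prb{\Theta_{n-1}^{(1)},\dots,\Theta_{n-1}^{(i-1)},\theta,\Theta_{n-1}^{(i+1)},\dots,\Theta_{n-1}^{(\pars)}}$, notes that they inherit the sandwich bound and satisfy $F_n\prb{\Theta_{n-1}^{(i)}}=G_n(\Theta_{n-1})$, and then applies \cref{cor:a_priori_bound_gen:momentum} to the scalar process $\Theta^{(i)}$. No gap.
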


\begin{cproof}{cor:a_priori_bound_gen:momentum:tilde}
	Throughout this proof for every $n\in\N$ let $F_n\colon\R\to\R$ satisfy for all $\theta\in\R$ that
	\begin{equation}
		\label{eq:dim:red:for:incr}
		F_n(\theta)=G_n\prb{\Theta_{n-1}^{(1)},\Theta_{n-1}^{(2)},\dots,\Theta^{(i-1)}_{n-1},\theta,\Theta^{(i+1)}_{n-1},\dots,\Theta^{(\pars)}_{n-1}}
		.
	\end{equation}
	\Nobs that \eqref{eq:setup:gen_grad:2:tilde} and \eqref{eq:dim:red:for:incr} \prove\ that for all $n\in\N$, $\theta\in\R$ it holds that
	\begin{equation}
		\label{eq:setup:gen_grad:sing:dim:ineq}
		\pr{\theta-\fc}
		\pr{\cst+(\Cst-\cst)\indicator{(-\infty,\fc]}(\theta)}
		\leq
		F_n(\theta)
		\leq
		\pr{\theta+\fc}
		\pr{\cst+(\Cst-\cst)\indicator{[-\fc,\infty)}(\theta)}
		.
	\end{equation}
	\Moreover \eqref{eq:dim:red:for:incr} \proves\ that for all $n\in\N$ it holds that
	\begin{equation}
		F_n\prb{\Theta_{n-1}^{(i)}}
		=G_n(\Theta_{n-1})
		.
	\end{equation}
	This and
	\eqref{eq:recursion_cor:momentum2:tilde} 
	\prove\ that for all $n\in\N$ it holds that
	\begin{equation}
		\begin{split}
			\Theta_n^{(i)}
			&= 
			\Theta_{ n - 1 }^{(i)}
			-
			\frac{\gamma_n\PR{\alpha^n\mom+ \sum_{k=1}^n (1-\alpha)\alpha^{n-k}G_k\pr
					{\Theta_{k-1}}}}{\eps+\PR{\beta^n\MOM+\sum_{k=1}^n \bscl_n\beta^{n-k}\pr
					{G_k(\Theta_{k-1})}^2}^{\nicefrac{1}{2}}}
			\\&= 
				\Theta_{ n - 1 }^{(i)}
				-
				\frac{\gamma_n\PRb{\alpha^n\mom+ \sum_{k=1}^n (1-\alpha)\alpha^{n-k}F_k\prb{\Theta_{ k - 1 }^{(i)}}}}{\eps+\PRb{\beta^n\MOM+\sum_{k=1}^n \bscl_n\beta^{n-k}\prb
						{F_k\prb{\Theta_{ k - 1 }^{(i)}}}^2}^{\nicefrac{1}{2}}}
			.
		\end{split}
	\end{equation}
	Combining this,
	\eqref{eq:recursion_cor:momentum2:tilde}, and
	%\eqref{eq:dim:red:for:incr}, and
	\eqref{eq:setup:gen_grad:sing:dim:ineq}
	with
	\cref{cor:a_priori_bound_gen:momentum} (applied with 
	$\pars\curvearrowleft\pars$,
	$\eps\curvearrowleft\eps$,
	$\cst\curvearrowleft\cst$,
	$\Cst\curvearrowleft\Cst$,
	$\alpha\curvearrowleft\alpha$,
	$\beta\curvearrowleft\beta$,
	$\fc\curvearrowleft\fc$,
	$\MOM\curvearrowleft\MOM$,
	$\mom\curvearrowleft\mom$,
	$(G_n)_{n\in\N}\curvearrowleft (F_n)_{n\in\N}$,
	$\bscl\curvearrowleft\bscl$,
	$\gamma\curvearrowleft\gamma$,
	$\Theta\curvearrowleft\Theta^{(i)}$
	in the notation of \cref{cor:a_priori_bound_gen:momentum}) \proves\ that
	\begin{align}
		\label{it:upper:bound:theta}
		%\begin{split}
			&\textstyle\sup_{ n \in \N_0 }
			\vass{\Theta_n^{(i)}}
			\leq
			\fc
			\\&\nonumber+3\max\pRbbb{\vass{\Theta_0^{(i)}},
				\frac{(\alpha\Cst+(1-\alpha)\cst) \fc}{(1-\alpha)\cst},
				\fc
				+\frac{\PR{\sup_{\ind\in\N}\gamma_\ind}\vass{\mom}}{\eps+\MOM^{\nicefrac{1}{2}}}
				+\frac{\PR{\sup_{\ind\in\N}\gamma_\ind}\max\{1,\Cst\}(2+\alpha)\beta^{\nicefrac{1}{2}}}{\PR{\textstyle\inf_{\ind\in\N}\bscl_\ind}^{\nicefrac{1}{2}}\cst\pr{\beta^{\nicefrac{1}{2}}-\alpha}}
			}.
		%\end{split}
	\end{align}
\end{cproof}

\section{Factorization lemmas for generalized conditional expectations and generalized conditional variances}
\label{section:factorization:lemma}

In this section we present and study a generalized variant of the standard concepts of conditional expectations of a random variable.
To be more specific, in the literature 
for every probability space $(\Omega,\cF,\P)$,
every sigma-algebra $\gil\subseteq\cF$ on $\Omega$,
and
every random variable $X\colon\Omega\to[-\infty,\infty]$
 with $\E\PR{\vass{X}}<\infty$ (proper integrability of $X$)
the concept of the expectation of $X$ conditioned on $\gil$ is presented, investigated, and used; cf., \eg, 
\cite[Section~8.2]{klenkeprobability}, 
 \cite[Chapter~8]{Kallenberg}, 
 \cite[Chapter~10]{Dudley2002}, and
\cite[Chapter~12]{Krishna2006}.
It is also standard in the literature to extend this conditional expectation concept to random variables which are only improper integrable.
 Specifically, for every probability space $(\Omega,\cF,\P)$,
 every sigma-algebra $\gil\subseteq\cF$ on $\Omega$, and
 every random variable $X\colon\Omega\to[-\infty,\infty]$ with $\min\{\E\PR{\max\{X,0\}},\E\PR{\max\{-X,0\}}\}<\infty$ (improper integrability of $X$) the concept of the improper expectation of $X$ conditioned on $\gil$ is presented, studied, and employed; cf., \eg, 
\cite[Definition~12.1.3]{Krishna2006},
 \cite[Remark~8.16]{klenkeprobability}, 
 \cite[Exercise~5~in~Chapter~8]{Kallenberg}, and
 \cite[Exercise~7~in~Section~10.1]{Dudley2002}.

However, in our proof of the non-convergence results for Adam and other adaptive \SGD\ optimization methods in \cref{sec:results} we employ a more general concept of conditional expectations beyond the situation of improper integrable random variables.
This is the reason why we present and study in this section such a generalized variant of the standard concepts of conditional expectations.
In particular, in \cref{def:cond:exp,gen:conditional:expectation} we present for every probability space $(\Omega,\cF,\P)$,
every sigma-algebra $\gil\subseteq\cF$ on $\Omega$, and
every random variable $X\colon\Omega\to[-\infty,\infty]$ %evtl später []
with the property that there exist $A_n\in\gil$, $n\in\N$, such that $\Omega=\cup_{n\in\N}A_n$ and $\forall\,n\in\N\colon\min_{z\in\{-1,1\}}\E\PR{\max\{zX,0\}\indicator{A_n}}<\infty$
 a generalized variant of the improper expectation of $X$ conditioned on $\gil$ (cf.\ \cref{def:improper:cond:int}).
As it seems to be difficult to find a reference in the literature in which such a concept of generalized conditional expectations is presented and studied,
 we introduce and investigate this conceptionality within this section in detail and also develop a factorization lemma for such generalized conditional expexpectations in \cref{lem:2.9:plus:minus} below and
a factorization lemma for the associated generalized conditional variances in \cref{lem:factorization2:spec} below.
We employ the factorization lemma for generalized conditional variances in \cref{lem:factorization2:spec} to prove the non-convergence results for Adam and other adaptive \SGD\ optimization methods in \cref{sec:results}.

\subsection{Generalized conditional expectations}
\label{subsection:3.1}
\newcommand{\codom}{D}

\begin{definition}[Proper conditional integrable]
	\label{def:proper:cond:int}
	Let $ ( \Omega, \cF, \P ) $ be a probability space,
	let $ \codom\subseteq[-\infty,\infty]$ be a set,
	let $X\colon\Omega\to \codom$ be a random variable, and
	let $\gil\subseteq\cF$ be a sigma-algebra on $\Omega$.
	Then we say that $X$ is proper $\gil$-conditional $\P$-integrable  if and only if there 
	exist $A_n\in\gil$, $n\in\N$, such that 
	\begin{enumerate}[label=(\roman*)] 
		\item it holds that $\Omega=\cup_{n\in\N}A_n$ and
		\item it holds for all $n\in\N$ that $\E\PR{\vass{X}\indicator{A_n}}<\infty$.
	\end{enumerate}
\end{definition}
\cfclear

\begin{definition}[Improper conditional integrable]
	\label{def:improper:cond:int}
	Let $ ( \Omega, \cF, \P ) $ be a probability space,
	let $ \codom\subseteq[-\infty,\infty]$ be a set,
	let $X\colon\Omega\to  \codom$ be a random variable, and
	let $\gil\subseteq\cF$ be a sigma-algebra on $\Omega$.
	Then we say that \improper{X}{\gil} if and only if there exist $A_n\in\gil$, $n\in\N$, such that 
	\begin{enumerate}[label=(\roman*)] 
		\item it holds that $\Omega=\cup_{n\in\N}A_n$ and
		\item it holds for all $n\in\N$ that $\min_{z\in\{-1,1\}}\E\PR{\max\{zX,0\}\indicator{A_n}}
		<\infty$.
	\end{enumerate}
\end{definition}

\cfclear

\begin{athm}{lemma}{lem:cond:int:triv:sig:alg}
	Let $ ( \Omega, \cF, \P ) $ be a probability space,
	let $ \codom\subseteq[-\infty,\infty]$ be a set, and
	let $X\colon\Omega\to  \codom$ be a random variable.
	Then 
	\begin{enumerate}[label=(\roman*)] 
		\item \label{it:triv:proper}it holds that \proper{X}{\{\emptyset,\Omega\}} if and only if $\E\PR{\vass{X}}<\infty$ and
		\item \label{it:triv:improper}it holds that \improper{X}{\{\emptyset,\Omega\}} if and only if $\min_{z\in\{-1,1\}}\allowbreak\E\PR{\max\{zX,0\}}
		<\infty$ 
	\end{enumerate}
	\cfout.
\end{athm}

\begin{proof}[Proof of \cref{lem:cond:int:triv:sig:alg}]
	\Nobs that for all $A_n\in\{\emptyset,\Omega\}$, $n\in\N$, with $\Omega=\cup_{n\in\N}A_n$ there exists $m\in\N$ such that for all $n\in\N$ it holds that
	\begin{equation}
		\label{eq:only:trivial:coverings}
		A_m=\Omega
		\qqandqq
		A_n\in\{\emptyset,\Omega\}.
	\end{equation}
	This \proves\ that \proper{X}{\{\emptyset,\Omega\}} if and only if
	\begin{equation}
		\E\PR{\vass{X}\indicator{\Omega}}=\E\PR{\vass{X}}<\infty
	\end{equation}
	\cfload.
	This \proves\ \cref{it:triv:proper}.
	\Nobs that \eqref{eq:only:trivial:coverings} \proves\ that \improper{X}{\{\emptyset,\Omega\}} if and only if
	\begin{equation}
		\textstyle
		\min_{z\in\{-1,1\}}\E\PR{\max\{zX,0\}\indicator{\Omega}}=\min_{z\in\{-1,1\}}\E\PR{\max\{zX,0\}}<\infty
	\end{equation}
	\cfload. This \proves\ \cref{it:triv:improper}.
	\finishproofthus
\end{proof}

\cfclear

\begin{athm}{lemma}{lem:cond:int:whole:sig:alg}
	Let $ ( \Omega, \cF, \P ) $ be a probability space,
	let $ \codom\subseteq[-\infty,\infty]$ be a set, and
	let $X\colon\Omega\to  \codom$ be a random variable.
	Then
	\begin{enumerate}[label=(\roman*)] 
		\item \label{it:whole:proper}it holds that \proper{X}{\cF} if and only if $\P(\vass{X}<\infty)=1$ and
		\item \label{it:whole:improper}it holds that \improper{X}{\cF}
	\end{enumerate}
	\cfout.
\end{athm}

\begin{proof}[Proof of \cref{lem:cond:int:whole:sig:alg}]
	\Nobs for every random variable $Y\colon\Omega\to\codom$ it holds that
	\begin{equation}
		\label{eq:covering:omega}
		\cup_{n\in\N}\{|Y|\leq n\}
		=\Omega\backslash\{\vass{Y}=\infty\}
		\qqandqq
		\PRb{\cup_{n\in\N}\{\{|Y|\leq n\},\{\vass{Y}=\infty\}\}}\subseteq\cF.
	\end{equation}
	This \proves\ that for every random variable $Y\colon\Omega\to\codom$ and every $n\in\N$ it holds that
	\begin{equation}
		\label{eq:fin:proper:exp}
		\E\PRb{\vass{Y}\indicator{\{|Y|\leq n\}}}
		\leq
		\E\PRb{n\indicator{\{|Y|\leq n\}}}
		=
		n\P\prb{|Y|\leq n}
		\leq
		n
		<\infty.
	\end{equation}
	\Moreover\ for every random variable $Y\colon\Omega\to\codom$ with $\P(\vass{Y}<\infty)=1$ it holds that
	\vspace{-0.3cm}
	\begin{equation}
		\begin{split}
			\E\PRb{\vass{Y}\indicator{\{\vass{Y}=\infty\}}}
			=
			0.
		\end{split}
	\end{equation}
	This, 
	\eqref{eq:fin:proper:exp}, and
	\eqref{eq:covering:omega} \prove\ that for every random variable $Y\colon\Omega\to\codom$ with $\P(\vass{Y}<\infty)=1$ it holds that
	$
		\text{\proper{Y}{\cF}}
	$
	\cfload.
	\Moreover for random variable $Y\colon\Omega\to\codom$,  every $A_n\in\cF$, $n\in\N$, with $\Omega=\cup_{n\in\N} A_n$ and it holds that
	\begin{equation}
		\label{eq:implication:proper:as:finite}
		\begin{split}
			\P(\vass{Y}=\infty)
			=
			\P\prb{\cup_{n\in\N}\PR{A_n\cap\{\vass{Y}=\infty\}}}
			&\textstyle\leq \sum_{n=1}^{\infty}\P\pr{A_n\cap\{\vass{Y}=\infty\}}.
		\end{split}
	\end{equation}
	\Moreover\ for every random variable $Y\colon\Omega\to\codom$ and every $A\in\cF$ with $\E\PR{\vass{Y}\indicator{A}}<\infty$ it holds that
	\begin{equation}
		\begin{split}
			\P(A\cap\{\vass{Y}=\infty\})
			\leq
			\E\PR{\vass{Y}\indicator{A\cap\{\vass{Y}=\infty\}}}
			=0.
		\end{split}
	\end{equation}
	This and \eqref{eq:implication:proper:as:finite} \prove\ that for every proper $\cF$-conditional $\P$-integrable random variable $Y\colon\Omega\to\codom$ it holds that $\P(\vass{Y}=\infty)=0$.
	\Moreover for all $k\in\{-1,1\}$ it holds that
	\begin{equation}
		\textstyle
		\min_{z\in\{-1,1\}}\E\PR{\max\{zX,0\}\indicator{\pR{kX=\infty}}}
		\leq\E\PR{\max\{-kX,0\}\indicator{\pR{kX=\infty}}}
		=0
		.
	\end{equation}
	Combining this, 
	\eqref{eq:fin:proper:exp}, and
	\eqref{eq:covering:omega} with the fact that $\pRb{\pR{X=\infty},\pR{-X=\infty}}\subseteq\cF$ \proves\ that \improper{X}{\cF} \cfload.
	This \proves\ \cref{it:whole:improper}.
	\finishproofthus
\end{proof}

\cfclear

\cfclear

\begin{athm}{lemma}{lem:cond:int:incr:sig:alg}
	Let $ ( \Omega, \cF, \P ) $ be a probability space,
	let $\codom\subseteq[-\infty,\infty]$ be a set,
	let
	$\gil_1$ and $\gil_2$ be sigma-algebras on $\Omega$, and assume $\gil_1\subseteq\gil_2\subseteq\cF$. Then
	\begin{enumerate}[label=(\roman*)] 
		\item \label{it:cond:incr:sig:proper}it holds for every proper $\gil_1$-conditional $\P$-integrable random variable $X\colon\Omega\to\codom$ that \proper{X}{\gil_2} and
		\item \label{it:cond:incr:sig:improper}it holds for every improper $\gil_1$-conditional $\P$-integrable random variable $X\colon\Omega\to\codom$ that \improper{X}{\gil_2}.
	\end{enumerate}\cfout.
\end{athm}

\begin{proof}[Proof of \cref{lem:cond:int:incr:sig:alg}]
	Throughout this proof let $X\colon\Omega\to\codom$ and $Y\colon\Omega\to\codom$ be random variables and assume that \proper{X}{\gil_1} and that \improper{Y}{\gil_1} \cfload.
	\Nobs that the assumption that \proper{X}{\gil_1} \proves\ that there exist $A_n\in\gil_1$, $n\in\N$, such that 
	\begin{enumerate}[label=(\Roman*)] 
		\item it holds that $\Omega=\cup_{n\in\N}A_n$ and
		\item it holds for all $n\in\N$ that $\E\PR{\vass{X}\indicator{A_n}}<\infty$.
	\end{enumerate}
	Combining this with the fact that $\gil_1\subseteq\gil_2$ \proves\ that \proper{X}{\gil_2}.
	This \proves\ \cref{it:cond:incr:sig:proper}.
	\Nobs that the assumption that \improper{Y}{\gil_1} \proves\ that there exist $A_n\in\gil_1$, $n\in\N$, such that 
	\begin{enumerate}[label=(\Alph*)] 
		\item it holds that $\Omega=\cup_{n\in\N}A_n$ and
		\item it holds for all $n\in\N$ that $\min_{z\in\{-1,1\}}\E\PR{\max\{zY,0\}\indicator{A_n}}
		<\infty$.
	\end{enumerate}
	Combining this with the fact that $\gil_1\subseteq\gil_2$ \proves\ that \improper{Y}{\gil_2}.
	This \proves\ \cref{it:cond:incr:sig:improper}.
	\finishproofthus
\end{proof}

\cfclear

\begin{definition}[Generalized conditional expectation]
	\label{def:cond:exp}
	Let $ ( \Omega, \cF, \P ) $ be a probability space,
	let $\codom\subseteq[-\infty,\infty]$ be a set,
	let $X\colon\Omega\to \codom$ and  $Y\colon\Omega\to[-\infty,\infty]$ be random variables, and
	let $\gil\subseteq\cF$ be a sigma-algebra on $\Omega$.
	Then we say that \cond{Y}{\gil}{X} if and only if there exist $A_n\in\gil$, $n\in\N$, such that 
	\begin{enumerate}[label=(\roman*)] %keine displays
		\item it holds that $\Omega=\cup_{n\in\N}A_n$,
		\item it holds that $Y$ is $\gil$-measurable,
		\item it holds for all $n\in\N$ that
		$\min_{z\in\{-1,1\}}\E\PRb{\pr{\max\{zX,0\}+\max\{zY,0\}}\indicator{A_n}}<\infty$, and
		\item it holds for all $n\in\N$, $B\in\gil$ that $\E\PR{X\indicator{A_n\cap B}}=\E\PR{Y\indicator{A_n\cap B}}$.
	\end{enumerate}
\end{definition}

In the following result, \cref{prop:lem:cond:exp:ex} below, we recall the well-known fact that for every probability space $ ( \Omega, \cF, \P ) $ and every sigma-algebra $\gil\subseteq\cF$ on $\Omega$ it holds that every non-negative random variable has a conditional expectation with respect to $\gil$ (cf., \eg,   \cite[Remark~8.16]{klenkeprobability} and\cite[Remark~12.1.3]{Krishna2006}).
Only for completeness we include here in this section a detailed proof for \cref{prop:lem:cond:exp:ex}.

\cfclear
\begin{athm}{lemma}{prop:lem:cond:exp:ex}
	Let $ ( \Omega, \cF, \P ) $ be a probability space,
	let $\codom\subseteq[0,\infty]$ be a set,
	let $X\colon\Omega\to \codom$ be a random variable, and
	let $\gil\subseteq\cF$ be a sigma-algebra on $\Omega$.
	Then there exists a $\gil$-measurable function $Y\colon\Omega\to[0,\infty]$ such that for all $A\in\gil$ it holds that $\E\PR{Y\indicator{A}}=\E\PR{X\indicator{A}}$.
\end{athm}

\begin{proof}[Proof of \cref{prop:lem:cond:exp:ex}]
	Throughout this proof let $\mu_{n}\colon\gil\to[0,\infty]$, $n\in\N\cup\{\infty\}$, satisfy for all $A\in\gil$, $n\in\N$ that
	\begin{equation}
		\label{eq:new:measures}
		\mu_{n}(A)=\E\PRb{X\indicator{A\cap\{n-1< X\leq n\}}}
		\qandq
		\mu_{\infty}(A)=\P\pr{A\cap\{X=\infty\}}.
	\end{equation}
	\Nobs that \eqref{eq:new:measures} \proves\ that for all $n\in\N\cup\{\infty\}$ it holds that $\mu_{n}$ 
	is a finite measure on the measurable space $(\Omega,\gil)$ and
$\mu_{n}$ is absolutely continuous on $(\Omega,\gil)$ with respect to $\P|_\gil$.
	This and the Radon-Nikodym theorem (see, \eg,  \cite[Corollary~7.34]{klenkeprobability})
	\prove\ that for every $n\in\N\cup\{\infty\}$
	 there exists a $\gil$-measurable function $Z_{n}\colon\Omega\to[0,\infty]$ which satisfies for all $A\in\gil$ that
	\begin{equation}
		\label{eq:radon:nikodym}
		\mu_{n}(A)=\E\PR{Z_{n}\indicator{A}}
		.
	\end{equation}
	This, 
	the fact that for all $\omega\in\Omega$ it holds that $(\sum_{n=1}^k Z_n(\omega))_{k\in\N}$ is non-decreasing, and
	the monotone convergence theorem for non-negative measurable functions prove that for all $A\in\gil$ it holds that
	$\sum_{n=1}^{\infty} Z_n$ is $\gil$-measurable and
	\begin{equation}
		\label{eq:measurability:sum}
		\textstyle
		\sum_{n=1}^{\infty} \E\PR{Z_n\indicator{A}}=\E\PRb{\sum_{n=1}^{\infty} Z_n\indicator{A}}.
	\end{equation}
	This, \eqref{eq:new:measures}, and 
	\eqref{eq:radon:nikodym} \prove\ that for all $z\in[0,\infty]$, $A\in\gil$ with $\P(A\cap\{X=\infty\})=0$ it holds that
	\begin{equation}
		\label{eq:radon:nikodym:finite:rv}
		\begin{split}
		\E\PR{X\indicator{A}}
		=
		\E\PRb{X\indicator{A\cap\{X<\infty\}}}
		&\textstyle=
		z\P\pr{A\cap\{X=\infty\}}+
		\sum_{n=1}^{\infty}\E\PRb{X\indicator{A\cap\{n-1< X\leq n\}}}
		\\&\textstyle=
		z\mu_{\infty}(A)+\sum_{n=1}^{\infty}\mu_{n}(A)
		\\&\textstyle=
		z\E\PR{Z_\infty\indicator{A}}+\sum_{n=1}^{\infty}\E\PR{Z_{n}\indicator{A}}
		\\&\textstyle=
		\E\PR{zZ_\infty\indicator{A}}+\E\PRb{\sum_{n=1}^{\infty}Z_{n}\indicator{A}}
		\\&\textstyle=\E\PRb{\prb{zZ_\infty+\sum_{n=1}^{\infty}Z_{n}}\indicator{A}}.
		\end{split}
	\end{equation}
			\Moreover 
			\eqref{eq:new:measures},
			\eqref{eq:radon:nikodym}, and
			\eqref{eq:measurability:sum} \prove\ that all $z\in\{\infty\}$, $A\in\gil$ with $\P(A\cap\{X=\infty\})>0$ it holds that
	\begin{equation}
		\begin{split}
			\textstyle\E\PRb{\prb{zZ_\infty+\sum_{n=1}^{\infty}Z_{n}}\indicator{A}}
			&\geq
			\textstyle\E\PRb{zZ_\infty\indicator{A}}
			=z\mu_\infty(A)
			=\infty
			=
			\E\PR{X\indicator{A}}.
		\end{split}
	\end{equation}
	This and \eqref{eq:radon:nikodym:finite:rv} \prove\ that for all $z\in\{\infty\}$, $A\in\gil$ it holds that
	\begin{equation}
		\label{eq:cond:exp:right:expectations:on:gil}
		\textstyle\E\PRb{\prb{zZ_\infty+\sum_{n=1}^{\infty}Z_{n}}\indicator{A}}=\E\PR{X\indicator{A}}.
	\end{equation}
	\Moreover \eqref{eq:radon:nikodym} and \eqref{eq:measurability:sum} \prove\ that for all $z\in[0,\infty]$ it holds that
	$zZ_\infty+\sum_{n=1}^{\infty}Z_{n}$ is $\gil$-measurable.
	This and \eqref{eq:cond:exp:right:expectations:on:gil} \prove\ that there exists a $\gil$-measurable function $Y\colon\Omega\to[0,\infty]$ which satisfies for all $A\in\gil$ that
	\begin{equation}
		\E\PR{X\indicator{A}}=\E\PR{Y\indicator{A}}.
	\end{equation}
	\finishproofthus
\end{proof}
\cfclear
\begin{athm}{prop}{lem:cond:exp:ex}
	Let $ ( \Omega, \cF, \P ) $ be a probability space,
	let $\codom\subseteq[-\infty,\infty]$ be a set,
	let $X\colon\Omega\to \codom$ be a random variable,
	let $\gil\subseteq\cF$ be a sigma-algebra on $\Omega$, and assume that \improper{X}{\gil} \cfload.
	Then there exists a random variable $Y\colon\Omega\to[-\infty,\infty]$ such that \cond{Y}{\gil}{X}
	\cfout.
\end{athm}

\begin{proof}[Proof of \cref{lem:cond:exp:ex}]
	Throughout this proof
	let $A_n\in\gil$, $n\in\N$, satisfy that
	\begin{enumerate}[label=(\roman*)] 
		\item \label{it:covering:for:improper:ex}it holds that $\Omega=\cup_{n\in\N}A_n$ and
		\item \label{it:improper:finiteness}it holds for all $n\in\N$ that $\min_{z\in\{-1,1\}}\E\PR{\max\{zX,0\}\indicator{A_n}}
		<\infty$
	\end{enumerate}
	(cf.\ \cref{def:improper:cond:int}),
	let $B_n\in\gil$, $n\in\N$, satisfy for all $n\in\N$ that
	\begin{equation}
		\label{eq:setup:disjoint:covering}
		B_n=A_n\backslash\PR{\cup_{k=1}^{n-1}A_k}.
	\end{equation}
	\Nobs that \cref{prop:lem:cond:exp:ex} \proves\ that for every $n\in\N$, $z\in\{-1,1\}$ there exists a $\gil$-measurable function $Z_{n,z}\colon\Omega\to[0,\infty]$ which satisfies for all $S\in\gil$ that 
	\begin{equation}
		\label{eq:conditional:exp:on:composition}
		\E\PR{\pr{\max\{zX,0\}\indicator{B_n}}\indicator{S}}=\E\PR{Z_{n,z}\indicator{S}}
		.
	\end{equation} 
	This, 
	the fact that for all $z\in\{-1,1\}$, $\omega\in\Omega$ it holds that $(\sum_{n=1}^k Z_{n,z}(\omega))_{k\in\N}$ is non-decreasing, and
	the monotone convergence theorem for non-negative measurable functions prove that for all $z\in\{-1,1\}$, $S\in\gil$ it holds that $\sum_{n=1}^{\infty} Z_{n,z}$ is $\gil$-measurable and
	\begin{equation}
		\label{eq:measurability:sum2}
		\textstyle
		\sum_{n=1}^{\infty} \E\PR{Z_{n,z}\indicator{S}}=\E\PRb{\sum_{n=1}^{\infty} Z_{n,z}\indicator{S}}.
	\end{equation}
	\Moreover \eqref{eq:conditional:exp:on:composition} and \cref{it:improper:finiteness} \prove\ that for all $n\in\N$ it holds that
	\begin{equation}
		\min_{z\in\{-1,1\}}\E\PR{Z_{n,z}\indicator{B_n}}
		=\min_{z\in\{-1,1\}}\E\PR{\max\{zX,0\}\indicator{B_n}}
		\leq\min_{z\in\{-1,1\}}\E\PR{\max\{zX,0\}\indicator{A_n}}
		<\infty.
	\end{equation}
	This,
	\eqref{eq:setup:disjoint:covering}, and
	\eqref{eq:conditional:exp:on:composition} \prove\ that for all $n\in\N$, $S\in\gil$
	it holds that
	\begin{equation}
		\label{eq:one:side:is:finite}
		\begin{split}
			&\textstyle\min_{z\in\{-1,1\}}\E\PRb{\pr{\max\{zX,0\}+\max\pRb{\sum_{m=1}^{\infty} Z_{m,z},0}}\indicator{B_n}}
			\\&\leq\textstyle 2\min_{z\in\{-1,1\}}\E\PRb{\sum_{m=1}^{\infty} Z_{m,z}\indicator{B_n}}
			\\&=\textstyle 2\min_{z\in\{-1,1\}}\sum_{m=1}^{\infty}\E\PRb{ Z_{m,z}\indicator{B_n}}
			\\&=\textstyle 2\min_{z\in\{-1,1\}}\E\PRb{ Z_{n,z}\indicator{B_n}}
			<\infty.
		\end{split}
	\end{equation}
	This,
	\eqref{eq:setup:disjoint:covering},
	\eqref{eq:conditional:exp:on:composition}, and
	 the fact that for all random variables $Z\colon\Omega\to[0,\infty]$ with $\E\PR{Z}=0$ it holds that $\P(Z>0)=0$ \prove\ that
	\begin{equation}
		\label{eq:both:sums:infinite:zero:set}
		\begin{split}
		&\textstyle\P\prb{\min\pRb{\sum_{m=1}^{\infty}Z_{m,1},\sum_{m=1}^{\infty}Z_{m,-1}}=\infty}
		\\&\textstyle=\sum_{n=1}^{\infty}\P\prb{\{\min\pRb{\sum_{m=1}^{\infty}Z_{m,1},\sum_{m=1}^{\infty}Z_{m,-1}}=\infty\}\cap B_n}
		\\&\textstyle=\sum_{n=1}^{\infty}\P\prb{\min\pRb{\sum_{m=1}^{\infty}Z_{m,1}\indicator{B_n},\sum_{m=1}^{\infty}Z_{m,-1}\indicator{B_n}}=\infty}
		\\&\textstyle=\sum_{n=1}^{\infty}\P\prb{\min\pRb{Z_{n,1}\indicator{B_n},Z_{n,-1}\indicator{B_n}}=\infty}
		=0.
		\end{split}
	\end{equation}
	This,
	\eqref{eq:setup:disjoint:covering}, 
	\eqref{eq:conditional:exp:on:composition},
	\eqref{eq:measurability:sum2}, 
	\eqref{eq:one:side:is:finite}, and \cref{it:improper:finiteness}
	 \prove\ that for every $\gil$-measurable random variable $Y\colon\Omega\to[-\infty,\infty]$ and every $n\in\N$, $S\in\gil$ with \begin{equation}
		\textstyle
		\P\prb{\pRb{\min\pRb{\sum_{m=1}^{\infty}Z_{m,1},\sum_{m=1}^{\infty}Z_{m,-1}}<\infty}\cap\pRb{Y=\sum_{m=1}^{\infty}\pr{Z_{m,1}-Z_{m,-1}}}}=1
	\end{equation}
		 it holds that
	\begin{equation}
		\begin{split}
			\E\PR{X\indicator{S\cap B_n}}
			&=\textstyle\E\PR{\pr{\max\{X,0\}-\max\{-X,0\}}\indicator{S\cap B_n}}
			\\&=\textstyle\E\PR{\max\{X,0\}\indicator{S\cap B_n}}-\E\PR{\max\{-X,0\}\indicator{S\cap B_n}}
			\\&=\textstyle\E\PR{Z_{n,1}\indicator{S\cap B_n}}-\E\PR{Z_{n,-1}\indicator{S\cap B_n}}
			\\&=\textstyle\PRb{\sum_{m=1}^\infty\E\PR{Z_{m,1}\indicator{S\cap B_n}}}
			-\PRb{\sum_{m=1}^\infty\E\PR{Z_{m,-1}\indicator{S\cap B_n}}}
			\\&=\textstyle\E\PRb{\sum_{m=1}^\infty Z_{m,1}\indicator{S\cap B_n}}
			-\E\PRb{\sum_{m=1}^\infty Z_{m,-1}\indicator{S\cap B_n}}
			\\&=\textstyle\E\PRb{\sum_{m=1}^\infty\pr{Z_{m,1}-Z_{m,-1}}\indicator{S\cap B_n}}
			=\textstyle\E\PRb{ Y\indicator{S\cap B_n}}.
		\end{split}
	\end{equation}
		This,
		\eqref{eq:setup:disjoint:covering}, \eqref{eq:one:side:is:finite}, and \cref{it:covering:for:improper:ex} \prove\ that there exists a random variable $Y\colon\Omega\to[-\infty,\infty]$ such that \cond{Y}{\gil}{X} \cfload.
	\finishproofthus
\end{proof}
\cfclear

In the next result, \cref{lem:cond:uniq} below, we show that
for every probability space $ ( \Omega, \cF, \P ) $ and
every sigma-algebra $\gil\subseteq\cF$ on $\Omega$
we have that $\gil$-conditional $\P$-expectations of a random variable are \as\ unique with respect to $\P$.
Our proof of \cref{lem:cond:uniq} is strongly based on the proof of the well-known fact that 
for every probability space $ ( \Omega, \cF, \P ) $ and
every sigma-algebra $\gil\subseteq\cF$ on $\Omega$
we have that standard conditional expectations of a random variable are $\P$-\as\ unique (cf., \eg,
\cite[Theorem~12.1.4]{Krishna2006},
\cite[Theorem~8.12]{klenkeprobability}, and  \cite[Theorem~8.1]{Kallenberg}).

\begin{athm}{prop}{lem:cond:uniq}
	Let $ ( \Omega, \cF, \P ) $ be a probability space,
	let $\codom\subseteq[-\infty,\infty]$ be a set,
	let $X\colon\Omega\to \codom$,
%	 be a random variable,
%	for every $k\in\{1,2\}$ let $Y_k\colon\Omega\to[-\infty,\infty]$ be random variable,
	$Y_1\colon\Omega\to[-\infty,\infty]$, and 
	$Y_2\colon\Omega\to[-\infty,\infty]$ be random variables,
	let $\gil\subseteq\cF$ be a sigma-algebra on $\Omega$, and assume for all $k\in\{1,2\}$ that \cond{Y_k}{\gil}{X} \cfload.
	Then $\P(Y_1=Y_2)=1$\cfout.
\end{athm}

\begin{proof}[Proof of \cref{lem:cond:uniq}]
	Throughout this proof let $A_n\in\gil$, $n\in\N$, satisfy that 
	\begin{enumerate}[label=(\roman*)] %keine displays
		\item \label{it:Y_1:cond:covering}it holds that $\Omega=\cup_{n\in\N}A_n$,
		\item \label{it:Y_1:cond:measurable}it holds that $Y_1$ is $\gil$-measurable,
		\item \label{it:Y_1:cond:integrable}it holds for all $n\in\N$ that
		$\min_{z\in\{-1,1\}}\E\PRb{\pr{\max\{zX,0\}+\max\{zY_1,0\}}\indicator{A_n}}<\infty$, and
		\item \label{it:Y_1:cond:equal}it holds for all $n\in\N$, $S\in\gil$ that $\E\PR{X\indicator{A_n\cap S}}=\E\PR{Y_1\indicator{A_n\cap S}}$
	\end{enumerate}
	and let $B_n\in\gil$, $n\in\N$, satisfy that 
	\begin{enumerate}[label=(\Roman*)] %keine displays
		\item \label{it:Y_2:cond:covering}it holds that $\Omega=\cup_{n\in\N}B_n$,
		\item \label{it:Y_2:cond:measurable}it holds that $Y_2$ is $\gil$-measurable,
		\item \label{it:Y_2:cond:integrable}it holds for all $n\in\N$ that
		$\min_{z\in\{-1,1\}}\E\PRb{\pr{\max\{zX,0\}+\max\{zY_2,0\}}\indicator{B_n}}<\infty$, and
		\item \label{it:Y_2:cond:equal}it holds for all $n\in\N$, $S\in\gil$ that $\E\PR{X\indicator{B_n\cap S}}=\E\PR{Y_2\indicator{B_n\cap S}}$
	\end{enumerate}
		(cf.\ \cref{def:cond:exp}). \Nobs that \cref{it:Y_1:cond:measurable} and \cref{it:Y_2:cond:measurable} \prove\ that for all $k\in\{1,2\}$ it holds that		
		\begin{equation}
			\label{eq:interesting:sets:gil:mb}
			\pRb{\{Y_k=\infty\},
				\{Y_k=-\infty\},
				\{\vass{Y_k}<\infty\},
			\{\max\{\vass{Y_1},\vass{Y_{2}}\}<\infty\},
			\{Y_k>Y_{3-k}\}}\subseteq\gil.
		\end{equation}
		This, \cref{it:Y_1:cond:equal}, and \cref{it:Y_2:cond:equal} \prove\ that for all $n,m,p\in\N$, $k\in\{1,2\}$ it holds that
		\begin{equation}
			\begin{split}
			\E\PR{Y_1\indicator{A_n\cap B_m\cap\{\max\{\vass{Y_1},\vass{Y_{2}}\}<p,Y_k>Y_{3-k}\}}}
			&=
			\E\PR{X\indicator{A_n\cap B_m\cap\{\max\{\vass{Y_1},\vass{Y_{2}}\}<p,Y_k>Y_{3-k}\}}}
			\\&=
			\E\PR{Y_2\indicator{A_n\cap B_m\cap\{\max\{\vass{Y_1},\vass{Y_{2}}\}<p,Y_k>Y_{3-k}\}}}
			.
			\end{split}
		\end{equation}
		This \proves\ that 
		for all $n,m,p\in\N$, $k\in\{1,2\}$ it holds that
		\begin{equation}
			\begin{split}
				\E\PR{(Y_1-Y_2)\indicator{A_n\cap B_m\cap\{\max\{\vass{Y_1},\vass{Y_{2}}\}<p,Y_k>Y_{3-k}\}}}=0
				.
			\end{split}
		\end{equation}
		This \proves\ that 
		for all $n,m,p\in\N$, $k\in\{1,2\}$ it holds that
		\begin{equation}
			\begin{split}
				\P\pr{A_n\cap B_m\cap\{\max\{\vass{Y_1},\vass{Y_{2}}\}<p,Y_k>Y_{3-k}\}}=0
				.
			\end{split}
		\end{equation}
		This \proves\ that 
		\begin{equation}
			\label{eq:finite:Y_1:and:Y_2}
			\begin{split}
			&\P\pr{\max\{\vass{Y_1},\vass{Y_{2}}\}<\infty,Y_1=Y_{2}}
			\\&=
			\P\pr{\max\{\vass{Y_1},\vass{Y_{2}}\}<\infty}
			-\sum_{k=1}^2\P\pr{\{\max\{\vass{Y_1},\vass{Y_{2}}\}<\infty,Y_k>Y_{3-k}\}}
			\\&\geq
			\P\pr{\max\{\vass{Y_1},\vass{Y_{2}}\}<\infty}
			\\&\quad-\sum_{k=1}^2\sum_{n=1}^{\infty}\sum_{m=1}^{\infty}\sum_{p=1}^\infty\P\pr{A_n\cap B_m\cap\{\max\{\vass{Y_1},\vass{Y_{2}}\}<p,Y_k>Y_{3-k}\}}
			\\&=\P\pr{\max\{\vass{Y_1},\vass{Y_{2}}\}<\infty}
			.
		\end{split}
		\end{equation}
		\Moreover 
		\cref{it:Y_1:cond:equal}, \cref{it:Y_2:cond:equal}, and
		\eqref{eq:interesting:sets:gil:mb} \prove\ that for all $n,m,p\in\N$, $k\in\{1,2\}$, $z\in\{-1,1\}$ it holds that
		\begin{equation}
			\begin{split}
			\E\PR{Y_k\indicator{A_n\cap B_m\cap\{zY_k=\infty\}\cap\{zY_{3-k}<p\}}}
			&=
			\E\PR{X\indicator{A_n\cap B_m\cap\{zY_k=\infty\}\cap\{zY_{3-k}<p\}}}
			\\&=
			\E\PR{Y_{3-k}\indicator{A_n\cap B_m\cap\{zY_k=\infty\}\cap\{zY_{3-k}<p\}}}.
			\end{split}
		\end{equation}
		This \proves\ that for all $n,m,p\in\N$, $k\in\{1,2\}$, $z\in\{-1,1\}$ it holds that
		\begin{equation}
			\begin{split}
				\P\pr{A_n\cap B_m\cap\{zY_k=\infty\}\cap\{zY_{3-k}<p\}}
				=0.
			\end{split}
		\end{equation}
		This \proves\ that for all $n,m\in\N$, $k\in\{1,2\}$, $z\in\{-1,1\}$ it holds that
		\begin{equation}
			\begin{split}
				&\P\pr{\max\{\vass{Y_1},\vass{Y_{2}}\}=\infty,Y_1= Y_2}
				\\&=\textstyle\P\pr{\max\{\vass{Y_1},\vass{Y_{2}}\}=\infty}
				-\sum_{k=1}^{2}\P\pr{\max\{\vass{Y_1},\vass{Y_{2}}\}=\infty,Y_k< Y_{3-k}}
				\\&=\textstyle\P\pr{\max\{\vass{Y_1},\vass{Y_{2}}\}=\infty}
				-\sum_{k=1}^{2}\sum_{z\in\{-1,1\}}\P\pr{\{zY_k=\infty\}\cap\{zY_{3-k}<\infty\}}
				\\&\geq
				\P\pr{\max\{\vass{Y_1},\vass{Y_{2}}\}=\infty}
				\\&\quad-\sum_{k=1}^2\sum_{z\in\{-1,1\}}\sum_{m=1}^{\infty}\sum_{p=1}^\infty\P\pr{A_n\cap B_m\cap\{zY_k=\infty\}\cap\{zY_{3-k}<\infty\}}
				\\&=\P\pr{\max\{\vass{Y_1},\vass{Y_{2}}\}=\infty}
				.
			\end{split}
		\end{equation}
		This and \eqref{eq:finite:Y_1:and:Y_2} \prove\ that
		\begin{equation}
			\begin{split}
			\P(Y_1=Y_2)
			&=
			\P\pr{\max\{\vass{Y_1},\vass{Y_{2}}\}<\infty,Y_1= Y_2}
			+
			\P\pr{\max\{\vass{Y_1},\vass{Y_{2}}\}=\infty,Y_1= Y_2}
			\\&=
			\P\pr{\max\{\vass{Y_1},\vass{Y_{2}}\}<\infty}
			+
			\P\pr{\max\{\vass{Y_1},\vass{Y_{2}}\}=\infty}
			=1
			.
			\end{split}
		\end{equation}
		 \finishproofthus
\end{proof}

\begin{definition}[Generalized conditional expectation]
	\label{gen:conditional:expectation}
	Let $ ( \Omega, \cF, \P ) $ be a probability space,
	let $\codom\subseteq[-\infty,\infty]$ be a set,
	let $X\colon\Omega\to\codom$ be a random variable,
	let $\gil\subseteq\cF$ be a sigma-algebra on $\Omega$, and assume that \improper{X}{\gil} \cfload.
	Then we denote by $\E[X|\gil]$ the set given by
	\begin{equation}
		\E[X|\gil]
		=\pR*{Y\colon\Omega\to[-\infty,\infty]\colon
			\!\!\PR*{\!\!
				\begin{array}{c}
		\pr{Y\text{ is $\cF$-measurable}}
			\wedge \pr{Y\text{ is a}\\
			\text{$\gil$-conditional $\P$-expectation of }X}
		\end{array}
	\!\!}
	}
		% vgl def 2.10 und prop. 2.11
		% D einf
	\end{equation}
	(cf.\ \cref{def:cond:exp,lem:cond:exp:ex}).
\end{definition}

\cfclear
\begin{athm}{lemma}{lem:basic:improper}
	Let $ ( \Omega, \cF, \P ) $ be a probability space,
	let $ \codom\subseteq[-\infty,\infty]$ be a set, and let 
	$\gil\subseteq\cF$ be a sigma-algebra on $\Omega$.
	Then
	\begin{enumerate}[label=(\roman*)] 
		\item \label{it:basic:improper:1}it holds for every proper $\gil$-conditional $\P$-integrable random variable $X\colon\Omega\to\codom$ that \improper{X}{\gil} and
		\item \label{it:basic:improper:2}it holds\footnote{In this work we do, as usual, not distinguish between random variables and equivalence classes of random variables and, in particular, we observe that for every probability space $ ( \Omega, \cF, \P ) $,
			every $D\subseteq[-\infty,\infty]$,
			every random variable $X\colon\Omega\to D$,
			every sigma-algebra $\gil\subseteq\cF$ on $\Omega$,
			every $Y\in\E[X|\gil]$, 
			and every $A\in\mathcal{B}([-\infty,\infty])$ it holds that $\P(\co{X}{\gil}\in A)=\P(Y\in A)$.}  for every  random variable $X\colon\Omega\to\codom$ with $\P(X\geq 0)=1$ that \improper{X}{\gil} and $\P(\co{X}{\gil}\geq 0)=1$.
	\end{enumerate}\cfout.
\end{athm}

\begin{proof}[Proof of \cref{lem:basic:improper}]
	\Nobs that the fact that for every random variable $X\colon\Omega\to D$, every $A\in\gil$, and every $z\in\{-1,1\}$ it holds that 
	$\vass{X}\indicator{A}\geq \max\{zX,0\}\indicator{A}$ \proves\ that
	for every proper $\gil$-conditional $\P$-integrable random variable $X\colon\Omega\to\codom$ it holds that \improper{X}{\gil} \cfload.
	This \proves\ \cref{it:basic:improper:1}.
	\Nobs that for every random variable $X\colon\Omega\to\codom$ with $\P(X\geq 0)=1$ it holds that
	\begin{equation}
		\begin{split}
		\min\pRb{\E\PR{\max\{X,0\}},\E\PR{\max\{-X,0\}}}
		=\E\PR{\max\{-X,0\}}
		= 0.
		\end{split}
	\end{equation}
	This, \cref{it:triv:improper} in \cref{lem:cond:int:triv:sig:alg}, and
	\cref{it:cond:incr:sig:improper} in \cref{lem:cond:int:incr:sig:alg} \prove\ that for every 
	random variable $X\colon\Omega\to\codom$ with $\P(X\geq 0)=1$
	it holds that 
	\improper{X}{\gil}.
	\Moreover that for every 
	random variable $X\colon\Omega\to\codom$,
	every $A\in\gil$
	with $\P(X\geq 0)=1$ and $\E\PRb{\co{X}{\gil}\indicator{A\cap\{\co{X}{\gil}<0\}}}
	=\E\PR{X\indicator{A\cap\{\co{X}{\gil}<0\}}}$ it holds that
	\begin{equation}
		\E\PRb{\co{X}{\gil}\indicator{A\cap\{\co{X}{\gil}<0\}}}
		=\E\PR{X\indicator{A\cap\{\co{X}{\gil}<0\}}}
		=
		\E\PR{X\indicator{\{X\geq 0\}\cap A\cap\{\co{X}{\gil}<0\}}}\geq0
	\end{equation}
	\cfload.
	This and the fact that for every 
	random variable $X\colon\Omega\to\codom$ with $\P(X\geq 0)=1$
	it holds that 
	\improper{X}{\gil} \prove\ that
	for every random variable $X\colon\Omega\to\codom$ with $\P(X\geq 0)=1$ it holds that \improper{X}{\gil} and $\P(\co{X}{\gil}\geq 0)=1$. This \proves[ei]\ \cref{it:basic:improper:2}. \finishproofthus
\end{proof}
\cfclear

\begin{athm}{lemma}{lem:cond:int:P:crit}
	Let $ ( \Omega, \cF, \P ) $ be a probability space,
	let $\codom\subseteq[-\infty,\infty]$ be a set,
	let $X\colon\Omega\to\codom$ be a random variable,
	and
	let $\gil\subseteq\cF$ be a sigma-algebra on $\Omega$.
	Then
	\begin{enumerate}[label=(\roman*)]
		\item \label{it:proper:P:crit}it holds that \proper{X}{\gil} if and only if
		\begin{equation}
			\P\prb{\cob{\vass{X}}{\gil}<\infty}=1,
		\end{equation}
		\item \label{it:improper:P:crit}it holds that \improper{X}{\gil} if and only if
		\begin{equation}
			\textstyle
			\P\prb{ \min_{ z \in \{-1,1\} } \co{\max\{ z X, 0 \} }{ \gil } < \infty } = 1,
		\end{equation}
				and
		\item \label{it:alt:for:improper:P:crit}it holds that
		\begin{equation}
			\textstyle
			\begin{split}
			&\textstyle\P\prb{ \min_{ z \in \{-1,1\} } \co{\max\{ z X, 0 \} }{ \gil } < \infty } 
			\\
			&= 
			\P\prb{ \pR{ \co{\max\{ X, 0 \}}{\gil} < \infty } \cup \pR{ \co{\max\{- X, 0 \}}{\gil} < \infty }}
			\end{split}
		\end{equation}
		\end{enumerate}
		\cfout.

\end{athm}

\begin{proof}[Proof of \cref{lem:cond:int:P:crit}]
	\Nobs that \cref{it:basic:improper:2} in \cref{lem:basic:improper} \proves\ that for every random variable $Y\colon\Omega\to\codom$ it holds that 	
	\improper{\vass{Y}}{\gil}
\cfload. This \proves\ that for every random variable $Y\colon\Omega\to\codom$, every $A\in\gil$ with $\E\PRb{\cob{\vass{Y}}{\gil}\allowbreak\indicator{A\cap\{\co{\vass{Y}}{\gil}=\infty\}}}
	=\E\PR{\vass{Y}\indicator{A\cap \{\co{\vass{Y}}{\gil}=\infty\}}}$ and $\E\PR{\vass{Y}\indicator{A}}<\infty$ it holds that
	\begin{equation}
		\E\PRb{\cob{\vass{Y}}{\gil}\indicator{A\cap \{\co{\vass{Y}}{\gil}=\infty\}}}
		=\E\PR{\vass{Y}\indicator{A\cap \{\co{\vass{Y}}{\gil}=\infty\}}}
		\leq\E\PR{\vass{Y}\indicator{A}}<\infty.
	\end{equation}
	This \proves\ that for every random variable $Y\colon\Omega\to\codom$ and every $A\in\gil$ with $\E\PRb{\cob{\vass{Y}}{\gil}\allowbreak\indicator{A\cap \{\co{\vass{Y}}{\gil}=\infty\}}}
	=\E\PR{\vass{Y}\indicator{A\cap \{\co{\vass{Y}}{\gil}=\infty\}}}$ and $\E\PR{\vass{Y}\indicator{A}}<\infty$ it holds that
	\begin{equation}
			\P\prb{A\cap\pRb{\cob{\vass{Y}}{\gil}=\infty}}=0.
	\end{equation}
	This \proves\ that for every proper $\gil$-conditional $\P$-integrable random variable $Y\colon\Omega\to\codom$ it holds that
	\cfadd{def:proper:cond:int}
	\begin{equation}
		\label{it:1:first:implication}
	\P\prb{\cob{\vass{Y}}{\gil}<\infty}=	1-\P\prb{\cob{\vass{Y}}{\gil}=\infty}=1
	\end{equation}
	\cfload.
	\Moreover for every random variable $Y\colon\Omega\to\codom$ and every $A\in\gil$, $n\in\N$ with $\E\PRb{\cob{\vass{Y}}{\gil}\allowbreak\indicator{A\cap \{\co{\vass{Y}}{\gil}\leq n\}}}
	=\E\PR{\vass{Y}\indicator{A\cap \{\co{\vass{Y}}{\gil}\leq n\}}}$ it holds that
			\begin{equation}
		\label{eq:proper:on:intersected:covers}
		\E\PR{\vass{Y}\indicator{A\cap\{\co{\vass{Y}}{\gil}\leq n\}}}
		=
		\E\PR{\cob{\vass{Y}}{\gil}\indicator{A\cap\{\co{\vass{Y}}{\gil}\leq n\}}}
		\leq
		\E\PR{n\indicator{A\cap\{\co{\vass{Y}}{\gil}\leq n\}}}
		\leq n <\infty
		.
	\end{equation}
			\Moreover for every random variable $Y\colon\Omega\to\codom$ and  every $A\in\gil$
			 with $\P\prb{\cob{\vass{Y}}{\gil}<\infty}=1$ 
			 and 
			 $\E\PRb{\cob{\vass{Y}}{\gil}\allowbreak\indicator{A\cap\{\co{\vass{Y}}{\gil}=\infty\}}}
			 =\E\PR{\vass{Y}\indicator{A\cap \{\co{\vass{Y}}{\gil}=\infty\}}}$
			  it holds  that
	\begin{equation}
		\begin{split}
			\E\PRb{\vass{Y}\indicator{A\cap\{\co{\vass{X}}{\gil}= \infty\}}}
			=
			\E\PRb{\cob{\vass{Y}}{\gil}\indicator{A\cap\{\co{\vass{X}}{\gil}= \infty\}}}
			=0.
		\end{split}
	\end{equation}
	This and \eqref{eq:proper:on:intersected:covers} \prove\ that for every random variable $Y\colon\Omega\to\codom$ 
	with $\P\prb{\cob{\vass{Y}}{\gil}<\infty}=1$ it holds that \proper{Y}{\gil}. This and \cref{it:1:first:implication} \prove\ \cref{it:proper:P:crit}.
	\Moreover for every random variable $Y\colon\Omega\to D$ it holds that
	$\max\{Y,0\}$ and $\max\{-Y,0\}$ are improper $\gil$-conditional $\P$-integrable.
		This \proves\ that for every random variable $Y\colon\Omega\to D$,
	there exist $A_n\in\gil$, $n\in\N$, such that for all $n\in\N$, $z\in\{-1,1\}$, $B\in\gil$ it holds that
	\begin{equation}
		\label{eq:max:parts:improper}
		\begin{split}
			\textstyle
			\Omega=\cup_{k\in\N}A_k
			\qqandqq
			\E\PRb{\co{\max\{zY,0\}}{\gil}\indicator{A_n\cap B}}=\E\PR{\max\{zY,0\}\indicator{A_n\cap B}}.
		\end{split}
	\end{equation}
	\Moreover for every improper $\gil$-conditional $\P$-integrable random variable $Y\colon\Omega\to D$,
	there exist $A_n\in\gil$, $n\in\N$, such that for all $n\in\N$, $z\in\{-1,1\}$, $B\in\gil$ it holds that
	\begin{equation}
		\begin{split}
			\textstyle
			\Omega=\cup_{k\in\N}A_k,
			\qquad
			\min_{k\in\{-1,1\}}\E\PR{\max\{kY,0\}\indicator{A_n}}<\infty,
		\end{split}
	\end{equation}
	\begin{equation}
		\qqandqq
		\E\PRb{\co{Y}{\gil}\indicator{A_n\cap B}}=\E\PR{Y\indicator{A_n\cap B}}.
	\end{equation}
	This and \eqref{eq:max:parts:improper} \prove\ that for every improper $\gil$-conditional $\P$-integrable random variable $Y\colon\Omega\to D$,
	there exist $A_n\in\gil$, $n\in\N$, such that for all $n\in\N$, $z\in\{-1,1\}$, $B\in\gil$ it holds that
	\begin{equation}
		\begin{split}
			\label{eq:suitable:decomposition}
			\textstyle
			\Omega=\cup_{k\in\N}A_k,
			\qquad
			\min_{k\in\{-1,1\}}\E\PR{\max\{kY,0\}\indicator{A_n}}<\infty,
		\end{split}
	\end{equation}
	\begin{equation}
		\label{eq:suitable:decomposition2}
		\qqandqq
		\E\PRb{\co{\max\{zY,0\}}{\gil}\indicator{A_n\cap B}}=\E\PR{\max\{zY,0\}\indicator{A_n\cap B}}.
	\end{equation}
	\Moreover for every random variable $Y\colon\Omega\to D$ and
	every $A\in\gil$ with
	\begin{equation}
		\begin{split}
		&\textstyle\min_{ z \in \{-1,1\} }\E\PRb{\co{\max\{ z Y, 0 \} }{ \gil }\indicator{A\cap\{ \co{\max\{Y, 0 \} }{ \gil }=\infty\}\cap\{ \co{\max\{-Y, 0 \} }{ \gil }=\infty\}}}
		\\&\textstyle=
		\min_{ z \in \{-1,1\} }\E\PRb{ \max\{zY, 0 \} \indicator{A\cap\{ \co{\max\{Y, 0 \} }{ \gil }=\infty\}\cap\{ \co{\max\{-Y, 0 \} }{ \gil }=\infty\}}}
		\end{split}
	\end{equation}
	and $\min_{k\in\{-1,1\}}\E\PR{\max\{kY,0\}\indicator{A}}<\infty$
	 it holds that
	 \begin{equation}
	 	\begin{split}
	 		\textstyle
	 		&\textstyle\E\PRb{\min_{ z \in \{-1,1\} } \co{\max\{ z Y, 0 \} }{ \gil }\indicator{A\cap\{ \co{\max\{Y, 0 \} }{ \gil }=\infty\}\cap\{ \co{\max\{-Y, 0 \} }{ \gil }=\infty\}}}
	 		\\&\leq\textstyle\min_{ z \in \{-1,1\} }\E\PRb{ \co{\max\{ z Y, 0 \} }{ \gil }\indicator{A\cap\{ \co{\max\{Y, 0 \} }{ \gil }=\infty\}\cap\{ \co{\max\{-Y, 0 \} }{ \gil }=\infty\}}}
	 		\\&\textstyle=\min_{ z \in \{-1,1\} }\E\PRb{ \max\{zY, 0 \} \indicator{A\cap\{ \co{\max\{Y, 0 \} }{ \gil }=\infty\}\cap\{ \co{\max\{-Y, 0 \} }{ \gil }=\infty\}}}
	 		\\&\textstyle\leq\min_{ z \in \{-1,1\} }\E\PRb{ \max\{zY, 0 \} \indicator{A}}<\infty.
	 	\end{split}
	 \end{equation}
	 This, \eqref{eq:suitable:decomposition}, and \eqref{eq:suitable:decomposition2} \prove\ that for every improper $\gil$-conditional $\P$-integrable random variable $Y\colon\Omega\to D$ it holds that
	 \begin{equation}
	 	\label{eq:ii:implication}
	 	\begin{split}
	 	&\textstyle\P\prb{ \min_{ z \in \{-1,1\} } \co{\max\{ z X, 0 \} }{ \gil } < \infty } 
	 	\\&= 1- \P\prb{  \co{\max\{Y, 0 \} }{ \gil }=\infty, \co{\max\{-Y, 0 \} }{ \gil }=\infty } 
	 	= 1.
	 	\end{split}
	 \end{equation}
	 \Moreover that for every random variable $Y\colon\Omega\to D$ and
	 every $A\in\gil$, $n\in\N$, $k\in\{-1,1\}$
	%every $n,m,p\in\N$, $k\in\{-1,1\}$ 
		  with
		  \begin{equation}
		  	\begin{split}
		  	\E\PR{\co{\max\{Y,0\}}{\gil}\indicator{A\cap\{\co{\max\{kY,0\}}{\gil}\leq n\}}}
		  	&=\E\PR{\max\{Y,0\}\indicator{A\cap\{\co{\max\{kY,0\}}{\gil}\leq n\}}}
		  	\end{split}
		  \end{equation}
		  and
		  \begin{equation}
		  	\begin{split}
		  		\E\PR{\co{\max\{-Y,0\}}{\gil}\indicator{A\cap\{\co{\max\{kY,0\}}{\gil}\leq n\}}}
		  		&=\E\PR{\max\{-Y,0\}\indicator{A\cap\{\co{\max\{kY,0\}}{\gil}\leq n\}}}
		  	\end{split}
		  \end{equation}
	it holds that
	 \begin{equation}
	 	\label{eq:finite:for:improper:on:finite}
	 	\begin{split}
	 		&\textstyle\min_{z\in\{-1,1\}}\E\PR{\max\{zY,0\}\indicator{A\cap\{\co{\max\{kY,0\}}{\gil}\leq n\}}}
	 		\\&=\textstyle\min_{z\in\{-1,1\}}\E\PR{\co{\max\{zY,0\}}{\gil}\indicator{A\cap\{\co{\max\{kY,0\}}{\gil}\leq n\}}}
	 		\\&\leq
	 			\E\PR{\co{\max\{kY,0\}}{\gil}\indicator{A\cap\{\co{\max\{kY,0\}}{\gil}\leq n\}}}
	 		\leq
	 		\E\PR{n}<\infty.
	 	\end{split}
	 \end{equation}
	 \Moreover the fact that for every random variable $Y\colon\Omega\to[-\infty,\infty]$ with $\P(\vass{Y}<\infty)=1$ it holds that $\E\PR{\vass{Y}\indicator{\{\vass{Y}=\infty\}}}=0$ \prove\ that for every random variable $Y\colon\Omega\to D$ with $\P\prb{ \min_{ z \in \{-1,1\} } \co{\max\{ z Y, 0 \} }{ \gil } < \infty } = 1$
	 it holds that
	 \begin{equation}
	 	\label{eq:finite:for:improper:on:infinite}
	 	\begin{split}
	 		\textstyle\min_{z\in\{-1,1\}}\E\PR{\max\{zY,0\}\indicator{\{\co{\max\{Y,0\}}{\gil}=\co{\max\{-Y,0\}}{\gil}=\infty\}}}
	 		=0.
	 	\end{split}
	 \end{equation}
	 Combining this and
	 \eqref{eq:max:parts:improper} with
	  \eqref{eq:finite:for:improper:on:finite} 
	 \proves\ that for every random variable $Y\colon\Omega\to D$ with $\P\prb{ \min_{ z \in \{-1,1\} } \co{\max\{ z Y, 0 \} }{ \gil } < \infty } = 1$ it holds that \improper{Y}{\gil}.
	This and \eqref{eq:ii:implication} \prove[ei]\ \cref{it:improper:P:crit}.
	\Nobs that for all random variabeles $Y\colon\Omega\to[-\infty,\infty]$ and $Z\colon\Omega\to[-\infty,\infty]$ it holds that
	\begin{equation}
		\begin{split}
			\textstyle\pRb{\min\{Y,Z\}< \infty}
			=\pRb{Y< \infty}\cup\pRb{Z< \infty}.
		\end{split}
	\end{equation}
This \proves\ \cref{it:alt:for:improper:P:crit}.
	\finishproofthus
	\end{proof}

The following result, \cref{lem:norm:cond:exp} below, relates the concept of generalized conditional expectations in \cref{gen:conditional:expectation} to the concept of standard conditional expectations (cf., \eg, 
\cite[Definition~12.1.3]{Krishna2006},
\cite[Definition~8.11]{klenkeprobability}, 
\cite[Theorem~8.1]{Kallenberg}, and
\cite[Chapter~10]{Dudley2002}). 
More specifically, \cref{it:localizations:cond:exp2} in \cref{lem:norm:cond:exp} shows in the situation where the considered random variable is proper integrable that 
the generalized conditional expectation in \cref{gen:conditional:expectation} (cf.\ \cref{def:cond:exp})
 coincides with 
 the standard conditional expectation (cf., \eg, 
\cite[Definition~12.1.3]{Krishna2006},
\cite[Definition~8.11]{klenkeprobability}, 
\cite[Theorem~8.1]{Kallenberg}, and
\cite[Chapter~10]{Dudley2002})
 and
\cref{it:localizations:cond:exp} in \cref{lem:norm:cond:exp} proves in the situation where the random variable is improper integrable that 
the generalized conditional expectation in \cref{gen:conditional:expectation} (cf.\ \cref{def:cond:exp})
coincides with 
the standard conditional expectation (cf., \eg, 
\cite[Definition~12.1.3]{Krishna2006},
\cite[Remark~8.16]{klenkeprobability}, 
\cite[Exercise~5~in~Chapter~8]{Kallenberg}, and
\cite[Exercise~7~in~Section~10.1]{Dudley2002}).

\cfclear
\begin{lemma}
	\label{lem:norm:cond:exp}
	Let $ ( \Omega, \cF, \P ) $ be a probability space,
	let $\codom\subseteq[-\infty,\infty]$ be a set,
	let $X\colon\Omega\to\codom$ be a random variable,
	let $\gil\subseteq\cF$ be a sigma-algebra on $\Omega$, and assume that \improper{X}{\gil}
	 \cfload.
	Then
	\begin{enumerate}[label=(\roman*)] 
		\item \label{it:localizations:cond:exp2}it holds for all $A\in\gil$ with $\E\PR{\vass{X}\indicator{A}}<\infty$ that $\E\PR{X\indicator{A}}=\E\PR{\co{X}{\gil}\indicator{A}}$
		 and
		\item \label{it:localizations:cond:exp}it holds for all $A\in\gil$ with $\min_{z\in\{-1,1\}}\E\PR{\max\{zX,0\}\indicator{A}}<\infty$ that $\E\PR{X\indicator{A}}=\E\PR{\co{X}{\gil}\indicator{A}}$
	\end{enumerate}
	\cfout.
\end{lemma}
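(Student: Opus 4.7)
The plan is first to reduce item \cref{it:localizations:cond:exp2} to item \cref{it:localizations:cond:exp}, since $\E\PR{\vass{X}\indicator{A}}<\infty$ implies both $\E\PR{\max\{X,0\}\indicator{A}}<\infty$ and $\E\PR{\max\{-X,0\}\indicator{A}}<\infty$. To prove item \cref{it:localizations:cond:exp} I would construct a particular version $V$ of $\E\PR{X|\gil}$ from the standard conditional expectations of the nonnegative parts $\max\{\pm X,0\}$ and then transfer the identity to every other version via the uniqueness statement in \cref{lem:cond:uniq}.

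Concretely, by \cref{prop:lem:cond:exp:ex} there exist $\gil$-measurable functions $Z^+,Z^-\colon\Omega\to[0,\infty]$ satisfying $\E\PR{Z^{\pm}\indicator{B}}=\E\PR{\max\{\pm X,0\}\indicator{B}}$ for every $B\in\gil$. The hypothesis that $X$ is improper $\gil$-conditional $\P$-integrable together with \cref{lem:cond:int:P:crit} yields $\P\pr{\min\{Z^+,Z^-\}<\infty}=1$, and I would then define $V\colon\Omega\to[-\infty,\infty]$ by setting $V=Z^+-Z^-$ on $\{\min\{Z^+,Z^-\}<\infty\}$ (using the conventions $\infty-c=\infty$ and $c-\infty=-\infty$ for $c\in[0,\infty)$) and $V=0$ on the complementary $\P$-null set.

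The main technical step is to verify that $V\in\E\PR{X|\gil}$ in the sense of \cref{def:cond:exp}. For the required countable cover I would use the family $\{Z^++Z^-\leq n\}$, $\{Z^+=\infty,\,Z^-\leq n\}$, and $\{Z^-=\infty,\,Z^+\leq n\}$ indexed by $n\in\N$, together with the $\P$-null set $\{Z^+=Z^-=\infty\}$. On $\{Z^++Z^-\leq n\}$ both $X$ and $V$ have $L^1$-norms bounded by $n$, so condition~(iii) of \cref{def:cond:exp} holds trivially and condition~(iv) reduces to the difference of the two defining equations for $Z^{\pm}$. On $\{Z^+=\infty,\,Z^-\leq n\}$ the integrability condition is satisfied with $z=-1$ (since $\max\{-V,0\}\equiv 0$ there and $\E\PR{\max\{-X,0\}\indicator{\cdot}}\leq n$), and both sides of (iv) reduce to $\infty\cdot\P(\cdot)$ after cancelling the finite contribution of $Z^-$; the case $\{Z^-=\infty,\,Z^+\leq n\}$ is symmetric, and the $\P$-null set contributes nothing. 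The principal obstacle will be exactly this case analysis on the sets where $Z^+$ or $Z^-$ attains the value $\infty$, where the sign conventions must be tracked carefully.

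Once $V\in\E\PR{X|\gil}$ has been established, \cref{lem:cond:uniq} yields $\P(V=Y)=1$ for every $Y\in\E\PR{X|\gil}$, whence $\E\PR{Y\indicator{A}}=\E\PR{V\indicator{A}}$ whenever either integral is well defined. For $A\in\gil$ satisfying the hypothesis of item \cref{it:localizations:cond:exp}, the corresponding $\E\PR{Z^{\pm}\indicator{A}}=\E\PR{\max\{\pm X,0\}\indicator{A}}$ is finite for at least one sign, hence $Z^{\pm}<\infty$ holds $\P$-almost surely on $A$ for that sign and so $V\indicator{A}=(Z^+-Z^-)\indicator{A}$ holds $\P$-almost surely; linearity of the expectation then gives $\E\PR{V\indicator{A}}=\E\PR{Z^+\indicator{A}}-\E\PR{Z^-\indicator{A}}=\E\PR{\max\{X,0\}\indicator{A}}-\E\PR{\max\{-X,0\}\indicator{A}}=\E\PR{X\indicator{A}}$, which completes the proof of both items.
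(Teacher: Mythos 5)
Your argument is correct, but it takes a genuinely different route from the paper. The paper's proof works directly with an arbitrary version $Y$ of $\co{X}{\gil}$ together with the localizing cover built into \cref{def:cond:exp}: it disjointifies that cover into sets $B_n\in\gil$ on which $\E\PR{X\indicator{B_n\cap A}}=\E\PR{Y\indicator{B_n\cap A}}$, and then, for an $A\in\gil$ whose positive or negative part of $X$ is integrable, sums these identities over $n$ (the interchange of sum and expectation being justified by the one-sided integrability), which gives item \cref{it:localizations:cond:exp} in a few lines; item \cref{it:localizations:cond:exp2} then follows exactly as in your reduction. You instead build a specific canonical version $V=Z^+-Z^-$ from the Radon--Nikodym conditional expectations of $\max\{\pm X,0\}$ supplied by \cref{prop:lem:cond:exp:ex}, verify via a tailored cover (finiteness strips of $Z^++Z^-$ plus the one-sided infinity sets and the null set where both are infinite) that $V$ satisfies \cref{def:cond:exp}, and then transfer to an arbitrary version through the uniqueness result \cref{lem:cond:uniq}; your case analysis on the sets where $Z^{\pm}=\infty$ and the final ``linearity'' step (which really rests on the identity $\max\{-V,0\}+Z^+=\max\{V,0\}+Z^-$ to handle a possibly infinite $\E\PR{Z^-\indicator{A}}$) do go through, and your appeal to \cref{lem:cond:int:P:crit} is not circular since that lemma is proved independently of the present one. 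The trade-off: your construction essentially re-derives part of the existence machinery of \cref{lem:cond:exp:ex} but yields the clean global representation $\co{X}{\gil}=Z^+-Z^-$ $\P$-a.s., whereas the paper's localize-and-sum argument is shorter because it exploits the cover that the definition of a generalized conditional expectation already provides.
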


\begin{proof}[Proof of \cref{lem:norm:cond:exp}]
	Throughout this proof let $B_n\in\gil$, $n\in\N$, satisfy that
	\begin{enumerate}[label=(\Roman*)] 
		\item\label{it:norm:cond:exp:1} it holds that $\Omega=\cup_{n\in\N}B_n$, 
		\item\label{it:norm:cond:exp:2} it holds for all $i\in\N$, $j\in\N\backslash\{i\}$ that $B_i\cap B_j=\emptyset$, and
		\item\label{it:norm:cond:exp:4} it holds for all $n\in\N$, $A\in\gil$ that $\E\PR{X\indicator{B_n\cap A}}=\E\PR{Y\indicator{B_n\cap A}}$
	\end{enumerate}
	 \cfadd{def:cond:exp,lem:cond:exp:ex}\cfload. 
	\Nobs that for all $A\in\gil$, $z\in\{-1,1\}$, $n\in\N$ with $\E\PR{\max\{zX,0\}\indicator{A}}<\infty$ it holds that
	\begin{equation}
		z\E\PR{\co{X}{\gil}\indicator{B_n\cap A}}
		=\E\PR{zX\indicator{B_n\cap A}}
		\leq \E\PR{\max\{zX,0\}\indicator{A}}
		<\infty.
	\end{equation}
	This, \cref{it:norm:cond:exp:1}, \cref{it:norm:cond:exp:2}, and \cref{it:norm:cond:exp:4} \prove[ps]\ that for all $A\in\gil$ with $\min_{z\in\{-1,1\}}\E\PR{\max\{zX,0\}\indicator{A}}<\infty$ it holds that
	\begin{equation}
		\textstyle
		\E\PR{X\indicator{B}}
		=
		\sum_{n=1}^{\infty}\E\PR{X\indicator{B_n\cap A}}
		=
		\sum_{n=1}^{\infty}\E\PR{\co{X}{\gil}\indicator{B_n\cap A}}
		=\E\PR{\co{X}{\gil}\indicator{A}}.
	\end{equation}
	This \proves\ \cref{it:localizations:cond:exp}.
		\Nobs that \cref{it:localizations:cond:exp} \proves\ \cref{it:localizations:cond:exp2}.
	\finishproofthus
	\end{proof}

\subsection{Factorization lemma for generalized conditional expectations}
\label{subsection:3.2} 

In the next result, \cref{lem:2.9:from:1609.07031} below, we combine the well-known factorization lemma for conditional expectations for non-negative functions (cf., \eg, \cite[Lemma~2.9]{JentzenPusnik_2018} and  \cite[Proposition~1.12]{Prato}) with \cref{it:localizations:cond:exp} in \cref{lem:norm:cond:exp} to reformulate the factorization lemma in the situation of generalized conditional expectations for non-negative functions.

\begin{lemma}[{Factorization lemma for conditional expectations for non-negative functions}]
	\label{lem:2.9:from:1609.07031}
	Let $ ( \Omega, \cF, \P ) $ be a probability space,
	let $(D,\mathcal{D})$ and $(E,\mathcal{E})$ be measurable spaces,
	let $\Phi\colon D\times E\to[0,\infty]$ be measurable,
	let $\gil\subseteq\cF$ be a sigma-algebra on $\Omega$, 
	let $X\colon\Omega\to D$ be $\gil$-measurable,
	let $Y\colon\Omega\to E$ be a random variable,
	assume\footnote{Note that for every set $D$,
		every measurable space $(E,\mathcal{E})$, 
		and every function $Y\colon D\to E$ it holds that $Y$ is $\sigma(Y)$-measurable 
		and note that for
		every measurable space $(D,\mathcal{D})$,
		every measurable space $(E,\mathcal{E})$, and 
		every $\cD$-measurable function $Y\colon D\to E$
		it holds that $\sigma(Y)\subseteq\cD$.} that $\gil$ and $\sigma(Y)$ are independent, and
	let $\phi\colon D\to[0,\infty]$ satisfy for all $x\in D$ that
	\begin{equation}
		\phi(x)=\E\PR{\Phi(x,Y)}.
	\end{equation}
	Then 
	\begin{enumerate}[label=(\roman*)]
		\item \label{it1:lem.29}it holds that $\phi$ is measurable and
		\item \label{it2:lem.29}it holds $\P$-\as\ that $\co{\Phi(X,Y)}{\gil}=\phi(X)$
	\end{enumerate}
	\cfout.
\end{lemma}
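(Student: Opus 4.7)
The plan is to run the standard measure-theoretic induction (the ``monotone class argument''), reducing the claim to the case of indicators of measurable rectangles, where independence of $\gil$ and $\sigma(Y)$ can be exploited directly. As a preliminary reduction I would observe that since $\Phi(X,Y)\geq 0$, \cref{lem:basic:improper} ensures that $\Phi(X,Y)$ is improper $\gil$-conditional $\P$-integrable, and moreover that by choosing $A_n=\Omega$ for all $n\in\N$ in \cref{def:cond:exp} the integrability condition there is trivially fulfilled because $\max\{-\Phi(X,Y),0\}=\max\{-\phi(X),0\}=0$. Combined with the uniqueness statement in \cref{lem:cond:uniq}, the lemma therefore reduces to establishing the two claims (a) $\phi$ is $\cD$-measurable (so that $\phi(X)$ is $\gil$-measurable as a composition with the $\gil$-measurable map $X$) and (b) for every $B\in\gil$ it holds that $\E\PR{\Phi(X,Y)\indicator{B}}=\E\PR{\phi(X)\indicator{B}}$.

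Next I would verify (a) and (b) in the elementary case $\Phi=\indicator{A\times C}$ with $A\in\cD$ and $C\in\cE$. In this situation $\phi(x)=\indicator{A}(x)\,\P(Y\in C)$ is visibly $\cD$-measurable, and for every $B\in\gil$ the fact that $\{X\in A\}\cap B\in\gil$ together with the independence of $\gil$ and $\sigma(Y)$ yield
\begin{equation*}
\E\PR{\indicator{A\times C}(X,Y)\indicator{B}}=\P(\{X\in A\}\cap B)\,\P(Y\in C)=\E\PR{\phi(X)\indicator{B}}.
\end{equation*}
I would then extend (a) and (b) from measurable rectangles to arbitrary $F\in\cD\otimes\cE$ (for $\Phi=\indicator{F}$) by a Dynkin $\pi$--$\lambda$ argument: the rectangles form a $\pi$-system generating $\cD\otimes\cE$, and the collection of $F$ for which the claim holds for $\Phi=\indicator{F}$ is a $\lambda$-system, since it contains $D\times E$, is closed under proper set-theoretic differences by linearity of expectation (which is legitimate because all quantities are bounded by $1$), and is closed under countable monotone unions via the monotone convergence theorem applied both to the defining identity for $\phi$ and to identity (b).

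Finally, linearity of expectation propagates (a) and (b) to every non-negative $\cD\otimes\cE$-measurable simple function, and the usual approximation of a general non-negative measurable $\Phi$ from below by non-negative simple functions $\Phi_k\uparrow\Phi$ concludes the argument: a first application of monotone convergence in the $Y$-integral gives $\phi_k(x)=\E\PR{\Phi_k(x,Y)}\uparrow\phi(x)$ pointwise on $D$, so $\phi$ is measurable as a pointwise limit of measurable functions; a second application of monotone convergence in the $\Omega$-integral then passes identity (b) from each $\Phi_k$ to the limit $\Phi$. No genuine obstacle arises; the only point requiring care is the preliminary reduction, which must correctly collapse the generalized conditional expectation framework of \cref{gen:conditional:expectation} to the two familiar requirements (a) and (b) in the non-negative setting --- after which the Dynkin and monotone-convergence steps proceed exactly as in the classical proof of the factorization lemma.
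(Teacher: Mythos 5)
Your argument is correct, but it is a genuinely different route from the paper's. The paper disposes of this lemma in one line: it invokes the classical factorization lemma for non-negative functions from the literature (Lemma~2.9 in \cite{JentzenPusnik_2018}), which already delivers the measurability of $\phi$ and the identity $\E\PR{\Phi(X,Y)\indicator{B}}=\E\PR{\phi(X)\indicator{B}}$ for all $B\in\gil$, and then uses \cref{lem:norm:cond:exp} to transport this statement from the standard notion of conditional expectation into the paper's generalized framework. You instead reprove the classical result from scratch via the Dynkin $\pi$--$\lambda$ theorem (rectangles, then indicators of general product-measurable sets, then simple functions, then monotone limits), and you carry out the reduction to the generalized framework by hand: taking $A_n=\Omega$ in \cref{def:cond:exp}, noting that non-negativity makes the integrability condition vacuous (consistent with \cref{lem:basic:improper}), and appealing to the almost-sure uniqueness in \cref{lem:cond:uniq} to identify $\phi(X)$ with $\co{\Phi(X,Y)}{\gil}$. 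Your reduction step is exactly the content that \cref{lem:norm:cond:exp} encapsulates in the paper, so nothing is missing; what your approach buys is self-containedness (no external citation for the monotone-class induction), at the cost of length, while the paper's approach buys brevity by delegating the standard measure-theoretic induction to the literature and isolating the only genuinely new ingredient, namely the bridge between standard and generalized conditional expectations.
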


\begin{cproof}{lem:2.9:from:1609.07031}
	\Nobs \cref{lem:norm:cond:exp} and \cite[Lemma~2.9]{JentzenPusnik_2018} \prove[p]\ \cref{it1:lem.29,it2:lem.29}. 
\end{cproof}

\cfclear
\begin{athm}{prop}{lem:2.9:plus:minus}[Factorization lemma for conditional expectations for general functions]
	Let $ ( \Omega, \cF, \P ) $ be a probability space,
	let $(D,\mathcal{D})$ and $(E,\mathcal{E})$ be measurable spaces,
	let $\Phi\colon D\times E\to[-\infty,\infty]$ be measurable,
	let $\gil\subseteq\cF$ be a sigma-algebra  on $\Omega$, 
	let $X\colon\Omega\to D$ be $\gil$-measurable,
	let $Y\colon\Omega\to E$ be a random variable,
	assume that $\gil$ and $\sigma(Y)$ are independent, 
	assume for all $x\in D$ that $\min_{z\in\{-1,1\}}\E\PR{\max\{z\Phi(x,Y),0\}}<\infty$,
	and %A\B-mb, A-mb nur wenn A für domain
	let $\phi\colon D\to[-\infty,\infty]$ satisfy for all $x\in D$ that
	\begin{equation}
		\label{eq:pos:neg:fact:exp}
		\phi(x)=\E\PR{\Phi(x,Y)}.
	\end{equation}
	Then 
	\begin{enumerate}[label=(\roman*)]
		\item \label{it:pos:neg:mb} it holds that $\phi$ is measurable,
		\item \label{it:pos:neg:improper} it holds that \improper{\Phi(X,Y)}{\gil}, and
		\item \label{it:pos:neg:fact} it holds $\P$-\as\ that $\co{\Phi(X,Y)}{\gil}
		=\phi(X)$
	\end{enumerate}
	\cfout.
\end{athm}

\begin{proof}[Proof of \cref{lem:2.9:plus:minus}]
	Throughout this proof let $G\colon D\times E\to[0,\infty]$ and $H\colon D\times E\to[0,\infty]$ satisfy for all $x\in D$, $y\in E$ that
	\begin{equation}
		\label{eq:plus:minus:setup}
		G(x,y)=\max\{\Phi(x,y),0\}
		\qqandqq
		H(x,y)=\max\{-\Phi(x,y),0\},
	\end{equation}
	let $g\colon D\to[0,\infty]$ and $h\colon D\to[0,\infty]$ satisfy for all $x\in D$ that
	\begin{equation}
		\label{eq:plus:minus:setup2}
		g(x)=\E[G(x,Y)]
		\qqandqq
		h(x)=\E[H(x,Y)],
	\end{equation}
	and for every $n\in\N$ let $A_n\subseteq \Omega$ and $B_n\subseteq \Omega$ satisfy 
	\begin{equation}
		\label{eq:covering:gil:mb:sets}
		A_n=\{g(X)\leq n\}
		\qqandqq
		B_n=\{h(X)\leq n\}.
	\end{equation}
	\Nobs that \eqref{eq:plus:minus:setup} and the assumption that $\Phi$ is measurable \prove\ that 
	\begin{equation}
		\label{eq:pos:min:parts:mb}
		\text{$G$ and $H$ are measurable}.
	\end{equation}
	This, \eqref{eq:plus:minus:setup2}, and \cref{lem:2.9:from:1609.07031} (applied with
	$( \Omega, \cF, \P )\curvearrowleft( \Omega, \cF, \P )$,
	$( D,\mathcal{D} )\curvearrowleft( D,\mathcal{D} )$,
	$( E,\mathcal{E} )\curvearrowleft( E,\mathcal{E} )$,
	$\Phi\curvearrowleft G$,
	$\gil\curvearrowleft\gil$,
	$X\curvearrowleft X$,
	$Y\curvearrowleft Y$,
	$\phi\curvearrowleft g$ in the notation of \cref{lem:2.9:from:1609.07031})
	\prove\ that
	\begin{enumerate}[label=(\Roman*)]
		\item \label{it:pos:mb} it holds that $g$ is measurable and
		\item \label{it:pos:cond:exp} it holds $\P$-\as\ that $\co{G(X,Y)}{\gil}
		=g(X)$
	\end{enumerate}
	\cfload. \Moreover \eqref{eq:plus:minus:setup2}, \eqref{eq:pos:min:parts:mb}, and \cref{lem:2.9:from:1609.07031} (applied with
	$( \Omega, \cF, \P )\curvearrowleft( \Omega, \cF, \P )$,
	$( D,\mathcal{D} )\curvearrowleft( D,\mathcal{D} )$,
	$( E,\mathcal{E} )\curvearrowleft( E,\mathcal{E} )$,
	$\Phi\curvearrowleft H$,
	$\gil\curvearrowleft\gil$,
	$X\curvearrowleft X$,
	$Y\curvearrowleft Y$,
	$\phi\curvearrowleft h$ in the notation of \cref{lem:2.9:from:1609.07031})
	\prove\ that
	\begin{enumerate}[label=(\Alph*)]
		\item \label{it:neg:mb} it holds that $h$ is measurable and
		\item \label{it:neg:cond:exp} it holds $\P$-\as\ that $\co{H(X,Y)}{\gil}
		=h(X)$.
	\end{enumerate}
	\Moreover 
	\eqref{eq:pos:neg:fact:exp}, 
	\eqref{eq:plus:minus:setup},
	\eqref{eq:plus:minus:setup2}, and the assumption that for all $x\in D$ it holds that $\min\{\E\PR{G(x,Y)},\E\PR{H(x,Y)}\}=\min_{z\in\{-1,1\}}\E\PR{\max\{z\Phi(x,Y),0\}}<\infty$ \prove\ that for all $x\in D$ it holds that
	\begin{equation}
		\label{eq:decomp:in:pos:neg}
		\phi(x)=\E\PR{\Phi(x,Y)}=\E\PR{G(x,Y)-H(x,Y)}=\E\PR{G(x,Y)}-\E\PR{H(x,Y)}=g(x)-h(x).
	\end{equation}
	This, \cref{it:pos:mb}, and \cref{it:neg:mb} \prove\ that $\phi$ is measurable.
	This \proves[pei] \cref{it:pos:neg:mb}. 
	\Nobs that 
	\eqref{eq:covering:gil:mb:sets} and the assumption that for all $x\in D$ it holds that $\min_{z\in\{-1,1\}}\E\PR{\max\{z\Phi(x,Y),0\}}<\infty$ \prove\ that for all $\omega\in\Omega$ it holds that
	\begin{equation}
		\begin{split}
		\min\{g(X(\omega)),h(X(\omega))\}
		&=\min\pRb{\E\PR{G(X(\omega),Y)},\E\PR{H(X(\omega),Y)}}
		\\&=\min\pRb{\E\PR{\max\{\Phi(X(\omega),Y),0\}},\E\PR{\max\{-\Phi(X(\omega),Y),0\}}}
		\\&<\infty
		.
		\end{split}
	\end{equation}
	This, \cref{it:pos:mb},
	\cref{it:neg:mb}, and the assumption that $X$ it $\gil$-measurable \prove\ that for all $m\in\N$ it holds that
	\begin{equation}
		\label{eq:covering:max:cond:exp}
		\Omega=\PRb{\cup_{n\in\N}A_n}\cup\PRb{\cup_{n\in\N}B_n},
		\qquad
		A_m\in\gil,
		\qqandqq
		B_m\in\gil
		.
	\end{equation}
		\Moreover
		\cref{it:pos:cond:exp},
		\cref{it:neg:cond:exp}, and
		\cref{it:localizations:cond:exp} in \cref{lem:norm:cond:exp}
		 \prove\ that for all $C\in\gil$ it holds that
	\begin{equation}
		\label{eq:localization:on:any:gil}
		\begin{split}
				\E\PR{G(X,Y)\indicator{C}}=\E\PR{g(X)\indicator{C}}
				\qqandqq
				\E\PR{H(X,Y)\indicator{C}}=\E\PR{h(X)\indicator{C}}
				.
		\end{split}
	\end{equation}
			This and \eqref{eq:covering:gil:mb:sets} \prove\ that for all $n\in\N$ it holds that
	\begin{equation}
		\label{eq:one:is:finite}
		\begin{split}
			\E\PR{G(X,Y)\indicator{A_n}}
			=
			\E\PR{g(X)\indicator{A_n}}
			\leq n
			\qqandqq
				\E\PR{H(X,Y)\indicator{B_n}}
			=
			\E\PR{h(X)\indicator{B_n}}
			\leq n
			.
		\end{split}
	\end{equation}
	This,
	\eqref{eq:plus:minus:setup},
	\eqref{eq:covering:max:cond:exp}, and \eqref{eq:localization:on:any:gil} \prove\ that for all $n\in\N$, $C\in\{A_n,B_n\}$ it holds that
	\begin{equation}
		\label{eq:Phi:max:indeed:improper}
		\begin{split}
		&\textstyle
		\min_{z\in\{-1,1\}}\E\PR{\pr{\max\{z\Phi(X,Y),0\}+\max\{z\phi(X),0\}}\indicator{C}}
		\\&=
		\textstyle
		\min\{
		\E\PR{(G(X,Y)+g(X))\indicator{C}}
		,
		\E\PR{(H(X,Y)+h(x))\indicator{C}}
		\}
		\leq
		2n
		<\infty
		.
		\end{split}
		\end{equation}
	Combining this with \eqref{eq:covering:max:cond:exp} \proves\ that \improper{\Phi(X,Y)}{\gil}.
	This \proves\ \cref{it:pos:neg:improper}.
	\Nobs that 
	\eqref{eq:decomp:in:pos:neg},
	\eqref{eq:covering:max:cond:exp},
	\eqref{eq:localization:on:any:gil},
	 \eqref{eq:one:is:finite}, and
	 \eqref{eq:Phi:max:indeed:improper} \prove\ that for all $n\in\N$, $C_1\in\{A_n,B_n\}$, $C_2\in\gil$ it holds that
	\begin{equation}
		\begin{split}
			\E\PR{\phi(X)\indicator{C_1\cap C_2}}
			=\E\PR{(g(X)-h(X))\indicator{C_1\cap C_2}}
			&=
			\E\PR{g(X)\indicator{C_1\cap C_2}}-\E\PR{h(X)\indicator{C_1\cap C_2}}
			\\&=
			\E\PR{G(X,Y)\indicator{C_1\cap C_2}}-\E\PR{H(X,Y)\indicator{C_1\cap C_2}}
			\\&=
			\E\PR{(G(X,Y)-H(X,Y))\indicator{C_1\cap C_2}}
			\\&=
			\E\PR{\Phi(X,Y)\indicator{C_1\cap C_2}}
			.
		\end{split}
	\end{equation}
	Combining
	this, \eqref{eq:covering:max:cond:exp},
	and \eqref{eq:Phi:max:indeed:improper}
	with the fact that $\phi(X)$ is $\gil$-measurable
	\proves\ that it holds $\P$-\as\ that $\phi(X)=\co{\Phi(X,Y)}{\gil}$. This \proves[pei] \cref{it:pos:neg:fact}. \finishproofthus
\end{proof}

\subsection{Factorization lemma for generalized conditional variances}
\label{subsection:3.3}

\cfclear
\begin{athm}{prop}{lem:factorization2}
	Let $ ( \Omega, \cF, \P ) $ be a probability space,
	let $\gil\subseteq\cF$ be a sigma-algebra on $\Omega$,
	let $m,n\in\N$,
	let $\Phi\colon\R^m\times\R^n\to\R$ be measurable,
	let $X\colon\Omega\to\R^m$ be $\gil$-measurable,
	and $Y\colon\Omega\to\R^n$ be a random variable,
	assume that $\sigma(Y)$ and $\gil$ are independent,
	assume for all $x\in\R^m$ that $\E\PR{\vass{\Phi(x,Y)}}<\infty$,
	and	let $\psi\colon\R^m\to[0,\infty]$ satisfy for all $x\in\R^m$ that
	\begin{equation}
		\label{eq:setup:var:function:2:var}
		\psi(x)
		=\var(\Phi(x,Y))
		.
	\end{equation}
	Then
	\begin{enumerate}[label=(\roman*)]
		\item \label{it:mb:fact:for:var0}it holds that $\psi$ is measurable,
		\item \label{it:mb:fact:improper} it holds that \improper{\Phi(X,Y)}{\gil},
		\item \label{it:mb:fact:for:var01}it holds $\P$-\as\ that $\cob{(\Phi(X,Y)-\co{\Phi(X,Y)}{\gil})^2}{\gil}=\psi(X)$, and
		\item \label{it:mb:fact:for:var02}it holds that $\E\PRb{(\Phi(X,Y)-\E\PR{\Phi(X,Y)|\gil})^2}=\E\PR{\psi(X)}$
	\end{enumerate}
	\cfout.
\end{athm}

\begin{proof}[Proof of \cref{lem:factorization2}]
	Throughout this proof let $\phi\colon\R^m\to\R$, $f\colon\R^m\to[0,\infty]$, and $F\colon\R^m\times\R^n\to[0,\infty]$ satisfy for all $x\in\R^m$, $y\in\R^n$ that
	\begin{equation}
		\label{eq:setup:for:fact1}
		\phi(x)=\E\PR{\Phi(x,Y)},
		\qquad
		f(x)=\E\PR{\pr{\Phi(x,Y)}^2},
		\qqandqq
		F(x,y)=(\Phi(x,y))^2.
	\end{equation}
	\Nobs that \eqref{eq:setup:var:function:2:var} and \eqref{eq:setup:for:fact1} \prove\ that for all $x\in\R^m$ it holds that
	\begin{equation}
		\label{eq:decomp:psi:in:phi:and:fact}
		\begin{split}
			\psi(x)
			&=\E\PR{\pr{\Phi(x,Y)-\E\PR{\Phi(x,Y)}}^2}
			\\&=\E\PR{\pr{\Phi(x,Y)}^2}-2\E\PR{\Phi(x,Y)}\E\PR{\Phi(x,Y)}+(\E\PR{\Phi(x,Y)})^2
			\\&=\E\PR{\pr{\Phi(x,Y)}^2}-(\E\PR{\Phi(x,Y)})^2
			=f(x)-(\phi(x))^2.
		\end{split}
	\end{equation}
	\Moreover \eqref{eq:setup:for:fact1},
	the assumption that for all $x\in\R^m$ it holds that $\E\PR{\vass{\Phi(x,Y)}}<\infty$ and \cref{lem:2.9:plus:minus} (applied with 
	$( \Omega, \cF, \P )\curvearrowleft( \Omega, \cF, \P )$,
	$( D,\mathcal{D} )\curvearrowleft( \R^m,\mathcal{B}(\R^m) )$,
	$( E,\mathcal{E} )\curvearrowleft( \R^n,\mathcal{B}(\R^n)  )$,
	$\Phi\curvearrowleft \Phi$,
	$\gil\curvearrowleft\gil$,
	$X\curvearrowleft X$,
	$Y\curvearrowleft Y$,
	$\phi\curvearrowleft \phi$
	in the notation of \cref{lem:2.9:plus:minus}) \prove\ that
	\begin{enumerate}[label=(\Roman*)]
		\item \label{it:fact:mb} it holds that $\phi$ is measurable,
		\item \label{it:fact:improper} it holds that \improper{\Phi(X,Y)}{\gil}, and
		\item \label{it:fact:cond:exp} it holds $\P$-\as\ that $\co{\Phi(X,Y)}{\gil}
		=\phi(X)$
	\end{enumerate}
	\cfload. \Nobs that \cref{it:fact:cond:exp} and the assumption that for all $x\in\R^m$ it holds that $\E\PR{\vass{\Phi(x,Y)}}<\infty$ \prove\ that $\P$-\as\ it holds that
	\begin{equation}
		\label{eq:finite:first:cond:exp:mom}
		\vass{\co{\Phi(X,Y)}{\gil}}=\vass{\Phi(X)}<\infty.
	\end{equation}
	\Moreover \eqref{eq:setup:for:fact1} and the assumption that $\Phi$ is measurable \prove\ that $F$ is measurable and that for all $x\in\R^m$ it holds that
	\begin{equation}
		f(x)
		=\E\PR{\pr{\Phi(x,Y)}^2}
		=\E\PR{F(x,Y)}.
	\end{equation}
	This and \cref{lem:2.9:from:1609.07031} (applied with 
	$( \Omega, \cF, \P )\curvearrowleft( \Omega, \cF, \P )$,
	$( D,\mathcal{D} )\curvearrowleft( \R^m,\mathcal{B}(\R^m) )$,
	$( E,\mathcal{E} )\curvearrowleft( \R^n,\mathcal{B}(\R^n)  )$,
	$\Phi\curvearrowleft F$,
	$\gil\curvearrowleft\gil$,
	$X\curvearrowleft X$,
	$Y\curvearrowleft Y$,
	$\phi\curvearrowleft f$
	in the notation of \cref{lem:2.9:from:1609.07031}) \prove\ that
	\begin{enumerate}[label=(\Alph*)]
		\item \label{it:fact2:mb}  it holds that $f$ is measurable and
		\item \label{it:fact2:cond:exp} it holds $\P$-\as\ that $\E\PR{F(X,Y)|\gil}
		=f(X)$.
	\end{enumerate}
	\Nobs that 
	\eqref{eq:decomp:psi:in:phi:and:fact},
	\cref{it:fact:mb}, 
	and \cref{it:fact2:mb} \prove\ that $\psi$ is measurable.
	This \proves[pei] \cref{it:mb:fact:for:var0}.
	\Nobs that 
	\cref{it:fact:cond:exp},
	\cref{it:fact2:cond:exp},
	\eqref{eq:setup:for:fact1}, and
	\eqref{eq:decomp:psi:in:phi:and:fact} \prove\ that there exist $A_n\in\gil$, $n\in\N$, with $\Omega=\cup_{n\in\N}A_n$ such that for all $n\in\N$, $B\in\gil$ it holds that
	\begin{equation}
		\label{eq:cond:var:core:est}
		\begin{split}
			\E\PR{\psi(X)\indicator{A_n\cap B}}
			&=
			\E\PRb{\prb{f(X)-(\phi(X))^2}\indicator{A_n\cap B}}
			\\&=
			\E\PRb{\prb{\co{F(X,Y)}{\gil}-(\co{\Phi(X,Y)}{\gil})^2}\indicator{A_n\cap B}}
			\\&=
			\E\PRb{\prb{F(X,Y)-(\co{\Phi(X,Y)}{\gil})^2}\indicator{A_n\cap B}}
			\\&=
			\E\PRb{\prb{\pr{\Phi(X,Y)}^2-(\co{\Phi(X,Y)}{\gil})^2}\indicator{A_n\cap B}}
			.
		\end{split}
	\end{equation}
	This \proves\ that $\P$-\as\ it holds that
	\begin{equation}
		\cob{(\Phi(X,Y)-\co{\Phi(X,Y)}{\gil})^2}{\gil}=\psi(X).
	\end{equation}
	This \proves[pei] \cref{it:mb:fact:for:var01}. \Nobs that \eqref{eq:cond:var:core:est} and the fact that for all $x\in\R^m$ it holds that $\psi(x)\geq 0$ \prove\ that
	\begin{equation}
		\E\PRb{(\Phi(X,Y)-\co{\Phi(X,Y)}{\gil})^2}=\E\PR{\psi(X)}.
	\end{equation}
	This \proves[pei] \cref{it:mb:fact:for:var02}.
	\finishproofthus
\end{proof}

\cfclear

\begin{athm}{cor}{lem:factorization2:spec}
	Let $ ( \Omega, \cF, \P ) $ be a probability space,
	let $\gil\subseteq\cF$ be a sigma-algebra on $\Omega$,
	let $m,n\in\N$,
	$D\in\mathcal{B}(\R^m)$, $E\in\mathcal{B}(\R^n)$,
	let $\Phi\colon D\times E\to\R$ be measurable,
	let $X\colon\Omega\to D$ be $\gil$-measurable, 
	let $Y\colon\Omega\to E$ be a random variable,
	assume that $\sigma(Y)$ and $\gil$ are independent,
	assume for all $x\in D$ that $\E\PR{\vass{\Phi(x,Y)}}<\infty$,
	%assume that $\sigma(X)$ and $\gil$ are independent, 
	%assume that $\E\PR{\vass{\Phi(X,Y)}}<\infty$,
	and	let $\psi\colon D\to[0,\infty]$ satisfy for all $x\in D$ that
	\begin{equation}
		\label{eq:setup:var:function:1:var}
		\psi(x)=\var(\Phi(x,Y))
		.
	\end{equation}
	Then
	\begin{enumerate}[label=(\roman*)]
		\item \label{it:mb:fact:for:var}it holds that $\psi$ is measurable,
		\item \label{it:mb:fact:for:improper} it holds that \improper{\Phi(X,Y)}{\gil},
		\item \label{it:mb:fact:for:var1}it holds $\P$-\as\ that $\cob{(\Phi(X,Y)-\co{\Phi(X,Y)}{\gil})^2}{\gil}=\psi(X)$, and
		\item \label{it:mb:fact:for:var2}it holds that $\E\PR{(\Phi(X,Y)-\co{\Phi(X,Y)}{\gil})^2}=\E\PR{\psi(X)}$
	\end{enumerate}
	\cfout.
\end{athm}

\begin{proof}[Proof of \Cref{lem:factorization2:spec}]
	Throughout this proof let $f\colon\R^m\to[0,\infty]$ and $F\colon\R^m\times\R^n\to\R$ satisfy for all $x\in\R^m$, $y\in\R^n$ that
	\begin{equation}
		\label{eq:codom:exten:fct}
		f(x)=
		\begin{cases}
			\psi(x)&\colon x\in D\\
			0&\colon x\notin D\\		
		\end{cases}
		\qqandqq
		F(x,y)=
		\begin{cases}
			\Phi(x,y)&\colon (x,y)\in D\times E\\
			0&\colon (x,y)\notin D\times E
		\end{cases}
	\end{equation}
	and let $S\colon\Omega\to\R^m$ and $T\colon\Omega\to\R^n$ satisfy for all $\omega\in\Omega$ that
	\begin{equation}
		\label{eq:codom:exten:var}
		S(\omega)=X(\omega)
		\qqandqq
		T(\omega)=Y(\omega).
	\end{equation}
	\Nobs that \eqref{eq:codom:exten:fct}, \eqref{eq:codom:exten:var}, and the assumption that $\Phi\colon D\times E\to\R$ is measurable \prove\ that $F$ is measurable and that it holds $\P$-\as\ that
	\begin{equation}
		\label{eq:extension:a.s}
		\psi(X)=f(S),
		\qquad
		F(S,T)=\Phi(X,Y),
		\qqandqq
		\co{F(S,T)}{\gil}=\co{\Phi(X,Y)}{\gil}
	\end{equation}
	\cfload. This, 
	\eqref{eq:codom:exten:fct}, and the assumption that for all $x\in D$ it holds that $\E\PR{\vass{\Phi(x,Y)}}<\infty$ \prove\ that for all $x\in \R^m$ it holds that
	\begin{equation}
		\label{eq:cond:finite:check:expanded}
		\begin{split}
			\E\PR{\vass{F(x,T)}}
			=
			\E\PR{\vass{F(x,Y)}}
			&=
			\indicator{D}(x)
			\E\PR{\vass{F(x,Y)}}
			+
			(1-\indicator{D}(x))
			\E\PR{\vass{F(x,Y)}}
			\\&=
			\indicator{D}(x)
			\E\PR{\vass{\Phi(x,Y)}}
			+
			(1-\indicator{D}(x))
			\E\PR{0}
			\\&=
			\indicator{D}(x)
			\E\PR{\vass{\Phi(x,Y)}}
			<\infty.
		\end{split}
	\end{equation}
	This,  
	\eqref{eq:setup:var:function:1:var}, 
	\eqref{eq:codom:exten:fct}, and
	\eqref{eq:codom:exten:var} \prove\ that for all $x\in\R^n$ it holds that
	\begin{equation}
		\begin{split}
			f(x)
			=\indicator{D}(x)\psi(x)
			&=\indicator{D}(x)\var(\Phi(x,Y))+(1-\indicator{D}(x))\var(0)
			\\&=\indicator{D}(x)\var(F(x,Y))+(1-\indicator{D}(x))\var(F(x,Y))
			\\&=\var(F(x,Y))
			\\&=\var(F(x,T)).
		\end{split}
	\end{equation}
	This,
	\eqref{eq:codom:exten:var},
	\eqref{eq:cond:finite:check:expanded}, and
	\cref{lem:factorization2} (applied with 
	$( \Omega, \cF, \P )\curvearrowleft( \Omega, \cF, \P )$,
	$\gil\curvearrowleft\gil$,
	$m\curvearrowleft m$,
	$n\curvearrowleft n$,
	$\Phi\curvearrowleft F$,
	$X\curvearrowleft S$,
	$Y\curvearrowleft T$,
	$\psi\curvearrowleft f$
	in the notation of \cref{lem:factorization2}) \prove\ that
	\begin{enumerate}[label=(\Roman*)]
		\item \label{it:mb:fact:for:var0:app}it holds that $f$ is measurable,
		\item \label{it:mb:fact:var00:improper} it holds that \improper{F(X,Y)}{\gil},
		\item \label{it:mb:fact:for:var01:app}it holds $\P$-\as\ that $\cob{(F(S,T)-\E\PR{F(S,T)|\gil})^2}{\gil}=f(S)$, and
		\item \label{it:mb:fact:for:var02:app}it holds that $\E\PRb{(F(S,T)-\E\PR{F(S,T)|\gil})^2}=\E\PR{f(S)}$
	\end{enumerate}
	\cfload.
	\Nobs that \eqref{eq:codom:exten:fct} and \cref{it:mb:fact:for:var0:app} \prove\ \cref{it:mb:fact:for:var}.
	\Nobs that 
	\eqref{eq:extension:a.s} and
	\cref{it:mb:fact:for:var01:app} \prove\ that $\P$-\as\ it holds that
	\begin{equation}
		\begin{split}
			\psi(X)
			=f(S)
			&=\E\PR{(F(S,T)-\E\PR{F(S,T)|\gil})^2|\gil}
			=\E\PR{(\Phi(X,Y)-\E\PR{\Phi(X,Y)|\gil})^2|\gil}
			.
		\end{split}
	\end{equation}
	This \proves[pei] \cref{it:mb:fact:for:var1}.
	\Nobs that \eqref{eq:extension:a.s} and
	\cref{it:mb:fact:for:var02:app} \prove\ that
	\begin{equation}
		\begin{split}
			\E\PR{\psi(X)}
			=\E\PR{f(S)}
			&=\E\PR{(F(S,T)-\E\PR{F(S,T)|\gil})^2}
			=\E\PR{(\Phi(X,Y)-\E\PR{\Phi(X,Y)|\gil})^2}
			.
		\end{split}
	\end{equation}
	This \proves[pei] \cref{it:mb:fact:for:var2}. \finishproofthus
\end{proof}

\newcommand{\BSCL}{R}

\section{Non-convergence of Adam and other adaptive SGD optimization methods}
\label{sec:results}
The main goal of this section is to establish suitable non-convergence results for Adam and other adaptive \SGD\ optimization methods.
In particular, \cref{thm:new:rob:cond} in \cref{subsection:4.3}, the main result of this article, 
implies that
 for every component $i\in\{1,2,\dots,\pars\}$ of the considered adaptive \SGD\ optimization process $\Theta_n=\prb{\Theta_n^{(1)},\dots,\Theta_n^{(\pars)}}\colon\Omega\to\R^\pars$, $n\in\N_0$, and every scalar random variable $\xi\colon\Omega\to\R$ we have that the error of the employed adaptive \SGD\ optimization method does not vanish in the sense that
 $\liminf_{ n \to \infty }
 \E\PRb{| \Theta_n^{(i)} - \xi |^2 }
 >0$
 if the sizes of the mini-batches $\batch\colon\N\to\N$ are bounded from above, 
 if the learning rates $\gamma\colon\N\to\R$ are bounded from above, and
 if the learning rates are asymptotically bounded away from zero (cf.\ \eqref{eq:bounded:batches:and:LR} in \cref{thm:new:rob:cond}).
\Cref{prop:prop:non_convergence_modified_Adam:specific:setup:main:2} specializes \cref{thm:new:rob:cond} to the situation where the Adam optimizer is applied to a class of simple quadratic optimization problems (cf.\ \eqref{eq:matrix:loss:and:bounds} in \cref{prop:prop:non_convergence_modified_Adam:specific:setup:main:2}).
\Cref{intro:thm:2} specializes \cref{prop:prop:non_convergence_modified_Adam:specific:setup:main:2} to the situation where the Adam optimizer is applied to a very simple examplary quadratic optimization problem (cf.\ \eqref{eq:very:simple:quadr:problem} in \cref{intro:thm:2}).
\Cref{whole:vec:thm} in the introduction is an immediate consequence of \cref{intro:thm:2}.

\subsection{Lower bounds for expectations of appropriate random variables}

In \cref{lem:variance:cont} we establish suitable lower bounds for variances of appropriately scaled random variables. \Cref{lem:variance:cont:it2} in \cref{lem:variance:cont} is employed in our proof of the lower bound for Adam and other adaptive \SGD\ optimizers in \cref{lem:factorization:applied:Adam:class}.
Our proof of \cref{lem:variance:cont} employs the elementary and well-known representation for the variance of a random variable in \cref{lem:variance:representations:0}. \Cref{lem:variance:representations:0}, in turn, is based on an application of the elementary and well-known symmetrization identity for the squared differences of identically distributed random variables in \cref{lem:variance:representations:1}.
Only for completeness we include in this subsection detailed proofs for \cref{lem:variance:representations:1} and \cref{lem:variance:representations:0}.

\label{subsection:4.1}

\begin{athm}{lemma}{lem:variance:representations:1}
	Let $ ( \Omega, \cF, \P ) $ be a probability space
	and
	let $X\colon\Omega\to\R$ and $Y\colon\Omega\to\R$ be identically distributed random variables.
	Then
	\begin{equation}
		\begin{split}
			\label{eq:lem:variance:representations:1}
			\tfrac{1}{2}\E\PR{(X-Y)^2}=\E\PR{(X-Y)^2\indicator{\pR{X\leq Y}}}
			.
		\end{split}
	\end{equation}
\end{athm}

\begin{proof}[Proof of \cref{lem:variance:representations:1}]
	\Nobs the fact that $\E\PR{(X-Y)^2\indicator{\pR{X = Y}}}=0$ and the assumption that $X$ and $Y$ are identically distributed \prove\ that
	\begin{equation}
		\begin{split}
			&\E\PR{(X-Y)^2\indicator{\pR{X\leq Y}}}
			\\&=
			\tfrac{1}{2}\prb{\E\PR{(X-Y)^2\indicator{\pR{X\leq Y}}}+\E\PR{(X-Y)^2\indicator{\pR{X\leq Y}}}}
			\\&=
			\tfrac{1}{2}\prb{\E\PR{(X-Y)^2\indicator{\pR{X < Y}}}
				+2\,\E\PR{(X-Y)^2\indicator{\pR{X = Y}}}
				+\E\PR{(X-Y)^2\indicator{\pR{ X < Y}}}}
			\\&=
			\tfrac{1}{2}\prb{\E\PR{(X-Y)^2\indicator{\pR{X < Y}}}
				+2\,\E\PR{(X-Y)^2\indicator{\pR{X = Y}}}
				+\E\PR{(X-Y)^2\indicator{\pR{ X > Y}}}}
			\\&=
			\tfrac{1}{2}\prb{\E\PR{(X-Y)^2}
				+\E\PR{(X-Y)^2\indicator{\pR{X = Y}}}}
				=
			\tfrac{1}{2}\E\PR{(X-Y)^2}.
		\end{split}
	\end{equation}
	Hence we obtain \eqref{eq:lem:variance:representations:1}.
	\finishproofthus
\end{proof}

\begin{athm}{lemma}{lem:variance:representations:0}
	Let $ ( \Omega, \cF, \P ) $ be a probability space,
	let $X\colon\Omega\to\R$ and $Y\colon\Omega\to\R$ be \iid\ random variables, and assume $\E\PR{\vass{X}}<\infty$.
	Then
	\begin{equation}
		\begin{split}
			\label{eq:lem:variance:representations:0}
			\var\pr{X}
			=\tfrac{1}{2}\E\PR{(X-Y)^2}
			=\E\PR{(X-Y)^2\indicator{\pR{X\leq Y}}}
			%=\E\PR{(X-Y)^2\indicator{\pR{X\geq Y}}}
			.
		\end{split}
	\end{equation}
\end{athm}

\begin{proof}[Proof of \cref{lem:variance:representations:0}]
	\Nobs the fact that $\E\PR{(X-Y)^2\indicator{\pR{X = Y}}}=0$, the assumption that $\E\PR{\vass{X}}<\infty$, and the assumption that $X$ and $Y$ are \iid\ \prove\ that
	\begin{equation}
		\begin{split}
			\E\PR{(X-Y)^2}
			=
			\E\PR{X^2-2XY+Y^2}
			&=
			\E\PR{X^2+Y^2}-\E\PR{2XY}
			\\&=
			\E\PR{X^2}-2\E\PR{X}\E\PR{Y}+\E\PR{Y^2}
			\\&=
			2\E\PR{X^2}-2\pr{\E\PR{X}}^2
			=
			2\var(X)
			.
		\end{split}
	\end{equation}
	This and \cref{lem:variance:representations:1} \prove\ \eqref{eq:lem:variance:representations:0}.
	\finishproofthus
\end{proof}

\begin{athm}{prop}{lem:variance:cont}
	Let $\eps\in(0,\infty)$, $r\in[0,\infty)$,
	let $ ( \Omega, \cF, \P ) $ be a probability space, and
	let $X\colon\Omega\to \R$ be a bounded random variable.
	Then
	\begin{enumerate}[label=(\roman*)]
		\item \label{lem:variance:cont:it1}it holds that
		\begin{equation}
			\begin{split}
				\E\PRbbb{	\frac{\vass{X}} { \eps + \sqrt{ X^2 + r }}}
				<
				\infty
			\end{split}
		\end{equation}
		and
		\item \label{lem:variance:cont:it2}it holds that
		\begin{equation}
			\begin{split}
				\var\prbbb{	\frac{X} { \eps + \sqrt{ X^2 + r }}} 
				&
				\geq
				\frac{\eps^2\var(X)}{\pr{\eps + \pr{r+ \textstyle\sup_{\omega\in\Omega}\vass{X(\omega)}^2}^{\nicefrac{1}{2}}  }^{4}}
				%			\var(X)\eps^2 \PR{\eps + \pr{ \textstyle\max_{k\in\{1,2,\dots,n\}}\{(x_k)^2\}+r}^{\nicefrac{1}{2}} }^{-4}
				.
			\end{split}
		\end{equation}
	\end{enumerate}
\end{athm}

\begin{proof}[Proof of \cref{lem:variance:cont}]
	\Nobs that
	\begin{equation}
		\label{eq:finite:first:moment:scaled:RV}
		\begin{split}
			\E\PRbbb{	\frac{\vass{X}} { \eps + \sqrt{ X^2 + r }}}
			\leq
			\E\PRbbb{	\frac{\vass{X}} { \eps^2 }}
			=
			\eps^{-2}\E\PR{\vass{X}}
			<
			\infty
			.
		\end{split}
	\end{equation}
	This \proves[pei] \cref{lem:variance:cont:it1}.
	\Nobs that for all 
	$x\in\R$, 
	$y\in(-\infty,x]$, 
	$i,j\in\{-1,1\}$ with $x=i\vass{x}$ and $y=j\vass{y}$ it holds that
	\begin{equation}
		\begin{split}
			 x\sqrt{ y^2 + r }
			 -y\sqrt{ x^2 + r }
			 =
			 i\sqrt{ x^2 y^2 + r x^2 }
			 -j\sqrt{ x^2 y^2 + r y^2 }
			 \geq 0
		.
		\end{split}
	\end{equation}
	This \proves\ for all $x\in\R$, $y\in(-\infty,x]$ that
	\begin{equation}
		\label{eq:diff:super:variance:points}
		\begin{split}
			\frac{x }{\eps + \sqrt{ x^2 + r }}-\frac{y }{\eps + \sqrt{ y^2 + r }}
			&=\frac{x \pr{\eps + \sqrt{ y^2 + r }}-y \pr{\eps + \sqrt{ x^2 + r }}  }{\pr{\eps + \sqrt{ x^2 + r }}\pr{\eps + \sqrt{ y^2 + r }} }
			\\&\geq\frac{\eps(x-y)  }{\pr{\eps + \sqrt{ x^2 + r }}\pr{\eps + \sqrt{ y^2 + r }} }.
		\end{split}
	\end{equation}
	This \proves[pei] that for all $x\in\R$, $y\in(-\infty,x)$ it holds that
	\begin{equation}
		\frac{x }{\eps + \sqrt{ x^2 + r }}>\frac{y }{\eps + \sqrt{ y^2 + r }}.
	\end{equation}
	This,
	\eqref{eq:finite:first:moment:scaled:RV},
	\eqref{eq:diff:super:variance:points},
	\cref{lem:variance:representations:0}, and the assumption that
	$\sup_{\omega\in\Omega}\vass{X(\omega)}<\infty$ \prove\ that for all $Y\colon\Omega\to\R$ with $X$ and $Y$ are \iid\ it holds that
	\begin{equation}
		\begin{split}
			\var\prbbb{	\frac{X} { \eps + \sqrt{ X^2 + r }}}
			&=
			\E\PRbbb{\prbbb{\frac{X} { \eps + \sqrt{ X^2 + r }}-\frac{Y} { \eps + \sqrt{ Y^2 + r }}}^2\indicator{\pR{X\geq Y}}}
			\\&\geq
			\E\PRbbb{\prbbb{\frac{\eps(X-Y)  }{\pr{\eps + \sqrt{ X^2 + r }}\pr{\eps + \sqrt{ Y^2 + r }} }}^2\indicator{\pR{X\geq Y}}}
%			\\&\geq
%			\E\PRbbb{\prbbb{\frac{\eps(X-Y)  }{\pr{\eps + \sqrt{ X^2 + r }}^2 }}^2\indicator{\pR{X\geq Y}}}
			\\&\geq
			\E\PRbbb{\frac{\eps^2(X-Y)^2  }{\pr{\eps + \pr{ \textstyle\sup_{\omega\in\Omega}\vass{X(\omega)}^2 + r }^{\nicefrac{1}{2}}}^4 }\indicator{\pR{X\geq Y}}}
			\\&=
			\eps^2\pr{\eps + \pr{r+ \textstyle\sup_{\omega\in\Omega}\vass{X(\omega)}^2  }^{\nicefrac{1}{2}}}^{-4} \E\PR{(X-Y)^2  \indicator{\pR{X\geq Y}}}
			\\&=
			\eps^2\pr{\eps + \pr{r+ \textstyle\sup_{\omega\in\Omega}\vass{X(\omega)}^2  }^{\nicefrac{1}{2}}}^{-4} \var(X).
		\end{split}
	\end{equation}
	This \proves[pei] \cref{lem:variance:cont:it2}.
	\finishproofthus
\end{proof}

%Y=0
In the next result, \cref{lem:cond:exp:vanish2} below, we recall, roughly speaking, a special case of the well-known $L^2$-best approximation property for conditional expectations (see, \eg, 
\cite[Corollary~8.17]{klenkeprobability} and 
\cite[Theorem~12.1.2]{Krishna2006}).

\cfclear
\begin{athm}{lemma}{lem:cond:exp:vanish2}
	Let $ ( \Omega, \cF, \P ) $ be a probability space, 
	let $X\colon\Omega\to\R$ be a random variable,
	let $\gil\subseteq\cF$ be a sigma-algebra on $\Omega$, and assume $\E\PR{\vass{X}}<\infty$. Then
	\begin{equation}
		\label{it:basic:cond:exp:22}
		\E\PRb{X^2}\geq \E\PRb{\pr{X-\co{X}{\gil}}^2}
	\end{equation}
	\cfout.
\end{athm}

\begin{proof}[Proof of \cref{lem:cond:exp:vanish2}]
	Throughout this proof assume without loss of generality that $\E\PR{X^2}<\infty$.
	\Nobs that \cite[Corollary~8.17]{klenkeprobability} (applied with
	$(\Omega,\mathcal{A},\P)\curvearrowleft( \Omega, \cF, \P )$,
	$\gil\curvearrowleft\cF$,
	$X\curvearrowleft X$,
	$Y\curvearrowleft \pr{\Omega\ni\omega\mapsto 0\in\R}$,
	in the notation of \cite[Corollary~8.17]{klenkeprobability}) \proves\ that
	\begin{equation}
		\E\PR{X^2}=\E\PR{(X-0)^2}\geq \E\PR{\pr{X-\co{X}{\gil}}^2}
	\end{equation}
	\cfload. This \proves\ \eqref{it:basic:cond:exp:22}.
	\finishproofthus
\end{proof}

In the following result, \cref{lem:cond:exp:vanish3} below, we reformulate the $L^2$-best approximation property for conditional expectations for merely integrable but not square integrable random variables (see, \eg, 
\cite[Corollary~8.17]{klenkeprobability} and 
\cite[Theorem~12.1.2]{Krishna2006}).
\Cref{lem:cond:exp:vanish3} is an immediate consequence of \cref{lem:cond:exp:vanish2}.

\cfclear
\begin{athm}{cor}{lem:cond:exp:vanish3}[Conditional expectation as projection]
	Let $ ( \Omega, \cF, \P ) $ be a probability space, 
	let $X\colon\Omega\to\R$ be a random variable,
	let $\gil\subseteq\cF$ be a sigma-algebra on $\Omega$,
	let $Y\colon\Omega\to\R$ be $\gil$-measurable, and assume $\E\PRb{\vass{X}+\vass{Y}}<\infty$. Then
	\begin{equation}
		\label{it:basic:cond:exp:23}
		\E\PRb{(X-Y)^2}\geq \E\PRb{\pr{X-\co{X}{\gil}}^2}
	\end{equation}
	\cfout.
\end{athm}
\begin{proof}[Proof of \cref{lem:cond:exp:vanish3}]
	\Nobs that
	\cref{lem:cond:exp:vanish2} (applied with 
	$ ( \Omega, \cF, \P ) \curvearrowleft  ( \Omega, \cF, \P ) $, 
	$X\curvearrowleft (X-Y)$,
	$\gil\curvearrowleft \gil$
	in the notation of \cref{lem:cond:exp:vanish2}) \proves\ that
	\begin{equation}
		\E\PR{(X-Y)^2}\geq \E\PR{\pr{(X-Y)-\co{X-Y}{\gil}}^2}
	\end{equation}
	\cfload. Combining
	 this,
	 \cref{lem:norm:cond:exp},
	  and 
	  \cite[Theorem~8.14]{klenkeprobability}
	  with the assumptions that $\E\PRb{\vass{X}+\vass{Y}}<\infty$
	and that $Y$ is $\gil$-measurable 
	\proves\ that
	\begin{equation}
		\E\PR{(X-Y)^2}
		\geq 
		\E\PR{\pr{X-Y-\E\PR{X-Y|\gil}}^2}
		=
		\E\PR{\pr{X-Y+Y-\E\PR{X|\gil}}^2}
		=\E\PR{\pr{X-\E\PR{X|\gil}}^2}
		.
	\end{equation}
	This \proves\ \eqref{it:basic:cond:exp:23}.
	\finishproofthus
\end{proof}

In the next result, \cref{metrix:cauchy:liminf} below, we present, roughly speaking, an elementary lower bound for the asymptotic distance of an arbitrary point to an arbitrary sequence of points in a metric space.
 \Cref{metrix:cauchy:liminf} is essentially a direct consequence of the triangle inequality in metric spaces.

\begin{athm}{lemma}{metrix:cauchy:liminf}
	Let $E$ be a set,
	let $d\colon E\times E\to[0,\infty]$ satisfy for all $u,v,w\in E$ with $d(u,v)<\infty$ that $d(u,w)\leq d(u,v)+d(v,w)$ and $d(u,v)=d(v,u)$, 
	and let $x=(x_n)_{n\in\N_0}\colon\N_0\to E$ be a function.
	Then
	\begin{equation}
		\begin{split}
		\label{eq:cauch:result}
		\liminf_{n\to\infty}d(x_0,x_n)
		\geq
		\tfrac{1}{2}\PRbb{\liminf_{n\to\infty}\liminf_{ m \to \infty }d(x_n,x_m)}
		&=
		\tfrac{1}{2}\PRbb{\liminf_{m\to\infty}\liminf_{ n \to \infty }d(x_n,x_m)}
		\\&\geq
		\tfrac{1}{2}\PRbbb{\sup_{k\in\N}\inf_{ m,n\in\N\cap[k,\infty),\,n\neq m }d(x_n,x_m)}
		.
		\end{split}
	\end{equation}
\end{athm}

\begin{proof}[Proof of \cref{metrix:cauchy:liminf}]
	\Nobs that the assumption that for all $u,v,w\in E$ with $d(u,v)<\infty$ it holds that $d(u,w)\leq d(u,v)+d(v,w)$ and $d(u,v)=d(v,u)$ \proves\ that for all $u,v,w\in E$ it holds that
	\begin{equation}
		\label{eq:symmetric:metric}
		d(u,w)\leq d(u,v)+d(v,w)
		\qqandqq
		d(u,v)=d(v,u).
	\end{equation}
	This \proves\ that for all $m,n\in\N$ it holds that
		\begin{equation}
		\begin{split}
			d(x_{n},x_{m})
			\leq
			d(x_{n},x_{0})+d(x_{0},x_{m})
			=
			d(x_{n},x_{0})+d(x_{m},x_{0})
			.
		\end{split}
	\end{equation}
	This \proves\ for all $n\in\N$ that
		\begin{equation}
		\begin{split}
			%\textstyle
			\liminf_{ m \to \infty }d(x_n,x_m)
			\leq
			\liminf_{ m \to \infty }\PR{d(x_{n},x_{0})+d(x_{m},x_{0})}
			&=
			d(x_{n},x_{0})
			+
			\liminf_{ m \to \infty }d(x_{m},x_{0})
			.
		\end{split}
	\end{equation}
	This and \eqref{eq:symmetric:metric} \prove\ that
			\begin{equation}
				\label{eq:left:part}
		\begin{split}
			\textstyle
			\liminf_{ n \to \infty }
			\liminf_{ m \to \infty }d(x_m,x_n)
			&=\textstyle
			\liminf_{ n \to \infty }
			\liminf_{ m \to \infty }d(x_n,x_m)
			\\&\leq\textstyle
			\liminf_{ n \to \infty }\PRb{
			d(x_{n},x_{0})
			+
			\liminf_{ m \to \infty }d(x_{m},x_{0})}
			\\&=\textstyle
			\PRb{\liminf_{ n \to \infty }
			d(x_{n},x_{0})}
			+
			\PRb{\liminf_{ m \to \infty }
				d(x_{m},x_{0})}
			\\&=\textstyle
				2\liminf_{ n \to \infty }
					d(x_{n},x_{0})
			.
		\end{split}
	\end{equation}
	\Moreover
	\begin{equation}
		\begin{split}
			\textstyle
			\liminf_{ n \to \infty }
			\liminf_{ m \to \infty }d(x_n,x_m)
			&=\textstyle
			\lim_{ k \to \infty }
			\inf_{n\in\N\cap[k,\infty)}
			\liminf_{ m \to \infty }d(x_n,x_m)
			\\&\geq\textstyle
			\lim_{ k \to \infty }
			\inf_{n\in\N\cap[k,\infty)}
			\inf_{m\in\N\cap(n,\infty)}d(x_n,x_m)
			\\&=\textstyle
			\sup_{ k \in\N }
			\inf_{n\in\N\cap[k,\infty)}
			\inf_{m\in\N\cap(n,\infty)}d(x_n,x_m)
			\\&=\textstyle
			\sup_{ k \in\N }
			\inf_{m,n\in\N\cap[k,\infty),\,m\neq n}
			d(x_n,x_m)
			.
		\end{split}
	\end{equation}
	This and \eqref{eq:left:part} \prove\ \eqref{eq:cauch:result}.
	\finishproofthus
\end{proof}

In the next elementary result, \cref{lim:inf:cases} below, we specialize \cref{metrix:cauchy:liminf} to the situation of real-valued random variables on a probability space.

\begin{athm}{cor}{lim:inf:cases}
	Let $ ( \Omega, \cF, \P ) $ be a probability space,
	for every $n\in\N$ let $X_n\colon\Omega\to\R$ be a random variable, and
	let $Y\colon\Omega\to\R$ be a random variable.
	Then
	\begin{equation}
		\textstyle
		\liminf_{n\to\infty}\PRb{\E\PR{(Y-X_n)^2}}^{\nicefrac{1}{2}}
		\geq \tfrac{1}{2}\PRb{\sup_{k\in\N}\inf_{ m,n\in\N\cap[k,\infty),\,n\neq m }\PRb{\E\PR{(X_n-X_m)^2}}^{\nicefrac{1}{2}}}
		.
	\end{equation}
\end{athm}

\begin{proof}[Proof of \cref{lim:inf:cases}]
	Throughout this proof let $E\subseteq\R^\Omega$ satisfy \begin{equation}
		E=\{Z\colon\Omega\to\R\colon\text{$Z$ is measurable}\}.
	\end{equation} 
	\Nobs that
	\cref{metrix:cauchy:liminf} (applied with 
	$E\curvearrowleft E$,
	$d\curvearrowleft \pr{(E\times E)\ni(V,W)\mapsto\PR{\E\PR{(V-W)^2}}^{\nicefrac{1}{2}}}$,
	$x\curvearrowleft \pr{\N_0\ni n\mapsto Y\indicator{\{0\}}(n)+X_n\indicator{\N}(n)\in E}$
	in the notation of \cref{metrix:cauchy:liminf}) \proves\ that
		\begin{equation}
		\textstyle
		\liminf_{n\to\infty}\PRb{\E\PR{(Y-X_n)^2}}^{\nicefrac{1}{2}}
		\geq
		\PRb{ \tfrac{1}{2}\sup_{k\in\N}\inf_{ m,n\in\N\cap[k,\infty),\,n\neq m }\PRb{\E\PR{(X_n-X_m)^2}}^{\nicefrac{1}{2}}}
		.
	\end{equation}
	\finishproofthus
\end{proof}

\subsection{Lower bounds for Adam and other adaptive SGD optimization methods}
\label{subsection:4.2}

\cfclear
\begin{athm}{lemma}{lem:factorization:applied:Adam:class}
	Let $\pars,\dim\in\N$, $a\in\R$, $b\in[a,\infty)$, $\eps\in(0,\infty)$,  $\gamma\in[0,\infty)$, $\BSCL\in[1,\infty)$,
	let $ ( \Omega, \cF,(\mathbb{F}_n)_{n\in\N_0}, \P ) $ be a filtered probability space, 
	let $\batch\colon N\to\N$,
	for every $n\in\N$ let
	$Y_n=(Y_{n,1},\dots,Y_{n,\batch_n})\colon\Omega\to\pr{[a,b]^{\dim}}^{\batch_n}$ be $\fil_{n}$-measurable,
	let $\mom\colon\Omega\to\R$ and $\MOM\colon\Omega\to[0,\infty)$ be $\fil_0$-measurable,
	let $n\in\N$,
%	assume that $\sigma(\mom)$ and $\fil_{n-1}$ are independent, 
%	assume that $\sigma(\MOM)$ and $\fil_{n-1}$ are independent,
	assume that $\sigma(Y_n)$ and $\fil_{n-1}$ are independent, 
	let $\alpha_{k}\in[0,1]$, $k\in\N$, and $\beta_{k}\in[0,1]$, $k\in\N_0$, satisfy $0<\min\{\beta_0,\beta_n\}\leq\sum_{k=0}^{n}\beta_{k}\leq \BSCL$,
	for every $k\in\N$ let $G_{k}\colon\R^\pars \times\pr{\pr{[a,b]^{\dim}}^{\batch_k}}\to\R$ be measurable,
	assume for all $k\in\N$, $\vartheta\in\R^\pars$ that
	\begin{equation}
		\label{eq:1st:mom:bd:grad}
		\textstyle\sup_{\omega\in\Omega}\vass{G_{k}(\vartheta,Y_k(\omega))}<\infty
		\qqandqq
		\E\PRb{\MOM^{\nicefrac{1}{2}}}<\infty
		,
	\end{equation}  
	let $\inc\colon \R\times[0,\infty)\times\pr{\R^{\pars}}^n\times\pr{[a,b]^{\dim}}^{\batch_1}\times\pr{[a,b]^{\dim}}^{\batch_2}\times\dots\times\pr{[a,b]^{\dim}}^{\batch_n}\to\R$ satisfy for all $m_1\in\R$, $m_2\in[0,\infty)$, $\theta_{1},\theta_2,\dots,\theta_{n}\in\R^{\pars}$, $y_1\in\pr{[a,b]^{\dim}}^{\batch_1}$, 
	$y_2\in\pr{[a,b]^{\dim}}^{\batch_2}$, $\dots$, 
	$y_n\in\pr{[a,b]^{\dim}}^{\batch_n}$ that
	\begin{equation}
		\label{eq:def:Phi:for:var:fact}
		\begin{split}
			&\inc(m_1,m_2,\theta_{1},\dots,\theta_{n},y_1,\dots,y_n)
			=-\frac{\gamma \PR{m_1+\sum_{k=1}^n \alpha_{k}G_{k}(\theta_{k},y_k)}}{\eps+\PR{m_2+\sum_{k=1}^n \beta_{k}\pr
					{G_{k}(\theta_{k},y_k)}^2}^{\nicefrac{1}{2}}},
		\end{split}
	\end{equation}
	for every $k\in\N_0\cap[0,n]$ let $\Theta_k=(\Theta_k^{(1)},\dots,\Theta_k^{(\pars)})\colon\Omega\to\R^{\pars}$ be $\fil_k$-measurable, let $i\in\{1,2,\dots,\pars\}$ satisfy
	\begin{equation}
		\label{eq:setup:fact:lemma:applied:Adam:class}
		\Theta_n^{(i)}=\Theta_{n-1}^{(i)}+\inc(\mom,\MOM,\Theta_{0},\dots,\Theta_{n-1},Y_1,\dots,Y_n),
	\end{equation}
	and 
	assume $\E\PRb{\max_{k\in\{1,2,\dots,n\}}\sup_{x\in\pr{[a,b]^{\dim}}^{\batch_k}}\vass{G_{k}(\Theta_{k-1},x)}}<\infty$.
	Then
	\begin{enumerate}[label=(\roman*)]
		\item \label{it:lem:factorization:applied:Adam:class:1}
		it holds that $
			\inc$
			is measurable,
		\item \label{it:lem:factorization:applied:Adam:class:2}
		it holds for all $m_1\in\R$, $m_2\in[0,\infty)$, $\theta_{1},\theta_2,\dots,\theta_{n}\in\R^{\pars}$, $y_1\in\pr{[a,b]^{\dim}}^{\batch_1}$, 
		$y_2\in\pr{[a,b]^{\dim}}^{\batch_2}$, $\dots$, 
		$y_n\in\pr{[a,b]^{\dim}}^{\batch_n}$ that
		\begin{equation}
			\begin{split}
				\E\PRb{\vass{\inc(m_1,m_2,\theta_1,\dots,\theta_{n},y_1,\dots,y_{n-1},Y_{n})}}
						<\infty,
			\end{split}
		\end{equation}
		\item \label{it:lem:factorization:applied:Adam:class:2.1} it holds that \improper{\Theta_{n}^{(i)}-\Theta_{n-1}^{(i)}}{\fil_{n-1}},
		and
		\item \label{it:lem:factorization:applied:Adam:class:3}
		it holds that
		\begin{equation}
			\begin{split}
				&\E\PRb{\prb{\Theta_{n}^{(i)}-\Theta_{n-1}^{(i)}-\cob{\Theta_{n}^{(i)}-\Theta_{n-1}^{(i)}}{\fil_{n-1}}}^2}	
				\\&\geq
				\frac{\eps^2\gamma^2(\alpha_n)^2 \BSCL^{-2} \pr{\inf_{\vartheta\in\R^{\pars}}\var\pr{G_{n}\pr{\vartheta,Y_n}}}}{
					\pr{\E\PR{\max_{k\in\{1,2,\dots,n\}}\max\{(\beta_0^{-1} \MOM)^{\nicefrac{1}{2}},\sup_{x\in\pr{[a,b]^{\dim}}^{\batch_k}}\vass{G_{k}(\Theta_{k-1},x)}\}+\eps} }^4}
			\end{split}
		\end{equation}
	\end{enumerate}
	\cfout.
\end{athm}

\begin{proof}[Proof of \cref{lem:factorization:applied:Adam:class}]
	\Nobs that \eqref{eq:1st:mom:bd:grad} and \eqref{eq:def:Phi:for:var:fact} \prove\ that 
	$
		\inc$
		 is measurable.
		 This \proves[pei] \cref{it:lem:factorization:applied:Adam:class:1}.
		\Moreover \eqref{eq:1st:mom:bd:grad}, and \eqref{eq:def:Phi:for:var:fact} 
		\prove\ that for all $m_1\in\R$, $m_2\in[0,\infty)$, $\theta_{1},\theta_2,\dots,\theta_{n}\in\R^{\pars}$, $y_1\in\pr{[a,b]^{\dim}}^{\batch_1}$, 
	$y_2\in\pr{[a,b]^{\dim}}^{\batch_2}$, $\dots$, 
	$y_n\in\pr{[a,b]^{\dim}}^{\batch_n}$ it holds that
	\begin{equation}
		\label{eq:finite:factor:exp}
		\begin{split}
			&\E\PRb{\vass{\inc(m_1,m_2,\theta_1,\dots,\theta_{n},y_1,\dots,y_{n-1},Y_{n})}}
			\\&=
			\E\PRbbb{\frac{\gamma \vass{m_1+\alpha_n G_{n}(\theta_{n},Y_n)+\sum_{k=1}^{n-1} \alpha_{k}G_{k}(\theta_{k},y_k)}}{\eps+\PR{m_2+\beta_n\pr
						{G_{n}(\theta_{n},Y_n)}^2+\sum_{k=1}^{n-1} \beta_{k}\pr
						{G_{k}(\theta_{k},y_k)}^2}^{\nicefrac{1}{2}}}}
			\\&\leq
			\frac{\gamma\alpha_n\E\PR{\vass{ G_{n}(\theta_{n},Y_n)}}}{\eps}
			+
			\frac{\gamma \vass{m_1+\sum_{k=1}^{n-1} \alpha_{k}G_{k}(\theta_{k},y_k)}}{\eps}
			<\infty.
		\end{split}
	\end{equation}
	This \proves[pei] \cref{it:lem:factorization:applied:Adam:class:2}.
	\Nobs that 
	\eqref{eq:1st:mom:bd:grad},
	\eqref{eq:def:Phi:for:var:fact},
	 and \cref{lem:variance:cont} (applied with 
	$\eps \curvearrowleft \eps$,
	$r \curvearrowleft m_2+\textstyle\sum_{k=1}^{n-1}\beta_{k}\pr{G_{k}(\theta_{k},y_k)}^2$,
	$(\Omega,\cF,\P) \curvearrowleft (\Omega,\cF,\P)$,
	$X \curvearrowleft (\beta_{n})^{\nicefrac{1}{2}}G_{n}\pr{\theta_{n},Y_n}$
	for $m_2\in[0,\infty)$, $\theta_{1},\theta_2,\dots,\theta_{n}\in\R^{\pars}$, $y_1\in\pr{[a,b]^{\dim}}^{\batch_1}$, 
	$y_2\in\pr{[a,b]^{\dim}}^{\batch_2}$, $\dots$, 
	$y_n\in\pr{[a,b]^{\dim}}^{\batch_n}$ in the notation of \cref{lem:variance:cont}) \prove\ that for all
	 $\phi\colon \R\times[0,\infty)\times\pr{\R^{\pars}}^n\times\pr{[a,b]^{\dim}}^{\batch_1}\times\pr{[a,b]^{\dim}}^{\batch_2}\times\dots\times\pr{[a,b]^{\dim}}^{\batch_{n-1}}\to[0,\infty]$ with the property that for all $m_1\in\R$, $m_2\in[0,\infty)$, $\theta_{1},\theta_2,\dots,\theta_{n}\in\R^{\pars}$, $y_1\in\pr{[a,b]^{\dim}}^{\batch_1}$, 
	 $y_2\in\pr{[a,b]^{\dim}}^{\batch_2}$, $\dots$, 
	 $y_n\in\pr{[a,b]^{\dim}}^{\batch_n}$ it holds that
	\begin{equation}
		\label{eq:phi:in:RV}
		\begin{split}
			\phi(m_1,m_2,\theta_1,\dots,\theta_{n},y_1,\dots,y_{n-1})
			&=\var\pr{\inc(m_1,m_2,\theta_1,\dots,\theta_{n},y_1,\dots,y_{n-1},Y_{n})}
		\end{split}
	\end{equation}
	and all
	$m_1\in\R$, $m_2\in[0,\infty)$, $\theta_{1},\theta_2,\dots,\theta_{n}\in\R^{\pars}$, $y_1\in\pr{[a,b]^{\dim}}^{\batch_1}$, 
	$y_2\in\pr{[a,b]^{\dim}}^{\batch_2}$, $\dots$, 
	$y_n\in\pr{[a,b]^{\dim}}^{\batch_n}$ it holds that
	\begin{equation}
		\label{eq:variance:lemma:applied}
		\begin{split}
			&\phi(m_1,m_2,\theta_1,\dots,\theta_{n},y_1,\dots,y_{n-1})
			\\&=\var\pr{\inc(m_1,m_2,\theta_1,\dots,\theta_{n},y_1,\dots,y_{n-1},Y_{n})}
			\\&=\var\prbbb{\frac{\gamma m_1+\gamma \alpha_n G_{n}(\theta_{n},Y_n)+\gamma \sum_{k=1}^{n-1} \alpha_{k}G_{k}\pr
					{\theta_{k},y_k}}{\eps+\PRb{m_2+\beta_{n}\pr{G_{n}(\theta_{n},Y_n)}^2+\textstyle\sum_{k=1}^{n-1}\beta_{k}\pr{G_{k}(\theta_{k},y_k)}^2}^{\nicefrac{1}{2}}}}
			\\&=\var\prbbb{\frac{\gamma \alpha_{n}G_{n}\pr{\theta_{n},Y_n}}
				{\eps+\PRb{\PR{(\beta_{n})^{\nicefrac{1}{2}}G_{n}\pr{\theta_{n},Y_n}}^2+m_2+\textstyle\sum_{k=1}^{n-1}\beta_{k}\pr{G_{k}(\theta_{k},y_k)}^2}^{\nicefrac{1}{2}}}}
			\\&=\frac{\gamma^2(\alpha_{n})^2}{\beta_{n}}
			\var\prbbb{\frac{(\beta_{n})^{\nicefrac{1}{2}}G_{n}\pr{\theta_{n},Y_n}}
				{\eps+\PRb{\PR{(\beta_{n})^{\nicefrac{1}{2}}G_{n}\pr{\theta_{n},Y_n}}^2+m_2+\textstyle\sum_{k=1}^{n-1}\beta_{k}\pr{G_{k}(\theta_{k},y_k)}^2}^{\nicefrac{1}{2}}}}
			\\&\geq
			\frac{\gamma^2(\alpha_{n})^2\eps^2\var((\beta_{n})^{\nicefrac{1}{2}}G_{n}\pr{\theta_{n},Y_n})}{\beta_{n}\prb{\eps + \PRb{ \textstyle \beta_{n}\prb{\textstyle\sup_{x\in\pr{[a,b]^{\dim}}^{\batch_n}} \vass{G_{n}(\theta_{n},x)}}^2+m_2+\textstyle\sum_{k=1}^{n-1}\beta_{k}\pr{G_{k}(\theta_{k},y_k)}^2}^{\nicefrac{1}{2}}  }^{4}}
			\\&\geq
			\frac{\gamma^2(\alpha_{n})^2  \eps^2\var\pr{G_{n}\pr{\theta_{n},Y_n}}}{\prb{\eps + \PRb{m_2+\textstyle\sum_{k=1}^{n}\beta_{k}\prb{\sup_{x\in\pr{[a,b]^{\dim}}^{\batch_k}}\vass{G_{k}(\theta_{k},x)}^2}}^{\nicefrac{1}{2}}  }^{4}}
			\\&\geq
			\frac{\gamma^2(\alpha_{n})^2  \eps^2\inf_{\vartheta\in\R^{\pars}}\var\pr{G_{n}\pr{\vartheta,Y_n}}}{\prb{\eps + \PRb{m_2+\textstyle\sum_{k=1}^{n}\beta_{k}\prb{\sup_{x\in\pr{[a,b]^{\dim}}^{\batch_k}}\vass{G_{k}(\theta_{k},x)}^2}}^{\nicefrac{1}{2}}  }^{4}}
			.
		\end{split}
	\end{equation}
	\Moreover the assumption that $\max\{1,\sum_{k=0}^{n}\beta_{k}\}\leq \BSCL$ \proves\ that for all $m_1\in\R$, $m_2\in[0,\infty)$, $\theta_{1},\theta_2\dots,\theta_{n}\in\R^{\pars}$, $y_1\in\pr{[a,b]^{\dim}}^{\batch_1}$, 
	$y_2\in\pr{[a,b]^{\dim}}^{\batch_2}$, $\dots$, 
	$y_n\in\pr{[a,b]^{\dim}}^{\batch_n}$ it holds that
	\begin{equation}
		\begin{split}
		%	&\phi(m_1,m_2,\theta_0,\dots,\theta_{n-1},y_1,\dots,y_{n-1})
			&
			\frac{\gamma^2(\alpha_n)^2  \eps^2\inf_{\vartheta\in\R^{\pars}}\var\pr{G_{n}\pr{\vartheta,Y_n}}}
			{\prb{\eps + \PRb{m_2+\textstyle\sum_{k=1}^{n}\beta_{k}\prb{\sup_{x\in\pr{[a,b]^{\dim}}^{\batch_k}}\vass{G_{k}(\theta_{k},x)}^2}}^{\nicefrac{1}{2}}  }^{4}}
			\\&=
			\frac{\gamma^2(\alpha_n)^2  \eps^2\inf_{\vartheta\in\R^{\pars}}\var\pr{G_{n}\pr{\vartheta,Y_n}}}
			{\prb{\eps + \PRb{\beta_0\vass{(\beta_0^{-1} m_2)^{\nicefrac{1}{2}}}^2+\textstyle\sum_{k=1}^{n}\beta_{k}\prb{\sup_{x\in\pr{[a,b]^{\dim}}^{\batch_k}}\vass{G_{k}(\theta_{k},x)}^2}}^{\nicefrac{1}{2}}  }^{4}}
			\\&\geq
					\frac{\gamma^2(\alpha_n)^2  \eps^2\inf_{\vartheta\in\R^{\pars}}\var\pr{G_{n}\pr{\vartheta,Y_n}}}
					{\prb{\eps + \PRb{\textstyle\sum_{k=0}^{n}\beta_k}^{\nicefrac{1}{2}}\PRb{\textstyle\pr{\max_{k\in\{1,2,\dots,n\}}\max\pRb{(\beta_0^{-1} m_2)^{\nicefrac{1}{2}},\sup_{x\in\pr{[a,b]^{\dim}}^{\batch_k}}\vass{G_{k}(\theta_{k},x)}}}^2}^{\nicefrac{1}{2}}  }^{4}}
			\\&\geq
					\frac{\gamma^2(\alpha_n)^2  \eps^2\inf_{\vartheta\in\R^{\pars}}\var\pr{G_{n}\pr{\vartheta,Y_n}}}
					{\prb{\eps +\BSCL^{\nicefrac{1}{2}} \PRb{\textstyle\prb{\max_{k\in\{1,2,\dots,n\}}\max\pRb{(\beta_0^{-1} m_2)^{\nicefrac{1}{2}},\sup_{x\in\pr{[a,b]^{\dim}}^{\batch_k}}\vass{G_{k}(\theta_{k},x)}}}^2}^{\nicefrac{1}{2}}  }^{4}}
			\\&\geq
					\frac{\gamma^2(\alpha_n)^2  \eps^2\BSCL^{-2}\inf_{\vartheta\in\R^{\pars}}\var\pr{G_{n}\pr{\vartheta,Y_n}}}
					{\prb{\eps + \PRb{\max_{k\in\{1,2,\dots,n\}}\max\pRb{(\beta_0^{-1} m_2)^{\nicefrac{1}{2}},\sup_{x\in\pr{[a,b]^{\dim}}^{\batch_k}}\vass{G_{k}(\theta_{k},x)}}}  }^{4}}
			.
		\end{split}
	\end{equation}
	Combining this,
	\cref{it:lem:factorization:applied:Adam:class:1},
	\eqref{eq:finite:factor:exp}, 
	\eqref{eq:phi:in:RV}, 
	\eqref{eq:variance:lemma:applied},
	and
	the fact that	 $\mom,\MOM,\Theta_{0},\Theta_{1},\dots, \Theta_{n-1},Y_1,Y_2$, $\dots,Y_{n-1}$ are $\fil_{n-1}$-measurable
	with
	 the assumption that $\sigma(Y_n)$ and $\fil_{n-1}$ are independent
	 and
	 %\cref{it:mb:fact:for:var2} in
	 \cref{lem:factorization2:spec} (applied with
	 $( \Omega, \cF, \P )\curvearrowleft ( \Omega, \cF, \P )$,
	 $\gil\curvearrowleft\fil_{n-1}$,
	 $m\curvearrowleft J_n\dim$,
	 $n\curvearrowleft 2+n\pars+\dim\sum_{k=1}^{n-1}J_k$,
	 $D\curvearrowleft \pr{[a,b]^{\dim}}^{\batch_n}$,
	 $E\curvearrowleft\prb{\R\times[0,\infty)\times\pr{\R^{\pars}}^n\times\pr{[a,b]^{\dim}}^{\batch_1}\times\pr{[a,b]^{\dim}}^{\batch_2}\times\dots\times\pr{[a,b]^{\dim}}^{\batch_{n-1}}}$,
	 $X\curvearrowleft (\mom,\MOM,\Theta_0,\dots,\Theta_{n-1},Y_1,\dots,Y_{n-1})$,
	 $Y\curvearrowleft Y_n$,
	 $\Phi\curvearrowleft \inc$
	 %$\psi\curvearrowleft\phi$
	 in the notation of \cref{lem:factorization2:spec}) \prove\ that \improper{\Theta_{n}^{(i)}-\Theta_{n-1}^{(i)}}{\fil_{n-1}} and that
	\begin{equation}
		\label{eq:fac:lem:applied:Adam:class}
		\begin{split}
			&\E\PRb{\prb{\Theta_{n}^{(i)}-\Theta_{n-1}^{(i)}-\cob{\Theta_{n}^{(i)}-\Theta_{n-1}^{(i)}}{\fil_{n-1}}}^2}	
			\\&=\E\PRb{\pr{\inc(\mom,\MOM,\Theta_{0},\dots,\Theta_{n-1},Y_1,\dots,Y_n)
			\\&\quad-\E\PR{\inc(\mom,\MOM,\Theta_{0},\dots,\Theta_{n-1},Y_1,\dots,Y_n)|\fil_{n-1}}}^2}
%			\\&=\E\PR{\phi(\mom,\MOM,\Theta_{0},\dots,\Theta_{n-1},Y_1,\dots,Y_{n-1})}
			\\&\geq 				
			\E\PRbbb{\frac{\gamma^2(\alpha_n)^2  \eps^2\BSCL^{-2}\inf_{\vartheta\in\R^{\pars}}\var\pr{G_n\pr{\vartheta,Y_n}}}{\pr{\eps + \PR{\textstyle\max_{k\in\{1,2,\dots,n\}}\max\{(\beta_0^{-1} \MOM)^{\nicefrac{1}{2}},\sup_{x\in\pr{[a,b]^{\dim}}^{\batch_k}}\vass{G_{k}(\Theta_{k-1},x)}\}}  }^{4}}	}
			\\&=\frac{\eps^2\gamma^2(\alpha_n)^2\BSCL^{-2}  \pr{\textstyle\inf_{\vartheta\in\R^{\pars}}\var\pr{G_{n}\pr{\vartheta,Y_n}}}}{\pr{
			\E\PR{\pr{\eps + \PR{\textstyle\max_{k\in\{1,2,\dots,n\}}\max\{(\beta_0^{-1} \MOM)^{\nicefrac{1}{2}},\sup_{x\in\pr{[a,b]^{\dim}}^{\batch_k}}\vass{G_{k}(\Theta_{k-1},x)}\}}   }^{-4}}}^{-1}}	
		\end{split}
	\end{equation}
	\cfload. This \proves\ \cref{it:lem:factorization:applied:Adam:class:2.1}.
	\Nobs that \eqref{eq:fac:lem:applied:Adam:class} and Jensen's inequality (cf., \eg, \cite[Theorem~7.9]{klenkeprobability})
	%(applied with the convex function $\varphi\colon(0,\infty)\ni x\mapsto x^{-4}\in(0,\infty)$)
	 \prove\ that
	\begin{equation}
		\begin{split}
						&\frac{\eps^2\gamma^2(\alpha_n)^2 \BSCL^{-2} \pr{\textstyle\inf_{\vartheta\in\R^{\pars}}\var\pr{G_{n}\pr{\vartheta,Y_n}}}}{\pr{
					\E\PR{\pr{\eps + \PR{\textstyle\max_{k\in\{1,2,\dots,n\}}\max\{(\beta_0^{-1} \MOM)^{\nicefrac{1}{2}},\sup_{x\in\pr{[a,b]^{\dim}}^{\batch_k}}\vass{G_{k}(\Theta_{k-1},x)}\}}   }^{-4}}}^{-1}}
			\\&\geq\frac{\eps^2\gamma^2(\alpha_n)^2 \BSCL^{-2} \pr{\textstyle\inf_{\vartheta\in\R^{\pars}}\var\pr{G_{n}\pr{\vartheta,Y_n}}}}{
				\pr{\E\PR{\eps +\textstyle\max_{k\in\{1,2,\dots,n\}}\max\{(\beta_0^{-1} \MOM)^{\nicefrac{1}{2}},\sup_{x\in\pr{[a,b]^{\dim}}^{\batch_k}}\vass{G_{k}(\Theta_{k-1},x)}\}} }^4}
		\end{split}
	\end{equation}
	This \proves[pei]\ \cref{it:lem:factorization:applied:Adam:class:3}.
	\finishproofthus
\end{proof}

\cfclear
\begin{athm}{prop}{prop:prop:non_convergence_modified_Adam:specific:setup:SCOPE}
	%	\label{prop:non_convergence_modified_Adam:specific:setup}
	Let $\pars,\dim\in\N$, 
	$a\in\R$,
	$b\in[a,\infty)$,
	$\eps,S,B\in(0,\infty)$,
	$\alpha\in[0,1)$,
	$\beta\in(\alpha^2,1)$,
	$\BSCL\in[1,\infty)$,
	let 
	$\batch\colon\N\to\N$ satisfy $\limsup_{n\to\infty}\batch_n<\infty$,
	let $ ( \Omega, \cF,(\mathbb{F}_n)_{n\in\N_0}, \P ) $ be a filtered probability space,
	let	$
	X_{n,j} =\prb{X^{(1)}_{n,j},\ldots,X^{(\dim)}_{n,j}}\colon \Omega \to [ a, b ]^{\dim}
	$, $n,j\in\N$, be \iid\ random variables,
		assume for all $n,j\in\N$ that $X_{n,j}$ is $\fil_n$-measurable,
	assume for all $n\in\N$ that $\sigma\pr{\pr{X_{n,j}}_{j\in\{1,2,\dots,\batch_n\}}}$ and $\fil_{n-1}$ are independent,
		let $g=(g_1,\dots,g_{\pars})\colon\R^{\pars}\times[a,b]^{\dim}\to\R^{\pars}$ be measurable, 
		let
	$\gamma\colon\N\to[0,\infty)$, $\bscl\colon\N^2\to[\BSCL^{-1},\BSCL]^{\pars}$,
	$ \Theta=(\Theta^{(1)},\ldots,\Theta^{(\pars)}) \colon\N_0\times\Omega \to \R^{\pars}$,
	$\mom=(\mom^{(1)},\ldots,\mom^{(\pars)})\colon\N_0\times\Omega\to\R^{\pars}$, and 
	$\MOM=(\MOM^{(1)},\ldots,\MOM^{(\pars)})\colon\N_0\times\Omega\to[0,\infty)^{\pars}$ satisfy
	for all $n\in\N$, $i\in\{1,2,\dots,\pars\}$ that
	\begin{equation}
		\label{prop:eq:def:momentum:SCOPE}
		\begin{split}
%			\mom_0=0, 
%			\quad\qquad 
			\mom_{n}&\textstyle=\alpha \mom_{n-1}+(1-\alpha)\PRb
			{\frac{1}{\batch_{n}}\sum_{j=1}^{\batch_{n}}g(\Theta_{n-1},X_{n,j})},
		\end{split}
	\end{equation}
	\begin{equation}
		\label{prop:eq:def:RMS:factor:SCOPE}
		\begin{split}
%			\MOM_0^{(i)}\geq0,
%			\quad\qquad
			\MOM_{n}^{(i)}&\textstyle=\beta \MOM_{n-1}^{(i)}+(1-\beta)\PRb
			{\frac{1}{\batch_{n}}\sum_{j=1}^{\batch_{n}}g_i(\Theta_{n-1},X_{n,j})}^2,
		\end{split}
	\end{equation}
	\begin{equation}
		\label{eq:incr:rule:Adam}
		\begin{split}
			\text{and}
			\qquad
			\Theta_{ n }^{(i)}
			&=\Theta_{ n-1  }^{(i)}-\frac{\gamma_{n}  \mom_{n}^{(i)}}{\eps+\PRb{\bscl(n,i) \MOM_{n}^{(i)}}^{\nicefrac{1}{2}}},
		\end{split}
	\end{equation}
	let $i\in\{1,2,\dots,\pars\}$, 
		assume that $\Theta_0$, $\mom_0$, and $\MOM_0$ are $\fil_0$-measurable,
		assume for all $\theta=(\theta_1,\dots,\theta_\pars)\in \R^{\pars}$ that
		\begin{equation}
			\label{eq:setup:starting:expectations:bounded}
			\max\pRb{\E\PRb{\vass{\Theta_0^{(i)}}},
			\E\PRb{\vass{\mom_0^{(i)}}},
			\E\PRb{\pr{\MOM_0^{(i)}}^{\nicefrac{1}{2}}}
			}<\infty
			\qandq
			\E\PRb{\vass{g_i(\theta,X_{1,1})}^2}\leq S\vass{\theta_i}^2+B
			,
		\end{equation}
		 and for every $n\in\N$ let $G_{n}\colon\R^\pars \times\pr{[a,b]^{\dim}}^{\batch_n}\to\R$ satisfy for all $\theta\in\R^{\pars}$, $x=(x_1,\dots,x_{\batch_n})\in\pr{[a,b]^{\dim}}^{\batch_n}$ that
	\begin{equation}
		\label{prop:eq:setup:for:proof:Adam:no:alpha:scaling:SCOPE}
		\textstyle
		G_n(\theta,x)=\frac{1}{\batch_{n}}\sum_{j=1}^{\batch_{n}}g_i(\theta,x_j)
		\qqandqq
		\textstyle\sup_{\omega\in\Omega}\vass{g_i(\theta,X_{1,1}(\omega))}<\infty.
	\end{equation}
	Then 
	\begin{enumerate}[label=(\roman*)]
		\item \label{it:measurable:Adam:process} it holds for all $n\in\N_0$ that	$\Theta_n$, $\mom_n$, and
		$\MOM_n$
		are $\fil_n$-measurable,
		\item \label{it:square:integrable:Adam:process} it holds for all $n\in\N$ with $\max\pRb{\E\PRb{\vass{ \Theta_0^{(i)} }^2},\E\PRb{\vass{ \mom_0^{(i)} }^2}}<\infty$ that $\E\PRb{\vass{\Theta_n^{(i)}}^2}<\infty$,
		\item \label{it:rep:momentum:sum} it holds for all $n\in\N$ that
		\begin{equation}
			\textstyle
			\mom_n=\alpha^n\mom_{0}+\sum_{k=1}^n (1-\alpha)\alpha^{n-k}\PRb
			{\frac{1}{\batch_{k}}\sum_{j=1}^{\batch_{k}}g(\Theta_{k-1},X_{k,j})},
		\end{equation}
		\item \label{it:rep:RMS:fac:sum} it holds for all $n\in\N$ that
		\begin{equation}
			\textstyle
			\MOM_n^{(i)}=\beta^n\MOM_{0}^{(i)}+\sum_{k=1}^n (1-\beta)\beta^{n-k}\PRb
			{\frac{1}{\batch_{k}}\sum_{j=1}^{\batch_{k}}g_i(\Theta_{k-1},X_{k,j})}^2,
		\end{equation}
		\item \label{it:recursion:repr} it holds for all $n\in\N$ that
		\begin{equation}
				\Theta_{ n }^{(i)}
			=
				 			\Theta_{ n-1  }^{(i)}-\frac{\gamma_{n} \prb{\alpha^n\mom_0^{(i)}+\sum_{k=1}^n (1-\alpha)\alpha^{n-k}\PRb
					{\frac{1}{\batch_{k}}\sum_{j=1}^{\batch_{k}}g_i(\Theta_{k-1},X_{k,j})}}}{\eps+\PRb{\bscl(n,i) \prb{\beta^n\MOM_0^{(i)}+\sum_{k=1}^n (1-\beta)\beta^{n-k}\PRb
						{\frac{1}{\batch_{k}}\sum_{j=1}^{\batch_{k}}g_i(\Theta_{k-1},X_{k,j})}^2}}^{\nicefrac{1}{2}}},
		\end{equation}
		\item \label{it:fist:moment:Adam:existence} it holds for all $n\in\N_0$ with $\E\PRb{\sup_{k\in\N_0}\sup_{x\in[a,b]^{\dim}}\vass{g_i(\Theta_{k},x)}}<\infty$ that $\E\PRb{\vass{\Theta_n^{(i)}}}<\infty$,
		and
		\item \label{it:fist:bound:Adam:process} it holds for all random varaibles $ \xi \colon \Omega \to \R $ with $\E\PR{\sup_{k\in\N_0}\sup_{x\in[a,b]^{\dim}}\vass{g_i(\Theta_{k},x)}}<\infty$ that
		\begin{align}
				&\liminf_{ n \to \infty }\prb{\E\PRb{ | \Theta_n^{(i)} - \xi |^2 }}^{\nicefrac{1}{2}} 
				\\&\nonumber\geq
				\frac{\eps\PR{\liminf_{ n \to \infty }\gamma_n}(1-\alpha) \BSCL^{-1}  \PR{\textstyle\inf_{\theta\in\R^{\pars}}\var\pr{g_i(\theta,X_{1,1})}}^{\nicefrac{1}{2}}}{2\PR{\limsup_{ n \to \infty }\batch_n}^{\nicefrac{1}{2}}
					\prb{\E\PRb{\sup_{k\in\N}\max\pRb{\PRb{\prb{ \frac{1}{1-\beta}}\MOM_0^{(i)}}^{\nicefrac{1}{2}},\sup_{x\in\pr{[a,b]^{\dim}}^{\batch_k}}\vass{G_{k}(\Theta_{k-1},x)}}+\eps} }^2}
				.
			\end{align}
	\end{enumerate}

\end{athm}

\begin{proof}[Proof of \cref{prop:prop:non_convergence_modified_Adam:specific:setup:SCOPE}] 
	Throughout this proof for every $n\in\N$ let $Y_n\colon\Omega\to\pr{[a,b]^{\dim}}^{\batch_n}$ satisfy 	\begin{equation}
		\label{prop:eq:setup:for:proof:Adam:no:alpha:scaling:SCOPE2}
		\textstyle
		Y_n=(X_{n,1},\dots,X_{n,\batch_n})
.
	\end{equation}
	Note that
	\eqref{prop:eq:def:momentum:SCOPE},
	\eqref{prop:eq:def:RMS:factor:SCOPE},
	\eqref{eq:incr:rule:Adam},
	the assumption that $g$ is measurable, and the assumption that for all $n,j\in\N$ the function $X_{n,j}$ is $\fil_n$-measurable
	\prove\ that for all $n\in\N$ with 
	the property that
	$\Theta_{n-1}$, $\mom_{n-1}$, and $\MOM_{n-1}$ are $\fil_{n-1}$-measurable  it holds that
	\begin{equation}
		\text{$\Theta_n$, $\mom_{n}$, and $\MOM_{n}$ are $\fil_n$-measurable}.
	\end{equation}
	Combining this and the fact that $\Theta_{0}$, $\mom_{0}$, and $\MOM_0$ are $\fil_0$-measurable with induction \proves\ that for all $n\in\N_0$ it holds that
	\begin{equation}
		\label{eq:moms:measurability}
		\text{$\Theta_n$, $\mom_{n}$, and $\MOM_{n}$ are $\fil_n$-measurable}.
	\end{equation}
	This and
	\eqref{eq:moms:measurability}
	  \prove\ \cref{it:measurable:Adam:process}.
	\Moreover 
	\eqref{eq:setup:starting:expectations:bounded}, 
	the fact that for all $n,j\in\N$ it holds that $\Theta_{n-1}$ is $\fil_{n-1}$-measurable and that $\sigma(X_{n,j})$ and $\fil_{n-1}$ are independent,
	and \cref{lem:2.9:from:1609.07031} (applied with
	$(\Omega,\cF,\P) \curvearrowleft (\Omega,\cF,\P)$,
	$(D,\mathcal{D}) \curvearrowleft \prb{\R^\pars,\mathcal{B}(\R^\pars)}$,
	$(E,\mathcal{E}) \curvearrowleft \prb{[a,b]^\dim,\mathcal{B}([a,b]^\dim)}$,
	$\Phi\curvearrowleft \prb{\prb{\R^{\pars}\times[a,b]^{\dim}}\ni(\theta,x)\mapsto (g_i(\theta,x))^2\in[0,\infty]}$,
	$\gil \curvearrowleft \fil_{n-1}$,
	$X \curvearrowleft \Theta_{n-1}$,
	$Y \curvearrowleft X_{n,j}$
	 in the notation of \cref{lem:2.9:from:1609.07031}) \prove\ that for all $n\in\N$, $j\in\{1,2,\dots,\batch_n\}$ with $\E\PRb{\vass{\Theta_{n-1}^{(i)}}^2}<\infty$ it holds that
	\begin{equation}
		\begin{split}
			\E\PRb{(g_i(\Theta_{n-1},X_{n,j}))^2}
			=
			\E\PRb{\cob{(g_i(\Theta_{n-1},X_{n,j}))^2}{\fil_{n-1}}}
			&\leq
			\E\PRb{S\vass{\Theta_{n-1}^{(i)}}^2+B}
			\\&=
			S\E\PRb{\vass{\Theta_{n-1}^{(i)}}^2}+B<\infty
			%\textstyle\sup_{\omega\in\Omega}\vass{g_i(\theta,X_{1,1}(\omega))}
		\end{split}
	\end{equation}
	\cfload.
	This \proves\ that for all $n\in\N$ with $\E\PRb{\vass{\Theta_{n-1}^{(i)}}^2}<\infty$ it holds that
	\begin{equation}
		\begin{split}
		\textstyle
		&\textstyle\E\PRb{\vass{
				{\frac{1}{\batch_{n}}\sum_{j=1}^{\batch_{n}}g_i(\Theta_{n-1},X_{n,j})}}^2}
		\\&=\textstyle	(\batch_{n})^{-2}\E\PRb{\vass{
				{\sum_{j=1}^{\batch_{n}}g_i(\Theta_{n-1},X_{n,j})}}^2}
		\\&=\textstyle	(\batch_{n})^{-2}
		\E\PRb{
			\vass{
				\PRb{
					\sum_{j=1}^{\batch_{n}}\pr{g_i(\Theta_{n-1},X_{n,j})}^2
				}
				+2
				\PRb{\sum_{j=1}^{\batch_{n}}\sum_{k=1}^{j-1}g_i(\Theta_{n-1},X_{n,j})g_i(\Theta_{n-1},X_{n,k})}
			}}
		\\&\leq\textstyle
			\PRb{\sum_{j=1}^{\batch_{n}}
				\E\PRb{
					\pr{g_i(\Theta_{n-1},X_{n,j})}^2
				}
			}
			+2\PRb{\sum_{j=1}^{\batch_{n}}\sum_{k=1}^{j-1}
				\E\PRb{g_i(\Theta_{n-1},X_{n,j})g_i(\Theta_{n-1},X_{n,k})}
			}
		\\&\leq\textstyle	
			\PRb{\sum_{j=1}^{\batch_{n}}
				\E\PRb{
					\pr{g_i(\Theta_{n-1},X_{n,j})}^2
				}
			}
			\\&\textstyle\quad+2\PRb{\sum_{j=1}^{\batch_{n}}\sum_{k=1}^{j-1}
				\prb{\E\PRb{(g_i(\Theta_{n-1},X_{n,j}))^2}}^{\nicefrac{1}{2}}\E\prb{\PRb{(g_i(\Theta_{n-1},X_{n,k}))^2}}^{\nicefrac{1}{2}}
			}
		<\infty.
			\end{split}
	\end{equation}
	This \prove\ that for all $n\in\N$ with $\max\pRb{\E\PRb{\vass{ \Theta_{n-1}^{(i)} }^2},\E\PRb{\vass{ \mom_{n-1}^{(i)} }^2}}<\infty$ it holds that
	\begin{equation}
		\begin{split}
			\E\PRb{\vass{\mom_{n}^{(i)}}^2}
			&=
			\textstyle
			\E\PRb{\vass{\alpha \mom_{n-1}^{(i)}+(1-\alpha)\PRb
					{\frac{1}{\batch_{n}}\sum_{j=1}^{\batch_{n}}g_i(\Theta_{n-1},X_{n,j})}}^2}
			\\&\leq
			\textstyle
			\alpha^2\E\PRb{\vass{ \mom_{n-1}^{(i)}}^2}
			+2\alpha(1-\alpha)\E\PRb{\vass{ \mom_{n-1}^{(i)}}\vass
				{\frac{1}{\batch_{n}}\sum_{j=1}^{\batch_{n}}g_i(\Theta_{n-1},X_{n,j})}}
			\\&\quad\textstyle+(1-\alpha)^2\E\PRb{\vass{
					{\frac{1}{\batch_{n}}\sum_{j=1}^{\batch_{n}}g_i(\Theta_{n-1},X_{n,j})}}^2}
			\\&\leq
				\textstyle
				\alpha^2\E\PRb{\vass{ \mom_{n-1}^{(i)}}^2}
				+2\alpha(1-\alpha)\E\PRb{\vass{ \mom_{n-1}^{(i)}}^2}^{\nicefrac{1}{2}}\E\PRb{\vass
					{\frac{1}{\batch_{n}}\sum_{j=1}^{\batch_{n}}g_i(\Theta_{n-1},X_{n,j})}^2}^{\nicefrac{1}{2}}
				\\&\quad\textstyle+(1-\alpha)^2\E\PRb{\vass{
						{\frac{1}{\batch_{n}}\sum_{j=1}^{\batch_{n}}g_i(\Theta_{n-1},X_{n,j})}}^2}
					<\infty.
		\end{split}
	\end{equation}
	\Hence that for all  $n\in\N$ with $\max\pRb{\E\PRb{\vass{ \Theta_{n-1}^{(i)} }^2},\E\PRb{\vass{ \mom_{n-1}^{(i)} }^2}}<\infty$ it holds that
	\begin{equation}
		\begin{split}
		\E\PRb{\prb{\gamma_{n}  \mom_{n}^{(i)}\prb{\eps+\PRb{\bscl(n,i) \MOM_{n}^{(i)}}^{\nicefrac{1}{2}}}^{-1}}^2}
		\leq
		\E\PRb{\prb{\gamma_{n}  \mom_{n}^{(i)}}^2\eps^{-2}}
		=
		(\gamma_n\eps^{-1})^2\E\PRb{\vass{\mom_{n}^{(i)}}^2}
		<
		\infty.
		\end{split}
	\end{equation}
	Combining this and \eqref{eq:incr:rule:Adam} with induction \proves\ \cref{it:square:integrable:Adam:process}.
	 	\Moreover
	 	\eqref{prop:eq:def:momentum:SCOPE} \proves\ that for all $n\in\N$ it holds that 
	 	\begin{equation}
	 		\label{prop:eq:repr:momentum:seq2}
	 		\begin{split}
	 			\mom_{n}
	 			&=\textstyle\alpha \mom_{n-1}+(1-\alpha)\PRb
	 			{\frac{1}{\batch_{n}}\sum_{j=1}^{\batch_{n}}g(\Theta_{n-1},X_{n,j})}
	 			\\&=\textstyle\alpha^2 \mom_{n-2}
	 			+(1-\alpha)\alpha\PRb
	 			{\frac{1}{\batch_{n-1}}\sum_{j=1}^{\batch_{n-1}}g(\Theta_{n-2},X_{n-1,j})}
	 			\\&\textstyle\quad+(1-\alpha)\PRb
	 			{\frac{1}{\batch_{n}}\sum_{j=1}^{\batch_{n}}g(\Theta_{n-1},X_{n,j})}
	 			\\&=\textstyle\alpha^2 \mom_{n-2}+\sum_{k=n-1}^n (1-\alpha)\alpha^{n-k}\PRb
	 			{\frac{1}{\batch_{k}}\sum_{j=1}^{\batch_{k}}g(\Theta_{k-1},X_{k,j})}
	 			\\&=\textstyle\alpha^3 \mom_{n-3}+\sum_{k=n-2}^n (1-\alpha)\alpha^{n-k}\PRb
	 			{\frac{1}{\batch_{k}}\sum_{j=1}^{\batch_{k}}g(\Theta_{k-1},X_{k,j})}
	 			\\&=\dots
	 			\\&=\textstyle\alpha^n \mom_{n-n}
	 			+\sum_{k=n-(n-1)}^n (1-\alpha)\alpha^{n-k}\PRb
	 			{\frac{1}{\batch_{k}}\sum_{j=1}^{\batch_{k}}g(\Theta_{k-1},X_{k,j})}
	 			\\&=\textstyle\alpha^n\mom_{0}+\sum_{k=1}^n (1-\alpha)\alpha^{n-k}\PRb
	 			{\frac{1}{\batch_{k}}\sum_{j=1}^{\batch_{k}}g(\Theta_{k-1},X_{k,j})}
	 			.
	 		\end{split}
	 	\end{equation}
	 	This \proves[pei] \proves\ \cref{it:rep:momentum:sum}.
	 	\Moreover \eqref{prop:eq:def:RMS:factor:SCOPE} \proves\ that for all $n\in\N$ it holds that 
	 	\begin{equation}
	 		\label{prop:eq:repr:RMS:seq2}
	 		\begin{split}
	 			\MOM_{n}^{(i)}
	 			&=\textstyle\beta \MOM_{n-1}^{(i)}
	 			+(1-\beta)\PRb
	 			{\frac{1}{\batch_{n}}\sum_{j=1}^{\batch_{n}}g_i(\Theta_{n-1},X_{n,j})}^2
	 			\\&=\textstyle\beta^2 \MOM_{n-2}^{(i)}
	 			+(1-\beta)\beta\PRb
	 			{\frac{1}{\batch_{n-1}}\sum_{j=1}^{\batch_{n-1}}g_i(\Theta_{n-2},X_{n-1,j})}^2
	 			\\&\textstyle\quad+(1-\beta)\PRb
	 			{\frac{1}{\batch_{n}}\sum_{j=1}^{\batch_{n}}g_i(\Theta_{n-1},X_{n,j})}^2
	 			\\&=\textstyle\beta^2 \MOM_{n-2}^{(i)}+\sum_{k=n-1}^n (1-\beta)\beta^{n-k}\PRb
	 			{\frac{1}{\batch_{k}}\sum_{j=1}^{\batch_{k}}g_i(\Theta_{k-1},X_{k,j})}^2
	 			\\&=\textstyle\beta^3 \MOM_{n-3}^{(i)}+\sum_{k=n-2}^n (1-\beta)\beta^{n-k}\PRb
	 			{\frac{1}{\batch_{k}}\sum_{j=1}^{\batch_{k}}g_i(\Theta_{k-1},X_{k,j})}^2
	 			\\&=\dots
	 			\\&=\textstyle\beta^n \MOM_{n-n}^{(i)}+\sum_{k=n-(n-1)}^n (1-\beta)\beta^{n-k}\PRb
	 			{\frac{1}{\batch_{k}}\sum_{j=1}^{\batch_{k}}g_i(\Theta_{k-1},X_{k,j})}^2
	 			\\&=\textstyle\beta^n \MOM_{0}^{(i)}+\sum_{k=1}^n (1-\beta)\beta^{n-k}\PRb
	 			{\frac{1}{\batch_{k}}\sum_{j=1}^{\batch_{k}}g_i(\Theta_{k-1},X_{k,j})}^2
	 			.
	 		\end{split}
	 	\end{equation}
	 	This \proves[pei] \cref{it:rep:RMS:fac:sum}.
%	 	\Nobs that \eqref{prop:eq:setup:Adam:no:alpha:scaling:SCOPE} and \eqref{prop:eq:setup:for:proof:Adam:no:alpha:scaling:SCOPE} \prove\ that for all $n\in\N$, $\theta\in\R^\pars$ it holds that
%	 	\begin{equation}
%	 		\E\PR{\vass{G_{n}(\theta,Y_n)}^2}
%	 		=
%	 		\textstyle\E\PRb{\vass{
%	 				{\frac{1}{\batch_{n}}\sum_{j=1}^{\batch_{n}}g_i(\theta,X_{n,j})}}^2}
%	 		<\infty
%	 		.
%	 	\end{equation} 
	 	This,
	 	\eqref{eq:incr:rule:Adam},
	 	\eqref{prop:eq:setup:for:proof:Adam:no:alpha:scaling:SCOPE}, 
	 	\eqref{prop:eq:setup:for:proof:Adam:no:alpha:scaling:SCOPE2},
	 	\cref{it:rep:momentum:sum}, and 
	 	\cref{it:rep:RMS:fac:sum}
	 	 \prove\ that for all $n\in\N$ it holds that
	 	\begin{equation}
	 		\label{eq:equation:for:it:5}
	 		\begin{split}
	 			\Theta_{ n }^{(i)}
	 			&=\Theta_{ n-1  }^{(i)}-\frac{\gamma_{n}  \mom_{n}^{(i)}}{\eps+\PRb{\bscl(n,i) \MOM_{n}^{(i)}}^{\nicefrac{1}{2}}}
	 			\\&=
	 			\Theta_{ n-1  }^{(i)}-\frac{\gamma_{n} \prb{\alpha^n\mom_0^{(i)}+\sum_{k=1}^n (1-\alpha)\alpha^{n-k}\PRb
	 				{\frac{1}{\batch_{k}}\sum_{j=1}^{\batch_{k}}g_i(\Theta_{k-1},X_{k,j})}}}{\eps+\PRb{\bscl(n,i) \prb{\beta^n\MOM_0^{(i)}+\sum_{k=1}^n (1-\beta)\beta^{n-k}\PRb
	 					{\frac{1}{\batch_{k}}\sum_{j=1}^{\batch_{k}}g_i(\Theta_{k-1},X_{k,j})}^2}}^{\nicefrac{1}{2}}}
	 			\\&=
	 			\Theta_{ n-1  }^{(i)}-\frac{\gamma_{n} \prb{\alpha^n\mom_0^{(i)}+ \sum_{k=1}^n (1-\alpha)\alpha^{n-k}G_k(\Theta_{k-1},Y_{k})}}{\eps+\PRb{\bscl(n,i) \prb{\beta^n\MOM_0^{(i)}+\sum_{k=1}^n (1-\beta)\beta^{n-k}\PRb
	 					{G_k(\Theta_{k-1},Y_{k})}^2}^{\nicefrac{1}{2}}}}
	 			.
	 		\end{split}
	 	\end{equation}
	 	This \proves[pei] \cref{it:recursion:repr}.
	 	\Nobs that  \eqref{prop:eq:setup:for:proof:Adam:no:alpha:scaling:SCOPE} and
	 	\eqref{prop:eq:setup:for:proof:Adam:no:alpha:scaling:SCOPE2} \prove\ that for all 
	 	$n\in\N$,
	 	$\theta\in\R^{\pars}$ it holds that
	 	\begin{equation}
	 		\label{eq:max:grad:in:Y}
	 		\begin{split}
	 		\textstyle\sup_{\omega\in\Omega}\vass{G_{n}(\theta,Y_n(\omega))}
	 		&\leq
	 		\textstyle\sup_{\omega\in\Omega}\frac{1}{\batch_{n}}\sum_{j=1}^{\batch_{n}}\vass{g_i(\theta,X_{k,j}(\omega))}
	 		\\&=\textstyle
	 		\sup_{\omega\in\Omega}\frac{1}{\batch_{n}}\sum_{j=1}^{\batch_{n}}\vass{g_i(\theta,X_{1,1}(\omega))}
	 		\\&=\textstyle\sup_{\omega\in\Omega}\vass{g_i(\theta,X_{1,1}(\omega))}
	 		<\infty.
	 		\end{split}
	 	\end{equation}
	 	\Moreover the assumption that for all $n\in\N$ it holds that $0<\bscl(n,i)\leq\BSCL$ \proves\ that for all $n\in\N$ it holds that
	 	\begin{equation}
	 		\label{eq:bscl:summing:bd}
	 		\begin{split}
	 		0<\min\pR{\bscl(n,i)(1-\beta),\bscl(n,i)(1-\beta)\beta^n
	 		}
	 		\textstyle\leq\bscl(n,i)(1-\beta)\sum_{k=0}^n\beta^{k}
	 		&\leq \bscl(n,i)
	 		\leq \BSCL
	 		.
	 		\end{split}
	 	\end{equation}
	 	\Moreover 
	 	\eqref{prop:eq:setup:for:proof:Adam:no:alpha:scaling:SCOPE}
	 	\proves\ that for all 
	 	$n\in\N$ with $\E\PRb{\sup_{k\in\N_0}\sup_{x\in[a,b]^{\dim}}\vass{g_i(\Theta_{k},x)}}<\infty$ it holds that
	 	\begin{equation}
	 		\label{eq:exp:gradients:bounded}
	 		\begin{split}
	 			&\textstyle\E\PRb{\max_{k\in\{1,2,\dots,n\}}\sup_{x\in\pr{[a,b]^{\dim}}^{\batch_k}}\vass{G_{k}(\Theta_{k-1},x)}}
	 			\\&=\textstyle\E\PRb{\max_{k\in\{1,2,\dots,n\}}\sup_{x\in[a,b]^{\batch_k}}\tfrac{\batch_k}{\batch_k}\vass{g_i(\Theta_{k-1},x)}}
	 			\\&=\textstyle\E\PRb{\max_{k\in\{1,2,\dots,n\}}\sup_{x\in[a,b]^{\batch_k}}\vass{g_i(\Theta_{k-1},x)}}
	 			\\&\leq\textstyle\E\PRb{\sup_{k\in\N}\sup_{x\in[a,b]^{\batch_k}}\vass{g_i(\Theta_{k-1},x)}}
	 			<\infty
	 			.
	 		\end{split}
	 	\end{equation}
	 		 	This,
	 		 	\eqref{eq:setup:starting:expectations:bounded},
	 		 	 and
	 		 	\eqref{eq:equation:for:it:5}
	 		 	\prove\ that for all $n\in\N$ with 
	 		 	$\E\PRb{\sup_{k\in\N_0}\sup_{x\in[a,b]^{\dim}}\vass{g_i(\Theta_{k},x)}}<\infty$
	 		 	and
	 		 	$\E\PRb{\vass{\Theta_{n-1}^{(i)}}}<\infty$
	 		  it holds that
	 	\begin{equation}
	 		\begin{split}
	 			\label{eq:Adam:integrabale:induction:step}
	 			&\E\PRb{\vass{\Theta_n^{(i)}}}
	 			\\&=
	 			\E\PRbbb{\vass[\Big]{\Theta_{ n-1  }^{(i)}-\frac{\gamma_{n} \prb{\alpha^n\mom_0^{(i)}+ \sum_{k=1}^n (1-\alpha)\alpha^{n-k}G_k(\Theta_{k-1},Y_{k})}}{\eps+\PRb{\bscl(n,i) \prb{\beta^n\MOM_0^{(i)}+\sum_{k=1}^n (1-\beta)\beta^{n-k}\PRb
	 								{G_k(\Theta_{k-1},Y_{k})}^2}^{\nicefrac{1}{2}}}}}}
 				\\&\leq\textstyle\E\PRb{\vass{\Theta_{n-1}^{(i)}}}
 								+
 							\tfrac{\gamma_{n} \alpha^n
 								\E\PR{\vass{\mom_0^{(i)}}}}{\eps}
 								+
 								\eps^{-1}\gamma_{n}  \sum_{k=1}^n (1-\alpha)\alpha^{n-k}\E\PR{\vass{G_k(\Theta_{k-1},Y_{k})}}
 				\\&\leq\textstyle
 				\E\PRb{\vass{\Theta_{n-1}^{(i)}}}
 				+
 				\tfrac{\gamma_{n} 
 					\E\PR{\vass{\mom_0^{(i)}}}}{\eps}
 				+
 				\eps^{-1}\gamma_{n}  n \E\PRb{\max_{k\in\{1,2,\dots,n\}}\sup_{x\in\pr{[a,b]^{\dim}}^{\batch_k}}\vass{G_{k}(\Theta_{k-1},x)}}
	 			<\infty
	 			.
	 		\end{split}
	 	\end{equation}
	 	Combining this, \eqref{eq:setup:starting:expectations:bounded}, and induction \proves[ps]\ that for all $n\in\N_0$ with $\E\PRb{\sup_{k\in\N_0}\allowbreak\sup_{x\in[a,b]^{\dim}}\allowbreak\vass{g_i(\Theta_{k},x)}}<\infty$ it holds that
	 		 	\begin{equation}
	 		\begin{split}
	 			\label{eq:Adam:integrabale}
	 			\E\PRb{\vass{\Theta_n^{(i)}}}<\infty
	 			.
	 		\end{split}
	 	\end{equation}
	 	This \proves\ \cref{it:fist:moment:Adam:existence}. \Nobs that 
	 	\eqref{eq:Adam:integrabale}, 
	 	\cref{lem:norm:cond:exp},
	 	and 
	 	\cite[Theorem~8.14]{klenkeprobability} \prove\ that
	 	for all $n\in\N$ with $\E\PRb{\sup_{k\in\N_0}\sup_{x\in[a,b]^{\dim}}\vass{g_i(\Theta_{k},x)}}<\infty$ it holds that
	 	\begin{equation}
	 		\begin{split}
	 		\E\PRb{\prb{\Theta_{n}^{(i)}-\cob{\Theta_{n}^{(i)}}{\fil_{n-1}}}^2}	
	 		&=\E\PRb{\prb{\Theta_{n}^{(i)}-\Theta_{n-1}^{(i)}+\Theta_{n-1}^{(i)}-\cob{\Theta_{n}^{(i)}}{\fil_{n-1}}}^2}	\\&=\E\PRb{\prb{\Theta_{n}^{(i)}-\Theta_{n-1}^{(i)}-\cob{\Theta_{n}^{(i)}-\Theta_{n-1}^{(i)}}{\fil_{n-1}}}^2}
	 		.
	 		\end{split}
	 	\end{equation}
	 	Combining
	 	this,
	 	\eqref{eq:moms:measurability},
	 	\eqref{eq:max:grad:in:Y},
	 	\eqref{eq:bscl:summing:bd},
	 	\eqref{eq:exp:gradients:bounded},
	 	\cref{it:recursion:repr}, and
	 	the fact that for all $n\in\N$ it holds that $\E\PRb{\prb{\bscl(n,i)\beta^n\MOM_0^{(i)}}^{\nicefrac{1}{2}}}<\infty$ with \cref{lem:factorization:applied:Adam:class} (applied with
	 	$a\curvearrowleft a$,
	 	$b\curvearrowleft b$,
	 	$\eps\curvearrowleft \eps$,
	 	$\pars\curvearrowleft\pars$,
	 	$\gamma\curvearrowleft \gamma_n$,
	 	$\BSCL\curvearrowleft\BSCL$,
	 	$( \Omega, \cF,(\mathbb{F}_k)_{k\in\N_0}, \P )\curvearrowleft( \Omega, \cF,(\mathbb{F}_k)_{k\in\N_0}, \P )$,
	 	$\batch\curvearrowleft \batch$,
	 	$(Y_k)_{k\in\N}\curvearrowleft (Y_k)_{k\in\N}$,
	 	$\mom \curvearrowleft \alpha^n\mom_0^{(i)}$,
	 	$\MOM \curvearrowleft \bscl(n,i)\beta^n\MOM_0^{(i)}$,
	 	$n\curvearrowleft n$,
	 	$i\curvearrowleft i$,
	 	$(\alpha_{k})_{k\in\N}\curvearrowleft ((1-\alpha)\alpha^{n-k})_{k\in\N}$,
	 	$(\beta_{k})_{k\in\N_0}\curvearrowleft (\bscl(n,i)(1-\beta)\beta^{n-k})_{k\in\N_0}$,
	 	$G\curvearrowleft G$,
	 	$\Theta\curvearrowleft \Theta$
	 	for $n\in\N$
	 	in the notation of \cref{lem:factorization:applied:Adam:class}) \proves\ that for all $n\in\N$ with $\E\PRb{\sup_{k\in\N_0}\sup_{x\in[a,b]^{\dim}}\vass{g_i(\Theta_{k},x)}}<\infty$ it holds that
	 	\begin{equation}
	 		\label{eq:LB:cond:exp:prep}
	 		\begin{split}
	 			&\E\PRb{\prb{\Theta_{n}^{(i)}-\cob{\Theta_{n}^{(i)}}{\fil_{n-1}}}^2}		\\&=\E\PRb{\prb{\Theta_{n}^{(i)}-\Theta_{n-1}^{(i)}-\cob{\Theta_{n}^{(i)}-\Theta_{n-1}^{(i)}}{\fil_{n-1}}}^2}
	 			\\&\geq
	 			\frac{\eps^2(\gamma_n)^2(1-\alpha)^2\BSCL^{-2}  \pr{\textstyle\inf_{\theta\in\R^{\pars}}\var\pr{G_{n}\pr{\theta,Y_n}}}}{
	 				\prb{\E\PRb{\max_{k\in\{1,2,\dots,n\}}\max\pRb{\PRb{\prb{\frac{\bscl(n,i)\beta^n}{\bscl(n,i)(1-\beta)\beta^n}} \MOM_0^{(i)}}^{\nicefrac{1}{2}},\sup_{x\in\pr{[a,b]^{\dim}}^{\batch_k}}\vass{G_{k}(\Theta_{k-1},x)}}+\eps} }^4}
	 				.
	 		\end{split}
	 	\end{equation}
	 	This,
	 	\eqref{prop:eq:setup:for:proof:Adam:no:alpha:scaling:SCOPE}, and
	 	\eqref{prop:eq:setup:for:proof:Adam:no:alpha:scaling:SCOPE2} \prove\ for all $n\in\N$ with $\E\PRb{\sup_{k\in\N_0}\sup_{x\in[a,b]^{\dim}}\vass{g_i(\Theta_{k},x)}}<\infty$ that
	 		 	\begin{equation}
	 		\label{eq:LB:cond:exp:prep2}
	 		\begin{split}
	 			&\E\PRb{\prb{\Theta_{n}^{(i)}-\cob{\Theta_{n}^{(i)}}{\fil_{n-1}}}^2}	
	 			\\&\geq
	 			\frac{\eps^2(\gamma_n)^2(1-\alpha)^2\BSCL^{-2}  \pr{\textstyle\inf_{\theta\in\R^{\pars}}\var\pr{G_{n}\pr{\theta,Y_n}}}}{
	 				\prb{\E\PRb{\max_{k\in\{1,2,\dots,n\}}\max\pRb{\PRb{\prb{\frac{\bscl(n,i)\beta^n}{\bscl(n,i)(1-\beta)\beta^n}} \MOM_0^{(i)}}^{\nicefrac{1}{2}},\sup_{x\in\pr{[a,b]^{\dim}}^{\batch_k}}\vass{G_{k}(\Theta_{k-1},x)}}+\eps} }^4}
	 			\\	&=
	 			\frac{\eps^2(\gamma_n)^2(1-\alpha)^2 \BSCL^{-2}  \pr{\textstyle\inf_{\theta\in\R^{\pars}}\var\pr{\frac{1}{\batch_{n}}\sum_{j=1}^{\batch_{n}}g_i(\theta,X_{n,j})}}}{
	 				\prb{\E\PRb{\max_{k\in\{1,2,\dots,n\}}\max\pRb{\PRb{(1-\beta)^{-1} \MOM_0^{(i)}}^{\nicefrac{1}{2}},\sup_{x\in\pr{[a,b]^{\dim}}^{\batch_k}}\vass{G_{k}(\Theta_{k-1},x)}}+\eps} }^4}
	 			\\	&=
	 			\frac{\eps^2(\gamma_n)^2(1-\alpha)^2 \BSCL^{-2}  \pr{\textstyle\inf_{\theta\in\R^{\pars}}\var\pr{g_i(\theta,X_{n,1})}}}{\batch_n
	 				\prb{\E\PRb{\max_{k\in\{1,2,\dots,n\}}\max\pRb{\PRb{(1-\beta)^{-1} \MOM_0^{(i)}}^{\nicefrac{1}{2}},\sup_{x\in\pr{[a,b]^{\dim}}^{\batch_k}}\vass{G_{k}(\Theta_{k-1},x)}}+\eps} }^4}
	 			.
	 		\end{split}
	 	\end{equation}
	 	\Moreover
	 	\eqref{eq:Adam:integrabale},
	 	 the fact that for all $m\in\N$, $n\in\N\cap[m,\infty)$ it holds that $\Theta_{m}=\prb{\Theta_m^{(1)},\dots,\Theta_m^{(\pars)}}\colon\Omega\to\R^{\pars}$ is $\fil_{n}$-measurable,
	 	 and \cref{lem:cond:exp:vanish3} (applied with 
	 	 $ ( \Omega, \cF, \P ) \curvearrowleft  ( \Omega, \cF, \P ) $, 
	 	 $X\curvearrowleft \Theta_n^{(i)}$,
	 	 $Y\curvearrowleft \Theta_m^{(i)}$,
	 	 $\gil\curvearrowleft \fil_{n-1}$
	 	 for $m\in\N$, $n\in\N\cap(m,\infty)$ 
	 	 in the notation of \cref{lem:cond:exp:vanish3})
	 	  \prove\ that for all $m\in\N$, $n\in\N\cap(m,\infty)$ with $\E\PRb{\sup_{k\in\N_0}\sup_{x\in[a,b]^{\dim}}\vass{g_i(\Theta_{k},x)}}<\infty$ it holds that
	 	\begin{equation}
	 		\begin{split}
	 			\E\PRb{\prb{\Theta_n^{(i)}-\Theta_m^{(i)}}^2}
	 			&\geq
	 			\E\PRb{\prb{\Theta_n^{(i)}-\cob{\Theta_{n}^{(i)}}{\fil_{n-1}}}^2}
	 			.
	 		\end{split}
	 	\end{equation}
	 	This,
	 	 \eqref{eq:LB:cond:exp:prep2}, and
	 	\cref{lim:inf:cases}
	 	(applied with 
	 	$ ( \Omega, \cF, \P ) \curvearrowleft  ( \Omega, \cF, \P ) $, 
	 	$Y\curvearrowleft \xi$,
	 	$(X_n)_{n\in\N}\curvearrowleft (\Theta_n^{(i)})_{n\in\N}$
	 	in the notation of \cref{lim:inf:cases})
	 	 \prove\ that for all random variables $\xi\colon\Omega\to\R$ with $\E\PRb{\sup_{k\in\N_0}\sup_{x\in[a,b]^{\dim}}\vass{g_i(\Theta_{k},x)}}<\infty$ it holds that
	 	\begin{align}
	 			&\nonumber\textstyle\liminf_{ n \to \infty }\E\PRb{ (\Theta_n^{(i)} - \xi)^2 }
	 			\\&\nonumber\textstyle\geq 
	 			\frac{1}{4}
				 \sup_{k\in\N}\inf_{ m,n\in\N\cap[k,\infty),\,n> m }\E\PRb{\prb{\Theta_n^{(i)}-\Theta_m^{(i)}}^2}
	 			\\&\nonumber\textstyle\geq 
	 			\frac{1}{4}
	 			 \lim_{k\to\infty}\inf_{ m,n\in\N\cap[k,\infty),\,n> m }\E\PRb{\prb{\Theta_n^{(i)}-\cob{\Theta_{n}^{(i)}}{\fil_{n-1}}}^2}
	 			\\&\textstyle=
	 			\frac{1}{4}\liminf_{ n \to \infty } \E\PRb{\prb{\Theta_n^{(i)}-\cob{\Theta_{n}^{(i)}}{\fil_{n-1}}}^2}
	 			\\&\nonumber\geq
	 			\liminf_{ n \to \infty }
	 				 \frac{\eps^2(\gamma_n)^2(1-\alpha)^2 \BSCL^{-2}  \pr{\textstyle\inf_{\theta\in\R^{\pars}}\var\pr{g_i(\theta,X_{1,1})}}}{4\batch_n
	 				\prb{\E\PRb{\max_{k\in\{1,2,\dots,n\}}\max\pRb{\PRb{\prb{ \frac{1}{1-\beta}}\MOM_0^{(i)}}^{\nicefrac{1}{2}},\sup_{x\in\pr{[a,b]^{\dim}}^{\batch_k}}\vass{G_{k}(\Theta_{k-1},x)}}+\eps} }^4}
	 				\\&\nonumber\geq
	 			\frac{\eps^2\PR{\liminf_{ n \to \infty }\gamma_n}^2(1-\alpha)^2 \BSCL^{-2}  \PR{\textstyle\inf_{\theta\in\R^{\pars}}\var\pr{g_i(\theta,X_{1,1})}}}{4\PR{\limsup_{ n \to \infty }\batch_n}
	 				\prb{\E\PRb{\sup_{k\in\N}\max\pRb{\PRb{\prb{ \frac{1}{1-\beta}}\MOM_0^{(i)}}^{\nicefrac{1}{2}},\sup_{x\in\pr{[a,b]^{\dim}}^{\batch_k}}\vass{G_{k}(\Theta_{k-1},x)}}+\eps} }^4}
	 			.
	 		\end{align}
	 		 	This \proves[pei] \cref{it:fist:bound:Adam:process}.
\finishproofthus
\end{proof}

\begin{athm}{prop}{prop:prop:non_convergence_modified_Adam:specific:setup:SCOPE3}
	%	\label{prop:non_convergence_modified_Adam:specific:setup}
	Let $\pars,\dim\in\N$, 
	$a\in\R$,
	$b\in[a,\infty)$,
	$\eps,\cst\in(0,\infty)$,
	$\Cst\in[\cst,\infty)$,
	$\alpha\in[0,1)$,
	$\beta\in(\alpha^2,1)$,
	$\fc\in[\max\{1,\vass{a},\vass{b}\},\infty)$,
	$D\in\R$ satisfy
	\begin{equation}
		\label{eq:const:def:D:new}
		D=\frac{(\Cst+\eps)^2\fc^3}{\min\{1,\eps^3\}}		
		\PRbbb{\max\pRbbb{\frac{8\max\{1,\Cst\}(3+\alpha)\beta^{\nicefrac{1}{2}}}{\cst(1-\beta)\pr{\beta^{\nicefrac{1}{2}}-\alpha}},
				\frac{5(\alpha\Cst+(1-\alpha)\cst) }{(1-\alpha)^{\nicefrac{3}{2}}\cst}}}^2,
	\end{equation}
	let $ ( \Omega, \cF,(\mathbb{F}_n)_{n\in\N_0}, \P ) $ be a filtered probability space,
	let	$
	X_{n,j} =\prb{X^{(1)}_{n,j},\ldots,X^{(\dim)}_{n,j}}\colon \Omega \to [ a, b ]^{\dim}
	$, $n,j\in\N$, be \iid\ random variables,
	assume for all $n,j\in\N$ that $X_{n,j}$ is $\fil_n$-measurable,
	let $\batch\colon\N\to\N$ satisfy for all $n\in\N$ that $\sigma\pr{\pr{X_{n,j}}_{j\in\{1,2,\dots,\batch_n\}}}$ and $\fil_{n-1}$ are independent,
	let $g=(g_1,\dots,g_{\pars})\colon\R^{\pars}\times[a,b]^{\dim}\to\R^{\pars}$ be measurable, 
	let
	$\gamma\colon\N\to[0,\infty)$, 
	$\bscl\colon\N^2\to[\fc^{-1},\fc]$,
	$ \Theta=(\Theta^{(1)},\ldots,\Theta^{(\pars)}) \colon\N_0\times\Omega \to \R^{\pars}$,
	$\mom=(\mom^{(1)},\ldots,\mom^{(\pars)})\colon\N_0\times\Omega\to\R^{\pars}$, and 
	$\MOM=(\MOM^{(1)},\ldots,\MOM^{(\pars)})\colon\N_0\times\Omega\to[0,\infty)^{\pars}$ satisfy
	for all $n\in\N$, $i\in\{1,2,\dots,\pars\}$ that
	\begin{equation}
		\label{prop:eq:def:momentum:SCOPE3}
		\begin{split}
			%									\mom_0=0, 
			%						\quad\qquad 
			\mom_{n}&\textstyle=\alpha \mom_{n-1}+(1-\alpha)\PRb
			{\frac{1}{\batch_{n}}\sum_{j=1}^{\batch_{n}}g(\Theta_{n-1},X_{n,j})},
		\end{split}
	\end{equation}
	\begin{equation}
		\label{prop:eq:def:RMS:factor:SCOPE3}
		\begin{split}
			%			\MOM^{(i)}_0=0,
			%			\quad\qquad
			\MOM_{n}^{(i)}&\textstyle=\beta \MOM_{n-1}^{(i)}+(1-\beta)\PRb
			{\frac{1}{\batch_{n}}\sum_{j=1}^{\batch_{n}}g_i(\Theta_{n-1},X_{n,j})}^2,
		\end{split}
	\end{equation}
	\begin{equation}
		\begin{split}
			\text{and}
			\qquad
			\Theta_{ n }^{(i)}
			&=\Theta_{ n-1  }^{(i)}-\frac{\gamma_{n}  \mom_{n}^{(i)}}{\eps+\PRb{\bscl(n,i) \MOM_{n}^{(i)}}^{\nicefrac{1}{2}}},
			%			\\&=\Theta_{ n - 1 }-\frac{\alpha \mom_{n-1}+(1-\alpha)\vass{(\nabla_{\theta}f)(\Theta_{n-2},X_{n-1})}}{\eps+\sqrt{\MOM_n}}
		\end{split}
	\end{equation}
	assume that $\Theta_0$, $\mom_0$, and $\MOM_0$ are $\fil_0$-measurable, 
	 let $i\in\{1,2,\dots,\pars\}$ satsify
	\begin{equation}
		\label{eq:start:mom:bound:new}
		\max\pRb{\PRb{\pr{1-\beta}^{-1}\MOM_0^{(i)}}^{\nicefrac{1}{2}},
			\pr{1-\alpha}^{-1}\vass{\mom_0^{(i)}}
		}
		\leq \Cst(\vass{\Theta_{0}^{(i)}}+\fc)
		,
	\end{equation}
	assume for all
	$\theta=(\theta_1,\dots,\theta_\pars)\in\R^\pars$, $x\in[a,b]^\dim$
	that
	\begin{equation}
		\label{prop:eq:new:R:cond}
		\pr{\theta_i-\fc}
		\pr{\cst+(\Cst-\cst)\indicator{(-\infty,\fc]}(\theta_i)}
		\leq
		g_i(\theta,x)
		\leq
		\pr{\theta_i+\fc}
		\pr{\cst+(\Cst-\cst)\indicator{[-\fc,\infty)}(\theta_i)}
		,
	\end{equation} 
	and let $\xi\colon \Omega \to \R$ be a random variable.
	Then 
	\begin{enumerate}[label=(\roman*)]
		\item \label{it:second:mom:g:in:X} it holds for all $\theta=(\theta_1,\dots,\theta_\pars)\in\R^{\pars}$ that
		\begin{equation}
			\E\PRb{\vass{g_i(\theta,X_{1,1})}^2}
			\leq 
			\Cst^2(2\fc+1)\vass{\theta_i}^2+\Cst^2(\fc+1)^2
		\end{equation}
		and
		\item \label{it:LB:gen:Adam} it holds that
		\begin{equation}
			\begin{split}
				&\liminf_{ n \to \infty }\prb{\E[ | \Theta_n^{(i)} - \xi |^2 ]}^{\nicefrac{1}{2}} 
				\\&\geq
				\frac{\PR{\liminf_{ n \to \infty }\gamma_n} \PR{\textstyle\inf_{\theta\in\R^{\pars}}\var\pr{g_i(\theta,X_{1,1})}}^{\nicefrac{1}{2}}}{D\PR{\limsup_{ n \to \infty }\batch_n}^{\nicefrac{1}{2}}\PR{
						\max\{1,\textstyle\sup_{n\in\N}\gamma_n\}}^2
					\pr{\E\PR{\max\{1,\vass{\Theta_0^{(i)}}\}} }^2}
				.
			\end{split}
		\end{equation}
	\end{enumerate}
\end{athm}

\begin{proof}[Proof of \cref{prop:prop:non_convergence_modified_Adam:specific:setup:SCOPE3}]
	Throughout this proof 
	assume without loss of generality that 
	$\E\PRb{\vass{\Theta_0^{(i)}}}\allowbreak<\infty$,
	$\limsup_{ n \to \infty }\batch_n<\infty$, and $\textstyle\sup_{n\in\N}\gamma_n<\infty$
 and for every $n\in\N$ let $G_{n}\colon\R^\pars \times\pr{[a,b]^{\dim}}^{\batch_n}\to\R$ 
 and
 $Y_{n}\colon\Omega\to\pr{[a,b]^{\dim}}^{\batch_n}$
 satisfy for all $\theta\in\R^{\pars}$, $x=(x_1,\dots,x_{\batch_n})\in\pr{[a,b]^{\dim}}^{\batch_n}$ that
	\begin{equation}
		\label{prop:eq:setup:for:proof:Adam:no:alpha:scaling:SCOPE3}
		\textstyle
		G_n(\theta,x)=\frac{1}{\batch_{n}}\sum_{j=1}^{\batch_{n}}g_i(\theta,x_{j})
		\qqandqq
		Y_n=(X_{n,1},\dots,X_{n,\batch_n}).
	\end{equation}
	\Nobs that \eqref{prop:eq:new:R:cond} \proves\ that for all $\theta=(\theta_1,\dots,\theta_\pars)\in\R^{\pars}$, $n,j\in\N$ it holds that
	\begin{equation}
		\label{eq:second:mom:in:data:bd}
		\begin{split}
		\E\PRb{\vass{g_i(\theta,X_{n,j})}^2}
		=
		\E\PRb{\vass{g_i(\theta,X_{1,1})}^2}
		\leq
		\E\PRb{(\Cst\vass{\theta_i}+\Cst\fc)^2}
		&= (\Cst\vass{\theta_i}+\Cst\fc)^2
		\\&= \Cst^2(\vass{\theta_i}^2+2\vass{\theta_i}\fc+\fc^2)
		\\&\leq \Cst^2(2\fc+1)\vass{\theta_i}^2+\Cst^2(\fc+1)^2
		.
		\end{split}
	\end{equation}
	This \proves[pei] \cref{it:second:mom:g:in:X}.
	\Nobs that \eqref{prop:eq:new:R:cond} \proves\ for all $\theta=(\theta_1,\dots,\theta_{\pars})\in\R^{\pars}$ that
	\begin{equation}
		\label{eq:abs:in:data:bd}
			\textstyle\sup_{\omega\in\Omega}\vass{g_i(\theta,X_{1,1}(\omega))}
			\leq
			\textstyle\sup_{\omega\in\Omega}\Cst\PRb{\vass{\theta_i}+\vass{X_{1,1}(\omega)}}
			\leq
			\Cst\pr{\vass{\theta_i}+\max\{\vass{a},\vass{b}\}}
			<\infty
			.
	\end{equation}
		\Moreover 
	\eqref{prop:eq:new:R:cond} and \eqref{prop:eq:setup:for:proof:Adam:no:alpha:scaling:SCOPE3}
	\prove\ that for all $n\in\N$, $\theta=(\theta_1,\dots,\theta_\pars)\in\R^{\pars}$, $x\in\pr{[a,b]^{\dim}}^{\batch_n}$ it holds that
	\begin{equation}
		\label{eq:prop:gen:grad:mean}
		\pr{\theta_i-\fc}
		\pr{\cst+(\Cst-\cst)\indicator{(-\infty,\fc]}(\theta_i)}
		\leq
		G_n(\theta,x)
		\leq
		\pr{\theta_i+\fc}
		\pr{\cst+(\Cst-\cst)\indicator{[-\fc,\infty)}(\theta_i)}
		.
	\end{equation}
	This and
	\eqref{eq:start:mom:bound:new}
	\prove\ that
	for all $k\in\N$, $x\in\pr{[a,b]^{\dim}}^{\batch_k}$ it holds that
	\begin{equation}
		\label{eq:prep:grad:and:start:mom:bd}
		\PRb{\prb{ \tfrac{1}{1-\beta}}\MOM_0^{(i)}}^{\nicefrac{1}{2}}
		\leq\Cst(\vass{\Theta_{0}^{(i)}}+\fc)
		\qqandqq
		\textstyle\vass{G_{k}(\Theta_{k-1},x)}
		\leq \Cst(\vass{\Theta_{k-1}^{(i)}}+\fc).
	\end{equation}
	This and the assumption that $\E\PRb{\vass{\Theta_0^{(i)}}}<\infty$ \prove\ that
	\begin{equation}
		\E\PRb{\pr{\MOM_0^{(i)}}^{\nicefrac{1}{2}}}
		\leq
		\E\PRb{\PRb{\prb{ \tfrac{1}{1-\beta}}\MOM_0^{(i)}}^{\nicefrac{1}{2}}}
		\leq
		\E\PRb{\Cst\prb{\vass{\Theta_{0}^{(i)}}+\fc}}
		=
		\Cst\prb{\E\PRb{\vass{\Theta_{0}^{(i)}}}+\fc}
		<\infty.
	\end{equation}
			This,
			\eqref{prop:eq:setup:for:proof:Adam:no:alpha:scaling:SCOPE3},
			\eqref{eq:second:mom:in:data:bd},
			  and \cref{it:recursion:repr} in 	\cref{prop:prop:non_convergence_modified_Adam:specific:setup:SCOPE} (applied with
			$\pars\curvearrowleft\pars$,
			$a\curvearrowleft a$,
			$b\curvearrowleft b$,
			$\eps\curvearrowleft\eps$,
			$S\curvearrowleft\Cst^2(2\fc+1)$,
			$B\curvearrowleft\Cst^2(\fc+1)^2$,
			$\alpha\curvearrowleft\alpha$,
			$\beta\curvearrowleft\beta$,
			$\BSCL\curvearrowleft\fc$,
			$\batch\curvearrowleft\batch$,
			$\gamma\curvearrowleft\gamma$,
			$\bscl\curvearrowleft\bscl$,
			$g\curvearrowleft g$,
			$(X_{n,j})_{(n,j)\in\N^2}\curvearrowleft (X_{n,j})_{(n,j)\in\N^2}$,
			$\mom\curvearrowleft\mom$,
			$\MOM\curvearrowleft\MOM$,
			$\Theta\curvearrowleft\Theta$
			in the notation of \cref{prop:prop:non_convergence_modified_Adam:specific:setup:SCOPE}) \prove\ that for all $n\in\N$ it holds that
	\begin{equation}
		\label{eq:increment:Adam:gen:grad}
		\begin{split}
		\Theta_{ n }^{(i)}
		&=
		\Theta_{ n-1  }^{(i)}-\frac{\gamma_{n}\prb{ \alpha^n\mom^{(i)}_0+\sum_{k=1}^n (1-\alpha)\alpha^{n-k}\PRb
			{\frac{1}{\batch_{k}}\sum_{j=1}^{\batch_{k}}g_i(\Theta_{k-1},X_{k,j})}}}{\eps+\PRb{\bscl(n,i) \prb{\beta^n\MOM_0^{(i)}+\sum_{k=1}^n (1-\beta)\beta^{n-k}\PRb
				{\frac{1}{\batch_{k}}\sum_{j=1}^{\batch_{k}}g_i(\Theta_{k-1},X_{k,j})}^2}}^{\nicefrac{1}{2}}}
		\\&=
		\Theta_{ n-1  }^{(i)}-\frac{\gamma_{n}\prb{\alpha^n\mom_0^{(i)}+ \sum_{k=1}^n (1-\alpha)\alpha^{n-k}G_k(\Theta_{k-1},Y_k)}}{\eps+\PRb{\bscl(n,i)\prb{ \beta^n\MOM_0^{(i)}+\sum_{k=1}^n (1-\beta)\beta^{n-k}\pr
				{G_k(\Theta_{k-1},Y_k)}^2}}^{\nicefrac{1}{2}}}
			.
		\end{split}
	\end{equation}
	\Moreover
	\eqref{eq:const:def:D:new} 
	and
	\eqref{prop:eq:setup:for:proof:Adam:no:alpha:scaling:SCOPE3} \prove\ that
	\begin{align}
		\label{eq:transition:D:to:spec:const}
		&\nonumber\Cst^{-1}((D(1-\alpha)\fc^{-1}\eps)^{\nicefrac{1}{2}}-\eps)
		\\&\nonumber=\Cst^{-1}\PRbbb{\PRbbb{\frac{(\Cst+\eps)^2\fc^2}{\min\{1,\eps^3\}}
		\PRbbb{\max\pRbbb{\frac{8\max\{1,\Cst\}(3+\alpha)\beta^{\nicefrac{1}{2}}}{\cst(1-\beta)^{\nicefrac{1}{2}}\pr{\beta^{\nicefrac{1}{2}}-\alpha}^{\nicefrac{3}{2}}},
				\frac{5(\alpha\Cst+(1-\alpha)\cst) }{(1-\alpha)^{\nicefrac{3}{2}}\cst}}}^2(1-\alpha)\eps}^{\nicefrac{1}{2}}
		-\eps}
		\\&\nonumber\geq\Cst^{-1}\PRbbb{\frac{(\Cst+\eps)\fc}{\min\{1,\eps\}}		
		\max\pRbbb{\frac{8\max\{1,\Cst\}(3+\alpha)\beta^{\nicefrac{1}{2}}}{\cst(1-\beta)^{\nicefrac{1}{2}}\pr{\beta^{\nicefrac{1}{2}}-\alpha}^{\nicefrac{3}{2}}},
				\frac{5(\alpha\Cst+(1-\alpha)\cst) }{(1-\alpha)^{\nicefrac{3}{2}}\cst}}(1-\alpha)^{\nicefrac{1}{2}}
		-\eps}
			\\&=\frac{(1+\tfrac{\eps}{\Cst})\fc}{\min\{1,\eps\}}\max\pRbbb{\frac{8\max\{1,\Cst\}(3+\alpha)\beta^{\nicefrac{1}{2}}(1-\alpha)^{\nicefrac{1}{2}}}{\cst(1-\beta)^{\nicefrac{1}{2}}\pr{\beta^{\nicefrac{1}{2}}-\alpha}^{\nicefrac{3}{2}}},
			\frac{5(\alpha\Cst+(1-\alpha)\cst)(1-\alpha)^{\nicefrac{1}{2}} }{(1-\alpha)^{\nicefrac{3}{2}}\cst}}
		-\tfrac{\eps}{\Cst}
		\\&\nonumber\geq\max\{1,\eps^{-1}\}(1+\tfrac{\eps}{\Cst})\fc		
		\max\pRbbb{\frac{8\max\{1,\Cst\}(3+\alpha)\beta^{\nicefrac{1}{2}}}{\cst(1-\beta)^{\nicefrac{1}{2}}\pr{\beta^{\nicefrac{1}{2}}-\alpha}},
			\frac{5(\alpha\Cst+(1-\alpha)\cst) }{(1-\alpha)\cst}}
		-\tfrac{\eps}{\Cst}
		\\&\nonumber\geq\max\{1,\eps^{-1}\}\fc		
		\max\pRbbb{\frac{8\max\{1,\Cst\}(3+\alpha)\beta^{\nicefrac{1}{2}}}{\cst(1-\beta)^{\nicefrac{1}{2}}\pr{\beta^{\nicefrac{1}{2}}-\alpha}}-1,
			\frac{5(\alpha\Cst+(1-\alpha)\cst) }{(1-\alpha)\cst}-1}
			+\fc
	\\&\nonumber\geq
	\max\{1,\eps^{-1}\}
	\fc
		\max\pRbbb{\frac{8\max\{1,\Cst\}(2+\alpha)\beta^{\nicefrac{1}{2}}}{\cst(1-\beta)^{\nicefrac{1}{2}}\pr{\beta^{\nicefrac{1}{2}}-\alpha}},
				\frac{4(\alpha\Cst+(1-\alpha)\cst)}{(1-\alpha)\cst}}
				+\fc
.
			\end{align}
	\Moreover the assumption that for all $k\in\N$ it holds that $\min\{1,\bscl(k,i)\}\geq\fc^{-1}$ \proves\ that
	\begin{equation}
		\label{eq:first:half:for:max:val}
		\begin{split}
			&\fc+3\prbbb{\fc
				+\frac{\PR{\sup_{k\in\N}\gamma_k}\vass{\mom_0^{(i)}}}{\eps+\PRb{\MOM^{(i)}_0}^{\nicefrac{1}{2}}}
				+\frac{\PR{\sup_{k\in\N}\gamma_k}\max\{1,\Cst\}(2+\alpha)\beta^{\nicefrac{1}{2}}}{\PRb{\textstyle\inf_{k\in\N}\bscl(k,i)(1-\beta)}^{\nicefrac{1}{2}}\cst\pr{\beta^{\nicefrac{1}{2}}-\alpha}}}
			\\&\leq 4\fc
					+\frac{3\PR{\sup_{k\in\N}\gamma_k}\vass{\Cst(1-\alpha)(\vass{\Theta_{0}^{(i)}}+\fc)}}{\eps}
					+\frac{3\PR{\sup_{k\in\N}\gamma_k}\max\{1,\Cst\}(2+\alpha)\beta^{\nicefrac{1}{2}}}{\fc^{-\nicefrac{1}{2}}(1-\beta)^{\nicefrac{1}{2}}\cst\pr{\beta^{\nicefrac{1}{2}}-\alpha}}
				\\&\leq
				\prbbb{4\fc
					+
					\frac{6\Cst(1-\alpha)\fc}{\eps}
					+\frac{3\fc\max\{1,\Cst\}(2+\alpha)\beta^{\nicefrac{1}{2}}}{(1-\beta)^{\nicefrac{1}{2}}\cst\pr{\beta^{\nicefrac{1}{2}}-\alpha}}}
				\max\{1,\textstyle\sup_{k\in\N}\gamma_k\}
				\max\{1,\vass{\Theta_0^{(i)}}\}
				\\&\leq
				\prbbb{4
					+\frac{6\max\{1,\Cst\}(2+\alpha)\beta^{\nicefrac{1}{2}}}{(1-\beta)^{\nicefrac{1}{2}}\cst\pr{\beta^{\nicefrac{1}{2}}-\alpha}}}\fc
					\max\{1,\eps^{-1}\}
				\max\{1,\textstyle\sup_{k\in\N}\gamma_k\}
				\max\{1,\vass{\Theta_0^{(i)}}\}
								\\&\leq
				\prbbb{\frac{8\max\{1,\Cst\}(2+\alpha)\beta^{\nicefrac{1}{2}}}{(1-\beta)^{\nicefrac{1}{2}}\cst\pr{\beta^{\nicefrac{1}{2}}-\alpha}}}\fc
				\max\{1,\eps^{-1}\}
				\max\{1,\textstyle\sup_{k\in\N}\gamma_k\}
				\max\{1,\vass{\Theta_0^{(i)}}\}
				.
			\end{split}
	\end{equation}
		\Moreover the fact that $\fc\geq 1$ \proves\ that
	\begin{equation}
		\label{eq:first:half:for:max:val2}
		\begin{split}
			\fc+3\max\pRbbb{
				\frac{(\alpha\Cst+(1-\alpha)\cst) \fc}{(1-\alpha)\cst},
				\vass{\Theta_0^{(i)}}}
			\leq \prbbb{
				\frac{4(\alpha\Cst+(1-\alpha)\cst) \fc}{(1-\alpha)\cst}}
				\max\{1,\vass{\Theta_0^{(i)}}\}
		.
		\end{split}
	\end{equation}
	This,
	\eqref{eq:prop:gen:grad:mean},
	\eqref{eq:increment:Adam:gen:grad},
	\eqref{eq:transition:D:to:spec:const},
	\eqref{eq:first:half:for:max:val},
	\eqref{eq:first:half:for:max:val2}, and
	 \cref{cor:a_priori_bound_gen:momentum:tilde} (applied with 
	$\pars \curvearrowleft \pars$,
	$i \curvearrowleft i$,
	$\eps \curvearrowleft \eps$,
	$\cst \curvearrowleft \cst$,
	$\Cst \curvearrowleft \Cst$,
	$\alpha \curvearrowleft \alpha$,
	$\beta\curvearrowleft \alpha$,
	$\fc \curvearrowleft \fc$,
	$\mom\curvearrowleft \mom_0^{(i)}(\omega)$,
	$\MOM \curvearrowleft \MOM_0^{(i)}(\omega)$,
	$(G_n)_{n\in\N} \curvearrowleft \pr{\R^{\pars}\ni\theta\mapsto G_n(\theta,Y_n(\omega))\in\R}_{n\in\N}$,
	$\bscl \curvearrowleft \pr{\N\ni n\mapsto\bscl(n,i)(1-\beta)\in(0,\infty)}$,
	$\gamma \curvearrowleft \gamma$,
	$\Theta \curvearrowleft \Theta_n(\omega)$
	for $\omega\in\Omega$
	in the notation of \cref{cor:a_priori_bound_gen:momentum:tilde}) \prove\ that
	\begin{align}
		\label{eq:max:val:Adam:seq}
			&\textstyle\sup_{ n \in \N_0 }
			\vass{\Theta_n^{(i)}}
				\\&\nonumber\leq
			\fc+3\max\pRbbb{\fc
				+\frac{\PR{\sup_{k\in\N}\gamma_k}\vass{\mom_0^{(i)}}}{\eps+\PRb{\MOM_0^{(i)}}^{\nicefrac{1}{2}}}
				+\frac{\PR{\sup_{k\in\N}\gamma_k}\max\{1,\Cst\}(2+\alpha)\beta^{\nicefrac{1}{2}}}{\PRb{\textstyle\inf_{k\in\N}\bscl(k,i)(1-\beta)}^{\nicefrac{1}{2}}\cst\pr{\beta^{\nicefrac{1}{2}}-\alpha}},
				\frac{(\alpha\Cst+(1-\alpha)\cst) \fc}{(1-\alpha)\cst},
				\vass{\Theta_0^{(i)}}}
		\\&\nonumber\leq
		\max\pRbbb{\frac{8\max\{1,\Cst\}(2+\alpha)\beta^{\nicefrac{1}{2}}}{\cst(1-\beta)^{\nicefrac{1}{2}}\pr{\beta^{\nicefrac{1}{2}}-\alpha}},
			\frac{4(\alpha\Cst+(1-\alpha)\cst)}{(1-\alpha)\cst}}
		\fc
		\max\pR{1,\eps^{-1}}
%		\\&\quad\cdot
		\max\{1,\textstyle\sup_{k\in\N}\gamma_k\}
		\max\{1,\vass{\Theta_0^{(i)}}\}
		\\&\nonumber\leq
		\PR{\Cst^{-1}((D(1-\alpha)\fc^{-1}\eps)^{\nicefrac{1}{2}}-\eps)-\fc}
		\max\{1,\textstyle\sup_{k\in\N}\gamma_k\}
		\max\{1,\vass{\Theta_0^{(i)}}\}
		.
		\end{align}
	This \proves\ that
	\begin{equation}
		\textstyle
		\label{eq:gen:grad:UB:Adam}
		\begin{split}
			&\textstyle\Cst (\sup_{ n \in \N_0 }
			\vass{\Theta_n^{(i)}}+\fc)
			\\&\textstyle\leq \Cst (\PR{\Cst^{-1}((D(1-\alpha)\fc^{-1}\eps)^{\nicefrac{1}{2}}-\eps)-\fc}
			\max\{1,\textstyle\sup_{k\in\N}\gamma_k\}\max\{1,\vass{\Theta_0^{(i)}}\}+\fc)
			\\&\textstyle\leq \Cst (\PR{\Cst^{-1}((D(1-\alpha)\fc^{-1}\eps)^{\nicefrac{1}{2}}-\eps)-\fc}
			+\fc)\max\{1,\textstyle\sup_{k\in\N}\gamma_k\}\max\{1,\vass{\Theta_0^{(i)}}\}
			\\&\textstyle=
			\PR{(D(1-\alpha)\fc^{-1}\eps)^{\nicefrac{1}{2}}-\eps}
			\max\{1,\textstyle\sup_{k\in\N}\gamma_k\}\max\{1,\vass{\Theta_0^{(i)}}\}
			.
		\end{split}
	\end{equation}
	This,
	\eqref{eq:prop:gen:grad:mean},
	\eqref{eq:const:def:D:new}, and
	the assumption that $\textstyle\sup_{n\in\N}\gamma_n<\infty$ and $\E\PRb{\vass{\Theta_0^{(i)}}}<\infty$ \prove\ that
	\begin{equation}
		\begin{split}
			&\textstyle\E\PRb{\sup_{n\in\N_0}\sup_{x\in[a,b]^{\dim}}\vass{g_i(\Theta_{n},x)}}
			\\&=\textstyle\E\PRb{\sup_{n\in\N_0}\sup_{x\in[a,b]^{\dim}}
				\prb{\tfrac{\batch_{n+1}}{\batch_{n+1}}}\vass{g_i(\Theta_{n},x)}}
			\\&=\textstyle\E\PRb{\sup_{n\in\N_0}\sup_{x\in\pr{[a,b]^{\dim}}^{\batch_{n+1}}}\vass{G_{n+1}(\Theta_{n},x)}}
			\\&\leq\textstyle\E\PRb{\sup_{n\in\N_0}\sup_{x\in\pr{[a,b]^{\dim}}^{\batch_{n+1}}}\Cst(\vass{\Theta_{n}^{(i)}}+\fc)}
			\\&=\textstyle\E\PRb{\sup_{n\in\N_0}\Cst(\vass{\Theta_{n}^{(i)}}+\fc)}
			\\&\leq\textstyle\E\PRb{\PR{(D(1-\alpha)\eps)^{\nicefrac{1}{2}}-\eps}
				\max\{1,\textstyle\sup_{n\in\N}\gamma_n\}\max\{1,\vass{\Theta_0^{(i)}}\}}
			\\&=\PR{(D(1-\alpha)\eps)^{\nicefrac{1}{2}}-\eps}
			\max\{1,\textstyle\sup_{n\in\N}\gamma_n\}\textstyle\E\PRb{\max\{1,\vass{\Theta_0^{(i)}}\}}
			<\infty
			.
		\end{split}
	\end{equation}
	This,
	\eqref{eq:second:mom:in:data:bd},
	\eqref{eq:abs:in:data:bd},
	 and \cref{it:fist:bound:Adam:process} in 	\cref{prop:prop:non_convergence_modified_Adam:specific:setup:SCOPE} (applied with
	$\pars\curvearrowleft\pars$,
	$a\curvearrowleft a$,
	$b\curvearrowleft b$,
	$\eps\curvearrowleft\eps$,
	$S\curvearrowleft\Cst^2(2\fc+1)$,
	$B\curvearrowleft\Cst^2(\fc+1)^2$,
	$\alpha\curvearrowleft\alpha$,
	$\beta\curvearrowleft\beta$,
	$\BSCL\curvearrowleft\fc$,
	$\batch\curvearrowleft\batch$,
	$\gamma\curvearrowleft\gamma$,
	$\bscl\curvearrowleft\bscl$,
	$g\curvearrowleft g$,
	$(X_{n,j})_{(n,j)\in\N^2}\curvearrowleft (X_{n,j})_{(n,j)\in\N^2}$,
	$\mom\curvearrowleft\mom$,
	$\MOM\curvearrowleft\MOM$,
	$\Theta\curvearrowleft\Theta$
	in the notation of \cref{prop:prop:non_convergence_modified_Adam:specific:setup:SCOPE})  \prove\ that
	\begin{align}
			&\liminf_{ n \to \infty }\prb{\E\PRb{ | \Theta_n^{(i)} - \xi |^2 }}^{\nicefrac{1}{2}} 
%			\\&\geq
%			\frac{\eps\PR{\liminf_{ n \to \infty }\gamma_n}(1-\alpha) \PR{\textstyle\inf_{\theta\in\R^{\pars}}\var\pr{g_i(\theta,X_{1,1})}}^{\nicefrac{1}{2}}}{2\PR{\limsup_{ n \to \infty }\batch_n}^{\nicefrac{1}{2}}
%				\pr{\E\PR{\eps + \max_{k\in\N}\sup_{x\in[a,b]^{\batch_k}}\vass{G_{k}(\Theta_{k-1},x)}} }^2}
			\\&\nonumber\geq
			\frac{\eps\PR{\liminf_{ n \to \infty }\gamma_n}(1-\alpha)\fc^{-1}  \PR{\textstyle\inf_{\theta\in\R^{\pars}}\var\pr{g_i(\theta,X_{1,1})}}^{\nicefrac{1}{2}}}{2\PR{\limsup_{ n \to \infty }\batch_n}^{\nicefrac{1}{2}}
				\prb{\E\PRb{\sup_{n\in\N}\max\pRb{\PRb{\prb{ \frac{1}{1-\beta}}\MOM_0^{(i)}}^{\nicefrac{1}{2}},\sup_{x\in\pr{[a,b]^{\dim}}^{\batch_{n}}}\vass{G_{n}(\Theta_{n-1},x)}}+\eps} }^2}
			.
		\end{align}
	This, 
	\eqref{eq:gen:grad:UB:Adam}, and
	\eqref{eq:prep:grad:and:start:mom:bd}
	 \prove\ that
		\begin{align}
			&\liminf_{ n \to \infty }\prb{\E\PRb{ | \Theta_n^{(i)} - \xi |^2 }}^{\nicefrac{1}{2}} 
			\\&\nonumber\geq
\frac{\eps\PR{\liminf_{ n \to \infty }\gamma_n}(1-\alpha)\fc^{-1}  \PR{\textstyle\inf_{\theta\in\R^{\pars}}\var\pr{g_i(\theta,X_{1,1})}}^{\nicefrac{1}{2}}}{2\PR{\limsup_{ n \to \infty }\batch_n}^{\nicefrac{1}{2}}
	\prb{\E\PRb{\sup_{n\in\N}\max\pRb{\PRb{\prb{ \frac{1}{1-\beta}}\MOM_0^{(i)}}^{\nicefrac{1}{2}},\sup_{x\in\pr{[a,b]^{\dim}}^{\batch_{n}}}\vass{G_{n}(\Theta_{n-1},x)}}+\eps} }^2}
			\\&\nonumber\geq
			\frac{\eps\PR{\liminf_{ n \to \infty }\gamma_n}(1-\alpha)\fc^{-1} \PR{\textstyle\inf_{\theta\in\R^{\pars}}\var\pr{g_i(\theta,X_{1,1})}}^{\nicefrac{1}{2}}}{2\PR{\limsup_{ n \to \infty }\batch_n}^{\nicefrac{1}{2}}
				\pr{\E\PR{\eps + ((D(1-\alpha)\fc^{-1}\eps)^{\nicefrac{1}{2}}-\eps)
						\max\{1,\textstyle\sup_{n\in\N}\gamma_n\}\max\{1,\vass{\Theta_0^{(i)}}\}} }^2}
			\\&\nonumber\geq
					\frac{\eps\PR{\liminf_{ n \to \infty }\gamma_{\infty}}(1-\alpha)\fc^{-1} \PR{\textstyle\inf_{\theta\in\R^{\pars}}\var\pr{g_i(\theta,X_{1,1})}}^{\nicefrac{1}{2}}}
					{2\PR{\limsup_{ n \to \infty }\batch_n}^{\nicefrac{1}{2}}\PR{\eps +((D(1-\alpha)\fc^{-1}\eps)^{\nicefrac{1}{2}}-\eps)}^2\PR{
							\max\{1,\textstyle\sup_n\gamma_n\}}^2
						\pr{\E\PR{\max\{1,\vass{\Theta_0^{(i)}}\}} }^2}
			\\&\nonumber=
						\frac{\PR{\liminf_{ n \to \infty }\gamma_n} \PR{\textstyle\inf_{\theta\in\R^{\pars}}\var\pr{g_i(\theta,X_{1,1})}}^{\nicefrac{1}{2}}}
						{D\PR{\limsup_{ n \to \infty }\batch_n}^{\nicefrac{1}{2}}\PR{
								\max\{1,\textstyle\sup_n\gamma_n\}}^2
							\pr{\E\PR{\max\{1,\vass{\Theta_0^{(i)}}\}} }^2}
			.
		\end{align}
	This \proves[pei] \cref{it:LB:gen:Adam}.
	\finishproofthus
\end{proof}

\subsection{Non-convergence of Adam and other adaptive SGD optimization methods}
\label{subsection:4.3}

\begin{athm}{theorem}{thm:new:rob:cond}
	%	\label{prop:non_convergence_modified_Adam:specific:setup}
	Let $\pars,\dim\in\N$, 
	$a\in\R$,
	$b\in[a,\infty)$,
	$\eps,\cst,\Cst\in(0,\infty)$,
	$\alpha\in[0,1)$,
	$\beta\in(\alpha^2,1)$,
	$\fc\in[\max\{1,\vass{a},\vass{b}\},\infty)$,
	%	for every $n\in\N$ let 
	%	$\batch_n\in\N$, $\gamma_n\in[0,\infty)$, $\bscl_n\in[1,\pr{1-\beta^n}^{-1}]$, 
	let 
	$\batch\colon\N\to\N$ and
	$\gamma\colon\N\to\R$ satisfy
	\begin{equation}
		\label{eq:bounded:batches:and:LR}
		%\textstyle
		\liminf_{n\to\infty}\gamma_n>0
		\qqandqq
		\limsup_{n\to\infty}(\gamma_n+\batch_n)<\infty,
		\vspace{-0.2cm}
	\end{equation}
	let $ ( \Omega, \cF,(\mathbb{F}_n)_{n\in\N_0}, \P ) $ be a filtered probability space,
	let	$
	X_{n,j} \colon \Omega \to [ a, b ]^{\dim}
	$, $n,j\in\N$, be \iid\ random variables,
	assume for all $n,j\in\N$ that $X_{n,j}$ is $\fil_n$-measurable,
			assume for all $n\in\N$ that $\sigma\pr{\pr{X_{n,j}}_{j\in\{1,2,\dots,\batch_n\}}}$ and $\fil_{n-1}$ are independent,
	let $g=(g_1,\dots,g_{\pars})\colon\R^{\pars}\times[a,b]^{\dim}\to\R^{\pars}$ be measurable, 
	let
	$\bscl\colon\N^2\to[\fc^{-1},\fc]$, 
	$ \Theta=(\Theta^{(1)},\ldots,\Theta^{(\pars)}) \colon\N_0\times\Omega \to \R^{\pars}$,
	$\mom=(\mom^{(1)},\ldots,\mom^{(\pars)})\colon\N_0\times\Omega\to\R^{\pars}$, and 
	$\MOM=(\MOM^{(1)},\ldots,\MOM^{(\pars)})\colon\N_0\times\Omega\to[0,\infty)^{\pars}$ satisfy
	for all $n\in\N$, $i\in\{1,2,\dots,\pars\}$ that
	\begin{equation}
		\label{prop:eq:def:momentum:new:rob:cond}
		\begin{split}
			%									\mom_0=0, 
			%						\quad\qquad 
			\mom_{n}&\textstyle=\alpha \mom_{n-1}+(1-\alpha)\PRb
			{\frac{1}{\batch_{n}}\sum_{j=1}^{\batch_{n}}g(\Theta_{n-1},X_{n,j})},
		\end{split}
	\end{equation}
	\begin{equation}
		\label{prop:eq:def:RMS:factor:new:rob:cond}
		\begin{split}
			%			\MOM^{(i)}_0=0,
			%			\quad\qquad
			\MOM_{n}^{(i)}&\textstyle=\beta \MOM_{n-1}^{(i)}+(1-\beta)\PRb
			{\frac{1}{\batch_{n}}\sum_{j=1}^{\batch_{n}}g_i(\Theta_{n-1},X_{n,j})}^2,
		\end{split}
	\end{equation}
	\begin{equation}
		\label{eq:main:THM:recursion}
		\begin{split}
			\text{and}
			\qquad
			\Theta_{ n }^{(i)}
			&=\Theta_{ n-1  }^{(i)}-\gamma_{n}  \prb{\eps+\PRb{\bscl(n,i) \MOM_{n}^{(i)}}^{\nicefrac{1}{2}}}^{-1}\!\mom_{n}^{(i)},
		\end{split}
	\end{equation}
	assume that $\Theta_0$, $\mom_0$, and $\MOM_0$ are $\fil_0$-measurable, 
	let $i\in\{1,2,\dots,\pars\}$ satisfy
	\begin{equation}
		\label{eq:start:mom:bound:new:rob:cond}
		\max\pRb{\PRb{\pr{1-\beta}^{-1}\MOM_0^{(i)}}^{\nicefrac{1}{2}},
			\pr{1-\alpha}^{-1}\vass{\mom_0^{(i)}}
		}
		\leq \Cst(\vass{\Theta_{0}^{(i)}}+\fc),
		\qquad
		\E\PRb{\vass{\Theta_{0}^{(i)}}}<\infty,
	\end{equation}
	and $\inf_{\theta\in\R^{\pars}}\var\pr{g_i(\theta,X_{1,1})}>0$, and
	assume for all 
	$\theta=(\theta_1,\dots,\theta_\pars)\in\R^\pars$, $x\in[a,b]^\dim$
	that
	\begin{equation}
		\label{prop:eq:setup:new:rob:cond}
		\pr{\theta_i-\fc}
		\pr{\cst+(\Cst-\cst)\indicator{(-\infty,\fc]}(\theta_i)}
		\leq
		g_i(\theta,x)
		\leq
		\pr{\theta_i+\fc}
		\pr{\cst+(\Cst-\cst)\indicator{[-\fc,\infty)}(\theta_i)}.
		%\vspace{-0.1cm}
	\end{equation} 
	Then 
		\begin{equation}
			\label{eq:main:Thm}
			\begin{split}
				\inf_{\substack{\xi\colon \Omega \to \R\\\text{measurable}}}\liminf_{ n \to \infty }
				\E\PRb{ | \Theta_n^{(i)} - \xi |^2 }
				>0.
			\end{split}
		\end{equation}
\end{athm}

\begin{proof}[Proof of \cref{thm:new:rob:cond}]
	Throughout this proof assume without loss of generality that $\forall\, n\in\N\colon \gamma_n\geq 0$ (otherwise let $N=\max\{n\in\N\colon \gamma_n<0\}$ and consider $(\Psi_n)_{n\in\N_0}=(\Theta_{N+n})_{n\in\N_0}$). \Nobs that
	\eqref{prop:eq:setup:new:rob:cond},
	\eqref{prop:eq:def:momentum:new:rob:cond},
	\eqref{prop:eq:def:RMS:factor:new:rob:cond},
	\eqref{eq:main:THM:recursion}, and
	 \cref{it:LB:gen:Adam} in \cref{prop:prop:non_convergence_modified_Adam:specific:setup:SCOPE3} \prove\ that for every random variable $\xi\colon \Omega \to \R$ it holds that
	\begin{equation}
		\begin{split}
			&\liminf_{ n \to \infty }\prb{\E\PRb{ | \Theta_n^{(i)} - \xi |^2 }}^{\nicefrac{1}{2}} 
			\\&\geq
			\frac{\PR{\liminf_{ n \to \infty }\gamma_n} \PR{\textstyle\inf_{\theta\in\R^{\pars}}\var\pr{g_i(\theta,X_{1,1})}}^{\nicefrac{1}{2}}}{D\PR{\limsup_{ n \to \infty }\batch_n}^{\nicefrac{1}{2}}\PR{
					\max\{1,\textstyle\sup_{n\in\N}\gamma_n\}}^2
				\pr{\E\PR{\max\{1,\vass{\Theta_0^{(i)}}\}} }^2}
			.
		\end{split}
	\end{equation}
	Combining this with \eqref{eq:bounded:batches:and:LR} and \eqref{eq:start:mom:bound:new:rob:cond} \proves\ that
	\begin{equation}
		\begin{split}
			&\inf_{\substack{\xi\colon \Omega \to \R\\\text{measurable}}}\liminf_{ n \to \infty }\prb{\E\PRb{ | \Theta_n^{(i)} - \xi |^2 }}^{\nicefrac{1}{2}} 
			\\&\geq
			\frac{\PR{\liminf_{ n \to \infty }\gamma_n} \PR{\textstyle\inf_{\theta\in\R^{\pars}}\var\pr{g_i(\theta,X_{1,1})}}^{\nicefrac{1}{2}}}{D\PR{\limsup_{ n \to \infty }\batch_n}^{\nicefrac{1}{2}}\PR{
					\max\{1,\textstyle\sup_{n\in\N}\gamma_n\}}^2
				\pr{\E\PR{\max\{1,\vass{\Theta_0^{(i)}}\}} }^2}
			>0
			.
		\end{split}
	\end{equation}
	This \proves\ \eqref{eq:main:Thm}.
	\finishproofthus
\end{proof}

\begin{athm}{lemma}{lemma:properties:gen:gradient}
	Let $\pars\in\N$, $a\in\R$, $b\in[a,\infty)$, $i\in\{1,2,\dots,\pars\}$, $\cst\in(0,\infty)$, $\Cst\in[\cst,\infty)$, $\fc\in\R$
	satisfy $\fc\geq\max\{\vass{a},\vass{b}\}$
	and let
	$g\colon\R^{\pars}\times[a,b]^{\pars}\to\R$, assume for all $\theta=(\theta_1,\dots,\theta_\pars)\in\R^\pars$,
	$\vartheta=(\vartheta_1,\dots,\vartheta_\pars)\in\R^\pars$, $x=(x_1,\dots,x_\pars)\in[a,b]^\pars$ with $\vartheta_i=x_i$
	that
	\begin{equation}
		\label{eq:case2:dist:to:theta:i}
		\Cst(\vartheta_i-\fc)\leq g(x,\vartheta)\leq \Cst(\vartheta_i+\fc)
		\qqandqq
		\cst\vass{\theta_i-x_i}^2\leq \pr{\theta_i-x_i}g\pr{\theta,x} \leq \Cst\vass{\theta_i-x_i}^2
		.
	\end{equation} 
	Then it holds for all $\theta=(\theta_1,\dots,\theta_\pars)\in\R^{\pars}$, $x=(x_1,\dots,x_\pars)\in[a,b]^\pars$ that
	\begin{equation}
		\label{eq:setup:gen_grad:2}
		\pr{\theta_i-\fc}
		\pr{\cst+(\Cst-\cst)\indicator{(-\infty,\fc]}(\theta_i)}
		\leq
		g(\theta,x)
		\leq
		\pr{\theta_i+\fc}
		\pr{\cst+(\Cst-\cst)\indicator{[-\fc,\infty)}(\theta_i)}.
	\end{equation}
\end{athm}

\begin{proof}[Proof of \cref{lemma:properties:gen:gradient}] 
	\Nobs that \eqref{eq:case2:dist:to:theta:i} \proves\ that for all $\theta=(\theta_1,\dots,\theta_\pars)\in\R^{\pars}$,
	$x=(x_1,\dots,x_\pars)\in[a,b]^\pars$ with $\theta_i>x_i$ it holds that
	\begin{equation}
		\label{eq:upper:lower}
		\begin{split}
	\pr{\theta_i-\fc}
			\pr{\cst+(\Cst-\cst)\indicator{(-\infty,\fc]}(\theta_i)}
	\leq
	\cst(\theta_i-\fc)
	\leq	\cst(\theta_i-x_i)
	\leq g(\theta,x)
	\end{split}
	\end{equation} 
		\begin{equation}
			\label{eq:upper:upper}
		\begin{split}
 \qqandqq
 g(\theta,x)
			\leq \Cst(\theta_i-x_i)
			\leq \Cst(\theta_i+\fc)
			=\pr{\theta_i+\fc}
			\pr{\cst+(\Cst-\cst)\indicator{[-\fc,\infty)}(\theta_i)}.
		\end{split}
	\end{equation}
	\Moreover \eqref{eq:case2:dist:to:theta:i} \proves\ that for all $\theta=(\theta_1,\dots,\theta_\pars)\in\R^{\pars}$,
	$x=(x_1,\dots,x_\pars)\in[a,b]^\pars$ with $\theta_i<x_i$ it holds that
	\begin{equation}
		\label{eq:lower:lower}
		\begin{split}
			\pr{\theta_i-\fc}
			\pr{\cst+(\Cst-\cst)\indicator{(-\infty,\fc]}(\theta_i)}
			=
			\Cst(\theta_i-\fc)
			\leq	\Cst(\theta_i-x_i)
			\leq g(\theta,x)
		\end{split}
	\end{equation} 
	\begin{equation}
		\label{eq:lower:upper}
		\begin{split}
			\qqandqq
			g(\theta,x)
			\leq \cst(\theta_i-x_i)
			\leq \cst(\theta_i+\fc)
			\leq\pr{\theta_i+\fc}
			\pr{\cst+(\Cst-\cst)\indicator{[-\fc,\infty)}(\theta_i)}.
		\end{split}
	\end{equation}
		\Moreover \eqref{eq:case2:dist:to:theta:i} \proves\ that for all $\theta=(\theta_1,\dots,\theta_\pars)\in\R^{\pars}$,
	$x=(x_1,\dots,x_\pars)\in[a,b]^\pars$ with $\theta_i=x_i$ it holds that
	\begin{equation}
		\begin{split}
			\pr{\theta_i-\fc}
			\pr{\cst+(\Cst-\cst)\indicator{(-\infty,\fc]}(\theta_i)}
			=
			\Cst(\theta_i-\fc)
			\leq g(\theta,x)
		\end{split}
	\end{equation} 
	\begin{equation}
		\begin{split}
			\qqandqq
			g(\theta,x)
			\leq \Cst(\theta_i+\fc)
			\leq\pr{\theta_i+\fc}
			\pr{\cst+(\Cst-\cst)\indicator{[-\fc,\infty)}(\theta_i)}.
		\end{split}
	\end{equation}
	Combining this,
	\eqref{eq:upper:lower}, and
	\eqref{eq:upper:upper} with
	\eqref{eq:lower:lower} and
	\eqref{eq:lower:upper} \proves[pei] \eqref{eq:setup:gen_grad:2}.
	\finishproofthus
\end{proof}

\begin{athm}{cor}{THM:non_convergence_modified_Adam:specific:setup:SCOPE3}
	%	\label{prop:non_convergence_modified_Adam:specific:setup}
	Let $\pars\in\N$, 
	$a\in\R$,
	$b\in[a,\infty)$,
	$\eps,\cst,\Cst\in(0,\infty)$,
	$\alpha\in[0,1)$,
	$\beta\in(\alpha^2,1)$,
	$\fc\in[\max\{1,\vass{a},\vass{b}\},\infty)$,
	%	for every $n\in\N$ let 
	%	$\batch_n\in\N$, $\gamma_n\in[0,\infty)$, $\bscl_n\in[1,\pr{1-\beta^n}^{-1}]$, 
	let 
	$\batch\colon\N\to\N$ and
	$\gamma\colon\N\to\R$ satisfy
	\begin{equation}
		\label{eq:setup:1:old:cond}
		%\textstyle
		\liminf_{n\to\infty}\gamma_n>0
		\qqandqq
		\limsup_{n\to\infty}(\gamma_n+\batch_n)<\infty,
		\vspace{-0.2cm}
	\end{equation}
	let $ ( \Omega, \cF,(\mathbb{F}_n)_{n\in\N_0}, \P ) $ be a filtered probability space,
	let	$
	X_{n,j} =(X^{(1)}_{n,j},\ldots,X^{(\pars)}_{n,j})\colon \Omega \to [ a, b ]^{\pars}
	$, $n,j\in\N$, be \iid\ random variables,
	assume for all $n,j\in\N$ that $X_{n,j}$ is $\fil_n$-measurable,
	assume for all $n\in\N$ that $\sigma\pr{\pr{X_{n,j}}_{j\in\{1,2,\dots,\batch_n\}}}$ and $\fil_{n-1}$ are independent,
	let $g=(g_1,\dots,g_{\pars})\colon\R^{\pars}\times[a,b]^{\pars}\to\R^{\pars}$ be measurable, 
	let
	$\bscl\colon\N^2\to[\fc^{-1},\fc]$, 
	$ \Theta=(\Theta^{(1)},\ldots,\Theta^{(\pars)}) \colon\N_0\times\Omega \to \R^{\pars}$,
	$\mom=(\mom^{(1)},\ldots,\mom^{(\pars)})\colon\N_0\times\Omega\to\R^{\pars}$, and 
	$\MOM=(\MOM^{(1)},\ldots,\MOM^{(\pars)})\colon\N_0\times\Omega\to[0,\infty)^{\pars}$ satisfy
	for all $n\in\N$, $i\in\{1,2,\dots,\pars\}$ that
	\begin{equation}
		\label{prop:eq:def:momentum:SCOPE3_2}
		\begin{split}
			%									\mom_0=0, 
			%						\quad\qquad 
			\mom_{n}&\textstyle=\alpha \mom_{n-1}+(1-\alpha)\PRb
			{\frac{1}{\batch_{n}}\sum_{j=1}^{\batch_{n}}g(\Theta_{n-1},X_{n,j})},
		\end{split}
	\end{equation}
	\begin{equation}
		\label{prop:eq:def:RMS:factor:SCOPE3_2}
		\begin{split}
			%			\MOM^{(i)}_0=0,
			%			\quad\qquad
			\MOM_{n}^{(i)}&\textstyle=\beta \MOM_{n-1}^{(i)}+(1-\beta)\PRb
			{\frac{1}{\batch_{n}}\sum_{j=1}^{\batch_{n}}g_i(\Theta_{n-1},X_{n,j})}^2,
		\end{split}
	\end{equation}
	\begin{equation}
		\label{prop:eq:def:SCOPE3:recursion}
		\begin{split}
			\text{and}
			\qquad
			\Theta_{ n }^{(i)}
			&=\Theta_{ n-1  }^{(i)}-\gamma_{n}  \prb{\eps+\PRb{\bscl(n,i) \MOM_{n}^{(i)}}^{\nicefrac{1}{2}}}^{-1}\!\mom_{n}^{(i)},
		\end{split}
	\end{equation}
	assume that $\Theta_0$, $\mom_0$, and $\MOM_0$ are $\fil_0$-measurable, and 
	let $i\in\{1,2,\dots,\pars\}$ satsify
	\begin{equation}
		\label{eq:start:mom:bound}
		\max\pRb{\PRb{\pr{1-\beta}^{-1}\MOM_0^{(i)}}^{\nicefrac{1}{2}},
			\pr{1-\alpha}^{-1}\vass{\mom_0^{(i)}}
		}
		\leq \Cst(\vass{\Theta_{0}^{(i)}}+\fc),
		\qquad
		\E\PRb{\vass{\Theta_{0}^{(i)}}}<\infty,
	\end{equation}
	and 
	$\inf_{\theta\in\R^{\pars}}\var\pr{g_i(\theta,X_{1,1})}>0$, and
	assume for all 
	$\theta=(\theta_1,\dots,\theta_\pars)\in\R^\pars$,
	$\vartheta=(\vartheta_1,\dots,\vartheta_\pars)\in\R^\pars$, $x=(x_1,\dots,x_\pars)\in[a,b]^\pars$ with $\vartheta_i=x_i$
	that
	\begin{equation}
		\label{prop:eq:setup:Adam:no:alpha:scaling:SCOPE:3}
		\vass{g(\vartheta,x)-\Cst \vartheta_i}\leq\Cst\fc		\qqandqq
		\cst\vass{\theta_i-x_i}^2\leq \pr{\theta_i-x_i}g_i\pr{\theta,x} \leq \Cst\vass{\theta_i-x_i}^2
		.
		\vspace{-0.1cm}
	\end{equation}  
	Then 
	\begin{equation}
		\begin{split}
			\inf_{\substack{\xi\colon \Omega \to \R\\\text{measurable}}}\liminf_{ n \to \infty }
			\E[ | \Theta_n^{(i)} - \xi |^2 ]
			>0.
		\end{split}
	\end{equation}
\end{athm}

\begin{proof}[Proof of \cref{THM:non_convergence_modified_Adam:specific:setup:SCOPE3}]
	\Nobs that \eqref{prop:eq:setup:Adam:no:alpha:scaling:SCOPE:3} and \cref{lemma:properties:gen:gradient} \prove\ for all $\theta=(\theta_1,\dots,\theta_\pars)\in\R^{\pars}$, $x=(x_1,\dots,x_\pars)\in[a,b]^\pars$ that
	\begin{equation}
		\pr{\theta_i-\fc}
		\pr{\cst+(\Cst-\cst)\indicator{(-\infty,\fc]}(\theta_i)}
		\leq
		g_i(\theta,x)
		\leq
		\pr{\theta_i+\fc}
		\pr{\cst+(\Cst-\cst)\indicator{[-\fc,\infty)}(\theta_i)}.
	\end{equation}
	Combining this,
	\eqref{eq:setup:1:old:cond},
	\eqref{prop:eq:def:momentum:SCOPE3_2},
	\eqref{prop:eq:def:RMS:factor:SCOPE3_2},
	\eqref{prop:eq:def:SCOPE3:recursion}, 
	\eqref{eq:start:mom:bound}, 
	and 
	\cref{thm:new:rob:cond} (applied with 
	$\pars \curvearrowleft \pars$,
	$\dim \curvearrowleft \pars$,
	$\cst\curvearrowleft \cst$,
	$\Cst\curvearrowleft\Cst$,
	$X\curvearrowleft X$,
	$g\curvearrowleft g$,
	$\Theta\curvearrowleft\Theta$,
	$\mom\curvearrowleft\mom$,
	$\MOM\curvearrowleft\MOM$,
	in the notation of \cref{thm:new:rob:cond}) 
	\proves\ that
		\begin{equation}
		\begin{split}
			\inf_{\substack{\xi\colon \Omega \to \R\\\text{measurable}}}\liminf_{ n \to \infty }
			\E[ | \Theta_n^{(i)} - \xi |^2 ]
			>0.
		\end{split}
	\end{equation}
	\finishproofthus
\end{proof}

\subsection{Non-convergence of Adam for simple quadratic optimization problems}
\label{subsection:4.4}

In \cref{prop:prop:non_convergence_modified_Adam:specific:setup:main:2} in this subsection we specialize \cref{thm:new:rob:cond} to  the situation where the Adam optimizer is applied to a class of simple quadratic optimization problems (cf.\ \eqref{eq:matrix:loss:and:bounds} in \cref{prop:prop:non_convergence_modified_Adam:specific:setup:main:2}).
In \cref{intro:thm:2} we specialize \cref{prop:prop:non_convergence_modified_Adam:specific:setup:main:2} to the situation where the Adam optimizer is applied to a very simple examplary quadratic optimization problem (cf.\ \eqref{eq:very:simple:quadr:problem} in \cref{intro:thm:2}).
In our proofs of \cref{prop:prop:non_convergence_modified_Adam:specific:setup:main:2}	
and \cref{intro:thm:2}, respectively,
we employ the elementary lower and upper bounds for first-order partial derivatives of a class of quadratic loss functions
in \cref{lem:matrix:Robin:cond:2}
and 
the well-known properties for independendent random variables in \cref{lemma:independence:on:generators} (cf., \eg,  \cite[Theorem~2.16]{klenkeprobability}), 
\cref{lemma:basic:independ:fct} (cf., \eg, \cite[Problem~7.7.b~in~Section~7.3]{Krishna2006}),
\cref{cor:basic:independ:fct},
\cref{cor:idpd:times}, and
\cref{lemma:independ:rv:sigma}.
Only for completeness we include here in this subsection detailed proofs for \cref{lemma:independence:on:generators},
\cref{lemma:basic:independ:fct},
\cref{cor:basic:independ:fct},
\cref{cor:idpd:times}, and
\cref{lemma:independ:rv:sigma}.

\begin{athm}{lemma}{lem:matrix:Robin:cond:2}	
	Let $\pars,\dim\in\N$, 
	$i\in\{1,2,\dots,\pars\}$, 
	$v\in\R^{\pars}$,
	$\lambda\in\R\backslash\{0\}$,  $A\in\R^{\dim\times\pars}$
	satisfy\footnote{Note that for all $m,n\in\N$, $v\in\R^n$, $w\in\R^m$, $M\in\R^{m\times n}$ it holds that $\langle w,Mv\rangle=\langle M^*w,v\rangle$.} for all $x=(x_1,\dots,x_{\pars})\in\R^{\pars}$ that $\langle x,v\rangle=x_i$ and
	$A^*Av=\lambda v$,
	and let $\loss\colon\R^{\pars}\times\R^{\dim}\to\R$ and $g\colon\R^{\pars}\times\R^{\dim}\to\R$ satisfy for all $\theta=(\theta_1,\dots,\theta_\pars)\in\R^{\pars}$, $x\in[a,b]^\dim$ that
	\begin{equation}
		\label{eq:setup:linear:approx}
		\loss(\theta,x)=\norm{A\theta-x}^2
		\qqandqq
		g(\theta,x)=\prb{\tfrac{\partial\loss}{\partial\theta_i}}(\theta,x)
		.
	\end{equation}
	Then
	\begin{enumerate}[label=(\roman*)]
				\item \label{lem:matrix:Robin:cond:statement:2.2} it holds that $\lambda>0$,
		\item \label{lem:matrix:Robin:cond:statement:2.1}it holds for all $\theta=(\theta_1,\dots,\theta_\pars)\in\R^{\pars}$, $x\in[a,b]^\dim$ that $
			g\pr{\theta,x} 
			=
			2\lambda\theta_i-2\langle Av,x\rangle$,
 and
		\item \label{lem:matrix:Robin:cond:statement:2}it holds for all $\theta=(\theta_1,\dots,\theta_\pars)\in\R^{\pars}$, $x\in[a,b]^\dim$ that
			\begin{equation}
			2\lambda(\theta_i-\lambda^{-1}\norm{Av}\sqrt{d}\max\{\vass{a},\vass{b}\})
			\leq
			g\pr{\theta,x} 
			\leq
			2\lambda(\theta_i+\lambda^{-1}\norm{Av}\sqrt{d}\max\{\vass{a},\vass{b}\})
			.
		\end{equation}
	\end{enumerate}

\end{athm}

\begin{proof}[Proof of \cref{lem:matrix:Robin:cond:2}]
	Throughout this proof let $\uv_j\in\R^{\pars}$, $j\in\{1,2,\dots,\pars\}$, satisfy for all $x=(x_1,\dots,x_{\pars})\in\R^{\pars}$, $j\in\{1,2,\dots,\pars\}$ that $\langle x , \uv_j \rangle =x_j$.
	\Nobs that the assumption that $A^{*}Av=\lambda v$ \proves\ that
	\begin{equation}
		\norm{Av}^2
		=\langle Av,Av\rangle
		=\langle A^{*}Av,v\rangle
		=\langle \lambda v,v\rangle
		=\lambda \norm{v}^2
		=\lambda
		.
	\end{equation}
	This and the fact that $\lambda\neq0$ \prove\ that $\lambda>0$.
	This \proves\ \cref{lem:matrix:Robin:cond:statement:2.2}.
	\Moreover \eqref{eq:setup:linear:approx} \proves\ that for all $\theta\in\R^{\pars}$, $x\in[a,b]^\dim$ it holds that
	\begin{equation}
		\loss(\theta,x)
		=
		\langle A\theta-x, A\theta-x\rangle
		= \langle A\theta,A\theta\rangle - 2\langle A\theta,x\rangle +\norm{x}^2
		.
	\end{equation}
	This \proves\ that for all $\theta\in\R^{\pars}$, $x\in[a,b]^\dim$ it holds that
	\begin{equation}
		\label{eq:gradient:in:theta:for:lin}
		(\nabla_{\theta}\loss)(\theta,x)
		=
		2A^*A\theta-2A^*x
		=
		2A^*(A\theta-x)
		.
	\end{equation}
	\Nobs that the assumption that $A^*Av=\lambda v$ and the fact that $\uv_i=v$ \prove\ that for all $j\in\{1,2,\dots,\pars\}$ it holds that
	\begin{equation}
		\langle A\uv_i,A \uv_j\rangle
		=\langle A^{*}A\uv_i,\uv_j\rangle
		=\langle \lambda\uv_i, \uv_j\rangle
		=\lambda\indicator{\{i\}}(j).
	\end{equation}
	This and \eqref{eq:gradient:in:theta:for:lin} \prove\ that for all $x\in[a,b]^\dim$, $\theta=(\theta_1,\dots,\theta_\pars)\in\R^\pars$ it holds that
	\begin{equation}
		\begin{split}
			(\tfrac{\partial\loss}{\partial\theta_i})(\theta,x)
			&=\langle\uv_i,(\nabla_{\theta}\loss)(\theta,x)\rangle
			=\langle\uv_i,2A^*(A\theta-x)\rangle
			=2\langle A\uv_i,A\theta-x\rangle
			\\&=2\langle A\uv_i,A\theta\rangle-2\langle A\uv_i,x\rangle
			=\textstyle2\langle A\uv_i,A\pr{\sum_{j=1}^\pars e_j\theta_j}\rangle
			-2\langle A\uv_i,x\rangle
			\\&=\textstyle2\sum_{j=1}^\pars\langle A\uv_i,A e_j\rangle\theta_j
			-2\langle A\uv_i,x\rangle
			=\textstyle2\sum_{j=1}^\pars\lambda\indicator{\{i\}}(j)\theta_j-2\langle A\uv_i,x\rangle
			\\&=2\lambda\theta_i-2\langle A\uv_i,x\rangle
			.
		\end{split}
	\end{equation}
	This \proves\ \cref{lem:matrix:Robin:cond:statement:2.1}.
	\Nobs that for all $x\in[a,b]^\dim$, $\theta=(\theta_1,\dots,\theta_\pars)\in\R^\pars$ it holds that
	\begin{equation}
		\begin{split}
			\label{eq:upperbound:lin:gradient:comp}
		2\lambda\theta_i-2\langle A\uv_i,x\rangle
		\leq
		2\lambda\theta_i+2\norm{A\uv_i}\norm{x}
		&\leq
		2\lambda\theta_i+2\norm{A\uv_i}\sqrt{d}\max\{\vass{a},\vass{b}\}
		\\&\leq
		2\lambda(\theta_i+\lambda^{-1}\norm{A\uv_i}\sqrt{d}\max\{\vass{a},\vass{b}\})
		.
		\end{split}
	\end{equation}
		\Moreover for all $x\in[a,b]^\dim$, $\theta=(\theta_1,\dots,\theta_\pars)\in\R^\pars$ it holds that
	\begin{equation}
		\begin{split}
		2\lambda\theta_i-2\langle A\uv_i,x\rangle
		\geq
		2\lambda\theta_i-2\norm{A\uv_i}\norm{x}
		&\geq
		2\lambda\theta_i-2\norm{A\uv_i}\sqrt{d}\max\{\vass{a},\vass{b}\}
		\\&\geq
		2\lambda(\theta_i-\lambda^{-1}\norm{A\uv_i}\sqrt{d}\max\{\vass{a},\vass{b}\})
		.
		\end{split}
	\end{equation}
	This and \eqref{eq:upperbound:lin:gradient:comp} \prove\ \cref{lem:matrix:Robin:cond:statement:2}.
	\finishproofthus
\end{proof}

\begin{lemma}[{Independent generators}]
	\label{lemma:independence:on:generators}
	Let $I$ be a set,
	let $(S_i,\mathcal{S}_i)$, $i\in I$, be measurable spaces,
	let $ ( \Omega, \cF,\P ) $ be a probability space,
	let $X_i\colon\Omega\to S_i$, $i\in I$, be random variables,
	for every $i\in I$ let $\mathcal{E}_i\subseteq\mathcal{S}_i$ satisfy for every $A,B\in\mathcal{E}_i$ 
	and every sigma-algebra $\mathcal{A}$ on $S_i$ with $\mathcal{E}_i\subseteq \mathcal{A}$ that
	\begin{equation}
		\label{eq:gen:schnittstabil}
		(A\cap B)\in \mathcal{E}_i
		\qqandqq
		\mathcal{S}_i\subseteq\mathcal{A},
	\end{equation}
	and assume for all $n\in\N$, $i_1,i_2,\dots,i_n\in I$, $A_1\in\mathcal{E}_{i_1}$,
	$A_2\in\mathcal{E}_{i_2}$,
	$\dots$,
	$A_n\in\mathcal{E}_{i_n}$ that
	\begin{equation}
		\label{eq:indep:on:gen}
		\textstyle
		\P\prb{X_{i_1}\in A_1, X_{i_2}\in A_2,\dots,X_{i_n}\in A_n}=\prod_{k=1}^n\P(X_{i_k}\in A_k).
	\end{equation}
	Then $X_i$, $i\in I$, are independent.
\end{lemma}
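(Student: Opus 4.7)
The plan is to apply Dynkin's $\pi$-$\lambda$ theorem one coordinate at a time, upgrading each $A_k$ from the generating class $\mathcal{E}_{i_k}$ to the full sigma-algebra $\mathcal{S}_{i_k}$. First I would unpack the hypothesis on $\mathcal{E}_i$: the condition $A,B\in\mathcal{E}_i\Rightarrow A\cap B\in\mathcal{E}_i$ says that $\mathcal{E}_i$ is a $\pi$-system, while the condition that every sigma-algebra $\mathcal{A}$ on $S_i$ with $\mathcal{E}_i\subseteq\mathcal{A}$ satisfies $\mathcal{S}_i\subseteq\mathcal{A}$, applied to $\mathcal{A}=\sigma(\mathcal{E}_i)$, yields $\sigma(\mathcal{E}_i)=\mathcal{S}_i$. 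These two facts are precisely what is needed to invoke Dynkin's theorem below.

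Next I would fix $n\in\mathbb{N}$ and pairwise distinct indices $i_1,\dots,i_n\in I$ and prove by induction on $j\in\{0,1,\dots,n\}$ that the factorization
\begin{equation*}
\P(X_{i_1}\in A_1,\dots,X_{i_n}\in A_n)=\prod_{k=1}^n\P(X_{i_k}\in A_k)
\end{equation*}
holds whenever $A_1\in\mathcal{S}_{i_1},\dots,A_j\in\mathcal{S}_{i_j}$ and $A_{j+1}\in\mathcal{E}_{i_{j+1}},\dots,A_n\in\mathcal{E}_{i_n}$. The case $j=0$ is exactly the assumed factorization on generators. For the induction step from $j$ to $j+1$, freezing the coordinates $k\neq j+1$, I would form the class $\mathcal{L}$ of all $A\in\mathcal{S}_{i_{j+1}}$ for which the identity with $A$ in the $(j+1)$-st slot holds; finite additivity of $\P$ gives closure of $\mathcal{L}$ under proper differences, continuity from below gives closure under countable increasing unions, and the induction hypothesis at step $j$ gives $\mathcal{E}_{i_{j+1}}\subseteq\mathcal{L}$. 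Since $\mathcal{E}_{i_{j+1}}$ is a $\pi$-system with $\sigma(\mathcal{E}_{i_{j+1}})=\mathcal{S}_{i_{j+1}}$, Dynkin's $\pi$-$\lambda$ theorem then yields $\mathcal{L}=\mathcal{S}_{i_{j+1}}$, completing the induction step.

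The one mildly delicate point is verifying $S_{i_{j+1}}\in\mathcal{L}$, since substituting $A=S_{i_{j+1}}$ collapses the identity to an $(n-1)$-fold factorization over the remaining indices; this follows by applying the original hypothesis of the lemma to the $(n-1)$-tuple of remaining indices, so no additional machinery is required. At $j=n$ the induction gives the full factorization on distinct finite index tuples with arbitrary sets in the respective sigma-algebras, which by definition is the independence of the family $(X_i)_{i\in I}$. The main obstacle I anticipate is purely bookkeeping, namely keeping the freezing and unfreezing of coordinates transparent across the two nested inductions (on $n$ and on $j$); there are no analytic difficulties here beyond the standard $\pi$-$\lambda$ argument.
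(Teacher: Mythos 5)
Your argument is correct and, at bottom, uses the same mathematical ingredient as the paper, but it packages it differently: the paper's proof is a two-line citation, observing that \eqref{eq:indep:on:gen} says the classes of events $\cu{\{X_i\in B\}\colon B\in\mathcal{E}_i}$, $i\in I$, are independent $\pi$-systems and then invoking Klenke's Theorem~2.16 (together with \eqref{eq:gen:schnittstabil}, which gives that $\mathcal{E}_i$ is intersection-stable with $\sigma(\mathcal{E}_i)=\mathcal{S}_i$), whereas you inline the proof of that cited theorem by running the Dynkin $\pi$-$\lambda$ argument coordinate by coordinate. What your route buys is self-containedness (no appeal to the textbook statement about independent generating classes); what the paper's route buys is brevity, since the slot-by-slot upgrade you carry out is exactly the content of the cited result.

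One step needs tightening. In verifying $S_{i_{j+1}}\in\mathcal{L}$ you claim the collapsed $(n-1)$-fold identity ``follows by applying the original hypothesis of the lemma to the $(n-1)$-tuple of remaining indices.'' For $j\geq 1$ this is not literally available: after deleting the $(j+1)$-st slot the remaining tuple still carries the sets $A_1,\dots,A_j$ in the full sigma-algebras $\mathcal{S}_{i_1},\dots,\mathcal{S}_{i_j}$, not in the generating classes, so \eqref{eq:indep:on:gen} does not apply directly. What you need is the already-established factorization for $n-1$ indices with the first $j$ slots general --- precisely what the outer induction on $n$ that you mention at the end supplies (complete the inner induction for $n-1$ indices before starting the one for $n$). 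With that nesting made explicit, together with the standard checks that $\mathcal{L}$ is closed under proper differences (finite additivity on nested events) and increasing unions (continuity from below), and with the observation that restricting to pairwise distinct $i_1,\dots,i_n$ suffices for the definition of independence, your proof is complete.
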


\begin{proof}[Proof of \cref{lemma:independence:on:generators}]
	\Nobs that \eqref{eq:indep:on:gen}
	\proves\ that for all $n\in\N$, $i_1,i_2,\dots,i_n\in I$ it holds that $\cup_{B\in\mathcal{E}_i}\{\{\omega\in\Omega\colon X_i(\omega)\in B\}\}  $, $i\in \{i_1,i_2,\dots,i_n\}$, are independent classes of events (see, \eg, \cite[Definition~2.11]{klenkeprobability}). 
	Combining this with \cite[Theorem~2.16]{klenkeprobability} and \eqref{eq:gen:schnittstabil} \proves\ that $X_i$, $i\in I$, are independent.
	The proof of \cref{lemma:independence:on:generators} is thus complete.
	%\finishproofthus
\end{proof}

\begin{athm}{lemma}{lemma:basic:independ:fct}
	Let $N\in\N$,
	let $(D_n,\mathcal{D}_n)$, $n\in\N$, and $(E_n,\mathcal{E}_n)$, $n\in\N$, be measurable spaces,
	let	$ ( \Omega, \cF,\P ) $ be a probability space, 
	let	$
	X_{n}\colon \Omega \to D_n
	$, $n\in\N$, be independent random variables,
	and let
	$f_n\colon D_n\to E_n$, $n\in\N$, be measurable.
	Then $f_k(X_k)$, $k\in\{1,2,\dots,N\}$, are independent.
\end{athm}

\begin{proof}[Proof of \cref{lemma:basic:independ:fct}]
	\Nobs that for all $A_n\in\mathcal{E}_n$, $n\in\N$, and all $B_n\in\mathcal{D}_n$, $n\in\N$, with $\forall\,n\in\N\colon B_n=\{b\in D_n\colon f_n(b)\in A_n\}$ it holds that
	\begin{equation}
		\begin{split}
			&\P\prb{f_1(X_1)\in A_1,f_2(X_2)\in A_2,\dots,f_N(X_N)\in A_N}
			\\&=
			\P\prb{X_1\in B_1,X_2\in B_2,\dots,X_N\in B_N}
			=\textstyle\prod_{k=1}^N \P(X_k\in B_k)
			=\textstyle\prod_{k=1}^N \P(f_k(X_k)\in A_k)
			.
		\end{split}
	\end{equation}
	This \proves\ that $f_k(X_k)$, $k\in\{1,2,\dots,N\}$, are independent.
	\finishproofthus
\end{proof}

\begin{athm}{cor}{cor:basic:independ:fct}
	Let $I$ be a set,
	let $(D_i,\mathcal{D}_i)$, $i\in I$, and $(E_i,\mathcal{E}_i)$, $i\in I$, be measurable spaces,
	let	$ ( \Omega, \cF,\P ) $ be a probability space, 
	let	$
	X_{i}\colon \Omega \to D_i
	$, $i\in I$, be independent random variables,
	and let
	$f_i\colon D_i\to E_i$, $i\in I$, be measurable.
	Then $f_i(X_i)$, $i\in I$, are independent.
\end{athm}

\begin{proof}[Proof of \cref{cor:basic:independ:fct}]
	\Nobs \cref{lemma:basic:independ:fct} \proves\ that for every finite subset $J\subseteq I$ it holds that $f_i(X_i)$, $i\in J$, are independent.
	\finishproofthus
\end{proof}

\newcommand{\mbs}{S}
\newcommand{\pow}{P}
\newcommand{\power}[1]{\textit{2}^{\mspace{2mu}#1}}

\begin{athm}{lemma}{cor:idpd:times}
	Let	$ ( \Omega, \cF,\P ) $ be a probability space,
	let $I$ and $J$ be sets,
	let $(\mbs_i,\mathcal{\mbs}_i)$, $i\in I$, be measurable spaces,
	let	$
	X_{i}\colon \Omega \to \mbs_i
	$, $i\in I$, be independent random variables,
	let $\call_j\subseteq I$, $j\in J$, be disjoint subsets of $I$,
	and for every $j\in J$ let
	$Y_j\colon \Omega\to\pr{\times_{i\in\call_j}\mbs_i}$ satisfy for all $\omega\in\Omega$ that $Y_j(\omega)=(X_i(\omega))_{i\in\call_j}$.
	Then $Y_j$, $j\in J$, are independent random variables.
\end{athm}

\begin{proof}[Proof of \cref{cor:idpd:times}]
	Throughout this proof\footnote{Note that for all sets $A$ and $B$ it holds that $A\in \power{B}$ if and only if $A\subseteq B$ (Note that for all sets $A$ and $B$ it holds that $A$ is an element of the power set of $B$ if and only if $A\subseteq B$). } 
	for every $L\subseteq I$ let $\pow_L=\power{\pr{\times_{i\in L}\mbs_i}}$
	and let
	$\mathcal{G}\colon\power{I}\to\cup_{L\subseteq I}\power{\pow_L}$ and 
	$\mathcal{E}\colon\power{I}\to\cup_{L\subseteq I}\power{\pow_L}$ satisfy for all $L\subseteq I$ that $\mathcal{G}(L)\subseteq\pow_L$ is the product sigma-algebra on $\times_{i\in L}\mbs_i$ 
	and 
	\begin{equation}
		\label{eq:def:generator}
		\mathcal{E}(L)=\cup_{n\in\N}\cup_{i_1,i_2,\dots,i_n\in L}\{(A_k)_{k\in L}\in\pr{\times_{k\in L}\mathcal{S}_i}\colon\PR{\forall\, k\in L\backslash\{i_1,i_2,\dots,i_n\}\colon A_k=S_k}\}.
	\end{equation}
	\Nobs that \eqref{eq:def:generator} \proves\ that for every $L\subseteq I$, $A,B\in\mathcal{E}(L)$
	and every sigma-aglebra $\mathcal{A}$ on $\times_{i\in L}\mbs_i$ with $\mathcal{E}(L)\subseteq \mathcal{A}$
	it holds that
	\begin{equation}
		\label{eq:schnittstabil:und:gen}
		(A\cap B)\in \mathcal{E}(L)
		\qqandqq
		\mathcal{G}(L)\subseteq\mathcal{A}
		.
	\end{equation}
	\Moreover the fact that for all $i\in I$ it holds that $X_i$ is 
	measurable
	\proves\ that for all $j\in J$ it holds that
	$
		Y_j\text{ is measurable} 
	$.
	This and
	the assumption that for all $j\in J$ it holds that $Y_j=(X_i)_{i\in\call_j}$
	\prove\ that for all $j\in J$, $L\subseteq I\backslash K_j$,
	$(A_k)_{k\in (\call_j\cup L)}\in\mathcal{E}(\call_j\cup L)$ it holds that
	\begin{equation}
		\begin{split}
			&\P\prb{Y_j\in \pr{\times_{i\in \call_j}A_i},
				(X_i)_{i\in L}\in \pr{\times_{i\in  L}A_i}}
			\\&=\P\prb{(X_i)_{i\in\call_j}\in \pr{\times_{i\in \call_j}A_i},
				(X_i)_{i\in L}\in \pr{\times_{i\in  L}A_i}}
			\\&=\P\prb{(X_i)_{i\in\pr{\call_j\cup L}}\in \pr{\times_{i\in \pr{\call_j\cup L}}A_i}}
			\\&=\textstyle\prod_{i\in\{k\in\pr{\call_j\cup L}\colon A_k\neq S_k\}}\P(X_i\in A_i)
			\\&=\textstyle\prb{\prod_{i\in\{k\in\call_j\colon A_k\neq S_k\}}\P(X_i\in A_i)}\prb{\prod_{i\in\{k\in L\colon A_k\neq S_k\}}\P(X_i\in A_i)}
			\\&=\P\prb{(X_i)_{i\in\call_j}\in \pr{\times_{i\in \call_j}A_i}}
			\P\prb{
				(X_i)_{i\in L}\in \pr{\times_{i\in  L}A_i}}
			\\&=\P\prb{Y_j\in \pr{\times_{i\in \call_j}A_i}}
			\P\prb{
				(X_i)_{i\in L}\in \pr{\times_{i\in  L}A_i}}
			.
		\end{split}
	\end{equation}
	This,
	\eqref{eq:def:generator}, and
	the assumption that for all $j\in J$ it holds that $Y_j=(X_i)_{i\in\call_j}$, \prove\ that for all $m\in\N$, $j_1,j_2,\dots,j_m\in J$, $A_1\in\mathcal{E}(\call_{j_1})$, $A_2\in\mathcal{E}(\call_{j_2})$, $\dots$,
	$A_N\in\mathcal{E}(\call_{j_N})$ it holds that 
	\begin{equation}
		\begin{split}
			&\P\prb{(Y_{j_1},Y_{j_2},\dots,Y_{j_m})\in\pr{\times_{k=1}^m A_k}}
			\\&=
			\P\prb{Y_{j_1}\in A_1,(Y_{j_2},Y_{j_3},\dots,Y_{j_m})\in\pr{\times_{k=2}^m A_k}}
			\\&=
			\P\prb{Y_{j_1}\in A_1,(X_i)_{i\in\pr{\call_{j_2}\cup\call_{j_3}\cup\dots\cup\call_{j_m}}}\in\pr{\times_{k=2}^m A_k}}
			\\&=
			\P(Y_{j_1}\in A_1)\P\prb{(X_i)_{i\in\pr{\call_{j_2}\cup\call_{j_3}\cup\dots\cup\call_{j_m}}}\in\pr{\times_{k=2}^m A_k}}
			\\&=
			\P(Y_{j_1}\in A_1)\P\prb{(Y_{j_2},Y_{j_3},\dots,Y_{j_m})\in\pr{\times_{k=2}^m A_k}}
			\\&=
			\P(Y_{j_1}\in A_1)\P\prb{Y_{j_2}\in A_2,(Y_{j_3},Y_{j_4},\dots,Y_{j_m})\in\pr{\times_{k=3}^m A_k}}
			\\&=
			\P(Y_{j_1}\in A_1)\P(Y_{j_2}\in A_2)\P\prb{(Y_{j_3},Y_{j_4},\dots,Y_{j_m})\in\pr{\times_{k=3}^m A_k}}
			\\&=\ldots
			=\textstyle\prod_{k=1}^m\P(Y_{j_k}\in A_k)
			.
		\end{split}
	\end{equation}
	Combining \eqref{eq:schnittstabil:und:gen} with \cref{lemma:independence:on:generators} (applied with 
	$I\curvearrowleft J$,
	$(S_i,\mathcal{S}_i)_{i\in I}\curvearrowleft (\times_{i\in\call_j} S_i,\mathcal{G}(\call_j))_{j\in J}$,
	$( \Omega, \cF,\P ) \curvearrowleft ( \Omega, \cF,\P ) $,
	$(X_i)_{i\in I}\curvearrowleft (Y_j)_{j\in J}$,
	$(\mathcal{E}_i)_{i\in I}\curvearrowleft(\mathcal{E}(\call_j))_{j\in J}$
	in the notation of \cref{lemma:independence:on:generators}) hence
	\proves[pis]\ that $Y_j$, $j\in J$, are independent.
	\finishproofthus
\end{proof}

\begin{athm}{lemma}{lemma:independ:rv:sigma}
	Let $N,\pars,\dim\in\N$,
	let $\batch\colon\N\to\N$ be a function,
	let	$ ( \Omega, \cF,\P ) $ be a probability space, 
	let	$
	X_{n,j}\colon \Omega \to \R^{\dim}
	$, $n,j\in\N$, be independent random variables,
	let $Y\colon\Omega\to\R^{\pars}$ be a random variable,
	and assume that $(X_{n,j})_{(n,j)\in\{(k,l)\in\N^2\colon l\leq\batch_k\}}$ and $Y$ are independent.
	Then $(X_{N+1,j})_{j\in\{1,2,\dots,\batch_{N+1}\}}$ and $\pr{Y,(X_{n,j})_{(n,j)\in\{(k,l)\in\N^2\colon (k\leq N)\wedge (l\leq\batch_k)\}}}$ are independent.
\end{athm}

\begin{proof}[Proof of \cref{lemma:independ:rv:sigma}]
	Throughout this proof for every $n\in\N$ let $Z_n\colon\Omega \to (\R^{\dim})^{\batch_n}$ satisfy for all $\omega\in\Omega$ that
	\begin{equation}
		\label{eq:def:Zs}
		Z_n(\omega)=(X_{n,j}(\omega))_{j\in\{1,2,\dots,\batch_n\}}.
	\end{equation}
	\Nobs that \eqref{eq:def:Zs}, the assumption that $
	X_{n,j}$, $n,j\in\N$, are independent, and \cref{cor:idpd:times} (applied with 
	$I\curvearrowleft\{(k,l)\in\N^2\colon l\leq\batch_k\}$,
	$J\curvearrowleft\{1,2,\dots,N+1\}$,
	$(K_j)_{j\in J}\curvearrowleft(\{(j,i)\in\N^2\colon{i\leq\batch_j}\})_{j\in \{1,2,\dots,N+1\}}$,
	$(X_i)_{i\in I}\curvearrowleft (X_{n,j})_{(n,j)\in\{(k,l)\in\N^2\colon l\leq\batch_k\}}$)
	in the notation of \cref{cor:idpd:times}),  \prove\ that 
	\begin{equation}
		\label{eq:Z:as:coords:idpd}
		%Z_n, n\in\{\}
		Z_1,Z_2,\dots,Z_{N+1}
	\end{equation}
	are independent.
	\Moreover \eqref{eq:def:Zs}, the assumption that $(X_{n,j})_{(n,j)\in\{(k,l)\in\N^2\colon l\leq\batch_k\}}$ and $Y$ are independent, and \cref{cor:basic:independ:fct} (applied with 
	$I\curvearrowleft\{1,2\}$,
	$f_1\curvearrowleft\pr{\R^{\pars}\ni x\mapsto x\in\R^{\pars}}$,
	$f_2\curvearrowleft\pr{(\R^{\dim})^{\{(k,l)\in\N^2\colon l\leq\batch_k\}}\ni (x_{n,j})_{(n,j)\in\{(k,l)\in\N^2\colon l\leq\batch_k\}}\mapsto ((x_{n,j})_{j\in\{1,2,\dots,\batch_n\}})_{n\in\{1,2,\dots,N+1\}}\in(\R^{\dim})^{\{(k,l)\in\N^2\colon (k\leq N+1)\wedge (l\leq\batch_k)\}}}$
	in the notation of \cref{cor:basic:independ:fct}) \prove\ that
	\begin{equation}
		Y\qqandqq(Z_1,Z_2,\dots,Z_{N+1})
	\end{equation}
	are independent.
	This and \eqref{eq:Z:as:coords:idpd} \prove\ that for all $A\in\mathcal{B}(\R^{\pars})$, $B_1\in\mathcal{B}\prb{(\R^{\dim})^{\batch_{1}}}$,
	$B_2\in\mathcal{B}\prb{(\R^{\dim})^{\batch_{2}}}$,
	$\dots$,
	$B_{N+1}\in\mathcal{B}\prb{(\R^{\dim})^{\batch_{N+1}}}$ it  holds that
	\begin{equation}
		\begin{split}
			&\P(Y\in A, Z_1\in B_1, Z_2\in B_2,\dots,Z_{N+1}\in B_{N+1})
			\\&=\P(Y\in A, (Z_1,Z_2,\dots,Z_{N+1})\in (B_1\times B_2\times\dots\times B_{N+1}))
			\\&=\P(Y\in A)\P( (Z_1,Z_2,\dots,Z_{N+1})\in (B_1\times B_2\times\dots\times B_{N+1}))
			\\&=\P(Y\in A)\P( Z_1\in B_1)\P(Z_2\in B_2)\dots\P(Z_{N+1}\in B_{N+1})
			.
		\end{split}
	\end{equation}
	This \proves[pei] that
	$
		Y,Z_1,Z_2,\dots,Z_{N+1}
	$
	are independent.
	This and \cref{cor:idpd:times} (applied with 
	$I\curvearrowleft\{1,2,\dots,N+2\}$,
	$J\curvearrowleft\{1,2\}$,
	%	$(K_j)_{j\in J}\curvearrowleft(\{(j,i)\in\N^2\colon{i\in\N}\})_{j\in \{1,2,\dots,N+1\}}$,
	$(X_i)_{i\in I}\curvearrowleft (Y,Z_1,Z_2,\dots,Z_{N+1})$,
	$Y_1\curvearrowleft (Y,Z_1,Z_2,\dots,Z_N)$,
	$Y_2\curvearrowleft Z_{N+1}$
	in the notation of \cref{cor:idpd:times}) \prove\ that
	\begin{equation}
		(Y,Z_1,Z_2,\dots,Z_N)\qqandqq Z_{N+1}
	\end{equation}
	are independent.
	\finishproofthus
\end{proof}

\begin{athm}{cor}{prop:prop:non_convergence_modified_Adam:specific:setup:main:2}	
	%	\label{prop:non_convergence_modified_Adam:specific:setup}
	Let $\pars,\dim\in\N$, 
	$a\in\R$,
	$b\in(a,\infty)$,
	$\eps\in(0,\infty)$,
	$\alpha\in[0,1)$,
	$\beta\in(\alpha^2,1)$,
	$A\in\R^{\dim\times\pars}$,
	let $ ( \Omega, \cF, \P ) $ be a probability space,
	let	$
	X_{n,j} \colon \Omega \to [ a, b ]^{\dim}
	$, $n,j\in\N$, be \iid\ random variables,
	let 
	$\loss\colon\R^{\pars}\times\R^{\dim}\to\R$,
	$\batch\colon\N\to\N$, and
	$\gamma\colon\N\to\R$
	% $\bscl=(\bscl^{(1)},\dots,\bscl^{(\pars)})\colon\N\to[\fc^{-1},\fc]^{\pars}$ 
	satisfy
	for all 
	$\theta\in\R^\pars$, $x\in\R^\dim$
	that
	\begin{equation}
		\label{eq:matrix:loss:and:bounds}
		\textstyle
		\loss(\theta,x)=\norm{A\theta-x}^2
		,
		\qquad
		\liminf_{n\to\infty}\gamma_n>0,
		\qqandqq
		\limsup_{n\to\infty}(\gamma_n+\batch_n)<\infty,
	\end{equation} 
	let
	$ \Theta=(\Theta^{(1)},\ldots,\Theta^{(\pars)}) \colon\N_0\times\Omega \to \R^{\pars}$,
	$\mom=(\mom^{(1)},\ldots,\mom^{(\pars)})\colon\N_0\times\Omega\to\R^{\pars}$, and 
	$\MOM=(\MOM^{(1)},\ldots,\MOM^{(\pars)})\colon\N_0\times\Omega\to[0,\infty)^{\pars}$ be stochastic processes satisfying
	for all $n\in\N$, $i\in\{1,2,\dots,\pars\}$ that
	\begin{equation}
		\label{prop:eq:def:momentum:main:2.1}
		\begin{split}
			\mom_{n}&\textstyle=\alpha \mom_{n-1}+(1-\alpha)\PRb
			{\frac{1}{\batch_n}\sum_{j=1}^{\batch_n}\prb{\nabla_{\theta}\loss}(\Theta_{n-1},X_{n,j})},
		\end{split}
	\end{equation}
	\begin{equation}
		\label{prop:eq:def:RMS:factor:main:2.2}
		\begin{split}
			\MOM_{n}^{(i)}&\textstyle=\beta \MOM_{n-1}^{(i)}+(1-\beta)\PRb
			{\frac{1}{\batch_n}\sum_{j=1}^{\batch_n}\prb{\frac{\partial\loss}{\partial\theta_i}}(\Theta_{n-1},X_{n,j})}^2,
		\end{split}
	\end{equation}
	\begin{equation}
		\label{prop:eq:matrix:Adam:recursion}
		\begin{split}
			\text{and}
			\qquad
			\Theta_{ n }^{(i)}
			&=\Theta_{ n-1  }^{(i)}-\gamma_n \prb{\eps+\PR{(1-\beta^n)^{-1} \MOM_{n}^{(i)}}^{\nicefrac{1}{2}}}^{-1}\!\mom_{n}^{(i)},
		\end{split}
	\end{equation}
	assume that $(\Theta_0,\mom_0,\MOM_0)$ and $\pr{X_{n,j}}_{(n,j)\in\{(k,l)\in\N^2\colon l\leq\batch_k\}}$ are independent,
	let 	
	$i\in\{1,2,\dots,\pars\}$, 
	$v\in\R^{\pars}$,
	$\lambda\in\R\backslash\{0\}$
	satisfy for all $x=(x_1,\dots,x_{\pars})\in\R^{\pars}$ that $\langle x,v\rangle=x_i$,
	$\E\PRb{\vass{\Theta_0^{(i)}}}<\infty$,
	$A^*Av=\lambda v$, and 
	$\var\pr{\langle Av,X_{1,1}\rangle}>0$,
	and
	assume that $\mom_0^{(i)}$ and $\MOM_0^{(i)}$ are bounded.
	Then 
	\begin{equation}
		\label{eq:matrix:Adam}
		\begin{split}
			\inf_{\substack{\xi\colon \Omega \to \R\\\text{measurable}}}\liminf_{ n \to \infty }
			\E\PRb{ | \Theta_n^{(i)} - \xi |^2 }
			>0.
		\end{split}
	\end{equation}
\end{athm}

\begin{proof}[Proof of \cref{prop:prop:non_convergence_modified_Adam:specific:setup:main:2}]
	Throughout this proof let $\fc\in[\max\{1,\vass{a},\vass{b}\},\infty)$ satisfy
	 \begin{equation}
	 	\textstyle
	 	\fc\geq\max\pRb{\PRb{(1-\alpha)^{-1}(1-\beta)^{-1}\vass{\mom_0^{(i)}}+\MOM_0^{(i)}},\lambda^{-1}\norm{Av}\sqrt{d}\max\{\vass{a},\vass{b}\}},
	 \end{equation}
	 let $\fil_n\subseteq\cF$, $n\in\N$, satisfy for all $n\in\N$ that 
	\begin{equation}
		\label{eq:setting:up:filtration}
		\fil_0=\sigma((\Theta_0,\mom_0,\MOM_0))
		\qqandqq
		\fil_n=\sigma\prb{((\Theta_0,\mom_0,\MOM_0),(X_{m,j})_{(m,j)\in\{(k,l)\in\N^2\colon k\leq n\}})}
		.
	\end{equation}
	\Nobs that \eqref{eq:setting:up:filtration} \proves\ that for all $n,j\in\N$ it holds that
	\begin{equation}
		\label{eq:filtration:measurable}
			X_{n,j}\text{ is }\fil_n\text{-measurable}.
	\end{equation}
	\Nobs that
	the assumption that $(\Theta_0,\mom_0,\MOM_0)$ and $\pr{X_{n,j}}_{(n,j)\in\{(k,l)\in\N^2\colon l\leq\batch_k\}}$ are independent and
	\cref{cor:basic:independ:fct} (applied with 
	$I\curvearrowleft\{1,2\}$,
	$f_1\curvearrowleft\pr{(\R^{\pars})^3\ni x\mapsto x\in(\R^{\pars})^3}$,
	$f_2\curvearrowleft\pr{([a,b]^\dim)^{\{(k,l)\in\N^2\colon l\leq\batch_k\}}\ni (x_{n,j})_{(n,j)\in\{(k,l)\in\N^2\colon l\leq\batch_k\}}\mapsto ((x_{1,j})_{j\in\{1,2,\dots,\batch_1\}}\in([a,b]^\dim)^{\batch_1}}$
	in the notation of \cref{cor:basic:independ:fct}) \prove\ that
	\begin{equation}
		\label{eq:independence:base:case}
		(\Theta_0,\mom_0,\MOM_0)\qqandqq(X_{1,j})_{j\in\{1,2,\dots,\batch_1\}}
	\end{equation}
	are independent.
	\Moreover
	the assumption that  $\pr{X_{n,j}}_{(n,j)\in\{(k,l)\in\N^2\colon l\leq\batch_k\}}$ and $(\Theta_0,\mom_0,\MOM_0)$ are independent and \cref{lemma:independ:rv:sigma} (applied with 
	$N\curvearrowleft n$,
	$\batch\curvearrowleft \batch$,
	$\dim\curvearrowleft \dim$,
	$\pars\curvearrowleft 3\pars$,
	$ ( \Omega, \cF, \P ) \curvearrowleft ( \Omega, \cF, \P )$,
	$X\curvearrowleft X$,
	$Y\curvearrowleft (\Theta_0,\mom_0,\MOM_0)$
	for $n\in\N$
	in the notation of \cref{lemma:independ:rv:sigma}) \prove\ that 
	for all $n\in\N$ it holds that 
	\begin{equation}
		\label{eq:independence:other:cases}
		\prb{(\Theta_0,\mom_0,\MOM_0),(X_{m,j})_{(m,j)\in\{(k,l)\in\N^2\colon (k\leq n)\wedge (l\leq\batch_k)\}}}\qqandqq(X_{n,j})_{j\in\{1,2,\dots,\batch_n\}}
		\end{equation}
	are independent. This, \eqref{eq:setting:up:filtration}, and \eqref{eq:independence:base:case} \prove\ that 
	for all $n\in\N$ it holds that 
	\begin{equation}
		\label{eq:filtration:independence}
		\fil_{n-1}
		\qqandqq
		\sigma((X_{n,j})_{j\in\{1,2,\dots,\batch_n\}})
	\end{equation}	
	are independent.
	\Moreover \cref{lem:matrix:Robin:cond:statement:2.2,lem:matrix:Robin:cond:statement:2} in \cref{lem:matrix:Robin:cond:2} \prove\ that for all $\theta=(\theta_1,\dots,\theta_{\pars})\in\R^{\pars}$, $x\in[a,b]^\dim$ it holds that $\lambda>0$ and
	\begin{equation}
		\label{eq:suff:R:cond}
	2\lambda(\theta_i-\lambda^{-1}\norm{Av}\sqrt{d}\max\{\vass{a},\vass{b}\})
	\leq
	\prb{\tfrac{\partial\loss}{\partial\theta_i}}\pr{\theta,x} 
	\leq
	2\lambda(\theta_i+\lambda^{-1}\norm{Av}\sqrt{d}\max\{\vass{a},\vass{b}\})
	.
	\end{equation}
	\Moreover the assumption that $\var\pr{\langle Av,X_{1,1}\rangle}>0$ and \cref{lem:matrix:Robin:cond:statement:2.1} in \cref{lem:matrix:Robin:cond:2} \prove\ that for all $\theta=(\theta_1,\dots,\theta_{\pars})\in\R^{\pars}$ it holds that
	\begin{equation}
		\begin{split}
			\var\pr{g_i(\theta,X_{1,1})}
			&=
			\var\pr{2\lambda\theta_i-2\langle Av,X_{1,1}\rangle}
			=
			4\var\pr{\langle Av,X_{1,1}\rangle}
			>
			0
			.
		\end{split}
	\end{equation}
	Combining this,
	\eqref{eq:matrix:loss:and:bounds},
	\eqref{prop:eq:def:momentum:main:2.1},
	\eqref{prop:eq:def:RMS:factor:main:2.2}, 
	\eqref{prop:eq:matrix:Adam:recursion},
	\eqref{eq:filtration:independence}, and
	\eqref{eq:suff:R:cond}
	with
	 \cref{thm:new:rob:cond} (applied with 
	 $\cst\curvearrowleft 2\lambda$,
	 $\Cst\curvearrowleft\max\{1,2\lambda\}$,
	 $\fc\curvearrowleft\fc$,
	 $( \Omega, \cF,(\mathbb{F}_n)_{n\in\N_0}, \P )\curvearrowleft( \Omega, \cF,(\mathbb{F}_n)_{n\in\N_0}, \P )$,
	 $X\curvearrowleft X$,
	 $g\curvearrowleft \pr{\pr{\R^{\pars}\times[a,b]^\dim}\ni(\theta,x)\mapsto\pr{\nabla_{\theta}\loss}(\theta,x)\in\R^\pars}$,
	 $\Theta\curvearrowleft\Theta$,
	 $\mom\curvearrowleft\mom$,
	 $\MOM\curvearrowleft\MOM$,
	 in the notation of \cref{thm:new:rob:cond}) 
	 \proves\ that
	 \begin{equation}
	 	\begin{split}
	 		\inf_{\substack{\xi\colon \Omega \to \R\\\text{measurable}}}\liminf_{ n \to \infty }
	 		\E\PRb{ | \Theta_n^{(i)} - \xi |^2 }
	 		>0.
	 	\end{split}
	 \end{equation}
	 This \proves\ \eqref{eq:matrix:Adam}.
	\finishproofthus\cfload
\end{proof}

\begin{lemma}
	\label{lemma:suff:cond:for:A}
	Let $\pars,\dim\in\N\backslash\{1\}$, $\mu\in\R$, $A\in\R^{\dim\times\pars}$, $B\in\R^{(\dim-1)\times(\pars-1)}$ satisfy
	\begin{equation}
		\label{eq:setup:blockmatrx}
		A=
		\begin{pmatrix}
			\mu & 0 \\
			0 & B
		\end{pmatrix}
		,
	\end{equation}
	let $v\in\R^{\pars}$
	satisfy for all $x=(x_1,\dots,x_{\pars})\in\R^{\pars}$ that $\langle x,v\rangle=x_1$,
	let $(\Omega,\cF,\P)$ be a probability space, and let $X=(X_1,\dots,X_\dim)\colon\Omega\to\R^{\dim}$ be a random variable.
	Then
	\begin{equation}
		\label{eq:results:easy:blockmatr}
		\var(\langle Av,X\rangle)
		=\mu^2\var(X_1)
		\qqandqq
		A^*Av=\mu^2v
		.
	\end{equation}
\end{lemma}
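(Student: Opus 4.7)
The plan is to exploit the block structure of $A$ together with the identification of $v$ as the first standard basis vector, after which both conclusions reduce to direct computations.

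First I would observe that the assumption that for all $x = (x_1, \dots, x_\pars) \in \R^\pars$ it holds that $\langle x, v \rangle = x_1$ forces $v = (1, 0, \dots, 0) \in \R^\pars$, that is, $v$ is the first standard basis vector of $\R^\pars$. Next I would exploit the block form \eqref{eq:setup:blockmatrx} of $A$: multiplying $A$ by $v$ just picks out the first column of $A$, which by the block structure equals $(\mu, 0, 0, \dots, 0) \in \R^\dim$, i.e. $Av = \mu \, \tilde v$ where $\tilde v = (1, 0, \dots, 0) \in \R^\dim$ is the first standard basis vector of $\R^\dim$.

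From $Av = \mu \tilde v$ the first claim in \eqref{eq:results:easy:blockmatr} follows immediately: by linearity of the inner product and the definition of $\tilde v$ one has $\langle Av, X\rangle = \mu \langle \tilde v, X\rangle = \mu X_1$, and hence $\var(\langle Av, X\rangle) = \mu^2 \var(X_1)$ by the scaling property of the variance. For the second claim I would compute $A^* A$ via block-matrix multiplication: since
\begin{equation*}
A^* = \begin{pmatrix} \mu & 0 \\ 0 & B^* \end{pmatrix}
\qquad \text{and hence} \qquad
A^* A = \begin{pmatrix} \mu^2 & 0 \\ 0 & B^* B \end{pmatrix},
\end{equation*}
applying $A^* A$ to $v = (1, 0, \dots, 0)$ again picks out the first column, which is $(\mu^2, 0, \dots, 0) = \mu^2 v$.

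There is no genuine obstacle here; the proof is purely computational and amounts to verifying that the block decomposition of $A$ interacts correctly with the coordinate vector $v$. The only care needed is to be explicit about the distinction between the first basis vector of $\R^\pars$ (namely $v$) and the first basis vector of $\R^\dim$ (appearing in $Av$), so that the block-matrix calculation is unambiguous.
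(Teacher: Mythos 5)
Your proposal is correct and follows essentially the same route as the paper: both identify $v$ as the first basis vector, use the block structure to get $Av=\mu\,\tilde v$ (the paper calls this vector $w$), and conclude the variance identity by scaling and the eigenvector identity by a direct block computation. The only cosmetic difference is that you form $A^*A$ explicitly, whereas the paper evaluates $A^*(\mu w)=\mu^2 v$ without writing out the product matrix; the content is identical.
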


\begin{proof}[Proof of \cref{lemma:suff:cond:for:A}]
	Throughout this proof let $w\in\R^{\dim}$
	satisfy for all $x=(x_1,\dots,x_{\dim})\in\R^{\dim}$ that $\langle x,w\rangle=x_1$.
	\Nobs that \eqref{eq:setup:blockmatrx} \proves\ that
	\begin{equation}
		\label{eq:easy:var}
		\var(\langle Av,X\rangle)
		=\var(\langle \mu w,X\rangle)
		=\mu^2\var(\langle  X,w\rangle)
		=\mu^2\var(X_1).
	\end{equation}
	\Moreover \eqref{eq:setup:blockmatrx} \proves\ that
	\begin{equation}
		A^*Av=A^*(\mu w)=\mu(A^*w)=\mu(\mu v)=\mu^2 v.
	\end{equation}
	This and \eqref{eq:easy:var} \prove\ \eqref{eq:results:easy:blockmatr}.
	The proof of \cref{lemma:suff:cond:for:A} is thus complete.
\end{proof}

\begin{athm}{cor}{intro:thm:2}	
	%	\label{prop:non_convergence_modified_Adam:specific:setup}
	Let $\pars\in\N$, 
	$a\in\R$,
	$b\in(a,\infty)$,
	$\eps\in(0,\infty)$,
	$\alpha\in[0,1)$,
	$\beta\in(\alpha^2,1)$,
	let $ ( \Omega, \cF, \P ) $ be a probability space,
	let	$
	X_{n,\mm}=(X^{(1)}_{n,\mm},\ldots,X^{(\pars)}_{n,\mm})\colon \Omega \to [ a, b ]^{\pars}
	$, $n,m\in\N$, be \iid\ random variables,
	let 
	$\loss\colon\R^{\pars}\times\R^{\pars}\to\R$,
	$\batch\colon\N\to\N$, and
	$\gamma\colon\N\to\R$ satisfy
	for all 
	$\theta,x\in\R^\pars$
	that
	\begin{equation}
		\label{eq:very:simple:quadr:problem}\textstyle
		\loss(\theta,x)=\norm{\theta-x}^2
		,
		\qquad
		\liminf_{n\to\infty}\gamma_n>0,
		\qqandqq
		\limsup_{n\to\infty}(\gamma_n+\batch_n)<\infty,
	\end{equation} 
	let
	$ \Theta=(\Theta^{(1)},\ldots,\Theta^{(\pars)}) \colon\N_0\times\Omega \to \R^{\pars}$,
	$\mom=(\mom^{(1)},\ldots,\mom^{(\pars)})\colon\N_0\times\Omega\to\R^{\pars}$, and 
	$\MOM=(\MOM^{(1)},\ldots,\MOM^{(\pars)})\colon\N_0\times\Omega\to[0,\infty)^{\pars}$ be stochastic processes which satisfy
	for all $n\in\N$, $i\in\{1,2,\dots,\pars\}$ that
	\begin{equation}
		\label{eq:intro:thm:2.1}
		\begin{split}
			%			\mom_0=0, 
			%			\quad\qquad 
			\mom_{n}&\textstyle=\alpha \mom_{n-1}+(1-\alpha)\PRb
			{\frac{1}{\batch_n}\sum_{\mm=1}^{\batch_n}\prb{\nabla_{\theta}\loss}(\Theta_{n-1},X_{n,\mm})},
		\end{split}
	\end{equation}
	\begin{equation}
		\label{eq:intro:thm:2.2}
		\begin{split}
			%			\MOM^{(i)}_0=0,
			%			\quad\qquad
			\MOM_{n}^{(i)}&\textstyle=\beta \MOM_{n-1}^{(i)}+(1-\beta)\PRb
			{\frac{1}{\batch_n}\sum_{\mm=1}^{\batch_n}\prb{\frac{\partial\loss}{\partial\theta_i}}(\Theta_{n-1},X_{n,\mm})}^2,
		\end{split}
	\end{equation}
	\begin{equation}
		\begin{split}
			\text{and}
			\qquad
			\Theta_{ n }^{(i)}
			&=\Theta_{ n-1  }^{(i)}-\gamma_n \prb{\eps+\PR{(1-\beta^n)^{-1} \MOM_{n}^{(i)}}^{\nicefrac{1}{2}}}^{-1}\!\mom_{n}^{(i)},
		\end{split}
	\end{equation}
	assume that $(\Theta_0,\mom_0,\MOM_0)$ and $\pr{X_{n,\mm}}_{(n,\mm)\in\{(k,l)\in\N^2\colon l\leq\batch_k\}}$ are independent,
	let $i\in\{1,2,\dots,\pars\}$ satisfy $\var\pr{X_{1,1}^{(i)}}>0$ and $\E\PRb{\vass{\Theta_0^{(i)}}}<\infty$, and
	assume that $\mom_0^{(i)}$ and $\MOM_0^{(i)}$ are bounded.
	Then 
	\begin{equation}
		\begin{split}
			\inf_{\substack{\xi\colon \Omega \to \R\\\text{measurable}}}
			&\liminf_{ n \to \infty }\E\PRb{ \vass{ \Theta_n^{(i)} - \xi } }
			>0
			.
		\end{split}
	\end{equation}
\end{athm}

\begin{proof}[Proof of \cref{intro:thm:2}]
	Throughout this proof let $A\in\R^{\pars\times\pars}$ be the $\pars$-dimensional identity matrix
	and
	let $v\in\R^{\pars}$ satisfy for all $x=(x_1,\dots,x_{\pars})\in\R^{\pars}$ that $\langle x , v \rangle =x_i$.
	 \Nobs that the assumption that $\var(X_{1,1}^{(i)})>0$ \proves\ that
	 \begin{equation}
	 	A^*Av=Av=v
	 	\qqandqq
	 	\var(\langle Av, X_{1,1}\rangle)=\var(X_{1,1}^{(i)})>0.
	 \end{equation}
	This and
	\cref{prop:prop:non_convergence_modified_Adam:specific:setup:main:2} (applied with 
	$\pars\curvearrowleft\pars$,
	$\dim\curvearrowleft\pars$,
	$A\curvearrowleft A$,
	$\loss\curvearrowleft\loss$,
	$\Theta\curvearrowleft\Theta$,
	$\mom\curvearrowleft\mom$,
	$\MOM\curvearrowleft\MOM$
	in the notation of \cref{prop:prop:non_convergence_modified_Adam:specific:setup:main:2}) \prove\ that
		\begin{equation}
		\begin{split}
			\inf_{\substack{\xi\colon \Omega \to \R\\\text{measurable}}}
			&\liminf_{ n \to \infty }\E\PRb{ \vass{ \Theta_n^{(i)} - \xi } }
			>0
			.
		\end{split}
	\end{equation}
	\finishproofthus
\end{proof}

\subsection*{Acknowledgements}

Shokhrukh Ibragimov is gratefully acknowledged for having brought 
useful related research findings to our attention. 
This work has been partially funded by the European Union (ERC, MONTECARLO, 101045811). 
The views and the opinions expressed in this work are however those of the authors only 
and do not necessarily reflect those of the European Union 
or the European Research Council (ERC). Neither the European Union nor 
the granting authority can be held responsible for them. 
In addition, we gratefully acknowledge the Cluster of Excellence EXC 2044-390685587, 
Mathematics M\"{u}nster: Dynamics-Geometry-Structure funded by the 
Deutsche Forschungsgemeinschaft (DFG, German Research Foundation).

\bibliographystyle{acm}
%alpha
\bibliography{bibfile}

\end{document}